\documentclass{article}
\usepackage{microtype}
\usepackage{graphicx}
\usepackage{subcaption}
\usepackage{booktabs}

\usepackage[colorlinks=true,allcolors=linkcolor,pageanchor=true,plainpages=false,pdfpagelabels,bookmarks,bookmarksnumbered]{hyperref}

\usepackage[preprint]{icml2026}

\usepackage{url}
\usepackage{booktabs}
\usepackage{amsfonts,amsmath,amssymb,amsthm}
\usepackage{nicefrac}
\usepackage{microtype}
\usepackage{xcolor}
\usepackage{dblfloatfix}

\usepackage{graphicx}
\usepackage{wrapfig}
\usepackage[export]{adjustbox}
\usepackage{caption}
\usepackage{makecell}
\usepackage{multirow}
\usepackage{bm}
\usepackage{enumitem}
\usepackage{array}
\usepackage{textcomp}
\usepackage{diagbox}
\usepackage{float}
\floatstyle{ruled}
\restylefloat{algorithm}
\newcolumntype{S}[1]{>{\small\raggedright\arraybackslash}p{#1}}
\newcolumntype{Z}{>{\small\centering\arraybackslash}c}
\newcolumntype{P}{Z@{\hspace{2pt}}Z}
\usepackage{algorithm}  

\newcommand{\msl}[2]{#1{\color{gray}\scriptsize\,\textpm\,#2}}

\usepackage[capitalize,noabbrev]{cleveref}
\theoremstyle{plain}
\newtheorem{theorem}{Theorem}[section]
\newtheorem{proposition}[theorem]{Proposition}
\newtheorem{lemma}[theorem]{Lemma}

\theoremstyle{definition}
\newtheorem{definition}[theorem]{Definition}

\theoremstyle{remark}
\newtheorem{remark}[theorem]{Remark}

\usepackage[textsize=tiny]{todonotes}

\definecolor{linkcolor}{RGB}{70, 90, 180}

\renewcommand{\eqref}[1]{\textup{Eq.~\ref{#1}}}

\icmltitlerunning{
Controlling Exploration-Exploitation in GFlowNets via Markov Chain Perspectives
}

\begin{document}

\twocolumn[
  \icmltitle{Controlling Exploration–Exploitation in GFlowNets \\ via Markov Chain Perspectives}
  \icmlsetsymbol{equal}{*}
  \icmlsetsymbol{correspond}{$\dagger$}
  \begin{icmlauthorlist}
    \icmlauthor{Lin Chen}{equal,lumia,sjtusms}
    \icmlauthor{Samuel Drapeau}{equal,sjtusms,sjtusaif}
    \icmlauthor{Fanghao Shao}{lumia}
    \icmlauthor{Xuekai Zhu}{lumia}
    \icmlauthor{Bo Xue}{sjtuscs}
    \icmlauthor{Yunchong Song}{ailab}
    \icmlauthor{Mathieu Lauri\`ere}{correspond,nyushscds,nyushms}
    \icmlauthor{Zhouhan Lin}{correspond,lumia,ailab,sii}
  \end{icmlauthorlist}

    \icmlaffiliation{lumia}{LUMIA Lab, School of Artificial Intelligence, SJTU}
  \icmlaffiliation{sjtusms}{School of Mathematical Sciences, SJTU}
  \icmlaffiliation{sjtusaif}{Shanghai Advanced Institute of Finance, SJTU}
  \icmlaffiliation{sjtuscs}{School of Computer Science, SJTU}
  \icmlaffiliation{ailab}{Shanghai AI Laboratory}
  \icmlaffiliation{sii}{Shanghai Innovation Institute}
  \icmlaffiliation{nyushscds}{Shanghai Center for Data Science, NYU Shanghai}
  \icmlaffiliation{nyushms}{NYU-ECNU Institute of Mathematical Sciences, NYU Shanghai}

  \icmlcorrespondingauthor{Lin Chen}{charliecl0526@gmail.com}
  \icmlcorrespondingauthor{Mathieu Laurière}{ml5197@nyu.edu}
  \icmlcorrespondingauthor{Zhouhan Lin}{lin.zhouhan@gmail.com}

  \icmlkeywords{Machine Learning, ICML}

  \vskip 0.3in
]

\printAffiliationsAndNotice{\icmlEqualContribution. 
\textsuperscript{$\dagger$}Corresponding authors.}

\begin{abstract}
Generative Flow Network (GFlowNet) objectives implicitly fix an equal mixing of forward and backward policies, potentially constraining the exploration-exploitation trade-off during training. 
By further exploring the link between GFlowNets and Markov chains, 
we establish an equivalence between GFlowNet objectives and Markov chain reversibility, thereby revealing the origin of such constraints,
and provide a framework for adapting Markov chain properties to GFlowNets.
Building on these theoretical findings, we propose 
\textbf{$\alpha$-GFNs}, which generalize the mixing via a tunable parameter $\alpha$. 
This generalization enables direct control over exploration-exploitation dynamics to enhance mode discovery capabilities, while ensuring convergence to unique flows.
Across various benchmarks, including Set, Bit Sequence, and Molecule Generation, $\alpha$-GFN objectives consistently outperform previous GFlowNet objectives,
achieving up to a $10 \times$ increase in the number of discovered modes.
\end{abstract}

\section{Introduction}\label{introduction}

Generative Flow Networks (GFlowNets)~\citep{bengio2021flow} are generative models that sample compositional objects from high-dimensional distributions with probabilities proportional to a reward function.
They are sampling methods that originate from the intersection of  reinforcement learning frameworks~\citep{tiapkin2024generative, mohammadpour2024maximum, deleu2024discrete} and flow networks~\citep{bengio2023gflownet}, offering an alternative to traditional approaches such as Markov Chain Monte Carlo (MCMC)~\citep{brooks1998markov}.
Since their introduction, GFlowNets 
have been applied in various domains including molecular discovery~\citep{bengio2021flow,jain2023multi,zhu2023sample}, diffusion models~\citep{zhang2023diffusion,venkatraman2024amortizing,liu2024efficient} and large language models~\citep{hu2023amortizing,song2024latent,yun2025learning,zhu2025flowrl}, demonstrating both mode-discovering and diversity-preserving abilities.

Alongside these empirical successes, the training objectives of GFlowNets largely draw upon a flow matching perspective~\citep{bengio2021flow}. 
While this paradigm provides a systematic way to match reward distributions~\citep{bengio2023gflownet,malkin2022trajectory,madan2023learning}, it induces a symmetric treatment of forward and backward transitions, implicitly entailing an equal weighting of forward and backward policies.
However, such a equal mixing scheme can be sub-optimal, as it potentially constrains the flexibility of the exploration-exploitation trade-off during training.
As shown in Fig.~\ref{fig:main_figure}, an adaptable weighting scheme yields significantly higher average per-sample rewards. 
This suggests a need for a broader theoretical treatment beyond the flow-matching view, particularly through GFlowNets' inherent connections to Markov chain (MC) theory.
While GFlowNets are primarily formulated as Markov Decision Processes (MDPs), recent research has begun to explore their relationship with the theory of MCs. 
Prior work by \citet{deleu2023generative} discussed this connection specifically for the Flow Matching (FM) objective.
\begin{figure*}[t]
  \centering
  \begin{tabular}{@{}c@{\hspace{10pt}}c@{}}
    \includegraphics[valign=b,width=.45\textwidth]{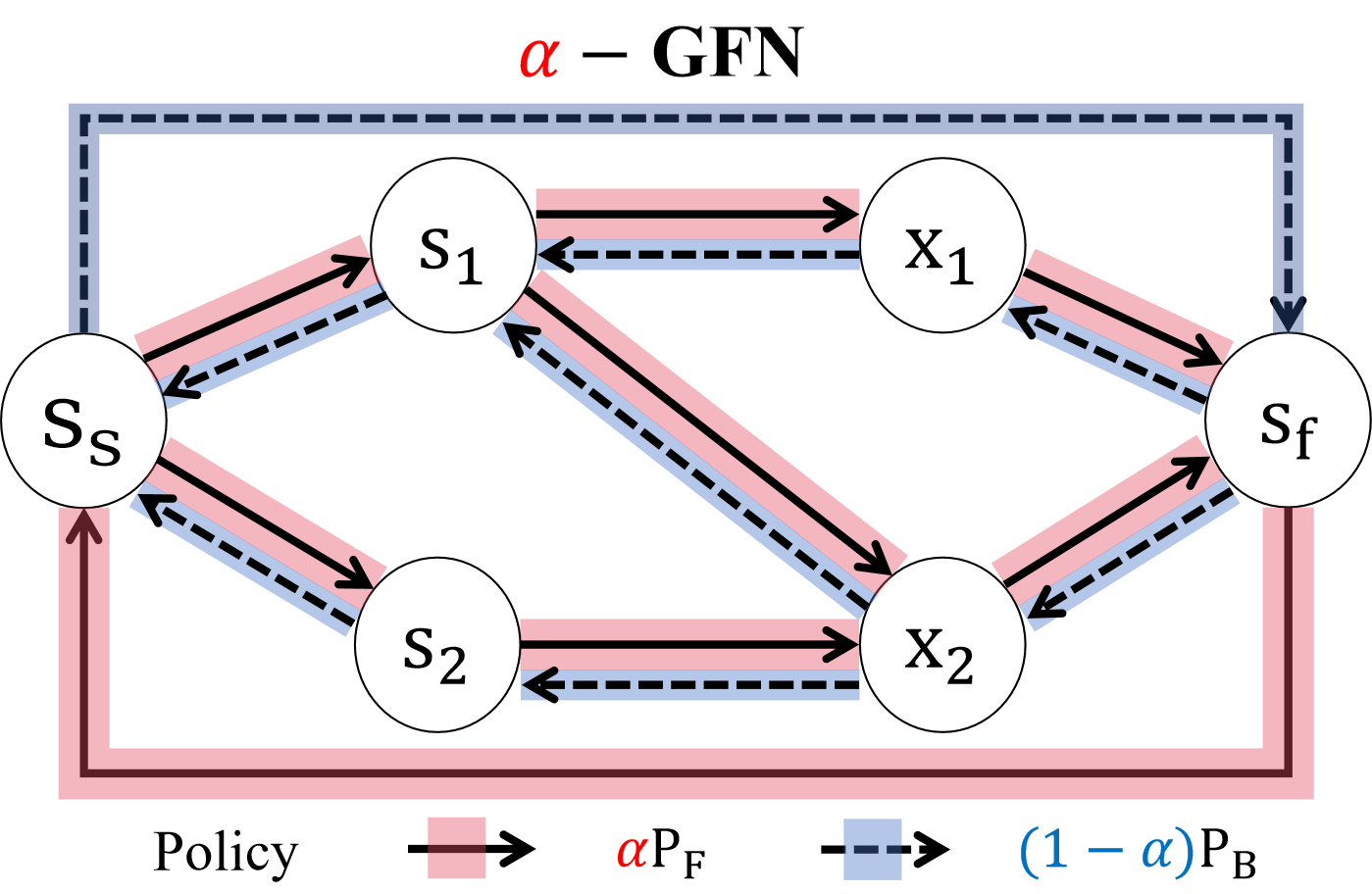} &
    \raisebox{-1.63mm}{\includegraphics[valign=b,width=.45\textwidth]{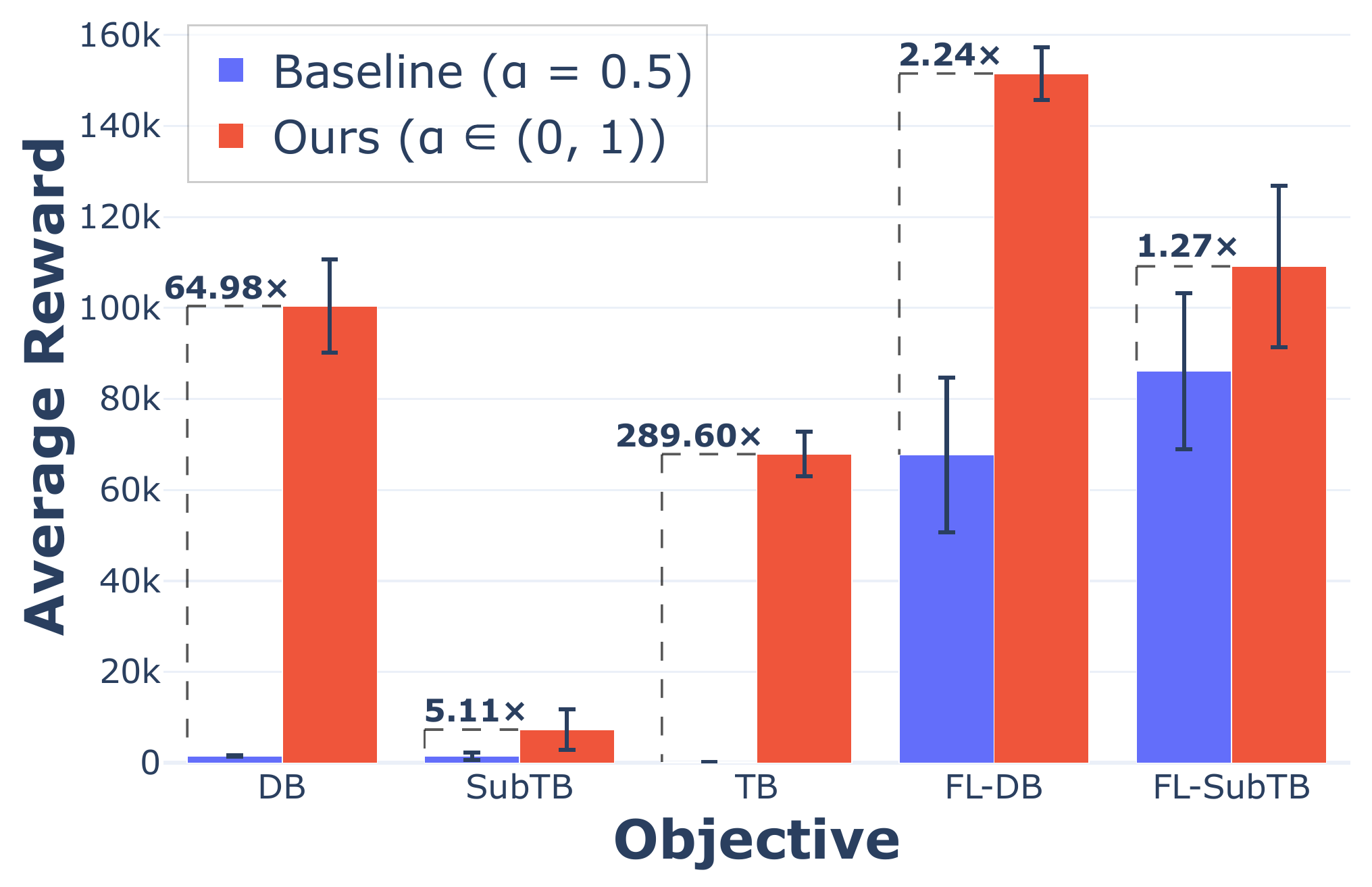}}
  \end{tabular}
  \caption{\textbf{(Left)} Illustration of $\alpha$-GFNs. Vanilla GFlowNets implicitly assigns equal weights (0.5/0.5) to the forward policy $P_F$ and the backward policy $P_B$, while $\alpha$-GFNs assign $\alpha$ and $1-\alpha$ to $P_{F}$ and $P_{B}$ respectively.
  \textbf{(Right)} The performance gain with $\alpha$-GFN objectives in Set Generation~\citep{pan2023better}. With flexible exploration-exploitation trade-offs enabled by $\alpha$, GFlowNet training achieves significantly higher average reward of all generated samples.}
  \label{fig:main_figure}
\end{figure*}

In this work, we further explore the theoretical link between GFlowNets and Markov chains and establish a framework that unifies multiple GFlowNet objectives. This framework provides a unified perspective that allows for adapting various MC properties to generalize the GFlowNet framework. For instance, we show that Markov chain reversibility provides a fundamental characterization of GFlowNet objectives. Based on these theoretical insights, we propose $\alpha$-GFN, a simple yet effective generalization that encompasses standard GFlowNet objectives as special cases. By introducing a single hyperparameter $\alpha \in (0,1)$, our formulation enables flexible mixing of forward and backward policies, relaxing the fixed-weighting constraints of traditional views. This flexibility facilitates a tunable balance between exploration and exploitation, thereby enhancing the mode discovery capabilities of GFlowNets.

More rigorously, we further prove that $\alpha$-GFN objectives converge to unique flow functions and analyze their gradient dynamics to provide a principled explanation for their empirical effectiveness.
Extensive experiments on a variety of benchmarks such as Set Generation, Bit Sequence Generation, and Molecule Generation demonstrate that $\alpha$-GFNs consistently outperform standard GFlowNet objectives in terms of mode discovery, increasing the number of distinct high-reward samples generated during training.
Additionally, we provide an ablation study of $\alpha$ values and discuss its potential influence on trajectory lengths as observed in specific settings, offering further insights into the empirical characteristics of our framework.

\textbf{Contributions.} 
 We summarize the main contributions of this paper as follows: 
 \begin{itemize}[leftmargin=*, topsep=0pt, noitemsep]
 \item \textbf{Theoretical Unification.} We further explore the theoretical link between GFlowNets and Markov chain theory, establishing a unified framework that encompasses multiple GFlowNet objectives. This unification enables the systematic integration of MC properties, such as reversibility, into the design of GFlowNets.
 \item \textbf{Generalized Training Objective.} We introduce $\alpha$-GFNs based on the theoretical connections, which utilize a mixing hyperparameter $\alpha$ to interpolate between forward and backward policy learning. We support this design with theoretical convergence proofs and a gradient-based analysis that elucidates how $\alpha$ modulates the exploration-exploitation trade-off.
 \item \textbf{Empirical Performance and Insights.} We show that our approach yields improved mode discovery results through extensive evaluations on various benchmarks, including Set, Bit Sequence and Molecule Generation. Furthermore, we demonstrate the robustness of $\alpha$-GFNs to hyperparameter selection through ablation studies, and report an observed impact on trajectory lengths in specific settings.
 \end{itemize}
\section{Preliminaries}
\label{sec:preliminaries}

\paragraph{Generative Flow Network (GFlowNet, GFN) preliminaries.}
A GFlowNet is specified by a tuple $(G,R,F,P_F,P_B)$. Here $G=(S,\mathbb{A})$ is a pointed directed acyclic graph with source $s_s$ and sink $s_f$, $R:\mathcal{X}\subseteq S\to\mathbb{R}_+$ is a reward, $F:S\to\mathbb{R}_+$ is a state flow; and $P_F, P_B:\mathbb{A}\to[0,1]$ are forward and backward policies. Let $\mathfrak{T}^{\text{flow}}$ be the set of complete trajectories $\mathfrak{t}^f=(s_0,\ldots,s_N)$ with $s_0=s_s$ and $s_N=s_f$. Samples are states $x\in\mathcal{X}$ sequentially constructed by $P_F$  and deconstructed by $P_B$. The generation of a sample terminates when $s_f$ is reached, yielding a complete trajectory $(s_0=s_s,\ldots,s_{N-1}=x,s_N=s_f)$. Ideally, the probability of generating $x$ is proportional to its reward,
$
P_F^{\top}(x)\;\triangleq\;\sum_{\mathfrak{t}\in\mathfrak{T}^{\text{flow}}:\,x\in\mathfrak{t}}\;\prod_{i=1}^{N} P_F(s_i\mid s_{i-1})\ \propto\ R(x).
$
Under the convergence of its training objectives, this proportionality is satisfied and the uniqueness of flows is achieved.

\paragraph{GFlowNet training objectives and variants.} 
Several loss functions have been introduced to train GFlowNets by enforcing flow balancing conditions, such as Flow Matching (FM,~\citet{bengio2021flow}), Detailed Balance (DB,~\citet{bengio2023gflownet}), Subtrajectory Balance (SubTB,~\citet{madan2023learning}), or Trajectory Balance (TB,~\citet{malkin2022trajectory}). Since SubTB provides a unifying view of DB and TB~\citep{madan2023learning}, we present subsequent derivations on the basis of SubTB for notational simplicity. 
Given any partial trajectory $\mathfrak{t}'=(s_k,s_{k+1},\dots,s_{k+m})\subset\mathfrak{t}^f\in\mathfrak{T}^{\text{flow}}$, SubTB aims at
\begin{equation}
\label{eq:subtb}
\begin{split}
&F(s_k)\prod_{i=1}^m P_F\!\bigl(s_{k+i}\!\mid\! s_{k+i-1}\bigr) \\
= &F(s_{k+m})\prod_{i=1}^{m} P_B\!\bigl(s_{k+i-1}\!\mid\! s_{k+i}\bigr)
\end{split}
\end{equation}
where $P_B(x\mid s_f)=\frac{R(x)}{F(s_f)}$ for all $x\in\mathcal{X}$ and $F(s_f)=\sum_{x'\in\mathcal{X}}R(x')=F(s_0)$. The loss is the log-square of~\eqref{eq:subtb}. 
A convex combination of SubTB losses across subtrajectory lengths is used for training, termed SubTB($\lambda$). 
While flow-balancing objectives provide a principled training signal, credit assignment can still be inefficient. 
Forward-looking (FL) variants~\citep{pan2023better,jang2023learning} incorporate intermediate energies via a reparameterization of flows. 
Concretely, given an intermediate energy $\mathcal{E}:\mathbb{A}\to\mathbb{R}$, FL-SubTB augments~\eqref{eq:subtb} by multiplying the right-hand side by
$\prod_{i=1}^m \exp\!\big(-\mathcal{E}(s_{k+i-1}, s_{k+i})\big)$.
Detailed definitions are available in App.~\ref{app:defs}.
\paragraph{Reversibility.} Reversibility is a classical concept in Markov chain theory, see e.g.~\citep{douc2018markov}. Given a probability measure over the state space $\pi: S \rightarrow [0,1]$ and the transition kernel of the chain $P: S \times S \rightarrow [0,1]$, the reversibility of $P$ implies that, at any sequence of states $(s_k,s_{k+1},\dots,s_{k+m})$ 
\begin{equation}
\label{eq:reversibility}
    \pi(s_k) \prod\limits_{i=1}^{m} P(s_{k+i}|s_{k+i-1}) = \pi(s_{k+m})\prod\limits_{i=1}^{m} P(s_{k+i-1}|s_{k+i}).
\end{equation}

\section{$\alpha$-GFN: Generalized GFlowNet Training}
This section is organized into four parts. First, we show that the target of the vanilla GFlowNet objectives corresponds to the reversibility condition of a Markov chain with the \textbf{equally mixed policy} $P_{0.5} = \tfrac{1}{2}P_F + \tfrac{1}{2}P_B$ as its transition kernel. Next, we extend the mixing to unequal weights, leading to the $\alpha$-GFN objectives, whose convergence is further clarified through the underlying MC formulation. We then introduce a scheduling algorithm that combines the advantages of different $\alpha$ values. Finally, we illustrate the flexible exploration–exploitation trade-off enabled by $\alpha$. 

\subsection{GFlowNets as Equally Mixed Markov Chains}

We begin with an intuitive comparison of~\eqref{eq:subtb} and~\eqref{eq:reversibility}, which reveals a structural similarity between GFlowNets objectives and the reversibility of Markov chains. 
Specifically, both equations share the following structure: a sequence of transitions is coupled with a probability measure on each side. 
Building on the equivalence between flows and probability measures (see~\citet{deleu2023generative} and~\eqref{eq:flows as probability measures}), this resemblance actually suggests a close connection between GFN objectives and MC reversibility even though GFNs only use the forward policy $P_F$ to generate samples. 
However, the policies on the two sides of~\eqref{eq:subtb} are not identical, which prevents a direct correspondence with~\eqref{eq:reversibility}. 
In reality, this obstacle can be overcome by introducing the equally mixed policy $P_{0.5}$. When applied to~\eqref{eq:reversibility},  $P_{0.5}$ naturally separates $P_F$ and $P_B$ onto different sides, eliminating the weights and recovering~\eqref{eq:subtb}:
\begin{proposition}
\label{prop:reversibility of equally mixed policy}
    The reversibility of $P_{0.5}$ means that for any partial trajectory 
    $\mathfrak{t}'=(s_k,\dots,s_{k+m})$, 
    \begin{equation}
    \begin{split}
      &\pi(s_k) \prod_{i=1}^{m} P_F(s_{k+i}|s_{k+i-1})\\
      = &\pi(s_{k+m})\prod_{i=1}^{m} P_B(s_{k+i-1}|s_{k+i}). 
    \end{split}
    \end{equation}
\end{proposition}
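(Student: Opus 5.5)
The plan is to substitute $P_{0.5}=\tfrac12 P_F+\tfrac12 P_B$ into the reversibility identity \eqref{eq:reversibility} and use the DAG structure of $G$ to separate the two policies. The key observation is that on a pointed DAG, each edge $(s,s')\in\mathbb{A}$ carries only one orientation. $P_F(\cdot\mid s)$ is supported on children of $s$, and $P_B(\cdot\mid s')$ is supported on parents of $s'$. So for consecutive states $s_{k+i-1}\to s_{k+i}$ along a partial trajectory $\mathfrak{t}'$ (a forward edge), we have
\[
P_{0.5}(s_{k+i}\mid s_{k+i-1})=\tfrac12 P_F(s_{k+i}\mid s_{k+i-1}),
\]
because $P_B(s_{k+i}\mid s_{k+i-1})=0$: $s_{k+i}$ is not a parent of $s_{k+i-1}$, since otherwise there would be a cycle. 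Symmetrically, for the reversed transition,
\[
P_{0.5}(s_{k+i-1}\mid s_{k+i})=\tfrac12 P_B(s_{k+i-1}\mid s_{k+i}),
\]
since $P_F(s_{k+i-1}\mid s_{k+i})=0$.

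The steps, in order, are as follows.

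First, I will fix the Markov chain on $S$ with kernel $P_{0.5}$. At the boundary, the sink $s_f$ is handled so that $P_B(\cdot\mid s_f)=R(\cdot)/F(s_f)$ and $P_F(s_f\mid x)$ are the terminating transitions, as in App.~\ref{app:defs}. I will check that $P_{0.5}$ is a genuine stochastic kernel, since it is a convex combination of two kernels.

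Second, I will write \eqref{eq:reversibility} for $P_{0.5}$ along $\mathfrak{t}'=(s_k,\dots,s_{k+m})$. Replacing each factor using the support observation gives
\[
\pi(s_k)\,2^{-m}\prod_{i=1}^m P_F(s_{k+i}\mid s_{k+i-1})=\pi(s_{k+m})\,2^{-m}\prod_{i=1}^m P_B(s_{k+i-1}\mid s_{k+i}).
\]
Cancelling the common factor $2^{-m}$ yields the claimed identity.

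Third, for the converse reading of ``means'', I will note that reversibility of a kernel is equivalent to the pairwise detailed balance $\pi(s)P(s'\mid s)=\pi(s')P(s\mid s')$ for all $s,s'$. Pairs that are not connected by an edge give $0=0$. Pairs connected by an edge reduce, by the same cancellation with $m=1$, to the $m=1$ case of the statement. Chaining these relations then gives every $m$, which is the standard telescoping argument. So the displayed condition over all partial trajectories is equivalent to reversibility of $P_{0.5}$ with respect to $\pi$.

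The main obstacle is the boundary at $s_s$ and $s_f$. The graph is a DAG, but the chain must be irreducible and stochastic for reversibility to make sense, which the paper presumably achieves by identifying or linking $s_f$ with $s_s$. On the closing edge the support argument must still hold; that is, no edge may appear in both orientations. I would first try treating $s_f\to s_s$ as a fresh edge distinct from every DAG edge, so the factor-$\tfrac12$ cancellation goes through unchanged. If instead the paper identifies $s_s$ with $s_f$, I would check the boundary factors directly using $F(s_f)=F(s_0)$ and $P_B(x\mid s_f)=R(x)/F(s_f)$.
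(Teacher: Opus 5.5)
Your proposal follows the paper's proof: substitute $P_{0.5}=\tfrac12 P_F+\tfrac12 P_B$ into the reversibility identity along $\mathfrak{t}'$, use the one-directionality of $P_F$ and $P_B$ on each edge so that each factor collapses to $\tfrac12 P_F$ or $\tfrac12 P_B$, and cancel $2^{-m}$; you justify the one-directionality from acyclicity, whereas the paper simply asserts it. Your converse direction and boundary discussion go beyond what the paper proves, and if you keep the converse, note that the closing-edge balance $\pi(s_s)=\pi(s_f)$ is not itself an instance of the displayed identity, but it follows by summing the complete-trajectory case over $\mathfrak{T}^{\text{flow}}$, since both $\prod_i P_F(s_i\mid s_{i-1})$ and $\prod_i P_B(s_{i-1}\mid s_i)$ sum to $1$ there.
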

Applying Prop.~\ref{prop:reversibility of equally mixed policy} to DB, SubTB and TB, we extend the GFN-MC link in~\cite{deleu2023generative} from FM to these objectives, 
and turn the similarity into a theoretical equivalence between GFNs and MCs:
\begin{theorem}
\label{thm:gflownet objectives and reversibility}
    The target of SubTB is equivalent to the partial-trajectory-level reversibility of a MC with the transition kernel $P_{0.5}$, and that of DB or TB is similar. Moreover, their convergence to unique flows is related to corresponding properties in MC theory.
\end{theorem}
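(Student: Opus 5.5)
We would prove the theorem in three steps. First, make precise the Markov chain behind the equally mixed policy. Second, use Prop.~\ref{prop:reversibility of equally mixed policy} to identify the SubTB target with reversibility of that chain. Third, transfer uniqueness statements from Markov chain theory.

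\textbf{Setting up the chain.} Close the DAG into a cycle by identifying the sink with the source, $s_f\equiv s_s$. Extend the policies so that the added edge is consistent: $P_F(x\to s_f)$ is the usual termination probability, and $P_B(x\mid s_f)=R(x)/F(s_f)$ as in~\eqref{eq:subtb}. Define $P_{0.5}(s'\mid s)=\tfrac12 P_F(s'\mid s)+\tfrac12 P_B(s'\mid s)$ on the resulting finite state space. Because the graph is acyclic, for any edge $(s,s')$ at most one of $P_F(s'\mid s)$ and $P_B(s'\mid s)$ is nonzero. Use the normalisation of flows into a probability measure, $\pi(s)=F(s)/Z$ as in~\eqref{eq:flows as probability measures}; the constant $Z$ cancels from both sides of every balance equation.

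\textbf{Equivalence for SubTB, DB and TB.} Along a partial trajectory $(s_k,\dots,s_{k+m})$ that follows the DAG orientation, the forward-direction factors reduce to $P_{0.5}(s_{k+i}\mid s_{k+i-1})=\tfrac12 P_F(s_{k+i}\mid s_{k+i-1})$. The reverse-direction factors reduce to $P_{0.5}(s_{k+i-1}\mid s_{k+i})=\tfrac12 P_B(s_{k+i-1}\mid s_{k+i})$. The factors $2^{-m}$ cancel, so~\eqref{eq:reversibility} for $P_{0.5}$ becomes exactly~\eqref{eq:subtb} with $\pi\propto F$; this is Prop.~\ref{prop:reversibility of equally mixed policy}.

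For the converse, note that any sequence of states with positive $P_{0.5}$-probability is a concatenation of monotone forward and backward runs. Reversibility on every such sequence therefore follows from the SubTB identities on DAG subtrajectories. Indeed, the $m=1$ case is detailed balance, and detailed balance implies Kolmogorov's path criterion for all paths. This gives equivalence in both directions. The special cases follow directly:
\begin{itemize}
\item DB is the case $m=1$, i.e.\ the usual one-step detailed balance of $P_{0.5}$.
\item TB is the case $\mathfrak{t}'=\mathfrak{t}^f$. On the closed cycle with $\pi(s_s)=\pi(s_f)$, it is the Kolmogorov cycle criterion for the loop $s_s\to\cdots\to s_f\equiv s_s$.
\end{itemize}

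\textbf{Uniqueness.} If $F$ satisfies the balance conditions, then $\pi\propto F$ is stationary for $P_{0.5}$, because detailed balance implies stationarity. Every state lies on some complete trajectory, and forward and backward moves are both allowed, so $P_{0.5}$ is irreducible on the finite state space. It therefore has a unique stationary distribution. This pins down $F$ once it is fixed at $s_f$ by $F(s_f)=\sum_x R(x)$.

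TB requires separate treatment, since it constrains only complete trajectories. Here I would argue, as in the standard TB uniqueness result, that TB together with fixed $P_B$ determines $P_F$. The flows defined by $F(s)=\sum P_F$-path mass then recover reversibility at all levels.

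\textbf{Main obstacle.} I expect the hardest step to be making the Markov chain well defined at the boundary. Identifying $s_f$ with $s_s$ must be done while keeping $P_{0.5}$ stochastic and irreducible. One also has to show that reversibility restricted to DAG-direction subtrajectories is equivalent to full reversibility; I would handle this via Kolmogorov's criterion.
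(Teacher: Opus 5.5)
For SubTB and DB your argument is the paper's. You normalise $\pi=F/Z_{\text{state}}$, use that $P_F$ and $P_B$ live on opposite orientations of each edge so the factors $2^{-m}$ cancel, and recover SubTB from edge-level detailed balance by telescoping. Uniqueness then follows because the finite chain $P_{0.5}$ is irreducible and so has a unique stationary measure. Your explicit converse, telescoping detailed balance along arbitrary $P_{0.5}$-paths, is a small addition the paper leaves implicit.

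You genuinely diverge on TB. The paper only observes that TB is the loop instance of Kolmogorov's criterion, hence merely a necessary condition, and concedes that uniqueness under TB is ``fragile''. You instead invoke TB uniqueness with $P_B$ fixed and set $F(s)=F(s_s)\,\mathbf{P}_F[s\in\mathfrak{t}]$. The step you should make explicit is this: TB on all complete trajectories says the $P_F$-trajectory law equals the law of the backward $P_B$-chain started at $s_f$ with first step $P_B(\cdot\mid s_f)=R/F(s_f)$. Hence, conditionally on visiting $s'$, the predecessor has law $P_B(\cdot\mid s')$, which is exactly $F(s)P_F(s'\mid s)=F(s')P_B(s\mid s')$. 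This turns the paper's conditional remark into a real implication: TB gives detailed balance of $P_{0.5}$ for a flow that is unique given $P_B$. The price is that this step rests on GFlowNet theory rather than Markov-chain theory.

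There is, however, a concrete error exactly at the boundary you flagged. Once $s_s$ and $s_f$ are identified as $\bar s$, the graph is no longer acyclic. Your claim that at most one of $P_F(s'\mid s)$, $P_B(s'\mid s)$ is nonzero then fails whenever some child $x$ of $s_s$ is terminal, because
$P_{0.5}(x\mid\bar s)=\tfrac12 P_F(x\mid s_s)+\tfrac12 R(x)/F(s_f)$ and $P_{0.5}(\bar s\mid x)=\tfrac12 P_F(s_f\mid x)+\tfrac12 P_B(s_s\mid x)$.
For any (sub)trajectory using such an edge, reversibility of $P_{0.5}$ no longer reduces to SubTB, and the closed-loop identity is no longer TB. On that merged edge, reversibility only yields the sum of the two DB equations. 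The paper's TB argument also merges $s_s$ and $s_f$, so it has the same glitch.

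The repair is not to identify the two states. Instead, add a single closing edge $s_f\to s_s$ with $P_F(s_s\mid s_f)=1$, as in App.~\ref{app:supporting theoretical framework}, and $P_B(s_f\mid s_s)=1$. Provided $s_s\notin\mathcal{X}$, every cycle then has length at least $3$. One-directionality then holds on every edge, $P_{0.5}$ is stochastic, and detailed balance on the new edge reads $F(s_f)=F(s_s)$. The loop $s_s\to\cdots\to x\to s_f\to s_s$ then yields exactly TB. With this change the rest of your proof goes through.
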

The proofs for Prop.~\ref{prop:reversibility of equally mixed policy} and Thm.~\ref{thm:gflownet objectives and reversibility} appear in App.~\ref{app:proof for reversibility of equally mixed policy} and App.~\ref{app:proof for gflownet objectives and reversibility}, respectively. App.~\ref{app:supporting theoretical framework} further clarifies the connection between Markov chains and GFlowNets, including the MC$\to$GFN link which is not elucidated in~\citet{deleu2023generative}.

\subsection{Generalizing GFlowNet Objectives via $\alpha$-Mixing}
Now, it is clear that the objectives of GFlowNets adopt an equal weight mixing of $P_F$ and $P_B$. 
However, the equal weights may not be ideal for all the settings.
By analogy with the way two policies can be mixed by taking a convex combination of the probability distributions, we propose a flexible mixing regime with a hyperparameter $\alpha \in (0,1)$ as the mixing ratio.
To be specific, by plugging an \textbf{arbitrarily mixed policy} $P_{\alpha}=\alpha P_F+(1-\alpha)P_B$ into \eqref{eq:subtb}, Prop.~\ref{prop:reversibility of equally mixed policy}, and Thm.~\ref{thm:gflownet objectives and reversibility}, we obtain objectives corresponding to reversibility of $P_{\alpha}$, termed $\alpha$-GFN objectives. 

\begin{definition}[\textbf{$\alpha$-GFN objectives}]
\label{def:alpha-gfn objectives}
Given $\alpha \in (0,1)$, the $\alpha$-SubTB loss aims at 
\begin{equation}
\begin{split}
    &\alpha^mF(s_k)\prod_{i=1}^m  P_F(s_{k+i} | s_{k+i-1}) \\
    = &(1-\alpha)^mF(s_{k+m}) \prod_{i=1}^{m} P_B(s_{k+i-1} | s_{k+i})
\end{split}
\end{equation}
for any partial trajectory $\mathfrak{t}'=(s_k,\dots,s_{k+m})$. Applying this hyperparameter to other variants, such as forward-looking ones, yields similar objectives, specifically, $\alpha$-FL-SubTB aims at ensuring that
\begin{equation}
\begin{split}
    &\alpha^m F(s_k)\prod_{i=1}^m P_F(s_{k+i} | s_{k+i-1}) \\
    = &(1-\alpha)^m F(s_{k+m}) \prod_{i=1}^{m} P_B(s_{k+i-1} | s_{k+i}) e^{-\mathcal{E}(s_{k+i-1},s_{k+i}) }.
\end{split}
\end{equation}
\end{definition}
Other objectives, e.g. $\alpha$-DB, $\alpha$-TB, $\alpha$-FL-DB and $\alpha$-FL-SubTB, are defined similarly. Although the objetives in Def.~\ref{def:alpha-gfn objectives} violate the balance condition of vanilla GFlowNets, their \textbf{convergence} to unique flows is described by the link to MC reversibility (proof in App.~\ref{app:proof for convergence of alpha-gfn objectives}):
\begin{proposition}
\label{prop:convergence of alpha-gfn objectives}
The targets of $\alpha$-SubTB is equivalent to the partial-trajectory-level reversibility of a MC with the transition kernel $P_{\alpha}$, and those of $\alpha$-DB, $\alpha$-TB and the variants are similar. Moreover, their convergence to unique flows is similar to vanilla GFN objectives for all $\alpha \in (0,1)$.
\end{proposition}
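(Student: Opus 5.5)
The proof mirrors Prop.~\ref{prop:reversibility of equally mixed policy} and Thm.~\ref{thm:gflownet objectives and reversibility}, with $1/2$ replaced by $\alpha$ and $1-\alpha$. First I make explicit the kernel on the undirected skeleton of $G$. For an edge $(s,s')\in\mathbb{A}$ set $P_\alpha(s'|s)=\alpha P_F(s'|s)$ and $P_\alpha(s|s')=(1-\alpha)P_B(s|s')$. Because $G$ is a DAG, at most one of $(s,s')$ and $(s',s)$ is an edge, so the two definitions never collide. Summing over outgoing and incoming edges gives total mass $\alpha+(1-\alpha)=1$ at every interior state. At $s_s$ and $s_f$, where one of the two policies is absent, I will close the chain using the convention $P_B(x\mid s_f)=R(x)/F(s_f)$ together with a self-loop or renormalisation, exactly as in the $\alpha=1/2$ construction of App.~\ref{app:supporting theoretical framework}.

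Next I evaluate~\eqref{eq:reversibility} for $P_\alpha$ along a partial trajectory $(s_k,\dots,s_{k+m})$ with $\pi\propto F$. Every forward step contributes $\alpha P_F$ and every backward step contributes $(1-\alpha)P_B$. This gives precisely the identity in Def.~\ref{def:alpha-gfn objectives}, with the factors $\alpha^m$ and $(1-\alpha)^m$ appearing naturally. The normalising constant of $\pi$ cancels because it appears on both sides. For DB take $m=1$. For TB take the complete trajectory, using $F(s_s)=F(s_f)$ and the terminal convention. The FL variants follow by absorbing $e^{-\mathcal{E}}$ into the backward kernel through the same reparameterisation of flows as in the vanilla case. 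Conversely, to go from the objective to reversibility, I note that reversibility of a kernel is equivalent to one-step detailed balance: the $m$-step identities follow by telescoping products of the $m=1$ case, and the $m=1$ identity is itself a special case. So $\alpha$-SubTB, $\alpha$-DB and $\alpha$-TB all characterise the same reversible chain, which proves the equivalence claim.

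For convergence and uniqueness I reparameterise. Put $\tilde F(s)=F(s)\,(\alpha/(1-\alpha))^{d(s)}$, where $d(s)$ is the depth of $s$ along the trajectory, so that each step contributes $\alpha/(1-\alpha)$. The $\alpha$-DB condition then turns into vanilla DB for $\tilde F$. This is the step I expect to be the main obstacle. In a general DAG, different paths to $s$ can have different lengths, so $d(s)$ is not well defined and the factor $(\alpha/(1-\alpha))^m$ cannot be absorbed into a state function. In that case I would not reduce to the vanilla setting. Instead I would argue directly on the chain, using three facts: $P_\alpha$ is irreducible on the connected skeleton; an irreducible reversible kernel has a unique stationary distribution; and detailed balance forces $\pi=\pi P_\alpha$. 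This fixes $F$ up to scale, and the scale is pinned down by $F(s_f)=\sum_x R(x)$. Once $F$ is fixed, $P_F$ and $P_B$ are determined by the balance equations and the normalisation constraints $\sum_{s'}P_F(s'|s)=1$ and $\sum_{s}P_B(s|s')=1$. These are the same ingredients used for Thm.~\ref{thm:gflownet objectives and reversibility}, and none of them depends on the specific value of $\alpha\in(0,1)$. It remains to check that the marginal induced by $P_F$ at $s_f$ is still $\propto R$. I would verify this from the stationary flux into $s_f$. If the $\alpha$-weights distort that flux, I would instead state the result as convergence to a unique flow with an explicitly $\alpha$-dependent terminal distribution.
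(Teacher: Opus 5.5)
Your substitution step is the paper's own proof. With $\pi=F/Z_{\text{state}}$ and one-directionality, each forward step contributes $\alpha P_F$ and each backward step $(1-\alpha)P_B$. Your fallback for uniqueness, irreducibility plus detailed balance of $P_\alpha$, is also the argument the paper invokes.

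The gap is at the boundary. You propose to handle it ``exactly as in the $\alpha=1/2$ construction'', and you use $F(s_s)=F(s_f)$ for TB. For $\alpha\neq 1/2$ this contradicts the $\alpha$-targets. Summing the $\alpha$-DB identity over the parents of $s'$ gives $\sum_s F(s)P_F(s'\mid s)=\frac{1-\alpha}{\alpha}F(s')$. So flow is not conserved: it is multiplied by $\frac{\alpha}{1-\alpha}$ at each state. Concretely, take the $\alpha$-TB identity (implied by $\alpha$-DB and $\alpha$-SubTB by telescoping), sum it over all complete trajectories, and use $\sum_\tau P_B(\tau)=1$. This gives
\begin{equation*}
F(s_s)\,\mathbb{E}_{\tau\sim P_F}\Bigl[\bigl(\tfrac{\alpha}{1-\alpha}\bigr)^{n(\tau)}\Bigr]=F(s_f),
\end{equation*}
where $n(\tau)\geq 1$ is the number of edges of $\tau$. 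Every term inside the expectation lies strictly on the same side of $1$, so $F(s_s)\neq F(s_f)$. Two consequences follow.
\begin{itemize}
\item With your convention $F(s_s)=F(s_f)$, the $\alpha$-TB target has no solution at all.
\item Any closure linking $s_f$ back to $s_s$ (identifying them, or the transition $s_f\to s_s$ of App.~B) fixes the ratio $F(s_s)/F(s_f)$ at $1$, respectively $\frac{\alpha}{1-\alpha}$. The display rules out both values. So no flow solving the $\alpha$-targets is a reversible measure for the closed $P_\alpha$, and the stationary-distribution argument has nothing to act on.
\end{itemize}
The paper's appendix is silent on this point as well. The repair is as follows. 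Keep $s_s\neq s_f$. Put the missing mass on self-loops ($1-\alpha$ at $s_s$, $\alpha$ at $s_f$). Impose reversibility only along DAG edges. Treat $F(s_s)$, the $Z$ of TB, as determined by the display rather than normalised to $\sum_x R(x)$.

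Your uniqueness step is also circular. Uniqueness of the stationary law holds for a \emph{fixed} kernel, but $P_\alpha$ contains the unknowns $P_F$ and $P_B$. Your claim that ``once $F$ is fixed, $P_F$ and $P_B$ are determined'' fails as soon as a state has several parents, because node flows do not determine edge flows.

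What you can prove, mirroring the vanilla result, is uniqueness for a fixed $P_B$. This needs neither Markov chain theory nor a depth function:
\begin{itemize}
\item Summing $\alpha$-DB over the children of $s$ gives $F(s)=\frac{1-\alpha}{\alpha}\sum_{s'}F(s')P_B(s\mid s')$.
\item This is a backward recursion in reverse topological order, seeded by $F(s_f)P_B(x\mid s_f)=R(x)$.
\item Afterwards $P_F(s'\mid s)=\frac{(1-\alpha)F(s')P_B(s\mid s')}{\alpha F(s)}$ is forced and automatically normalised.
\end{itemize}
This works on non-graded DAGs, so your ``main obstacle'' disappears. (On graded DAGs, the reparameterisation that actually reduces to vanilla DB is $\tilde F(s)=F(s)\bigl(\frac{1-\alpha}{\alpha}\bigr)^{d(s)}$, not the reciprocal.)

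The recursion also settles your final check. Telescoping gives $P_F^{\top}(x)\propto R(x)\,\mathbb{E}_{P_B(\cdot\mid x)}\bigl[\bigl(\frac{1-\alpha}{\alpha}\bigr)^{n(\tau)}\bigr]$. This is proportional to $R$ whenever all complete trajectories have the same length, but is $\alpha$-distorted in general.
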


Despite the convergence of training with $\alpha \neq 0.5$ and the potential benefits (see Sec.~\ref{sec:effects of alpha}), such training may not yield a terminating policy $P_F^{\top}$ that matches the reward distribution. Theoretically, this is a possible result of flow imbalance, since  $P_F^{\top}(x) \propto R(x)$ is achieved when flows are balanced~\cite{bengio2023gflownet}. Hence, we propose a scheduling algorithm to combine the strengths of different $\alpha$ values.
\subsection{Scheduling of $\alpha$}
\label{sec:scheduling}

\begin{figure*}[t]
  \centering
  \setlength{\tabcolsep}{0pt} 
  \begin{tabular}{@{}c@{\hspace{17pt}}c@{\hspace{17pt}}c@{}}
    \includegraphics[width=.27\textwidth]{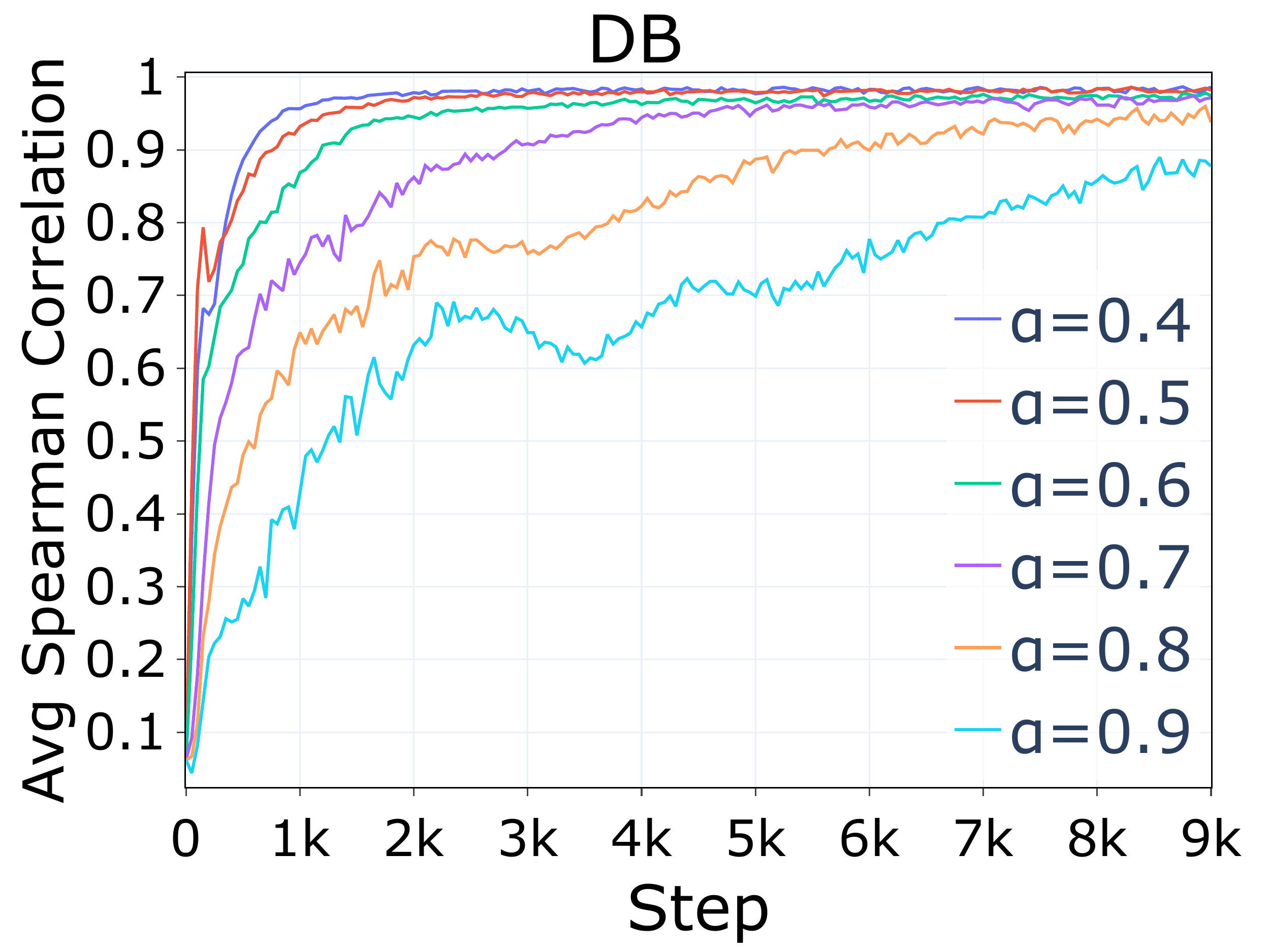} &
    \includegraphics[width=.27\textwidth]{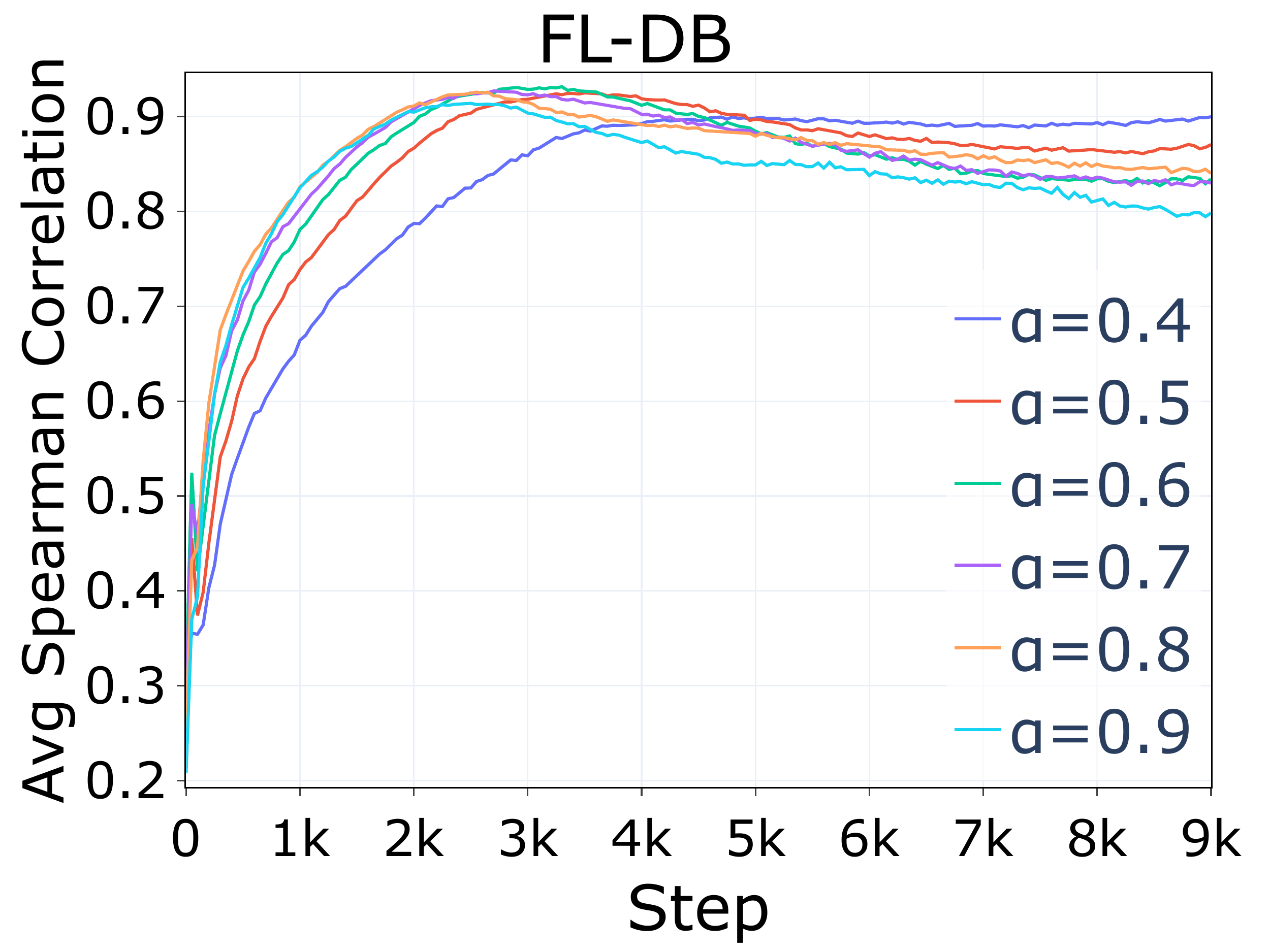} &
    \includegraphics[width=.27\textwidth]{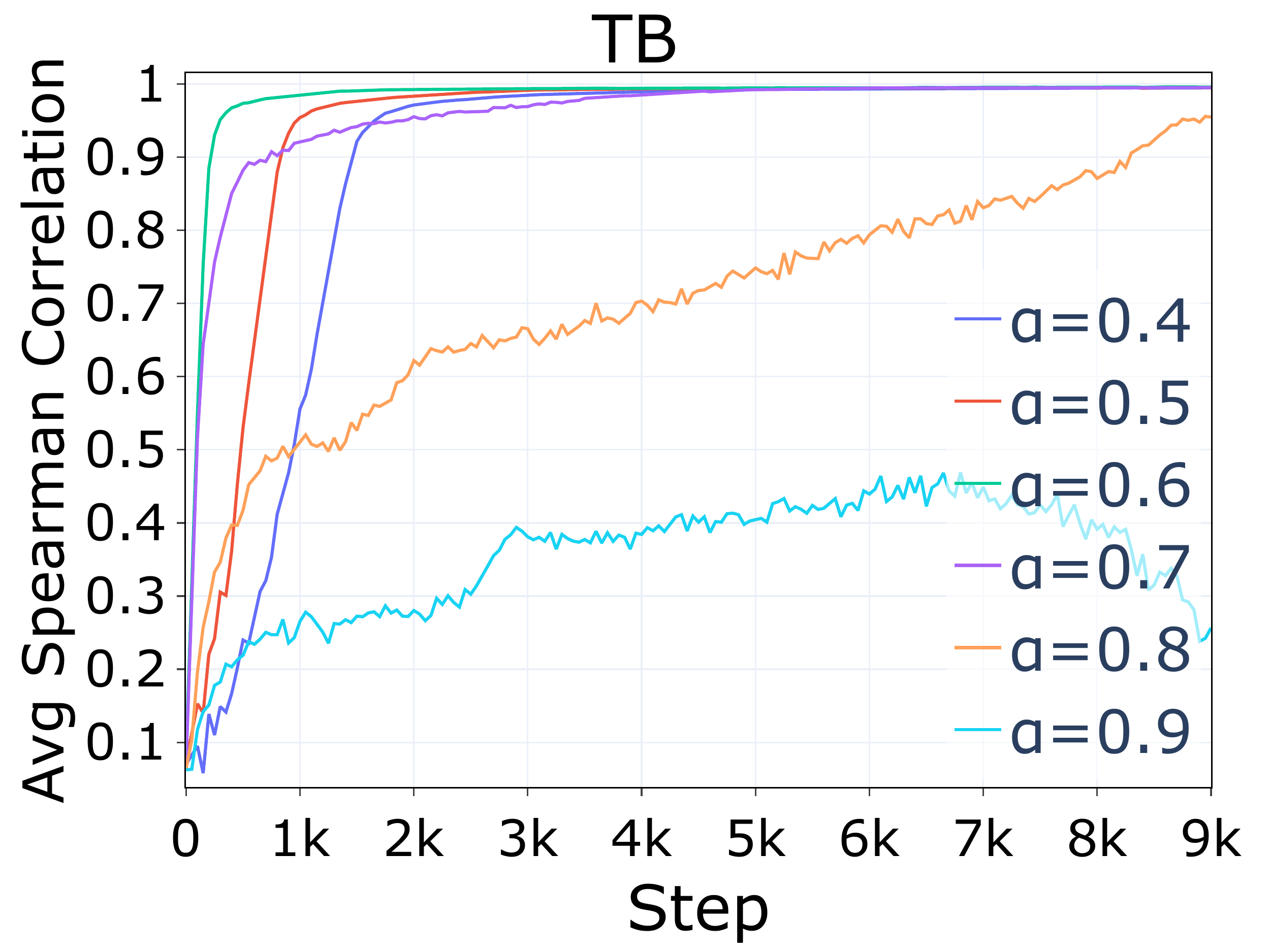}
  \end{tabular}
  \caption{Spearman correlations ($P_F^{\top}$ vs. $R$) for $\alpha$-GFNs under \textbf{(Left)} DB, \textbf{(Center)} FL-DB, and \textbf{(Right)} TB objectives in large sets.}
\label{fig:alpha-gfn spearman correlation}
\end{figure*}
Fixing the value of $\alpha$ may lead to undesirable empirical effects. As shown in Fig.~\ref{fig:alpha-gfn spearman correlation}, certain choices of $\alpha$ largely degrade the reward-fitting ability of the forward policy $P_F$.
To address this, we propose a two-stage scheduling algorithm that retains the benefits of $\alpha$-GFNs while preserving the fitting behavior of vanilla GFNs: (i) start with $\alpha$ far from $0.5$ (e.g., $0.1$--$0.4$ or $0.6$--$0.9$) and keep it fixed for a set number of steps; (ii) gradually anneal $\alpha$ to $0.5$ over the remaining steps. See Alg.~\ref{alg:two-staged-training} for details.
\begin{algorithm}[H]
  \caption{Scheduled Training of $\alpha$-GFNs}
  \label{alg:two-staged-training}
  \footnotesize
  \begin{algorithmic}[1]
    \STATE {\bfseries Input:} total steps $N$, stage-1 steps $N_1$, initial $\alpha_0$, scheduling function $f$\footnotemark
    \STATE Initialize $\alpha \leftarrow \alpha_0$.
    \STATE Select the vanilla objective $\mathcal{L}$ and augment it with $\alpha$ to obtain the $\alpha$-GFN objective $\mathcal{L}_{\alpha}$.
    \STATE Initialize model parameters $\theta$, forward/backward policies $P_F^{\theta}, P_B^{\theta}$, and flow function $F^{\theta}$.
    \FOR{$n = 1$ {\bfseries to} $N$}
      \IF{$n > N_1$}
        \STATE $\alpha \leftarrow f(\alpha_0, n, N_1, N)$
      \ENDIF
      \STATE Update $\theta$ by minimizing $\mathcal{L}_{\alpha}(P_F^{\theta}, P_B^{\theta}, F^{\theta})$.
    \ENDFOR
  \end{algorithmic}
\end{algorithm}

\footnotetext{$f$ is a scheduling function, e.g.,
$f(\alpha_0,n,N_1,N)=0.5+(\alpha_0-0.5)\exp\!\big(-4\cdot \frac{n-N_1}{N-N_1}\big)$.}

\subsection{Flexible Exploration-Exploitation Trade-Off with $\alpha$}
\label{sec:effects of alpha}

How does $\alpha$ contribute to the training of GFlowNets? 
MC theory suggests that convergence rates of $\alpha$-GFN objectives to unique flows may vary exponentially for different $\alpha$ values (see App.~\ref{app:convergence rate}).
Although GFNs are not optimized via MCMC, this property still has significant impact on the behavior of the forward policy.

\begin{figure*}[t]
  \centering
  \setlength{\tabcolsep}{0pt}
  \begin{tabular}{@{}c@{\hspace{17pt}}c@{\hspace{17pt}}c@{}}
    \includegraphics[width=.27\textwidth]{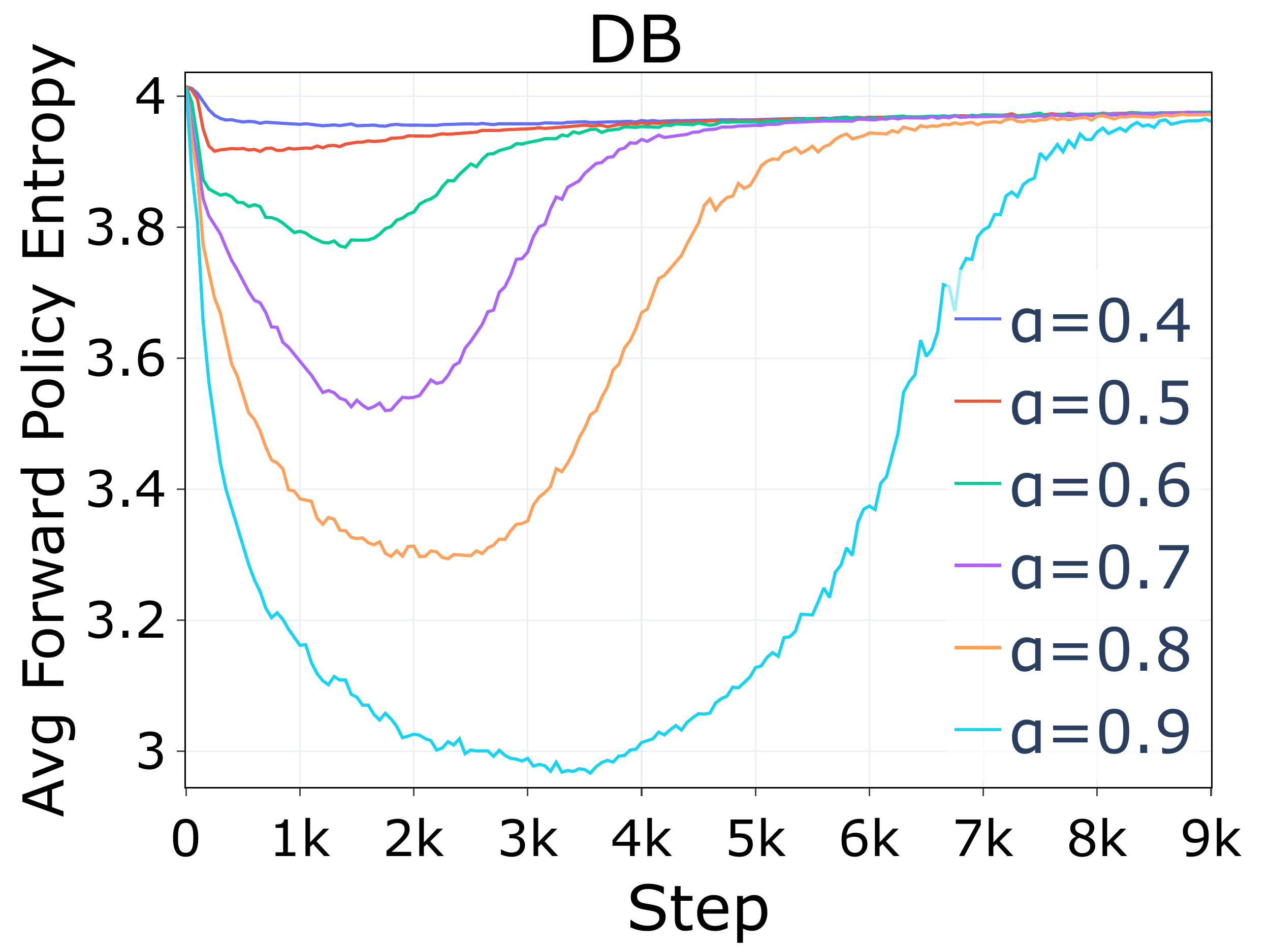} &
    \includegraphics[width=.27\textwidth]{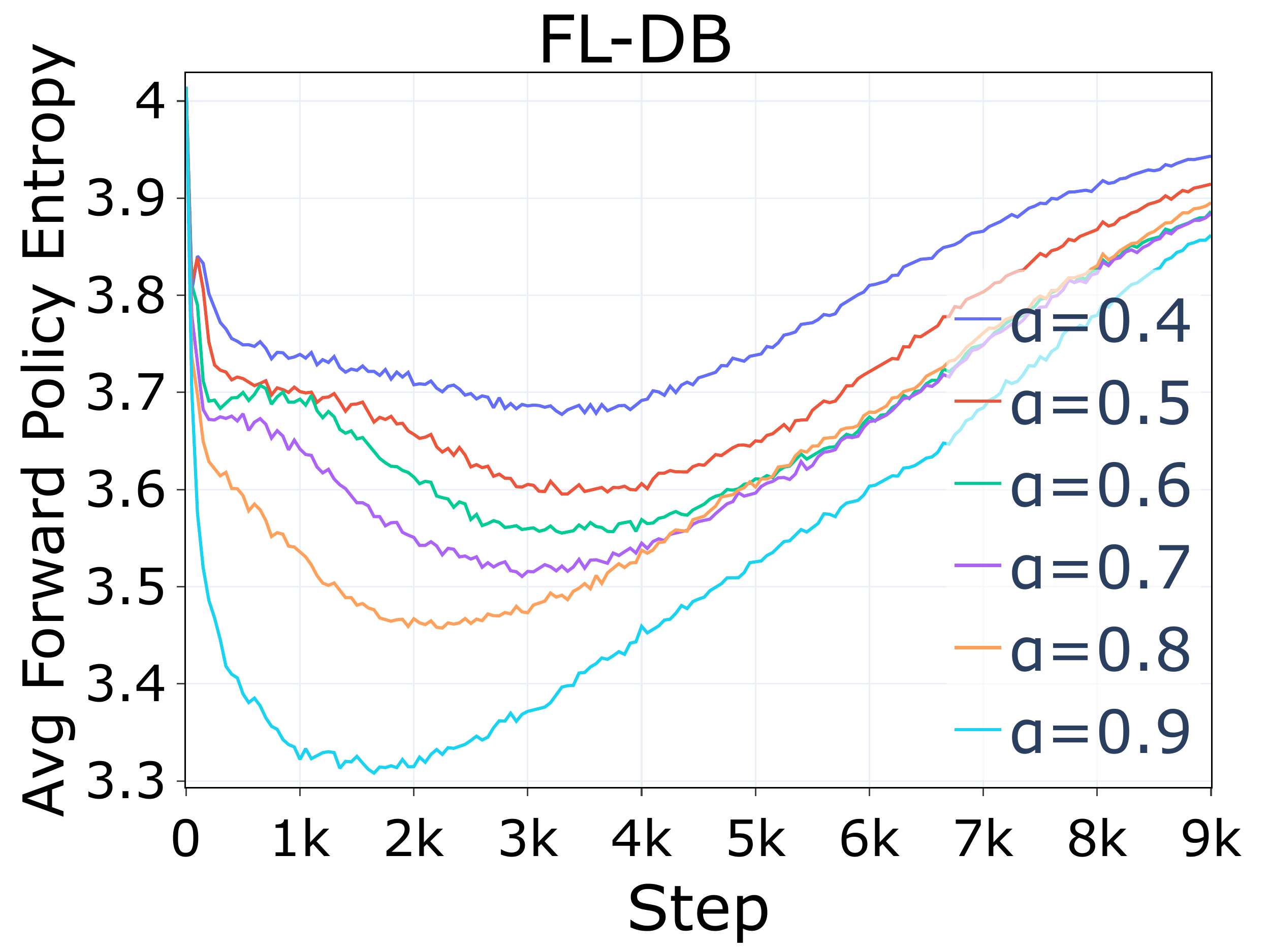} &
    \includegraphics[width=.27\textwidth]{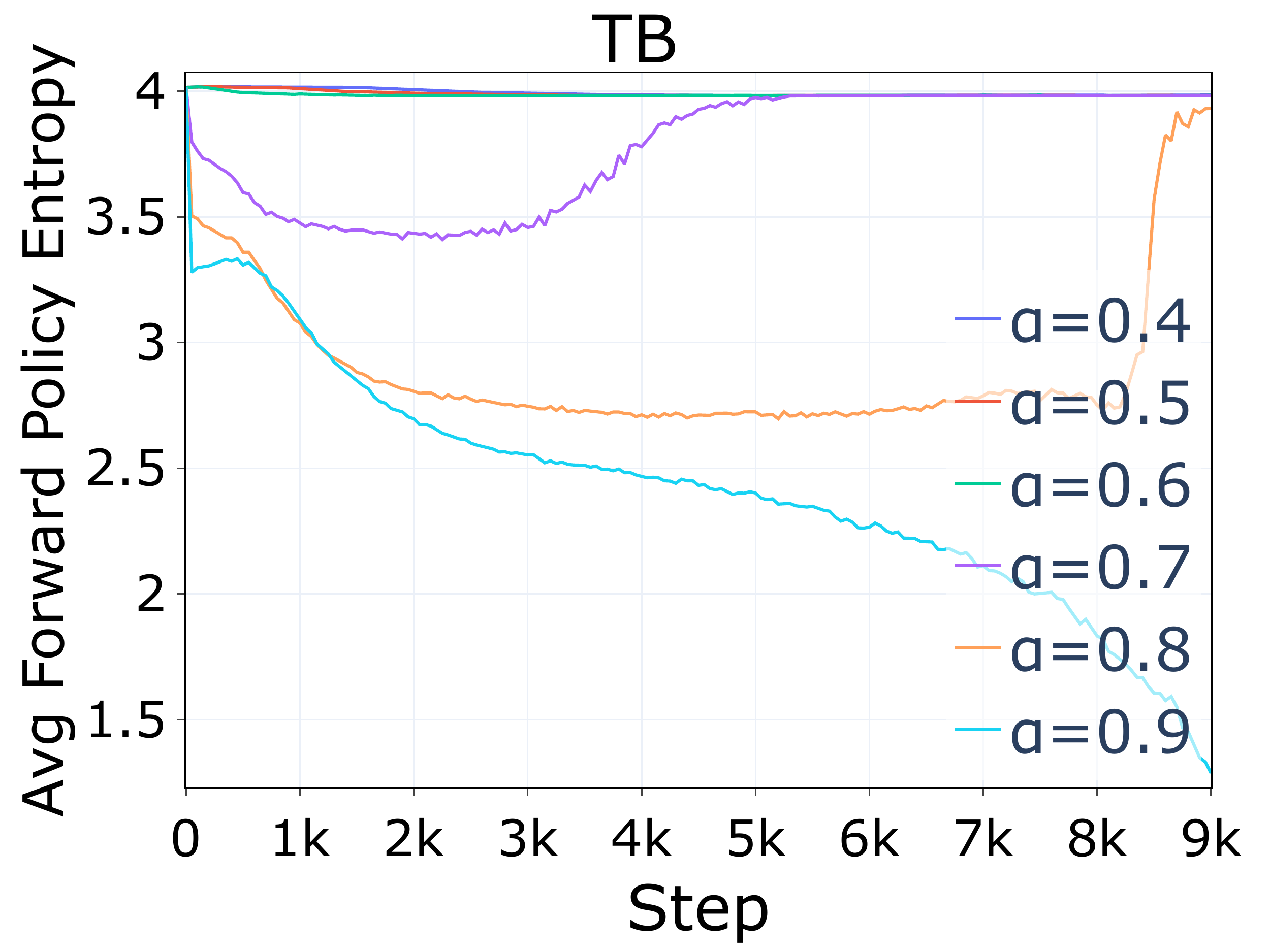} 
  \end{tabular}
  \caption{Entropy of the forward policy $P_F$ for $\alpha$-GFNs under \textbf{(Left)} DB, \textbf{(Center)} FL-DB, and \textbf{(Right)} TB objectives in large sets. Increasing $\alpha$ reduces entropy, indicating a shift toward stronger reward-exploitation, whereas decreasing $\alpha$ promotes exploration.}
\label{fig:alpha-gfn entropy}
\end{figure*}

The hyperparameter $\alpha \in (0,1)$ is the mixing ratio in $P_{\alpha}=\alpha P_F+(1-\alpha)P_B$, which controls the contribution of the forward policy $P_F$. In practice, $\alpha$ scales the training pressure on $P_F$: larger values of $\alpha$ accelerate exploitation of current estimates, while smaller values temper it. Combined with the GFlowNet target $P_F^{\top}(x)\propto R(x)$ and the convergence rates of different MCs, this leads to the following heuristic. When $\alpha>0.5$, exploitation dominates, quickly suppressing low-reward or unseen actions and concentrating mass on high-reward ones. When $\alpha<0.5$, exploitation slows down, which sustains broader exploration and produces a flatter action distribution. Empirically, the entropy dynamics in Fig.~\ref{fig:alpha-gfn entropy} follow this pattern: larger $\alpha$ induces an early drop in per-action entropy, while smaller $\alpha$ maintains higher entropy. This behavior can be explained by examining the gradient more closely: 
\begin{proposition}[Gradient of $\alpha$-GFN objectives, proof in App.~\ref{app:proof for gradient of alpha-gfn objectives}]
\label{prop:gradient of alpha-gfn objectives}
We take SubTB as an example and denote $P_F(\mathfrak{t}')=\prod_{i=1}^m P_F(s_{k+i}\mid s_{k+i-1})$ for simplicity. Since that GFN losses are log-square functions of their targets, the gradient of $\alpha$-SubTB loss can be expressed as a modification of the SubTB loss gradient for $P_F$ as
\begin{equation}
        \frac{\partial L_{\alpha-\text{SubTB}}(\mathfrak{t}')}{\partial P_F(\mathfrak{t'})} = \frac{\partial L_{\text{SubTB}}(\mathfrak{t}')}{\partial P_F(\mathfrak{t'})} + \frac{2m}{P_F(\mathfrak{t}')} \log \frac{\alpha}{1-\alpha}.
\end{equation}
\end{proposition}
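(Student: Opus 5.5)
We work directly from the explicit form of the losses and treat $P_F(\mathfrak{t}')$ as a single positive scalar variable, with $F$ and $P_B$ held fixed. By the paper's convention, SubTB is the log-square of the ratio of the two sides of \eqref{eq:subtb}. Write the SubTB residual as
\begin{equation*}
\delta(\mathfrak{t}') \;=\; \log F(s_k) + \log P_F(\mathfrak{t}') - \log F(s_{k+m}) - \log P_B(\mathfrak{t}'),
\end{equation*}
where $P_B(\mathfrak{t}')=\prod_{i=1}^m P_B(s_{k+i-1}\mid s_{k+i})$. Then $L_{\text{SubTB}}(\mathfrak{t}')=\delta(\mathfrak{t}')^2$.

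The $\alpha$-SubTB loss is the log-square of the ratio of the two sides in Def.~\ref{def:alpha-gfn objectives}. The factors $\alpha^m$ and $(1-\alpha)^m$ are constants, so taking logarithms gives
\begin{equation*}
L_{\alpha\text{-SubTB}}(\mathfrak{t}') \;=\; \Bigl(\delta(\mathfrak{t}') + m\log\tfrac{\alpha}{1-\alpha}\Bigr)^2 .
\end{equation*}
The key structural point is that the $\alpha$-objective is just the vanilla residual shifted by the constant $c_\alpha = m\log\frac{\alpha}{1-\alpha}$. This constant does not depend on any model quantity.

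Next we differentiate. Only the term $\log P_F(\mathfrak{t}')$ in $\delta$ depends on $P_F(\mathfrak{t}')$, and its derivative is $1/P_F(\mathfrak{t}')$. The chain rule then gives
\begin{equation*}
\frac{\partial L_{\text{SubTB}}(\mathfrak{t}')}{\partial P_F(\mathfrak{t}')} = \frac{2\delta(\mathfrak{t}')}{P_F(\mathfrak{t}')}
\end{equation*}
and
\begin{equation*}
\frac{\partial L_{\alpha\text{-SubTB}}(\mathfrak{t}')}{\partial P_F(\mathfrak{t}')} = \frac{2\delta(\mathfrak{t}')+2c_\alpha}{P_F(\mathfrak{t}')}.
\end{equation*}
Subtracting the first from the second yields exactly the extra term $\frac{2m}{P_F(\mathfrak{t}')}\log\frac{\alpha}{1-\alpha}$, which is the claim.

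There is no genuine obstacle here; the only point needing care is interpretive. The derivative is taken with respect to $P_F(\mathfrak{t}')$ as an independent variable, not with respect to individual transition probabilities or network parameters. If gradients with respect to parameters $\theta$ are wanted, we would note that the same additive decomposition passes through the chain rule, multiplied by $\partial P_F(\mathfrak{t}')/\partial\theta$. Finally, we would remark that the same computation applies verbatim to $\alpha$-DB (the case $m=1$), to $\alpha$-TB, and to the FL variants, since the energy terms also do not depend on $P_F$.
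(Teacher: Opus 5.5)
Your proof is correct and follows the paper's argument. The paper likewise writes out the $\alpha$-SubTB loss, differentiates with respect to $P_F(\mathfrak{t}')$ treated as a scalar, and splits the logarithm into the vanilla SubTB residual plus the constant $m\log\frac{\alpha}{1-\alpha}$ before comparing with the vanilla gradient; your form $(\delta+c_\alpha)^2$ simply makes that split explicit from the start.
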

For $\alpha>0.5$, the term $\frac{2m}{P_F(\mathfrak{t}')} \log\frac{\alpha}{1-\alpha}$ is larger when $P_F(\mathfrak{t}')$ is small (low reward) and smaller when $P_F(\mathfrak{t}')$ is large (high reward). Consequently, low-reward probabilities decay faster while high-reward ones decay more slowly, sharpening the distribution and strengthening exploitation.
For $\alpha<0.5$, $\log\frac{\alpha}{1-\alpha}<0$ reverses the effect, which yields a flatter distribution and contributes to exploration. In addition, Proposition~\ref{prop:gradient of alpha-gfn objectives} shows that the exploration-exploitation effects scales with $\bigl|m\log\frac{\alpha}{1-\alpha}\bigr|$.

Nevertheless, training $P_F$ under the $\alpha$-GFN objectives may suffer from over-exploitation when $\alpha$ is too large or inefficient credit assignment when $\alpha$ is too small. The former is reflected by a drop in the correlation between $P_F$ and the comparison of reward in Fig.~\ref{fig:alpha-gfn spearman correlation}, and the latter is reflected by low reward of vanilla GFN objectives in Fig.~\ref{fig:main_figure}. 
Thus, we schedule $\alpha$ with Alg.~\ref{alg:two-staged-training} to combine the advantages of different $\alpha$ values and avoid the undesirable effects of a fixed $\alpha$.

\begin{table*}[t]
\caption{\textit{Results on Set Generation.}  $\alpha$-GFNs perform better at reward and modes across all settings by enhancing the reward exploitation of GFlowNets. For all metrics except Spearman correlation, we \textbf{bold} the better result. Standard deviations are presented in \textcolor{gray}{gray}.}
\centering
\setlength{\tabcolsep}{6pt}
\renewcommand{\arraystretch}{1.2}
\resizebox{\linewidth}{!}{%
\begin{tabular}{@{\hspace{.6em}}clcccccccccc@{\hspace{.6em}}}
\toprule
\multicolumn{2}{c}{} &
\multicolumn{2}{c}{\textbf{DB}} & \multicolumn{2}{c}{\textbf{FL-DB}} & \multicolumn{2}{c}{\textbf{SubTB($\lambda$)}} & \multicolumn{2}{c}{\textbf{FL-SubTB($\lambda$)}} & \multicolumn{2}{c}{\textbf{TB}} \\
\cmidrule(lr){3-4}\cmidrule(lr){5-6}\cmidrule(lr){7-8}\cmidrule(lr){9-10}\cmidrule(lr){11-12}
\textbf{Set Size} & \textbf{Metric} & Baseline & Ours & Baseline & Ours & Baseline & Ours & Baseline & Ours & Baseline & Ours \\
\midrule
\multirow{3}{*}{Small} 
& Modes$\uparrow$ & \msl{24.8}{2.6} & \msl{\textbf{55.8}}{13.9} & \msl{83.0}{2.4} & \msl{\textbf{88.6}}{2.2} & \msl{20.0}{4.1} & \msl{\textbf{34.2}}{8.1} & \msl{89.6}{0.9} & \msl{\textbf{90.0}}{0.0} & \msl{23.2}{2.4} & \msl{\textbf{24.8}}{3.3} \\
& Top-1000 R$\uparrow$ & \msl{0.184}{0.001} & \msl{\textbf{0.204}}{0.006} & \msl{0.218}{0.001} & \msl{\textbf{0.220}}{0.002} & \msl{0.179}{0.002} & \msl{\textbf{0.191}}{0.006} & \msl{0.221}{0.000} & \msl{0.221}{0.000} & \msl{0.180}{0.001} & \msl{\textbf{0.184}}{0.002} \\
& Spearman & \msl{0.999}{0.000} & \msl{0.995}{0.001} & \msl{0.992}{0.002} & \msl{0.975}{0.007} & \msl{0.992}{0.005} & \msl{0.997}{0.001} & \msl{0.997}{0.001} & \msl{0.991}{0.002} & \msl{0.999}{0.000} & \msl{0.999}{0.000} \\
\cmidrule(lr){1-12}
\multirow{3}{*}{Medium} 
& Modes$\uparrow$ & \msl{0.0}{0.0} & \msl{\textbf{31.4}}{34.1} & \msl{14.2}{10.1} & \msl{\textbf{118.6}}{45.9} & \msl{0.0}{0.0} & \msl{\textbf{5.2}}{6.8} & \msl{16.0}{8.6} & \msl{\textbf{258.6}}{296.3} & \msl{0.0}{0.0} & \msl{\textbf{499.0}}{427.1} \\
& Top-1000 R$\uparrow$ & \msl{110427}{5337} & \msl{\textbf{552020}}{59210} & \msl{507463}{47471} & \msl{\textbf{638688}}{24711} & \msl{160689}{40231} & \msl{\textbf{434821}}{78121} & \msl{531806}{14738} & \msl{\textbf{655796}}{68335} & \msl{44854}{1481} & \msl{\textbf{703904}}{39195} \\
& Spearman & \msl{0.993}{0.004} & \msl{0.973}{0.005} & \msl{0.968}{0.008} & \msl{0.935}{0.013} & \msl{0.968}{0.007} & \msl{0.956}{0.013} & \msl{0.972}{0.006} & \msl{0.775}{0.031} & \msl{0.998}{0.000} & \msl{0.862}{0.300} \\
\cmidrule(lr){1-12}
\multirow{3}{*}{Large} 
& Modes$\uparrow$ & \msl{0.0}{0.0} & \msl{\textbf{355.8}}{319.8} & \msl{247.6}{147.2} & \msl{\textbf{2239.2}}{169.8} & \msl{0.0}{0.0} & \msl{\textbf{2.0}}{4.5} & 
\msl{118.6}{84.8} & \msl{\textbf{394.4}}{253.6}
& \msl{0.0}{0.0} & \msl{\textbf{591.6}}{210.6} \\
& Top-1000 R$\uparrow$ & \msl{58087}{5217} & \msl{\textbf{624722}}{55337} & \msl{593090}{67865} & \msl{\textbf{768545}}{6904} & \msl{53579}{27674} & \msl{\textbf{159145}}{87727} &
\msl{542497}{54448} & \msl{\textbf{635917}}{57065}
& \msl{11754}{231} & \msl{\textbf{683702}}{32707} \\
& Spearman & \msl{0.984}{0.004} & \msl{0.945}{0.004} & \msl{0.902}{0.019} & \msl{0.847}{0.018} & \msl{0.944}{0.008} & \msl{0.916}{0.017} & 
\msl{0.901}{0.029} & \msl{0.873}{0.031}
& \msl{0.996}{0.000} & \msl{0.996}{0.001} \\
\bottomrule
\end{tabular}
}
\label{tab:set-exp-results}
\end{table*}

\begin{table*}[!b]
\caption{\textit{Results on Bit Sequence Generation.} In terms of number of modes on average, $\alpha$-GFN objectives outperform vanilla GFlowNet objectives across 92\% task settings. For modes, we \textbf{bold} the better result. Standard deviations are presented in \textcolor{gray}{gray}.}
\centering
\setlength{\tabcolsep}{6pt}
\renewcommand{\arraystretch}{1.2}
\resizebox{0.99\linewidth}{!}{
\begin{tabular}{llcccccccccc}
\toprule
\multicolumn{1}{c}{} & \multicolumn{1}{c}{} & \multicolumn{2}{c}{\textbf{DB}} & \multicolumn{2}{c}{\textbf{FL-DB}} & \multicolumn{2}{c}{\textbf{SubTB($\lambda$)}} & \multicolumn{2}{c}{\textbf{FL-SubTB($\lambda$)}} & \multicolumn{2}{c}{\textbf{TB}} \\
\cmidrule(lr){3-4}\cmidrule(lr){5-6}\cmidrule(lr){7-8}\cmidrule(lr){9-10}\cmidrule(lr){11-12}
\textbf{k} & \textbf{Metric} & Baseline & Ours & Baseline & Ours & Baseline & Ours & Baseline & Ours & Baseline & Ours \\
\midrule
\multirow{2}{*}{2} &  Modes$\uparrow$ & \msl{0.60}{0.89} & \msl{\textbf{2.60}}{2.07} & \msl{60.00}{0.00} & \msl{60.00}{0.00} & \msl{\textbf{22.80}}{13.27} & \msl{20.80}{11.39} & \msl{55.40}{6.02} & \msl{\textbf{59.80}}{0.45} & \msl{12.20}{2.39} & \msl{\textbf{16.80}}{3.77} \\
&  Spearman & \msl{0.32}{0.43} & \msl{0.50}{0.04} & \msl{0.63}{0.01} & \msl{0.62}{0.00} & \msl{0.47}{0.06} & \msl{0.51}{0.19} & \msl{0.48}{0.07} & \msl{0.55}{0.00} & \msl{0.47}{0.17} & \msl{0.54}{0.00} \\
\midrule
\multirow{2}{*}{4} &  Modes$\uparrow$ & \msl{9.80}{6.50} & \msl{\textbf{13.00}}{6.36} & \msl{57.60}{1.34} & \msl{\textbf{59.20}}{0.45} & \msl{35.40}{4.16} & \msl{\textbf{40.40}}{2.97} & \msl{58.80}{1.10} & \msl{\textbf{59.40}}{0.55} & \msl{38.00}{2.74} & \msl{\textbf{41.80}}{5.07} \\
&  Spearman & \msl{0.58}{0.00} & \msl{0.58}{0.00} & \msl{0.57}{0.00} & \msl{0.56}{0.00} & \msl{0.58}{0.00} & \msl{0.60}{0.02} & \msl{0.57}{0.00} & \msl{0.58}{0.01} & \msl{0.58}{0.00} & \msl{0.58}{0.00} \\
\midrule
\multirow{2}{*}{6} &  Modes$\uparrow$ & \msl{4.20}{1.92} & \msl{\textbf{5.20}}{1.30} & \msl{31.40}{5.03} & \msl{\textbf{32.00}}{4.06} & \msl{20.60}{3.36} & \msl{\textbf{22.20}}{7.33} & \msl{\textbf{48.20}}{2.39} & \msl{47.80}{1.92} & \msl{21.60}{2.70} & \msl{\textbf{23.40}}{3.71} \\
&  Spearman & \msl{0.55}{0.01} & \msl{0.56}{0.00} & \msl{0.55}{0.00} & \msl{0.55}{0.00} & \msl{0.55}{0.00} & \msl{0.59}{0.09} & \msl{0.56}{0.00} & \msl{0.56}{0.00} & \msl{0.56}{0.00} & \msl{0.56}{0.00} \\
\midrule
\multirow{2}{*}{8} &  Modes$\uparrow$ & \msl{36.40}{1.82} & \msl{\textbf{44.40}}{3.78} & \msl{59.80}{0.45} & \msl{\textbf{60.00}}{0.00} & \msl{58.60}{1.67} & \msl{\textbf{58.60}}{0.89} & \msl{60.00}{0.00} & \msl{\textbf{60.00}}{0.00} & \msl{58.80}{1.30} & \msl{\textbf{59.00}}{0.00} \\
&  Spearman & \msl{0.79}{0.00} & \msl{0.79}{0.00} & \msl{0.73}{0.01} & \msl{0.73}{0.01} & \msl{0.81}{0.00} & \msl{0.81}{0.00} & \msl{0.76}{0.01} & \msl{0.76}{0.01} & \msl{0.81}{0.00} & \msl{0.81}{0.00} \\
\midrule
\multirow{2}{*}{10} &  Modes$\uparrow$ & \msl{6.80}{2.05} & \msl{\textbf{8.20}}{2.39} & \msl{20.80}{5.02} & \msl{\textbf{23.40}}{0.55} & \msl{19.20}{4.92} & \msl{\textbf{21.60}}{3.05} & \msl{35.20}{2.28} & \msl{\textbf{37.40}}{2.79} & \msl{\textbf{24.40}}{3.36} & \msl{23.80}{1.79} \\
&  Spearman & \msl{0.57}{0.01} & \msl{0.57}{0.00} & \msl{0.57}{0.00} & \msl{0.57}{0.00} & \msl{0.57}{0.00} & \msl{0.57}{0.00} & \msl{0.57}{0.00} & \msl{0.57}{0.00} & \msl{0.57}{0.00} & \msl{0.57}{0.00} \\
\bottomrule
\end{tabular}
}
\label{tab:bit-exp-results}
\end{table*}

\begin{table*}[t]
\caption{\textit{Results on Molecule Generation.} $\alpha$-GFNs are better at the number of discovered modes for all objectives. 
Spearman correlations of FL-DB and FL-SubTB($\lambda$) are omitted due to their biased target~\citep{silva2025when}. For all metrics except Spearman correlation, we \textbf{bold} the better result. Standard deviations are presented in \textcolor{gray}{gray}.}
\centering
\small
\setlength{\tabcolsep}{6pt}
\renewcommand{\arraystretch}{1.2}
\resizebox{\linewidth}{!}{
\begin{tabular}{@{\hspace{.6em}}lcccccccccc@{\hspace{.6em}}}
\toprule
\multicolumn{1}{c}{} &
\multicolumn{2}{c}{\textbf{DB}} &
\multicolumn{2}{c}{\textbf{FL-DB}} &
\multicolumn{2}{c}{\textbf{SubTB($\lambda$)}} &
\multicolumn{2}{c}{\textbf{FL-SubTB($\lambda$)}} &
\multicolumn{2}{c}{\textbf{TB}} \\
\cmidrule(lr){2-3}\cmidrule(lr){4-5}\cmidrule(lr){6-7}\cmidrule(lr){8-9}\cmidrule(lr){10-11}
\textbf{Metric} & Baseline & Ours & Baseline & Ours & Baseline & Ours & Baseline & Ours & Baseline & Ours \\
\midrule
Modes$\uparrow$
& \msl{10.00}{3.08} & \msl{\textbf{14.40}}{2.41}
& \msl{9.00}{1.58} & \msl{\textbf{25.00}}{4.36}
& \msl{22.20}{2.77} & \msl{\textbf{26.40}}{2.30}
& \msl{16.00}{3.16} & \msl{\textbf{39.20}}{7.05}
& \msl{{38.40}}{6.11} & \msl{\textbf{40.20}}{5.07} \\
Top-1000 R$\uparrow$
& \msl{\textbf{6.33}}{0.01} & \msl{{6.32}}{0.01}
& \msl{\textbf{6.69}}{0.06} & \msl{{6.47}}{0.14}
& \msl{6.51}{0.01} & \msl{\textbf{6.52}}{0.01}
& \msl{6.48}{0.13} & \msl{\textbf{6.50}}{0.05}
& \msl{6.70}{0.07} & \msl{\textbf{6.73}}{0.07} \\
Spearman
& \msl{0.53}{0.02} & \msl{0.50}{0.05}
& \multicolumn{2}{c}{\textemdash}
& \msl{0.57}{0.07} & \msl{0.59}{0.05}
& \multicolumn{2}{c}{\textemdash}
& \msl{0.22}{0.45} & \msl{0.47}{0.16} \\
\bottomrule
\end{tabular}
}
\label{tab:mols-exp-results}
\end{table*}

\section{Experiments}
\label{sec:expe}
Previous sections have shown how the additional hyperparameter $\alpha$ shapes GFlowNet training dynamics. We now evaluate whether these dynamics yield performance gains across various domains, particularly by enhancing discovery of distinct high-reward samples. 

\subsection{Experimental Setups}

\textbf{Baselines.} We compare our method against three GFlowNet objectives: DB \citep{bengio2021flow}, SubTB($\lambda$) \citep{madan2023learning}, and TB \citep{malkin2022trajectory}. To disentangle the benefits of our framework from the use of intermediate rewards, we also include Forward-Looking (FL) variants: FL-DB and FL-SubTB($\lambda$) \citep{pan2023better}. In our experiments, \textbf{Baselines} correspond to these vanilla objectives (effectively $\alpha=0.5$), while \textbf{Ours} refers to those trained with $\alpha$-GFN objectives where $\alpha \neq 0.5$.

\textbf{Evaluation Metrics.} 
Our primary metric is the number of discovered \textbf{Modes} (unique samples exceeding a reward threshold). We also report \textbf{Top-1000 R} (average reward of the top 1000 distinct samples) and \textbf{Spearman} (the spearman correlation between $P_F^{\top}(x)$
and $R(x)$ in a held-out test set). Results are averaged over 5 random seeds. 
Following \citet{pan2023better}, we separate training and evaluation samples to ensure unbiased assessment. 

\textbf{Benchmarks.} We evaluate the performance of $\alpha$-GFN objectives across three diverse domains.
\begin{itemize}[leftmargin=*, topsep=0pt, noitemsep]
\item \textbf{Set Generation}~\citep{pan2023better}. 
The goal of this task is to generate sets of fixed sizes. 
The maximum capacity of sets $|S|$ and the size of the vocabulary vary from being small, medium to large, and task difficulty increases correspondingly.
The sampling process of a set terminates when the number of elements reaches its maximum capacity. 
The reward function is defined as the accumulation of individual energy exponent $\exp(-\mathcal{E}(e_i))$ of each element $e_i$ in a set, i.e. $R(x) \triangleq \prod_{i=1}^{|S|} \exp(-\mathcal{E}(e_i))$, which equips FL variants with exact intermediate energy and ideal local credits~\citep{pan2023better,jang2023learning}. 
\item \textbf{Bit Sequence Generation}~\citep{tiapkin2024generative}: 
The goal of this task 
is to construct $120$-bit strings by iteratively sampling $k$-bit words ($k \in \{2,4,6,8,10\}$).
The reward $R(x)$ is the negative exponent of the Hamming distance to the nearest of $60$ predefined target modes. For FL variants, a masked Hamming distance is employed to facilitate intermediate credit assignment. Crucially, nearby samples are excluded once a mode is identified. Because the mode count is bounded a priori, it effectively captures the overall generative quality. Hence, following~\citet{tiapkin2024generative}, we focus on Modes and Spearman correlation for evaluation.

\item \textbf{Molecule Generation}~\citep{bengio2021flow}: 
This task aims to design binders for the soluble epoxide hydrolase (sEH) protein by iteratively appending “blocks” from a fixed library onto a growing molecular graph~\citep{jin2018junction}.
Both the reward and FL variants' intermediate energies are computed via a pretrained proxy from~\citet{bengio2021flow}. Despite reward, modes are also filtered by a Tanimoto similarity threshold of 7.
\end{itemize}
Further details and hyperparameter settings are in App.~\ref{app:experimental settings}.

\subsection{Results}

Across all three diverse benchmarks, our empirical results (Tables~\ref{tab:set-exp-results}, \ref{tab:bit-exp-results}, and \ref{tab:mols-exp-results}) demonstrate that $\alpha$-GFN objectives consistently achieve a higher number of discovered modes compared to vanilla GFlowNet baselines. 
This performance gain suggests that by introducing a new dimension of exploration–exploitation flexibility, our framework facilitates more effective mode discovery than vanilla objectives.
\begin{figure*}[!b]
  \centering
  \setlength{\tabcolsep}{0pt}
  \begin{tabular}{@{}c@{\hspace{17pt}}c@{\hspace{17pt}}c@{}}
    \includegraphics[width=.3\textwidth]{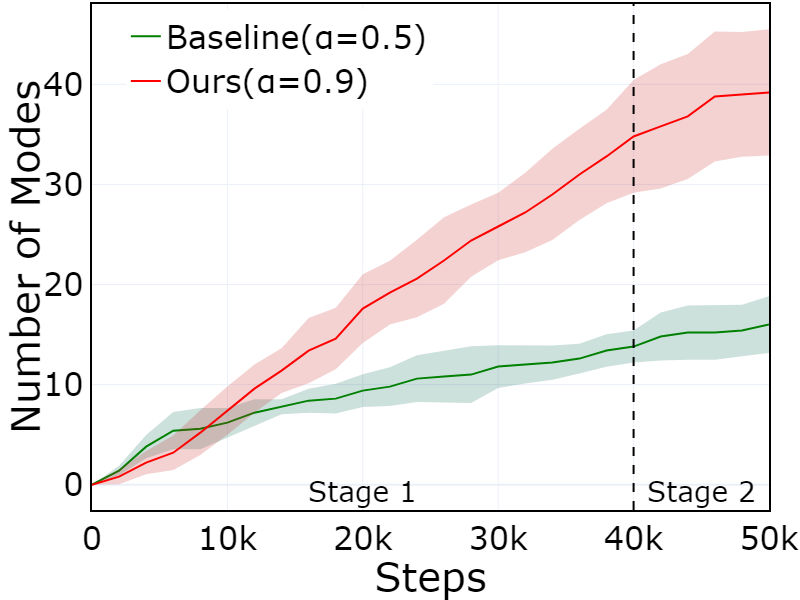} &
    \includegraphics[width=.3\textwidth]{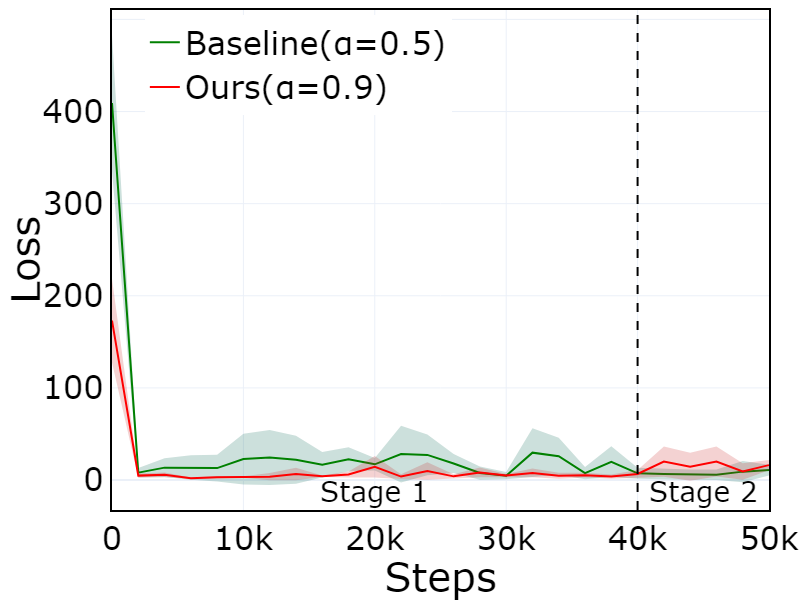} &
    \includegraphics[width=.3\textwidth]{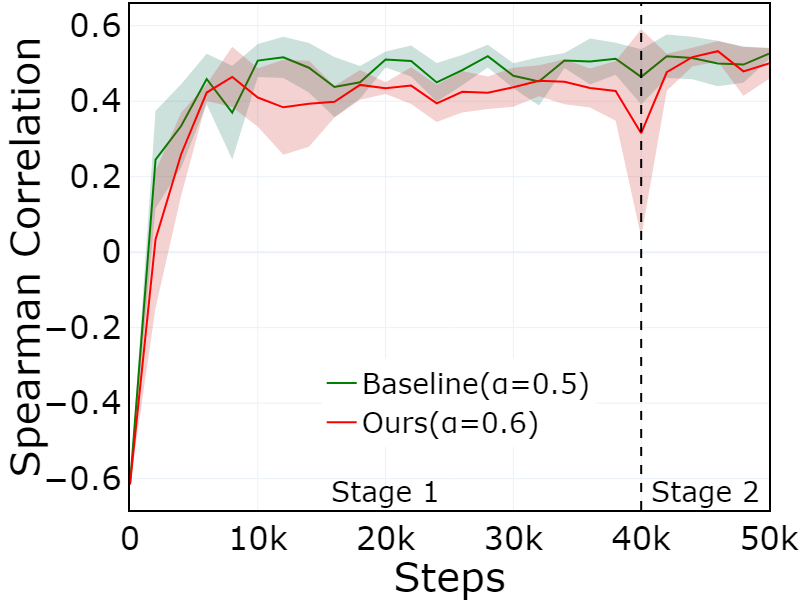} \\
         (a) Modes of $\alpha$-SubTB($\lambda$) & 
         (b) Loss of $\alpha$-SubTB($\lambda$) &
         (c) Spearman correlation of $\alpha$-DB \\
  \end{tabular}
  \caption{A case study on \textbf{(a)} the mode curves of $\alpha$-FL-SubTB, \textbf{(b)} the loss curves of $\alpha$-FL-SubTB($\lambda$) and \textbf{(c)} the Spearman correlation curves of $\alpha$-DB in Molecule Generation during training. The stage 1 and stage 2 in Alg.~\ref{alg:two-staged-training} are marked in the figures.}
\label{fig:stability and annealing}
\end{figure*}

In \textbf{Set Generation}, the results in Table~\ref{tab:set-exp-results} demonstrate that $\alpha$-GFN objectives consistently outperform their vanilla counterparts across all settings. For small sets, $\alpha$-GFNs achieve $125\%$, $6.7\%$, $71\%$, and $6.8\%$ more discovered modes for DB, FL-DB, SubTB($\lambda$), and TB, respectively. While both FL-SubTB($\lambda$) and $\alpha$-FL-SubTB($\lambda$) nearly reach the maximum capacity of 90 modes in this setting, our approach still maintains a slight edge. 
The performance margin becomes even more pronounced as task difficulty increases. In the medium and large set settings, while several vanilla objectives yield zero discovered modes, $\alpha$-GFNs consistently identify a non-trivial number of unique high-reward samples. 
Most notably, $\alpha$-GFNs provide substantial improvements over the state-of-the-art FL baselines: for FL-DB, the mode count increases by $735\%$ on medium sets and $804\%$ on large sets; for FL-SubTB($\lambda$), the gains are even more dramatic at $1516\%$ and $233\%$, respectively. 
These results are accompanied by a substantial rise in average reward, with Top-1000 R improvements ranging from $1.23\times$ to as much as $58.16\times$ in medium and large sets, confirming that the enhanced reward exploitation capability of $\alpha$-GFNs facilitates effective mode discovery.

In \textbf{Bit Sequence Generation} (Table~\ref{tab:bit-exp-results}), $\alpha$-GFNs outperform vanilla GFlowNets in 21 of 25 settings, uncovering up to 8 additional modes on average. Vanilla GFlowNets lead in only 2 settings with a small margin (at most 2 modes), and in the remaining 2 settings both methods achieve the maximum number of modes.
The consistent performance gains across varying word lengths $k$ underscores the robustness of $\alpha$-tuning to action space granularity. These results suggest that the optimal policy mixing often deviates from the vanilla $\alpha=0.5$ case in high-dimensional discrete spaces, indicating that adjusting the trade-off between exploration and exploitation via $\alpha$ is beneficial for maximizing mode discovery in high-dimensional discrete spaces.

Finally, we evaluate $\alpha$-GFNs on a real-world \textbf{Molecule Generation} task. As shown in Table~\ref{tab:mols-exp-results}, incorporating $\alpha \neq 0.5$ yields varying degrees of improvement in mode discovery across all five objectives. Specifically, we observe increases in discovered modes of $44\%$ for DB, $177\%$ for FL-DB, $19\%$ for SubTB($\lambda$), $145\%$ for FL-SubTB($\lambda$), and $5\%$ for TB. These gains are often mirrored by higher average rewards, suggesting that $\alpha$ allows for a more effective balancing of exploration and exploitation. Collectively, these results validate $\alpha$-tuning as a versatile enhancement, demonstrating its capability to consistently improve mode discovery across diverse and challenging domains.

Across almost all experiments, $\alpha$-GFNs maintain Spearman correlations comparable to vanilla GFlowNets, suggesting that Alg.~\ref{alg:two-staged-training} preserves the fundamental property $P_F^{\top}(x) \propto R(x)$. In some cases, $\alpha$-GFNs even show a better fit to the reward distribution. For instance, in Molecule Generation, $\alpha$-TB achieves a twofold improvement in correlation over vanilla TB with significantly lower variance. Notably, Spearman correlation and mode discovery are not strictly coupled. In the case of DB for Molecule Generation, $\alpha$-GFN identifies 4 additional modes despite a marginal 0.03 decrease in correlation. This suggests that even when correlation drops, $\alpha$-GFNs often discover more modes, which aligns with the enhanced flexibility enabled by $\alpha$

App.~\ref{app:additional results} provides more results of the three tasks. 
\begin{table*}[t]
\caption{\textit{Ablation studies on number of modes vs. $\alpha$ in Molecule Generation.} We \textbf{bold} $\alpha \neq 0.5$ entries that discover more modes than the $\alpha =0.5$ baseline. Standard deviations are in \textcolor{gray}{gray}.}
\centering
\small
\setlength{\tabcolsep}{6pt}
\renewcommand{\arraystretch}{1.2}
\resizebox{\linewidth}{!}{%
\begin{tabular}{@{\hspace{.6em}}cccccccccc@{\hspace{.6em}}}
\toprule
\multirow{1}{*}{\textbf{Modes}$\uparrow$} 
& \multicolumn{9}{c}{$\alpha$} \\
\cmidrule(lr){2-10}
\textbf{Objective}  & 0.1 & 0.2 & 0.3 & 0.4 & 0.5 (Baseline) & 0.6 & 0.7 & 0.8 & 0.9 \\
\midrule
\multirow{1}{*}{DB}
& \msl{5.40}{0.89}  & \msl{10.00}{4.18} & \msl{\textbf{11.20}}{4.76} & \msl{\textbf{11.40}}{3.21} & \msl{10.00}{3.08} & \msl{\textbf{14.40}}{2.41} & \msl{\textbf{11.80}}{3.96} & \msl{\textbf{11.00}}{4.90} & \msl{\textbf{13.40}}{3.21} \\
\multirow{1}{*}{FL-DB}
& \msl{5.80}{1.48}  & \msl{5.40}{1.14} & \msl{6.40}{3.13} & \msl{\textbf{9.17}}{2.14} & \msl{9.00}{1.58} & \msl{\textbf{13.20}}{4.66} & \msl{\textbf{14.60}}{6.07} & \msl{\textbf{24.00}}{3.61} & \msl{\textbf{25.00}}{4.36} \\
\multirow{1}{*}{SubTB($\lambda$)}
& \msl{20.20}{5.63} & \msl{18.20}{4.49} & \msl{\textbf{22.80}}{3.77} & \msl{\textbf{24.00}}{6.44} & \msl{22.20}{2.77} & \msl{\textbf{26.40}}{2.30} & \msl{22.00}{5.43} & \msl{\textbf{25.20}}{3.63} & \msl{\textbf{24.00}}{5.79} \\
\multirow{1}{*}{FL-SubTB($\lambda$)}
 & \msl{8.40}{3.21}  & \msl{11.00}{3.94} & \msl{12.00}{4.95} & \msl{12.60}{2.61} & \msl{16.00}{3.16} & \msl{\textbf{18.20}}{5.31} & \msl{\textbf{23.40}}{6.62} & \msl{\textbf{34.20}}{5.67} & \msl{\textbf{39.20}}{7.05} \\
\multirow{1}{*}{TB}
& \msl{19.80}{3.83} & \msl{29.60}{7.30} & \msl{37.00}{7.42} & \msl{34.40}{6.54} & \msl{38.40}{6.11} & \msl{\textbf{40.20}}{5.07} & \msl{36.80}{4.44} & \msl{\textbf{39.20}}{9.81} & \msl{\textbf{60.60}}{32.39} \\
\bottomrule
\end{tabular}
}
\label{tab:ablation-molecule-modes}
\end{table*}

\subsection{Analysis}

\noindent\textbf{Stability and Effects of Scheduling.} 
Although we observe clear performance gains across tasks, the standard deviation especially in the number of modes, sometimes increases, raising stability concerns. To probe this and the effect of annealing $\alpha$ on reward fitting, we conduct a case study on Molecule Generation. As shown in Fig.~\ref{fig:stability and annealing}(a,b), $\alpha$-SubTB($\lambda$) consistently outperforms SubTB($\lambda$) in stage 1 (before annealing) with $\alpha=0.9$ and maintains its advantage in stage 2 (annealing), where exponentially fast annealing induces mild loss oscillations that settle after roughly 8,000 steps, leaving a modest increase in mode variance. Nevertheless, a clear performance gain throughtout the training is still clearly observed. Meanwhile, Fig.~\ref{fig:stability and annealing}(c) shows that annealing restores reward fitting. The Spearman correlation of $\alpha$-DB rebounds from around 0.4 before stage 2 to about 0.5 afterward. In summary, the two-stage schedule preserves the performance gains and recovers reward fitting at the cost of a brief, limited increase in variance in some cases.

\noindent\textbf{Length-Controlling Side Effects.} 
With a `stop' action allowing $P_F$ to choose trajectory length, the trade-off in Prop.~\ref{prop:gradient of alpha-gfn objectives} induces an additional effect. 
As demonstrated in Fig.~\ref{fig:sample length and entropy}, as $\alpha$ is set larger, trajectories are lengthened since stronger exploitation may shift mass from `stop' to constructive actions. 
Intriguingly, Fig.~\ref{fig:sample length and entropy} also shows that average per-action entropy increases with trajectory length, plausibly reflecting accumulated uncertainty over longer horizons. A formal analysis is left to future work.

\begin{figure}[htbp]
    \centering
  \includegraphics[width=.8\linewidth]{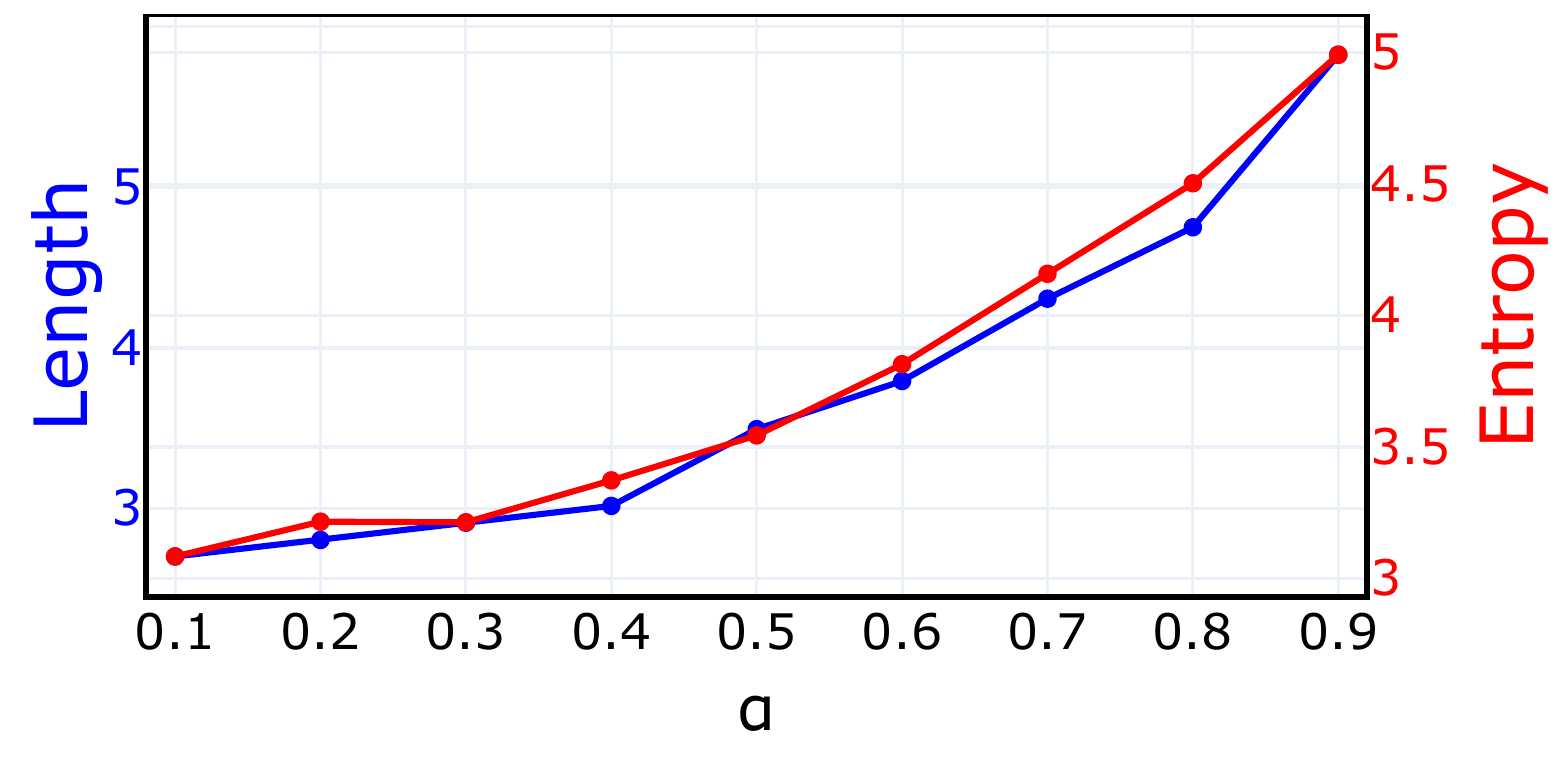}
  \caption{Average sample length and forward policy entropy with $\alpha$-FL-SubTB in Molecule Generation. We observe that both the length and the entropy increases with $\alpha$ in this case.}
  \vspace{-0.4\baselineskip}
\label{fig:sample length and entropy}
\end{figure}

\noindent\textbf{Ablation Studies.} 
We assess the sensitivity of $\alpha$-GFNs via an ablation on the number of modes across $\alpha$ settings for molecule generation. As shown in Table~\ref{tab:ablation-molecule-modes}, performance gains persist even when $\alpha$ is not tuned to its optimal value: for DB, FL-DB, SubTB($\lambda$), FL-SubTB($\lambda$), and TB, adjusting $\alpha$ generally yields more modes. These results indicate that $\alpha$-GFNs exhibit low sensitivity to precise $\alpha$ selection while still delivering consistent improvements.

Additionally, we show in App.~\ref{app:compatibility} that the flexible exploration--exploitation trade-off of $\alpha$-GFNs can be integrated into a broad range of GFlowNet training recipes, including Adaptive Teachers~\citep{kim2024adaptive}, QGFN~\citep{lau2024qgfn}, FlowRL~\citep{zhu2025flowrl}, and reward temperature scaling.
\section{Related Works}
 
\paragraph{GFlowNet theories and connections with Markov chains.}
While GFlowNets were first formalized at the intersection of flow networks and MDPs~\citep{bengio2021flow}, many subsequent developments (e.g., detailed balance conditions~\citep{bengio2023gflownet}) have been strongly influenced by Markov chain (MC) theory, highlighting the value of a precise and rigorous connection to classical MC frameworks.
In particular,~\citet{deleu2023generative} introduced a GFlowNet induced MC under the FM objective using only the forward policy. 
Our work extends this perspective in two ways: first, by explicitly incorporating the backward policy as a key component of GFlowNets and their Markov chain connection; and second, by generalizing the analysis to a broader family of objectives, including DB, SubTB, TB, and their FL variants.
\paragraph{GFlowNet objective design.}
Standard GFlowNet objectives, including DB \citep{bengio2023gflownet}, SubTB \citep{madan2023learning}, and TB \citep{malkin2022trajectory}, are trained with both forward and backward policies to enforce balanced flows, which implicitly treats the two policies symmetrically. Building on this paradigm, temperature conditional GFlowNets \citep{zhang2023robust, zhou2023phylogfn, kim2024learning} scale the reward, forward-looking GFlowNets \citep{pan2023better, jang2023learning} introduce intermediate energies , and \citet{hu2025beyond} modifies the loss forms. While these methods adhere to the flow matching framework, we consider a more general weighting scheme that departs from the strict balance condition by dynamically mixing the forward and backward policies through a tunable parameter, enabling controlled and principled flow imbalance.

\section{Conclusion}

In this work, we uncover the implicit equal weighting of forward and backward policies in GFlowNet objectives through an extended connection to Markov chains.
Building on this finding, we introduce $\alpha$-GFNs, which provide a controllable exploration–exploitation trade-off by mixing forward and backward policies with a hyperparameter $\alpha$. The convergence of these mixed objectives is established via their link to Markov chains.
We further explain the role of different $\alpha$ values through a gradient-based analysis and propose a scheduled training algorithm that combines the benefits of multiple $\alpha$ values.
Experiments across diverse domains show that $\alpha$-GFNs consistently outperform vanilla GFlowNets by discovering more high-reward modes while maintaining sample diversity, highlighting the practical value of $\alpha$.
These results strengthen the link between Markov chain theory and GFlowNet practice and open promising directions for future research.

\section*{Impact Statement}
This paper presents work whose goal is to advance the field of Machine
Learning. There are many potential societal consequences of our work, none
which we feel must be specifically highlighted here.

\section*{Acknowledgements}
This work is sponsored by the National Natural Science Foundation of China (NSFC) grant (No. 62576211) and Fastbox (No.SA0710007).

\bibliography{biblio}
\bibliographystyle{icml2026}

\newpage
\appendix
\onecolumn

\section{GFlowNet Objective Definitions and Related Discussions}

\subsection{Definitions}
\label{app:defs}
In this section, we provide the formal definitions of vanilla GFlowNet training objectives where both the forward policy $P_F$ and the backward policy $P_B$ are used, including the Detailed Balance Loss (DB,~\cite{bengio2023gflownet}, Trajectory Balance Loss (TB,~\cite{malkin2022trajectory} and the convex combination of SubTB, SubTB($\lambda$)~\citep{madan2023learning} along with Forward-Looking (FL) ~\citep{pan2023better} variants of DB (FL-DB) and SubTB($\lambda$) (FL-SubTB($\lambda$)).

\begin{definition}{(DB, \citet{bengio2023gflownet})}
\label{def:DB}
    Given a state flow function $F(\cdot):S \rightarrow \mathbb{R}_{+}$, a forward policy $P_F(\cdot|\cdot): \mathbb{A} \rightarrow [0,1]$, a backward policy $P_B(\cdot|\cdot): \mathbb{A} \rightarrow [0,1]$, Detailed Balance targets at
    \begin{equation}
        F(s)P_F(s'|s) = F(s) P_B(s|s'), \quad \text{for every } (s,s') \in \mathbb{A}
    \end{equation}
    where  $\forall s \in \mathcal{X}, F(s) P_F(s_f|s)=R(s)$.
    And the corresponding loss function is
    \begin{equation}
L_{\text{DB}}(s,s')=\log^2\left(\frac{F(s)P_F(s'|s)}{F(s')P_B(s|s')}\right).
    \end{equation}

\end{definition}

\begin{definition}{(TB, \citet{malkin2022trajectory})}
\label{def:TB}
    Given a state flow function $F(\cdot):S \rightarrow \mathbb{R}_{+}$, a forward policy $P_F(\cdot|\cdot): \mathbb{A} \rightarrow [0,1]$, a backward policy $P_B(\cdot|\cdot): \mathbb{A} \rightarrow [0,1]$, 
    for a complete trajectory $\mathfrak{t}^f=(s_0=s_s,s_1,\dots,s_{n-1}=x,s_n=s_f) \in \mathfrak{T}$ where $\mathfrak{T}^{\text{flow}}$ is the set of complete trajectories in the graph, 
    TB targets at
    \begin{equation}
    \label{eq:tb target}
        F(s_0)P_F(s_f|x)\prod\limits_{i=1}^{n-1} P_F(s_i | s_{i-1})
        =
        R(s_n) \prod\limits_{i=1}^{n-1} P_B(s_{i-1} | s_i)
    \end{equation}
    And the corresponding loss function is
    \begin{equation}
        L_{\text{TB}}(\mathfrak{t}^f)=\log^2\left(\frac{F(s_0) P_F(s_f|x)\prod\limits_{i=1}^{n-1} P_F(s_i | s_{i-1})}{R(s_n) \prod\limits_{i=1}^{n-1} P_B(s_{i-1} | s_i)}\right).
    \end{equation}
    Note that $F(s_0)=F(s_f)=\sum_{x\in \mathcal{X}}R(x)$ and $P_B(x|s_f)=\frac{R(x)}{\sum\limits_{x \in \mathcal{X}}R(x)}$.
\end{definition}
\begin{definition}{(SubTB,~\cite{madan2023learning})}
\label{def:subtb}
    Given a partial trajectory $\mathfrak{t}'=(s_k,s_{k+1},\dots,s_{k+m}) \subset \mathfrak{t}^f \in \mathfrak{T}^{\text{flow}}$, a state flow function $F(\cdot):S \rightarrow \mathbb{R}_{+}$, a forward policy $P_F(\cdot|\cdot): \mathbb{A} \rightarrow [0,1]$, a backward policy $P_B(\cdot|\cdot): \mathbb{A} \rightarrow [0,1]$, SubTB targets at~\eqref{eq:subtb}, and the corresponding loss function is
    \begin{equation}
    \label{eq:subtb loss}
        L_{\text{SubTB}}(\mathfrak{t}')=\log^2
        \left(
        \frac{F(s_k)\prod_{i=1}^m P_F(s_{k+i}|s_{k+i-1})}{F(s_{k+m})\prod_{i=1}^{m}P_B(s_{k+i-1}|s_{k+i})}
        \right).
    \end{equation}
\end{definition}
\begin{definition}{(SubTB($\lambda$), \citet{madan2023learning})}
\label{def:SubTB(lambda)}
    Given a complete trajectory $\mathfrak{t}^f=(s_0=s_s,s_1,\dots,s_{n-1}=x,s_{n}=s_f) \in \mathfrak{T}^{\text{flow}}$, define an extracted subtrajectory $\mathfrak{t}_{i:j}$ to be 
    \begin{equation}
        \mathfrak{t}_{i:j}=(s_i,s_{i+1},\dots,s_j), \qquad 0 \leq i < j \leq n .
    \end{equation}
    Then, SubTB($\lambda$) loss for the complete trajectory $\mathfrak{t}$ is a convex combination of SubTB loss at these partial trajectories, i.e.
    \begin{equation}
        L_{\text{SubTB}}(\mathfrak{t}^f,\lambda)=\frac{\sum\limits_{0 \leq i < j \leq n} \lambda^{j-i}L_{\text{SubTB}}(\mathfrak{t}_{i:j})}{\sum\limits_{0 \leq i < j \leq n} \lambda^{j-i}}
    \end{equation}
    where $\lambda \in (0,+\infty)$.
\end{definition}

Additionally, we consider the Forward Looking (FL)~\citep{pan2023better} variants of DB and SubTB($\lambda$), termed FL-DB and FL-SubTB($\lambda$) respectively. We start with an introduction to the state-level and edge-level energy function.
\begin{definition}{~\citep{pan2023better}}
\label{def:energy function}
    Given a state-level energy function $\mathcal{E}(\cdot): S \rightarrow \mathbb{R}$, the edge-level energy function is an extension of its state-level counterpart, i.e.
    \begin{equation}
        \mathcal{E}(s,s')=\mathcal{E}(s')-\mathcal{E}(s) \quad \text{for all } (s,s') \in \mathbb{A}.
    \end{equation}
\end{definition}
Applying Def.~\ref{def:energy function} to DB and SubTB($\lambda$), the FL variants are obtained:
\begin{definition}{(FL-DB, \cite{pan2023better})}
\label{def:FL-DB}
    Based on the the definition of DB in Def. \ref{def:DB}, with an edge-level energy function $\mathcal{E}(\cdot,\cdot): \mathbb{A} \rightarrow \mathbb{R}$, Forward-Looking Detailed Balance (FL-DB) targets at:
    \begin{equation}
    F(s)P_F(s'|s)=F(s')P_B(s|s')e^{-\mathcal{E}(s,s')}
    \end{equation}
    and the corresponding loss function is
    \begin{equation}
        L_{\text{FL-DB}}(s,s')=\log^2\left(\frac{F(s)P_F(s' | s)}{F(s')P_B(s|s')e^{-\mathcal{E}(s,s')}}\right).
    \end{equation}
\end{definition}

\begin{definition}{(FL-SubTB($\lambda$),~\cite{pan2023better})}
    Based on the definition of SubTB in Def.~\ref{def:subtb}, given a partial trajectory $\mathfrak{t}_{i:j}=(s_i,s_{i+1},\dots,s_j) \subset \mathfrak{t} \in \mathfrak{T}^{\text{flow}}$ and an edge-level energy function $\mathcal{E}(\cdot,\cdot):\mathbb{A} \rightarrow \mathbb{R}$, FL-SubTB targets at
    \begin{equation*}
F(s_k)\prod_{i=1}^m P_F(s_{k+i}\mid s_{k+i-1})
\;=\;
F(s_{k+m}) \prod_{i=1}^{m} P_B(s_{k+i-1}\mid s_{k+i}) \, e^{-\mathcal{E}(s_{k+i-1},s_{k+i})}\, .
\end{equation*}
    and the corresponding loss function is 
    \begin{equation}
        L_{\text{FL-SubTB}}(\mathfrak{t}_{i:j})=\log^2\left(\frac{F(s_i)\prod_{k=1}^{j}P_F(s_{i+k}|s_{i+k-1})}{F(s_j)\prod_{k=1}^{j}P_B(s_{i+k-1}|s_{i+k})}
        \right)
    \end{equation}
    Following Def.~\ref{def:SubTB(lambda)}, FL-SubTB($\lambda$) is a convex combination of FL-SubTB. Given $\lambda \in (0,+\infty)$, FL-SubTB($\lambda$) is 
    \begin{equation}
        L_{\text{FL-SubTB}}(\mathfrak{t}^f,\lambda)=\frac{\sum\limits_{0 \leq i < j \leq n} \lambda^{j-i}L_{\text{FL-SubTB}}(\mathfrak{t}_{i:j})}{\sum\limits_{0 \leq i < j \leq n} \lambda^{j-i}}.
    \end{equation}
\end{definition}

\subsection{Discussions}
To further demonstrate that $\alpha$-GFNs are compatible with vanilla GFlowNet training techniques, 
we formalize FL~\citep{pan2023better} within the Markov chain framework in a matrix form, which suggests FL is a prior to the probability measures:
\begin{theorem}
\label{thm: FL explained}
    FL adds a prior to the probability measures of a GFlowNet, i.e.  $\pi=\Tilde{\pi}\mathcal{E}$, 
    where $\mathcal{E}=diag(e^{-\mathcal{E}(s_0)},\dots,e^{-\mathcal{E}(s_f)})$ is the diagonal matrix constructed with exponent of the energy function $\mathcal{E}(\cdot)$.

\end{theorem}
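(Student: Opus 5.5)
We read FL as a change of the stationary measure in the Markov chain picture of Prop.~\ref{prop:reversibility of equally mixed policy} and Thm.~\ref{thm:gflownet objectives and reversibility}. There, a trained GFlowNet corresponds to a measure $\pi$ (the normalised state flow, as in~\eqref{eq:flows as probability measures}) for which $P_{0.5}$ is reversible. The statement to prove is that the FL target is the \emph{same} reversibility condition, written in terms of a reparameterised flow $\tilde F$ that the network actually learns, with $\pi$ and $\tilde\pi$ related by right-multiplication by the diagonal matrix $\mathcal{E}$. The proof is therefore a telescoping computation plus a translation into matrix notation.

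\textbf{Step 1 (DB level).} Starting from the FL-DB target $F(s)P_F(s'|s)=F(s')P_B(s|s')e^{-\mathcal{E}(s,s')}$, use Def.~\ref{def:energy function} to write $e^{-\mathcal{E}(s,s')}=e^{-\mathcal{E}(s')}/e^{-\mathcal{E}(s)}$. Multiplying by $e^{-\mathcal{E}(s)}$ gives
\[
F(s)e^{-\mathcal{E}(s)}P_F(s'|s)=F(s')e^{-\mathcal{E}(s')}P_B(s|s')\cdot \frac{F(s)e^{-\mathcal{E}(s)}}{F(s)e^{-\mathcal{E}(s)}} .
\]
Once the factors are tracked carefully, this takes the form of the vanilla DB condition with the state flow $F(s)$ replaced by the product of the learned flow and $e^{-\mathcal{E}(s)}$. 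Concretely, set $\tilde F(s)$ to be the quantity parameterised by the network and $F(s)=\tilde F(s)e^{-\mathcal{E}(s)}$, and check which side of the FL equation each factor lands on. Since the FL equation is only stated up to this reparameterisation, I will fix the convention so that the display becomes exactly $F(s)P_F(s'|s)=F(s')P_B(s|s')$.

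\textbf{Step 2 (SubTB level).} For FL-SubTB, the product $\prod_i e^{-\mathcal{E}(s_{k+i-1},s_{k+i})}$ telescopes to $e^{-\mathcal{E}(s_{k+m})+\mathcal{E}(s_k)}$. The same substitution therefore reduces the FL-SubTB target to the SubTB target~\eqref{eq:subtb} for $F=\tilde F e^{-\mathcal{E}}$. In particular the reduction does not depend on trajectory length, and it is consistent with the convex combinations defining FL-SubTB($\lambda$).

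\textbf{Step 3 (normalise and write in matrix form).} Normalise by the total flow to obtain probability measures $\pi$ and $\tilde\pi$ viewed as row vectors over $S$. Then $F=\tilde F e^{-\mathcal{E}}$ componentwise reads $\pi=\tilde\pi\,\mathrm{diag}(e^{-\mathcal{E}(s_0)},\dots,e^{-\mathcal{E}(s_f)})$, up to a normalising constant. That constant cancels on both sides of the reversibility equation~\eqref{eq:reversibility}, or is absorbed by the boundary conventions at $s_0$ and $s_f$. By Prop.~\ref{prop:reversibility of equally mixed policy}, FL training is then reversibility of $P_{0.5}$ with respect to $\pi=\tilde\pi\mathcal{E}$. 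Hence the energy acts as a fixed multiplicative prior reweighting the learned measure $\tilde\pi$, and the policies are untouched.

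\textbf{Main obstacle.} The delicate point is Step 1's bookkeeping, together with the boundary and normalisation issues. I need to determine which of $F,\tilde F$ carries $e^{-\mathcal{E}}$ versus $e^{+\mathcal{E}}$ so that the sign matches $\pi=\tilde\pi\mathcal{E}$ as stated. I also need $\tilde\pi\mathcal{E}$ to remain a probability measure, which may require renormalising, with the constant cancelling. Finally, the terminal condition must be consistent: I will take $\mathcal{E}(s_f)$ so that the reward condition $F(x)P_F(s_f|x)=R(x)$ is preserved. I would first try the convention $\tilde F(s)=F(s)e^{\mathcal{E}(s)}$, following \citet{pan2023better}, and verify it against the sink condition.
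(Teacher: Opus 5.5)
Your proof is correct and has the same skeleton as the paper's: the flow the FL objective sees is a reparameterization $\tilde F$ of the true flow, with $F=\tilde F e^{-\mathcal{E}}$, and you then normalize. You reach that key identity by a different, self-contained route.

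The paper does not derive $F=\tilde F\mathcal{E}$ at all. It imports it from Assumption~4.1 and Prop.~4.2 of \citet{pan2023better} and then declares the passage to $\pi=\tilde\pi\mathcal{E}$ ``direct''. You instead derive it from the FL-DB and FL-SubTB targets of App.~\ref{app:defs}, via $e^{-\mathcal{E}(s,s')}=e^{-\mathcal{E}(s')}/e^{-\mathcal{E}(s)}$ and telescoping along the subtrajectory. This checks that the cited reparameterization is exactly the one encoded in the paper's own definitions. It also shows the reduction is independent of subtrajectory length, so it covers FL-SubTB($\lambda$). The paper's route is shorter but leaves that verification to the reference.

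Your Step~1 already settles the sign question you list as the main obstacle. The last factor in your display is identically $1$, so, writing the network's flow as $\tilde F$, the display is $\tilde F(s)e^{-\mathcal{E}(s)}P_F(s'|s)=\tilde F(s')e^{-\mathcal{E}(s')}P_B(s|s')$. Hence the true flow is $F=\tilde F e^{-\mathcal{E}}$, which is precisely the diagonal $\mathrm{diag}(e^{-\mathcal{E}(\cdot)})$ of the statement. This is the same identity as the convention $\tilde F=Fe^{\mathcal{E}}$.

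Your caution about normalization is more careful than the paper, which glosses over it. After normalizing, one only gets $\pi=c\,\tilde\pi\mathcal{E}$ with $c=\sum_s\tilde F(s)/\sum_s\tilde F(s)e^{-\mathcal{E}(s)}$. So the identity holds up to a positive scalar, which is harmless because reversibility is homogeneous in $\pi$.

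Your appeal to ``boundary conventions at $s_0$ and $s_f$'' does not by itself justify $c=1$. If you want exact equality, use a different observation. The FL targets see the energy only through edge differences, so shifting $\mathcal{E}(\cdot)$ by a constant leaves $\tilde F$ and $\pi$ unchanged while rescaling the diagonal matrix. A suitable shift then makes $c=1$.
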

\begin{proof}
    Assumpation 4.1 and Prop. 4.2 in~\cite{pan2023better} suggests
    \begin{equation*}
        F=\Tilde{F} \mathcal{E}
    \end{equation*}
    is the FL reparameterization of flows, where $F=(F(s_s),\dots,F(s_f))$ is the vector of flows, and $\Tilde{F}=(\Tilde{F}(s_s),\dots,\Tilde{F}(s_f))$ is the reparameterized counterpart. Coupling with the fact that state flows are state-level probability measures~\citep{deleu2023generative}, it is direct that
    \begin{equation*}
        \pi=\Tilde{\pi}\mathcal{E}
    \end{equation*}
    where $\pi$ and $\Tilde{\pi}$ are the probability measures corresponding to $F$ and $\Tilde{F}$ respectively.
\end{proof}
\newpage

\section{Supporting Theoretical Framework}
\label{app:supporting theoretical framework}
The following theoretical part addresses the fundamental results of our paper under the assumption that the state space is finite.
This assumption is reasonable since the state space of GFlowNets are compositional~\citep{bengio2021flow,bengio2023gflownet}.
Throughout, we denote by $P$ the true transition probability of the graph to be learned, which corresponds to the forward policy $P_F$ in GFlowNets. We first summarize the results we prove and then prove the results one by one in subsections.
\begin{enumerate}[label=\textbf{(\roman*)}]
  \item \textbf{From GFN to MC}: We show that GFlowNets can be represented as irreducible Markov Chains.

    In particular, the resulting Markov chain is positive recurrent, admits a unique stationary distribution $\pi$ to which the Markov chain converges from any state.

  \item \textbf{From MC to GFN under constraints}: We show that every reducible Markov chain with transition probability $P$ satisfying a specific finite set of linear constraints is a GFlowNet.

    We call such Markov chains \textbf{GFNMC} (GFlowNet Markov chain).
    Such a Markov chain allows for a detailed balance $\tilde{P}$, which is again a GFNMC sharing the same trajectories (in the reverse direction), the same stationary distribution as well as the same eigenvalues (not necessarily the same eigenvectors).

    Furthermore any $\alpha$-GFNMC is again irreducible, share the same stationary distribution but not necessarily the same eigenvalues.
  \item \textbf{Convergence rate}:
    We show that the periodicity of every GFNMC (GFlowNet Markov Chain) is equal to the greatest common denominator of all the trajectory length.
    It is in particular the case for any reasonable graph with various trajectory length to be aperiodic (periodicity of $1$).

    If not, for instance if all trajectories share the same length, we show that for any $0<\alpha<1$, the periodicity of the $\alpha$-GFN is either $2$ or $1$.

    This is important in terms of convergence as aperiodic Markov chains are ergodic geometric, that is the convergence to the stationary distribution also holds in total variation as exponential rate determined by the second largest eigenvalue.
    In terms of sampling, such results allow for a central limit theorem akin to classical Monte Carlo.
    
    Denote with $\beta$, $\tilde{\beta}$ and $\beta_\alpha$ the second largest eigenvalues.
    While $\beta = \tilde{\beta}$, there is no result as $\beta_\alpha$ as a function of $\beta$ as it is highly non-linear.
    Nevertheless, it might drastically improve by mixing.
    It justifies the approach to take the hyper parameter $\alpha$ as trainable too to achieve a good convergence and a good sampling result.
\end{enumerate}
Prior work~\citep{deleu2023generative} touches on \textbf{(i)}, but it does not fully formalize the Markov-chain-to-GFlowNet (MC$\to$GFN) connection. In particular, key Markov chain details underlying this link are left unspecified, and the remaining aspects, 
\textbf{(ii)} and \textbf{(iii)}, 
have not, to our knowledge, been investigated. We address these gaps and develop a more complete account of the relationship between GFlowNets and Markov chains.
\subsection{GFNs are MCs}
We start with giving a detailed version of notations in Sec.~\ref{sec:preliminaries}.
Following ~\cite{bengio2021flow}, we consider a \emph{directed graph} $(S, \mathbb{A})$ where $S$ is a finite state space and $\mathbb{A}\subseteq S\times S$ is a set of \emph{edges} between states where $s\to s^\prime=(s, s^\prime) \in \mathbb{A}$ denotes an edge.
A \emph{trajectory} is a finite sequence $\mathfrak{t}^f = (s_0, \ldots, s_N)$ where $s_n\to s_{n+1}$ is an edge for any $n\leq N-1$ where $N\geq 1$.
The directed graph is called \emph{acyclic} if every such trajectory satisfies $s_0 \neq s_N$.
A directed acyclic graph is called \emph{pointed} if there exists a \emph{source state} $s_s$ and a \emph{final state} $s_f$ such that for every other state $s$, there exists a trajectory starting from $s_s$, running through $s$ and ending in $s_f$.
Any such finite trajectory starting in $s_s$ and ending in $s_f$ is called \emph{complete}.\footnote{Note that by definition, any state in $S$ is element of a complete trajectory.}
From now one we only consider pointed directed acyclic graphs where the GFlowNet theoretical framework is built~\citep{bengio2023gflownet} and denote by $\mathfrak{T}^{\text{flow}}$ the set of complete trajectories $\mathfrak{t}^f=(s_0, \ldots, s_N)$ where $s_0 = s_s$ and $s_N = s_f$.

To realise the link with Markov chains, let $\mathfrak{T}=S^{\mathbb{N}_0}$ be the set of infinite trajectories $\mathfrak{t} = (s_0, s_1, \ldots)$ endowed with the product $\sigma$-algebra $\mathcal{T} = \otimes S$ where $S=2^S$.
We further denote by $X = (X_t)$ the \emph{canonical process} on $\mathfrak{T}$ where $X_n(\mathfrak{t}) = s_n$ is the $n$-th state of the trajectory $\mathfrak{t}$.
In other terms $X$ is the identity on $\mathfrak{T}$ since $X(\mathfrak{t})=\mathfrak{t}$.\footnote{Note that any Markov chain is about defining a probability measure on the space of trajectories $\mathfrak{T}$ making the canonical process to satisfy the Markov property.}
We finally denote by $\mathcal{T}_n = \sigma(X_m\colon m \leq n)$ the filtration generated by the canonical process.

Using a finite Kolmogorov extension argument, \cite{bengio2021flow} show that any Markovian probability measure $\mathbf{P}^{\text{flow}}$ on $\mathfrak{T}^{\text{flow}}$ is uniquely given by a transition probability $P(s^\prime |s) = P_{ss^\prime}$ for every edge $s \to s^\prime$.
\begin{remark}
  Throughout, we assume that $P(s^\prime | s)>0$ for every edge $s\to s^\prime$.
\end{remark}
This transition probability can be extended to $S\times S$ by defining $P(s_s | s_f) = 1$ and $P(s^\prime | s) = 0$ for any other pair $s\to s^\prime$ which is not an edge.
In other terms the probability of returning to the source state from the final state is equal to $1$.
Such an extension also defines a Markovian probability $\mathbf{P}^{s_s}$ on the space of infinite trajectories with corresponding Markov chain starting at the source state $s_s$. 
The following theorem embed Markovian GFlowNets into a subset of Markov Chains with specific properties.

\begin{theorem}
\label{thm:gfn to mc}
  The Markovian GFlowNet probability $\mathbf{P}^{\text{flow}}$ \textbf{coincides exactly} with $\mathbf{P}^{s_s}$ on the set of complete trajectories $\mathfrak{T}^{\text{flow}}$.
  Furthermore, the resulting Markov Chain is \textbf{irreducible}.
\end{theorem}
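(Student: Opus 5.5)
The argument has two parts: first compare cylinder probabilities of complete trajectories, then show irreducibility by chaining paths through the added return edge $s_f\to s_s$.

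\textbf{Coincidence on $\mathfrak{T}^{\text{flow}}$.} Under $\mathbf{P}^{s_s}$, the Markov property together with the initial condition $X_0=s_s$ gives, for any finite sequence $(s_0,\dots,s_N)$,
\begin{equation*}
\mathbf{P}^{s_s}\bigl(X_0=s_0,\dots,X_N=s_N\bigr)=\mathbf{1}_{\{s_0=s_s\}}\prod_{n=1}^{N}P(s_n\mid s_{n-1}).
\end{equation*}
Under $\mathbf{P}^{\text{flow}}$, the finite Kolmogorov construction of \cite{bengio2021flow} assigns a complete trajectory $\mathfrak{t}^f=(s_0,\dots,s_N)$ exactly the product of the forward transitions along it. Every step $s_{n-1}\to s_n$ of a complete trajectory is an edge of $\mathbb{A}$, so the extended kernel agrees with the original $P$ on each factor. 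The two products therefore coincide.

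There is one point of interpretation to fix. A complete trajectory should be identified with the cylinder event $\{X_0=s_0,\dots,X_N=s_N\}$ in $\mathfrak{T}$, that is, the event that the chain follows $\mathfrak{t}^f$ up to its first visit to $s_f$. Because $s_f$ can appear only at the end of a complete trajectory, these cylinders are pairwise disjoint; no cylinder is a prefix of another.

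We then check that their total mass under $\mathbf{P}^{s_s}$ is $1$. The graph is a finite DAG, and $s_f$ is the only state without outgoing edges before the extension. Hence every path from $s_s$ reaches $s_f$ within at most $|S|$ steps. Moreover, every $s\neq s_f$ has $\sum_{s'}P(s'\mid s)=1$ over its graph successors. Summing over the finite tree of paths by induction then shows that the first-hitting-time decomposition has total mass $1$. Consequently both measures are the same probability law on $\mathfrak{T}^{\text{flow}}$.

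\textbf{Irreducibility.} Take arbitrary states $s,s'\in S$; we need $n$ with $P^n(s'\mid s)>0$. By pointedness, $s$ lies on some complete trajectory, so there is a path $s\to\cdots\to s_f$ whose edges each have positive probability by the Remark. The extension then gives $s_f\to s_s$ with probability $1$. Again by pointedness, $s'$ lies on a complete trajectory, which supplies a path $s_s\to\cdots\to s'$. Concatenating the three pieces yields a positive-probability path from $s$ to $s'$. The degenerate cases $s=s_s$, $s=s_f$, $s'=s_s$, $s'=s_f$ are covered by the same construction, using empty sub-paths where appropriate. Since $S$ is finite, positive recurrence and a unique stationary distribution follow from the classical theory \citep{douc2018markov}, as announced in item (i).

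The main obstacle is the first step. One must be precise that $\mathfrak{T}^{\text{flow}}$ consists of finite sequences while $\mathbf{P}^{s_s}$ lives on infinite ones, so "coincides" has to be read through the embedding via first hitting of $s_f$. One must also verify that no probability mass escapes, which uses acyclicity and the fact that the outgoing probabilities sum to one at every non-final state.
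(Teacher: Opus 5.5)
Your proposal is correct and follows essentially the paper's route. The paper encodes complete trajectories through the random time $\sigma$ and shows $\sigma=\tau^{s_f}$ $\mathbf{P}^{s_s}$-a.s., which is your identification of $\mathfrak{t}^f$ with the cylinder up to the first visit to $s_f$; it then proves irreducibility by the same concatenation $s\to s_f\to s_s\to s'$. The only nuance is that the paper defines irreducibility as $\mathbf{P}^s[\tau^{s'}<\infty]=1$, so your positive-path argument reaches that form through the finite-state recurrence fact you already invoke.
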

In order to state Thm.~\ref{thm:gfn to mc} correctly as $\mathfrak{T}^{\text{flow}}$ is not even a subset of $\mathfrak{T}$, let us consider random times and stopping times. 
\begin{definition}
  A function $\tau\colon \mathfrak{T} \to \mathbb{N}_0 \cup\{\infty\}$ is called a \emph{random time} if $\tau$ is measurable and a \emph{stopping time} if $\{\tau \leq n\}$ is in $\mathcal{T}_n$ for every $n$.
\end{definition}
Typical example of stopping times are the first time a trajectory $\mathfrak{t}$ satisfies a condition.
In particular, we make use of 
\begin{equation*}
  \tau^s(\mathfrak{t}) = \inf\{n\colon X_n(\mathfrak{t})=s\}\quad \text{and}\quad \tau^s_+(\mathfrak{t}) = \inf\{n\geq 1\colon X_n(\mathfrak{t})=s, n>0\},
\end{equation*}
which are the first time and first time $\geq 1$ such that the trajectory $\mathfrak{t}$ visits the state $s$, respectively.
Given a stopping time $\tau<\infty$, we denote by $X_\tau(\mathfrak{t}) = \mathfrak{t}_{\tau(\mathfrak{t})}$ the value of the trajectory at this particular random time $\tau(\mathfrak{t})$, as well as the stopped Markov chain $X^\tau:=(X_{n\wedge \tau})$.
The stopped Markov chain is just a slice of the trajectory $\mathfrak{t}$ between $0$ and $\tau(\mathfrak{t})$ and constant afterwards:
\begin{equation*}
  X^\tau(\mathfrak{t}) =(\underbrace{s_0, \ldots, s_{\tau(\mathfrak{t})}}_{\text{slice before }\tau(\mathfrak{t})}, \underbrace{s_{\tau(\mathfrak{t})}, \ldots}_{\text{constant after}}).
\end{equation*}

Going back to flows, we define
\begin{equation*}
  \sigma(\mathfrak{t}) = 
  \begin{cases}
    N & \text{if } (s_0, \ldots, s_N) \in \mathfrak{T}^{\text{flow}} \text{ is a complete flow trajectory for some }N\\
    0 & \text{otherwise}
  \end{cases}
\end{equation*}
as the function returning the length of the complete flow trajectory if the infinite trajectory starts in $\mathfrak{T}^{\text{flow}}$ and $0$ otherwise.

\begin{lemma}
  The function $\sigma$ is a uniformly bounded random time.
  Furthermore, $\mathfrak{T}^{\text{flow}}$ is in bijection with $\{\mathfrak{t}\colon \sigma(\mathfrak{t})\geq 1\}$ which is a measurable subset of $\mathfrak{T}$.
\end{lemma}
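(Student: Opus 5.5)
We need three facts: that $\sigma$ is measurable, that it is uniformly bounded, and that $\mathfrak{T}^{\text{flow}}$ is in bijection with the measurable set $\{\sigma\geq 1\}$.

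\textbf{Uniform boundedness.} We first use acyclicity and finiteness of $S$. Since every trajectory $(s_0,\dots,s_N)$ in the directed acyclic graph has pairwise distinct states, any complete flow trajectory satisfies $N\leq |S|-1$. Suppose instead that $s_i=s_j$ for some $i<j$. Then the slice $(s_i,\dots,s_j)$ would be a trajectory with equal endpoints, contradicting acyclicity. Hence $0\leq\sigma\leq |S|-1$ pointwise, which is the uniform bound.

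\textbf{Well-definedness.} The case distinction defining $\sigma$ must be unambiguous, that is, at most one $N$ works for a given $\mathfrak{t}$. If $(s_0,\dots,s_N)$ and $(s_0,\dots,s_{N'})$ with $N<N'$ were both complete, then $s_N=s_{N'}=s_f$. This contradicts distinctness of the states in $(s_0,\dots,s_{N'})$ just shown.

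\textbf{Measurability.} For each $N\geq 1$ we write
\[
\{\sigma=N\}=\bigcup_{(s_0,\dots,s_N)\in\mathfrak{T}^{\text{flow}}}\{X_0=s_0,\dots,X_N=s_N\}.
\]
This is a finite union of cylinder sets, so it lies in $\mathcal{T}_N\subseteq\mathcal{T}$. The set $\{\sigma=0\}$ is the complement of the finite union of these sets over $1\leq N\leq |S|-1$. Therefore $\sigma$ is a random time. Since the event $\{\sigma=N\}$ is determined by $X_0,\dots,X_N$, $\sigma$ is in fact a stopping time, although this is not needed.

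\textbf{Bijection.} Define $\Phi:\{\sigma\geq1\}\to\mathfrak{T}^{\text{flow}}$ by $\Phi(\mathfrak{t})=(s_0,\dots,s_{\sigma(\mathfrak{t})})$. Surjectivity follows by extending any complete trajectory $(s_0,\dots,s_N)$ to an infinite one, for instance by appending the cycle $s_f\to s_s\to\cdots$ or simply any fixed tail. Injectivity fails literally, because the tail after time $\sigma$ is arbitrary. This is the main obstacle: the statement must be read up to the identification of trajectories that agree on $[0,\sigma]$.

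I would resolve this by using the stopped process. The map $\mathfrak{t}\mapsto X^{\sigma}(\mathfrak{t})$, which freezes the trajectory at $s_f$ after time $\sigma$, sends $\{\sigma\geq 1\}$ onto the set $\{X^\sigma(\mathfrak{t}):\sigma(\mathfrak{t})\geq1\}$. On this image, $\Phi$ restricts to a genuine bijection whose inverse is $(s_0,\dots,s_N)\mapsto(s_0,\dots,s_N,s_f,s_f,\dots)$. This image is itself measurable, being a finite set of points in a countable product of finite spaces. Equivalently, $\Phi$ induces a bijection between $\mathfrak{T}^{\text{flow}}$ and the quotient of $\{\sigma\geq1\}$ by agreement on $[0,\sigma]$. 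I would state the lemma in this sense and note that it is exactly the form needed to compare $\mathbf{P}^{\text{flow}}$ with $\mathbf{P}^{s_s}$ in Thm.~\ref{thm:gfn to mc}.
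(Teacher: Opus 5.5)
Your treatment of boundedness and measurability is the paper's argument made explicit. The paper bounds $\sigma$ by $\# S$ ``since the state space is finite'' and calls $\sigma$ a finite sum of simple random variables; this is exactly your decomposition of each $\{\sigma=N\}$, $N\geq 1$, into finitely many cylinder sets. You also supply two steps the paper leaves implicit: acyclicity forces the states along a trajectory to be distinct, and the completing index $N$ is unique.

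You genuinely depart from the paper on the bijection, which the paper asserts ``by definition''. You are right that it cannot hold literally. $\mathfrak{T}^{\text{flow}}$ is finite, whereas $\{\sigma\geq 1\}$ contains the cylinder $\{X_{[0:N]}=\mathfrak{t}^f\}$, which is uncountable because the tail is unconstrained and $|S|\geq 2$. Your repair reads the lemma as a bijection between $\mathfrak{T}^{\text{flow}}$ and the finite partition of $\{\sigma\geq 1\}$ into the cylinders $\{X_{[0:\sigma]}=\mathfrak{t}^f\}$, or equivalently the finite set of frozen paths $X^\sigma$. This is the correct reading, and it is exactly the form used in the next proposition comparing $\mathbf{P}^{s_s}$ with $\mathbf{P}^{\text{flow}}$.

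One aside should be removed: $\sigma$ is \emph{not} a stopping time. Having $\{\sigma=N\}\in\mathcal{T}_N$ for $N\geq 1$ is not enough. The default value $0$ would also require $\{\sigma=0\}\in\mathcal{T}_0$. Your own description of $\{\sigma=0\}$ as a complement of cylinders up to time $|S|-1$ only places it in $\mathcal{T}_{|S|-1}$. Concretely, a trajectory that begins with a complete trajectory has $\sigma\geq 1$. A trajectory with $X_0=X_1=s_s$ has $\sigma=0$, since a self-loop is not an edge in an acyclic graph. Yet the two agree at time $0$. This is why the paper remarks that $\sigma$ is likely not a stopping time and works with $\tau^{s_f}$ instead. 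With default value $+\infty$ in place of $0$, your argument would indeed produce a stopping time. Since you marked the claim as unnecessary, the proof of the lemma itself stands.
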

\begin{proof}
  Since the state space is finite, it follows that $\sigma<\# S +1$ and therefore uniformly bounded.
  It follows that the number of elementary flows is finite and therefore $\sigma$ is a finite sum of simple random variables greater than $1$, hence a random variable.
  Finally, by definition $\mathfrak{T}^{\text{flow}}$ is in bijection with $\{\sigma\geq 1\}$ and therefore measurable since 
  $\sigma$ is measurable.
\end{proof}
\begin{remark}
  If the state space is infinite, then elementary flows might be of arbitrary length.
  Hence, the set of elementary flows might be of uncountable cardinality.
  In this case, the measurability argument does not hold without further assumptions.
  Also, even if $\sigma$ is measurable, it is not clear a priori that it is a stopping time, and in general likely not.
\end{remark}
Even if $\sigma$ is not a stopping time, it coincides with the stopping time $\tau^{s_f}$ with probability $1$ for the Markov probability starting from $s_s$.
\begin{proposition}
  Let $\mathbf{P}^{s_s}$ be the Markovian probability starting from the source state $s_s$.
  Then it follows that
  \begin{equation*}
    \mathbf{P}^{s_s}[\sigma = \tau^{s_f}] = 1
\end{equation*}
  and $\mathbf{P}^{s_s}$ coincides with $\mathbf{P}^{\text{flow}}$ in the sense that for any $\mathfrak{t}^f$ in $\mathfrak{T}^{\text{flow}}$ it holds
  \begin{equation*}
    \mathbf{P}^{s_s}[X_{[0:\tau^{s_f}]}=\mathfrak{t}^f] = \mathbf{P}^{\text{flow}}[\mathfrak{t}^f] .
  \end{equation*}
\end{proposition}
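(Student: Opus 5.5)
I will prove the two claims in turn. The approach is to compute both probabilities explicitly on cylinder sets and use the structure of the pointed DAG, namely acyclicity and the fact that $s_f$ leads only back to $s_s$.

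\textbf{Step 1: the Markov chain from $s_s$ hits $s_f$ almost surely along a complete trajectory.}
Under $\mathbf{P}^{s_s}$, each transition out of a state $s\neq s_f$ follows an edge of the graph, since $P(s'\mid s)=0$ for non-edges.

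Acyclicity means no state repeats before $s_f$ is visited, so any path from $s_s$ that avoids $s_f$ has length at most $\#S$.

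Every state other than $s_f$ has at least one outgoing edge, because it lies on some complete trajectory; its transition mass therefore sits entirely on edges. Hence within $\#S$ steps the chain must reach $s_f$, and $\tau^{s_f}\le \#S$ holds $\mathbf{P}^{s_s}$-almost surely.

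On this event, the prefix $(X_0,\dots,X_{\tau^{s_f}})$ starts at $s_s$, follows edges, and ends at its first visit to $s_f$. It is therefore a complete flow trajectory, and by the definition of $\sigma$ we get $\sigma=\tau^{s_f}$. One subtlety needs checking: $\sigma$ is defined from the prefix that is a complete trajectory. Since $s_f$ has no outgoing edges in the DAG, that prefix is unique, so it ends exactly at the first hitting time of $s_f$ and no ambiguity arises.

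\textbf{Step 2: coincidence of the two measures.}
Fix $\mathfrak{t}^f=(s_0,\dots,s_N)\in\mathfrak{T}^{\text{flow}}$. The event $\{X_{[0:\tau^{s_f}]}=\mathfrak{t}^f\}$ equals the cylinder $\{X_0=s_0,\dots,X_N=s_N\}$. This holds because $s_i\neq s_f$ for $i<N$, so $\tau^{s_f}=N$ on the cylinder automatically.

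By the Markov property under $\mathbf{P}^{s_s}$, the cylinder has probability $\prod_{i=1}^N P(s_i\mid s_{i-1})$. This is exactly $\mathbf{P}^{\text{flow}}[\mathfrak{t}^f]$, since the GFlowNet Markovian measure on complete trajectories is given by the same product of transition probabilities (Bengio et al.'s Kolmogorov-extension characterization quoted above). The extension of $P$ by $P(s_s\mid s_f)=1$ plays no role, because it only affects transitions after $\tau^{s_f}$.

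\textbf{Main obstacle.}
The main care point is Step 1: showing rigorously that the chain cannot wander forever without reaching $s_f$. I will handle this with the bounded-length argument above, which uses only finiteness of $S$ and acyclicity. A consistency check then follows for free: summing the products over $\mathfrak{T}^{\text{flow}}$ gives $1$, confirming that $\mathbf{P}^{\text{flow}}$ is a genuine probability measure.
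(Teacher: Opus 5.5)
Your proof is correct. Step 2 is essentially the paper's argument: the first-passage event $\{X_{[0:\tau^{s_f}]}=\mathfrak{t}^f\}$ is the cylinder of $\mathfrak{t}^f$, and its $\mathbf{P}^{s_s}$-probability is the same product of transition probabilities that defines $\mathbf{P}^{\text{flow}}$.

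The real difference is how you get $\mathbf{P}^{s_s}[\sigma=\tau^{s_f}]=1$, because the paper runs the logic in the opposite direction. It only observes that $\tau^{s_f}=\sigma$ on $\{\sigma\ge 1\}$ and identifies $\mathbf{P}^{s_s}[X_{[0:\sigma]}=\mathfrak{t}^f]$ with $\mathbf{P}^{\text{flow}}[\mathfrak{t}^f]$. It then concludes ``in particular'' that $\{\sigma\ge 1\}$ has full $\mathbf{P}^{s_s}$-measure. That conclusion implicitly sums over $\mathfrak{T}^{\text{flow}}$ and uses that $\mathbf{P}^{\text{flow}}$ has total mass one, which the paper takes as given from the Kolmogorov-extension characterization.

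You instead prove almost-sure hitting directly by a pathwise argument. Before $\tau^{s_f}$ the chain moves only along edges, acyclicity forbids repeated states, and so $\tau^{s_f}\le \#S$. The normalization of $\mathbf{P}^{\text{flow}}$ then comes out as a corollary rather than an input.

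The paper's route is shorter. Yours is self-contained and gives an explicit bound consistent with the lemma that $\sigma$ is uniformly bounded. It also settles two points the paper dismisses as ``clear'': that $\sigma$ is well defined (uniqueness of the complete prefix), and that the first-passage event equals the cylinder (because $s_i\neq s_f$ for $i<N$).

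One small remark: the paper's formal definition of a pointed DAG does not state that $s_f$ has no outgoing edges, though the preliminaries call it a sink. The fact you use follows anyway from acyclicity and pointedness. An edge $s_f\to s'$ followed by a path from $s'$ to $s_f$ would form a cycle.
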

\begin{proof}
  It is clear that for any trajectory $\mathfrak{t}$ such that $\sigma(\mathfrak{t})\geq 1$, it holds that $\tau^{s_f}(\mathfrak{t}) = \sigma(\mathfrak{t})$.
  And by the definition of $\mathbf{P}^{s_s}$ from the same transition probability, we get that
  \begin{equation*}
    \mathbf{P}^{s_s}[X_{[0:\tau^{s_f}]}=\mathfrak{t}^f] = \mathbf{P}^{s_s}[X_{[0:\sigma]}=\mathfrak{t}^f]=\mathbf{P}^{\text{flow}}[\mathfrak{t}^f]
  \end{equation*}
  In particular, $\mathbf{P}^{s_s}$ has measure $1$ on the set where $\{\mathfrak{t}\colon \sigma(\mathfrak{t})\geq 1\}$.
\end{proof}
Recall that a Markov chain is called \textbf{irreducible} if for every pair of states $s, s^\prime$ it holds that $\mathbf{P}^s[\tau^{s^\prime}<\infty]=1$ for any two states $s$ and $s^\prime$.
\begin{proposition}
\label{prop:irreducibility}
  The Markov chain resulting from $P$ is irreducible.
\end{proposition}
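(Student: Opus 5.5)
The plan is to show that from any state $s$ the chain reaches any other state $s'$ in finitely many steps with probability one, by routing every path through the loop $s_f \to s_s$. Finite irreducibility in the graph sense is only half of it. The stated definition asks for $\mathbf{P}^s[\tau^{s'}<\infty]=1$, so I will prove the combinatorial accessibility claim first and then upgrade it to an almost-sure statement.

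\emph{Accessibility.} Take any two states $s,s'$. By the pointedness of the graph, $s$ lies on a complete trajectory, so there is a directed path $s \to \cdots \to s_f$. Then $s_f \to s_s$ has probability $P(s_s|s_f)=1$. Again by pointedness, there is a directed path $s_s \to \cdots \to s'$ (a prefix of a complete trajectory through $s'$). Every edge has positive probability by the standing Remark, so concatenating these paths gives some $n$ with $P^n(s,s')>0$. Hence all states communicate and the transition matrix is irreducible in the graph sense. The degenerate cases $s=s_s$ or $s'=s_f$ are covered directly.

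\emph{Almost-sure hitting.} I will show that $s_f$ is hit almost surely from every starting state, with a uniform geometric tail, and then use the strong Markov property.

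1. \emph{Hitting $s_f$.} Since the graph restricted to $S\setminus\{s_f\}$ is acyclic, any path avoiding $s_f$ has length at most $\#S$. Before hitting $s_f$, the chain only moves along edges of $\mathbb{A}$, because the only non-edge transition is the one out of $s_f$. Also, every non-final state has at least one outgoing edge. Therefore $\tau^{s_f}\le \#S$ deterministically from any start.

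2. \emph{Returning to $s_s$.} One step after hitting $s_f$ the chain is at $s_s$.

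3. \emph{Excursion bound.} From $s_s$, the probability that one excursion $s_s\to s_f$ passes through $s'$ is some $p>0$, namely the product of edge probabilities along a complete trajectory through $s'$. By the strong Markov property at successive returns to $s_s$, these excursions are i.i.d. Hence $\mathbf{P}[\text{no visit to } s' \text{ in } k \text{ excursions}]\le(1-p)^k\to0$.

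Combining the three steps gives $\mathbf{P}^s[\tau^{s'}<\infty]=1$. Equivalently, one can argue via finite-state theory: a finite irreducible chain is recurrent, hence every state is hit almost surely. I would cite that standard result (e.g.\ Douc et al.) and keep the excursion argument as a self-contained alternative.

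The main obstacle is bookkeeping rather than depth: making sure $s_f$ is reached in bounded time. This needs two facts, that acyclicity forbids cycles avoiding $s_f$, and that no state other than $s_f$ is a dead end. The second follows from pointedness, since every state lies on a complete trajectory. I would state these two facts explicitly before running the geometric argument.
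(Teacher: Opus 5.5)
Your proposal is correct and follows essentially the same route as the paper: from any state you reach $s_f$ in a bounded number of steps, jump deterministically to $s_s$, reach any $s'$ from $s_s$ with positive probability along a complete trajectory, and then invoke the strong Markov property. Your i.i.d.\ excursion argument with the $(1-p)^k$ bound is a useful addition, because the paper passes from ``accessible with strict positive probability'' directly to $\mathbf{P}^{s}[\tau^{s'}<\infty]=1$ without spelling out this repeated-trial step.
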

\begin{proof}
  Any flow trajectory in $\{\sigma \geq 1\}$ starting in $s_s$ will reach $s_f$ with probability $1$ in a bounded amount of steps.
  Furthermore, from $P(s^\prime|s)>0$ for any edge $s\to s^\prime$ shows that each state is visited by a flow trajectory with strict positive probability.
  The assumption that $P(s_s|s_f)=1$ and the fact that $P(s_s|s_s)=0$ shows that starting from any state $s$, the probability to reach $s_f$ and therefore $s_s$ is equal to $1$.
  From $s_s$ to any other state $s^\prime\neq s_s$ is with strict positive probability, hence from strong Markov property, it follows that any state $s^\prime$ is accessible from any state $s$ with strict positive probability, that is $\mathbf{P}^{s}[\tau^{s^\prime}<\infty] = 1$.
\end{proof}
Since the Markov chain with respect to the transition kernel $P$ is defined over a finite discrete state space, it is also \textbf{Harris recurrent} and \textbf{positive recurrent}.

\subsection{MC Constraints to be a GFN}

\begin{remark}
  For an easy definition of pointed directed graphs, it is better to distinguish between source state $s_s$ and final state $s_f$.
  However, since the transition from $s_s$ to $s_f$ is with probability one, from Markov Chain perspective, it is equivalent to identify them both by setting $s_s=s_f=\bar{s}$, which is also the practice in~\cite{deleu2023generative}.
\end{remark}
We consider a transition probability $P$ such that the resulting Markov chain is irreducible.
In particular, it is positive recurrent and Harris recurrent and has a stationary distribution $\pi$ since it is defined over a finite discrete state space.
For any state $s$, $\tau^s$ as well as $\tau^{s}_+$ are finite stopping time with finite expectation.

For this Markov Chain to be a GFlowNet, it is necessary and sufficient that any finite trajectory $\mathfrak{t}^f=(s_0 = \bar{s}, s_1, \ldots, s_N=\bar{s})$ does not contain inner loop.
In other terms for any state $s\neq \bar{s}$, the probability of returning to $s$ is $1$ and it shall pass first through $\bar{s}$ also with probability $1$.

\begin{theorem}
\label{thm:mc to gfn}
  A transition probability $P$ for an irreducible Markov chain is a GFlowNet if and only if
  \begin{equation}\label{eq:cond01}
    \mathbf{P}^s[\tau^{\bar{s}} < \tau^s_+] =1 \quad \text{for every } s\neq \bar{s}.
  \end{equation}

  The condition in \eqref{eq:cond01} translates into 
  \begin{equation*}
    \pi_s (Z_{\bar{s}\bar{s}} - Z_{s\bar{s}}) + \pi_{\bar{s}}(Z_{ss} - Z_{\bar{s}s}) = \pi_{\bar{s}},
  \end{equation*}
  where $Z$ is the fundamental matrix
  \begin{equation*}
    Z := (I - P + \Pi)^{-1} = \sum (P-\Pi)^n
  \end{equation*}
  with $\Pi$ being the matrix with each row equal to $\pi$.
\end{theorem}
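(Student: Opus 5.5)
The proof has two parts: the probabilistic characterization \eqref{eq:cond01}, and its translation into the fundamental matrix $Z$ via mean first passage times.

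\textbf{Characterization.} Let $\bar{s}=s_s=s_f$ as in the preceding remark, and let $\mathbb{A}_P=\{(s,s')\colon P(s'|s)>0\}$ be the support graph.

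For sufficiency, assume \eqref{eq:cond01}. I will show that the graph obtained by splitting $\bar{s}$ back into a source $s_s$ (keeping outgoing edges) and a sink $s_f$ (keeping incoming edges) is a pointed DAG.
\begin{itemize}[leftmargin=*, topsep=0pt, noitemsep]
\item \emph{Acyclicity.} Suppose there were a cycle in $\mathbb{A}_P$ avoiding $\bar{s}$, through some $s\neq\bar{s}$. Following it from $s$ has strictly positive probability, since it is a finite product of positive transitions. On that event $\tau^s_+<\tau^{\bar{s}}$, contradicting \eqref{eq:cond01}.
\item \emph{Pointedness.} By irreducibility, every $s$ is reachable from $\bar{s}$ and $\bar{s}$ is reachable from $s$. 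Concatenating these paths and cutting at the first return to $\bar{s}$ gives a complete trajectory through $s$.
\end{itemize}
Hence $P$ restricted to the edges defines a Markovian GFlowNet. By Thm.~\ref{thm:gfn to mc}, its chain coincides with the original one on complete trajectories.

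For necessity, if $P$ comes from a GFlowNet, every path from $s\neq\bar{s}$ back to $s$ must pass through $s_f=\bar{s}$, since the DAG has no cycles. Therefore $\tau^{\bar{s}}<\tau^s_+$ holds surely, not just almost surely.

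\textbf{Translation to $Z$.} Write $p_s:=\mathbf{P}^s[\tau^{\bar{s}}<\tau^s_+]$, and let $m_{ij}=\mathbf{E}^i[\tau^j]$ for $i\neq j$ denote mean first passage times. The key step is the commute-time identity
\begin{equation*}
p_s=\frac{1}{\pi_s\,(m_{s\bar{s}}+m_{\bar{s}s})}.
\end{equation*}
I will prove it by a regeneration argument, which I expect to be the main obstacle.
\begin{itemize}[leftmargin=*, topsep=0pt, noitemsep]
\item Consider the cycle ``start at $s$, hit $\bar{s}$, return to $s$''. Successive such cycles are i.i.d. by the strong Markov property, and their mean length is $m_{s\bar{s}}+m_{\bar{s}s}$.
\item By the renewal-reward theorem, the expected number of visits to $s$ in one cycle equals $\pi_s$ times the mean cycle length. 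Here the visit at time $0$ is counted and the terminal return is excluded.
\item The visits to $s$ in a cycle all occur before $\bar{s}$ is hit, since the cycle ends at the first return to $s$ after hitting $\bar{s}$. Each excursion from $s$ independently reaches $\bar{s}$ before returning to $s$ with probability $p_s$. So the number of visits is geometric with mean $1/p_s$.
\item Equating $1/p_s=\pi_s(m_{s\bar{s}}+m_{\bar{s}s})$ gives the identity.
\end{itemize}
Next I use the classical expression $m_{ij}=(Z_{jj}-Z_{ij})/\pi_j$ for $i\neq j$ (Kemeny--Snell). It follows from $(I-P)M=\mathbf{1}\mathbf{1}^\top-D$ with $D=\mathrm{diag}(1/\pi_j)$, together with the facts $Z(I-P)=I-\Pi$ and $Z\mathbf{1}=\mathbf{1}$.

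Substituting both expressions, the condition $p_s=1$ becomes
\begin{equation*}
\pi_s\Big(\frac{Z_{\bar{s}\bar{s}}-Z_{s\bar{s}}}{\pi_{\bar{s}}}+\frac{Z_{ss}-Z_{\bar{s}s}}{\pi_s}\Big)=1.
\end{equation*}
Multiplying by $\pi_{\bar{s}}$ yields the stated relation
\begin{equation*}
\pi_s (Z_{\bar{s}\bar{s}} - Z_{s\bar{s}}) + \pi_{\bar{s}}(Z_{ss} - Z_{\bar{s}s}) = \pi_{\bar{s}}.
\end{equation*}

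Finally, $Z=(I-P+\Pi)^{-1}$ exists by irreducibility, since $1$ is a simple eigenvalue of $P$. The series form $Z=\sum_n (P-\Pi)^n$ requires aperiodicity and is not needed for the argument.
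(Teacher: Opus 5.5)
Your proposal is correct and follows the paper's route. The paper likewise reads the hitting condition as the absence of inner loops, asserting this as ``clearly equivalent'' where you actually argue acyclicity and pointedness. It then combines the commute-time identity $\mathbf{P}^s[\tau^{\bar{s}}<\tau^s_+]=1/\bigl(\pi_s(\mathbf{E}^s[\tau^{\bar{s}}]+\mathbf{E}^{\bar{s}}[\tau^s])\bigr)$ with $\pi_j\mathbf{E}^i[\tau^j]=Z_{jj}-Z_{ij}$, citing both from Aldous--Fill (Cor.~2.8, Lemma~2.12) rather than deriving them via renewal--reward and the Kemeny--Snell identity as you do.

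Your closing caveat is also well taken. GFNMCs can be periodic (e.g.\ when all complete trajectories have equal length), in which case $\sum_n(P-\Pi)^n$ does not converge, and the inverse form $Z=(I-P+\Pi)^{-1}$, which is all your argument uses, is the one to rely on.
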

Here, we use $Z$ to align with the classical Markov chain theory. \textbf{Note that this $Z$ is not the amount of flows in GFlowNets.}

\begin{proof}
  The first statement \eqref{eq:cond01} is clearly equivalent to the Markov Chain is concentrated onto a set of complete trajectories that corresponds to a pointed acyclic directed graph.
  In particular $\tau^{\bar{s}}_+\leq \# S$.
  Now from \citep[Corollary 2.8]{aldous2002reversible}, it holds that
  \begin{equation*}
    \mathbf{P}^s[\tau^{\bar{s}}<\tau^s_+] = \frac{1}{\pi_s(E^s[\tau^{\bar{s}}] + E^{\bar{s}}[\tau^s])}
  \end{equation*}
  while \citep[Lemma 2.12]{aldous2002reversible} states that
  \begin{equation*}
    \pi_s E^{s}[\tau^{\bar{s}}] =\frac{\pi_s}{\pi_{\bar{s}}} (Z_{\bar{s}\bar{s}} - Z_{s\bar{s}}) \quad \text{and}\quad \pi_s E^{\bar{s}}[\tau^{s}] =(Z_{ss} - Z_{\bar{s}s})
  \end{equation*}
  Together with \eqref{eq:cond01}, it yields
  \begin{equation*}
    \pi_s (Z_{\bar{s}\bar{s}} - Z_{s\bar{s}}) + \pi_{\bar{s}}(Z_{ss} - Z_{\bar{s}s}) = \pi_{\bar{s}}, \quad \text{for all } s\neq \bar{s}.
  \end{equation*}
\end{proof}

Given an irreducible Markov chain with transition probability $P$ and resulting stationary distribution $\pi$ we can define the balanced chain $\tilde{P}$ as
\begin{equation*}
  \tilde{P}_{s s^\prime} = \frac{\pi_{s^\prime}}{\pi_s} P_{s^\prime s}
\end{equation*}

\begin{proposition}
  The Markov chain $\tilde{P}$ is again irreducible with same stationary distribution $\pi$ and eigenvalues as $P$.
  Furthermore, it is also a GFNMC.
\end{proposition}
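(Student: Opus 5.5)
The argument rests on writing $\tilde P$ as a similarity transform of $P^\top$. Let $D=\mathrm{diag}(\pi)$. Since the chain is irreducible on a finite state space, every $\pi_s>0$, so $D$ is invertible and
\[
\tilde P = D^{-1}P^\top D .
\]
Each claimed property then follows from this identity together with the GFNMC characterisation in Thm.~\ref{thm:mc to gfn}.

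\textbf{Step 1: stochasticity and stationarity.} I would check these directly from stationarity of $\pi$. The row sums are
\[
\sum_{s'}\tilde P_{ss'}=\pi_s^{-1}\sum_{s'}\pi_{s'}P_{s's}=\pi_s^{-1}(\pi P)_s=1,
\]
so $\tilde P$ is a transition kernel. For stationarity,
\[
(\pi\tilde P)_{s'}=\sum_s \pi_{s'}P_{s's}=\pi_{s'} .
\]

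\textbf{Step 2: eigenvalues.} The similarity $\tilde P = D^{-1}P^\top D$ shows that $\tilde P$ and $P^\top$ have the same characteristic polynomial, and $P^\top$ has the same characteristic polynomial as $P$. Hence $P$ and $\tilde P$ share all eigenvalues with multiplicities. The eigenvectors are transformed by $D^{\pm1}$ and transposition, which is why they need not coincide.

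\textbf{Step 3: irreducibility.} Because $\pi>0$, we have $\tilde P_{ss'}>0$ if and only if $P_{s's}>0$. So the transition graph of $\tilde P$ is the transition graph of $P$ with every edge reversed. A path $s\to s'$ in the reversed graph is a path $s'\to s$ in the original graph. Strong connectivity is preserved under edge reversal, so $\tilde P$ is irreducible on the finite space.

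\textbf{Step 4: GFNMC property (main obstacle).} This is the step where I expect the difficulty to lie. Condition~\eqref{eq:cond01} can be checked either pathwise or through the algebraic criterion of Thm.~\ref{thm:mc to gfn}; I would use the pathwise route.

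For any loop $(s_0,\dots,s_n=s_0)$, the $\tilde P$-probability telescopes:
\[
\prod_i \tilde P_{s_is_{i+1}}=\prod_i P_{s_{i+1}s_i}.
\]
This is the $P$-probability of the reversed loop. Consequently, the $\tilde P$-excursions from $\bar s$ are exactly the time-reversals of the $P$-excursions, namely the complete trajectories of the pointed DAG traversed backwards.

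Now suppose $\tilde P$ had a positive-probability return to some $s\neq\bar s$ that avoids $\bar s$. Reversing it would give a positive-probability cycle through $s$ under $P$ that avoids $\bar s$. That would be an inner loop, contradicting the acyclicity of $P$, i.e.\ \eqref{eq:cond01} for $P$. Hence \eqref{eq:cond01} holds for $\tilde P$, and $\tilde P$ is a GFNMC whose trajectories are the original ones reversed.

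For the underlying DAG of $\tilde P$, the source and sink swap roles, which is harmless once they are identified as $\bar s$. As a consistency check, the Aldous–Fill expressions used in the proof of Thm.~\ref{thm:mc to gfn} give $\tilde Z = D^{-1}Z^\top D$. Substituting this into the condition turns it into the original condition with the indices transposed, which is the same constraint, so this would confirm the pathwise argument algebraically.
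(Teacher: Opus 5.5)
Your proof is correct, but for the key step it takes a different route from the paper. The paper calls irreducibility, stationarity and the shared spectrum ``classical'' and proves only the GFNMC property, and it does so purely algebraically. From $\tilde P = D^{-1}P^\top D$ and $D^{-1}\Pi^\top D=\Pi$ it gets $\tilde Z = D^{-1}Z^\top D$. It then observes that the constraint $\pi_s(Z_{\bar s\bar s}-Z_{s\bar s})+\pi_{\bar s}(Z_{ss}-Z_{\bar s s})=\pi_{\bar s}$ of Thm.~\ref{thm:mc to gfn} is unchanged, because the two cross terms merely trade places; this is exactly what you treat as a consistency check.

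Your main argument instead verifies \eqref{eq:cond01} directly through the reversed support graph. This buys three things:
\begin{itemize}
\item It is more elementary: no fundamental matrix and no Aldous--Fill hitting-time identities.
\item It makes precise a structural fact the paper only asserts informally in App.~\ref{app:supporting theoretical framework}. $\tilde P$ runs the complete trajectories of $P$ backwards with source and sink swapped, and, by your telescoping identity, with the same excursion probabilities.
\item It supplies the ``classical'' parts the paper skips.
\end{itemize}
The paper's computation, in return, stays inside the linear-constraint characterisation it has just set up. It also shows more than is needed: the escape probability $\mathbf{P}^s[\tau^{\bar s}<\tau^s_+]$ (equivalently the commute time between $s$ and $\bar s$) is the same for $P$ and $\tilde P$ for any irreducible chain, not only when it equals $1$. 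Your telescoping, applied to the bijection $\text{loop}\mapsto\text{reversed loop}$, yields this as well.

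Two small precisions. First, passing from ``no positive-probability return to $s$ before $\bar s$'' to \eqref{eq:cond01} uses $\tilde{\mathbf{P}}^s[\tau^{\bar s}<\infty]=1$. This follows from your Step~3 and finiteness, and it deserves a sentence. Second, in your check, $\tilde Z=D^{-1}Z^\top D$ comes from the similarity together with $D^{-1}\Pi^\top D=\Pi$ and $\tilde Z=(I-\tilde P+\Pi)^{-1}$, not from the Aldous--Fill formulas. Their role is only to translate \eqref{eq:cond01} into the $Z$-equation.
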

\begin{proof}
  The first statement is classical.
  Let us show that $\tilde{P}$ is a GFNMC.
  Denoting with $D = \mathrm{diag}(\pi)$, it holds that $\tilde{P} = D^{-1}P^\top D$.
  It is easy to check that $D\Pi D^{-1} = \Pi^\top$.
  It follows that
  \begin{align*}
    Z = (I-\tilde{P} - \Pi)^{-1}  &= (I - D^{-1}P^\top D - D^{-1}\Pi^{\top}D)^{-1}\\
                              &= D^{-1}\left(\left(I - P - \Pi\right)^{\top}\right)^{-1}D \\
                              &= D^{-1}\left(\left(I - P - \Pi\right)^{-1}\right)^{\top}D \\
                              &= D^{-1}Z^\top D
  \end{align*}
  showing that the fundamental matrices satisfy the same balance equation.
  It follows that
  \begin{align*}
    \pi_s \left(\tilde{Z}_{\bar{s}\bar{s}} - \tilde{Z}_{s\bar{s}}\right) + \pi_{\bar{s}}\left(\tilde{Z}_{ss} - \tilde{Z}_{\bar{s}s}\right) 
     &=\pi_s \left(Z_{\bar{s}\bar{s}} - \frac{\pi_{\bar{s}}}{\pi_s}Z_{\bar{s}s}\right) + \pi_{\bar{s}}\left(Z_{ss} - \frac{\pi_s}{\pi_{\bar{s}}}Z_{s\bar{s}}\right)\\ 
     &= \pi_s\left(Z_{\bar{s}\bar{s}} - Z_{s\bar{s}}\right) + \pi_{\bar{s}}\left(Z_{ss} - Z_{\bar{s}s}\right) \\
     &= \pi_{\bar{s}}
  \end{align*}
  showing that $\tilde{P}$ is a GFNMC according to Thm.~\ref{thm:mc to gfn}.
\end{proof}

Taking $0<\alpha<1$, the $\alpha$-GFN given by $\alpha P + (1-\alpha) \tilde{P}$ is clearly irreducible with the same stationary distribution.
However, it is no longer of GFNMC type as some inner loop might be present in trajectories from $\bar{s}$ to $\bar{s}$.
It also do not share the same set of eigenvalues.
Nevertheless, it is closely related to the properties of $\alpha$-GFNs, as demonstrated in the following.
\subsection{Convergence Rate}
\label{app:convergence rate}

Let $P$ be the transition probability of an irreducible Markov chain of GFNMC type.
Let $\pi$ be the stationary distribution, $\tilde{P}$ the balanced chain.
We further denote by $\beta$ the largest modulus of the eigenvalue of $P$ which is not $1$ and $\beta_{\alpha}$ the largest modulus of the eigenvalue of $P_{\alpha}=\alpha P + (1-\alpha)\tilde{P}$.
Let further $d(s) = \mathrm{gcd}\{n\colon P^n_{ss}>0\}$ be the periodicity of the Markov chain.
It is a classical result that for irreducible chains, this periodicity is the same for each state.
Hence, let $d$, $\tilde{d}$ and $d_\alpha$ be the corresponding periodicities of $P$, $\tilde{P}$ and $\alpha P$, respectively.

\begin{proposition}
  It holds that the periodicity of $P$ is equal to the greatest common divisor of the length of all complete trajectories.
  Furthermore $d = \tilde{d}$.

  Finally for $0<\alpha<1$, it holds that 
  \begin{equation*}
    d_\alpha = 
    \begin{cases}
      1 & \text{if } d \text{ is odd}\\
      1 \text{ or } 2 & \text{if } d \text{ is even}
    \end{cases}
  \end{equation*}
\end{proposition}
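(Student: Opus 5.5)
We work in the GFNMC setting of Thm.~\ref{thm:mc to gfn}, with $s_s$ and $s_f$ identified as $\bar s$, so that $P$ is irreducible and every return path of $P$ from $\bar s$ to $\bar s$ is exactly a complete trajectory.

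\textbf{Step 1: the periodicity of $P$.} Since periodicity is a class property, it suffices to compute $d=d(\bar s)$. By condition~\eqref{eq:cond01}, an excursion from $\bar s$ cannot revisit any $s\neq\bar s$ before returning to $\bar s$. Hence every closed walk at $\bar s$ decomposes uniquely into a concatenation of first-return excursions, and each such excursion is a complete trajectory $\mathfrak{t}^f\in\mathfrak{T}^{\text{flow}}$ of some length $N$. Let $\mathcal{N}=\{N : \text{some complete trajectory has length } N\}$. Then
\[
\{n : P^n_{\bar s\bar s}>0\}
\]
is exactly the additive semigroup generated by $\mathcal{N}$, using that $P(s'|s)>0$ on every edge. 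The gcd of a semigroup equals the gcd of its generators, so $d=\gcd\mathcal{N}$.

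\textbf{Step 2: $d=\tilde d$.} We have $\tilde P_{ss'}>0$ if and only if $P_{s's}>0$. So closed walks of $\tilde P$ are exactly the reversals of closed walks of $P$, with the same lengths. Therefore $\{n:\tilde P^n_{\bar s\bar s}>0\}=\{n:P^n_{\bar s\bar s}>0\}$, which gives $\tilde d=d$. An alternative route is to use $\tilde P=D^{-1}P^\top D$, which implies $\tilde P^n_{\bar s\bar s}=P^n_{\bar s\bar s}$.

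\textbf{Step 3: the mixed chain.} For $0<\alpha<1$, the support of $P_\alpha$ is the union of the supports of $P$ and $\tilde P$. Take any edge $s\to s'$ of $P$. Since $\tilde P_{s's}>0$, the walk $s\to s'\to s$ is a closed walk of length $2$ for $P_\alpha$, so $d_\alpha\mid 2$ and $d_\alpha\in\{1,2\}$. Also, every closed walk of $P$ remains a closed walk of $P_\alpha$, so $d_\alpha\mid d$. If $d$ is odd, then $d_\alpha$ divides both $2$ and an odd number, which forces $d_\alpha=1$. If $d$ is even, both values $1$ and $2$ remain possible.

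To show the statement is sharp, I would observe that $d_\alpha=2$ exactly when the underlying undirected graph is bipartite, that is, when all complete trajectories have even length and the edge graph admits a $2$-coloring. In the even case $d_\alpha=1$ can also occur, for example when two trajectories of different even lengths create an odd undirected cycle. This sharpness is optional given the phrasing ``1 or 2''.

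\textbf{Main obstacle.} The delicate point is Step 1: the identification of the return set with the semigroup generated by $\mathcal{N}$. It relies on the no-inner-loop property~\eqref{eq:cond01}, which guarantees that every excursion from $\bar s$ is a genuine complete trajectory, and on the positivity of $P$ on all edges.
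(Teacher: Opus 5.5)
Your Steps 1--3 are correct and follow the same route as the paper. The paper asserts that the period is the gcd of the complete-trajectory lengths, hence shared by $\tilde P$. It then uses the two-step loop $s\to s'\to s$ (forward by $P$, back by $\tilde P$) to get $d_\alpha\mid 2$, together with $d_\alpha\mid d$. You supply the semigroup and reversal details that the paper leaves as ``clear''. Working with $s_s$ and $s_f$ identified as $\bar s$ is also the right choice: with the unmerged extension $P(s_s|s_f)=1$ the return times would be the numbers $N+1$, not $N$.

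The problem is your optional sharpness paragraph. The claim that $d_\alpha=1$ can occur when $d$ is even is false, and the example you sketch cannot exist.

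If $d$ is even, every complete trajectory has even length. Then for every state $s$, all paths from $s_s$ to $s$ have lengths of the same parity. Otherwise, two such paths of different parity, each followed by a fixed path from $s$ to $s_f$, would give complete trajectories of different parity. Let $c(s)\in\{0,1\}$ be this parity. Every edge $s\to s'$ satisfies $c(s')\equiv c(s)+1 \pmod 2$. Moreover $c(s_s)=c(s_f)=0$, so the colouring survives the identification of both states with $\bar s$. The support of $P_\alpha$ consists exactly of the edges of $P$ and their reversals, so every step of $P_\alpha$ flips $c$. Hence every closed walk of $P_\alpha$ has even length, and $d_\alpha=2$.

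Equivalently, in terms of the cyclic classes $C_0,\dots,C_{d-1}$ of $P$: $P$ maps $C_i$ to $C_{i+1}$ and $\tilde P$ maps $C_i$ to $C_{i-1}$, and for even $d$ both moves change the parity of $i$.

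So your bipartiteness criterion for $d_\alpha=2$ is right, but bipartiteness is automatic once $d$ is even. In particular, trajectories of different even lengths can never close an odd undirected cycle. The sharp statement is $d_\alpha=\gcd(d,2)$. The paper's ``$1$ or $2$'' is true but not tight, and your Step 3 already proves it. You should drop or correct that paragraph rather than assert that the even case is genuinely ambiguous.
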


\begin{proof}
  It is clear that the periodicity of $P$ and therefore $\tilde{P}$ will coincide with the greatest common divisor of all trajectory lengths.

  However for $0<\alpha < 1$, it is then possible to have trajectories $s\to s^\prime \to s$ with strict positive probability (since it can go with strict probability from $s$ to $s^\prime$ with $P$ and strict positive probability from $s^\prime$ to $s$ with $\tilde{P}$).
  Hence the periodicity of $\alpha P + (1-\alpha)\tilde{P}$ divides $d$ as well as $2$ showing the result.
\end{proof}
\begin{remark}
  In the case of GFN, usually with very large graphs, either you have many possible lengths that make the periodicity already $1$, or all trajectories are of the same length, and the periodicity is either $1$ or $2$ for the $\alpha$-GFN.
  Although $P$ in this case have a very large periodicity, $\tilde{P}$ will drastically reduce it to at least $2$ if not $1$.
\end{remark}

Having a periodicity of $1$ is important in terms of convergence, as it is geometric ergodic.
In this case the convergence to the stationary distribution also holds in total variation as exponential rate determined by the second largest eigenvalue and a central limit theorem also holds that ensure correct bounds in terms of sampling akin to Monte Carlo.
The rate of convergence is dominated by the second largest eigenvalue.
However, as eigenvalues are highly non-linear, it is not possible to find a direct relation between $\beta_\alpha$ and $\beta$.
Nevertheless, it is highly possible that mixing with different $\alpha$ can improve the convergence rate depending on the structure of the original Markov Chain. 
\textbf{In particular, setting $\alpha=0.5$ directly yields the discussions corresponding to vanilla GFlowNets.}

\newpage

\section{Proofs}
App.~\ref{app:supporting theoretical framework} enables formal discussions on GFlowNets from the Markov chain perspective. Building on this aspect,  we now take a closer look into the objectives of $\alpha$-GFNs and vanilla GFlowNets.
Before diving into the corresponding theorems and propositions in the main body, we first show an equivalence between flows and probability measures with a generalization of~\cite{deleu2023generative}. Although this equivalence has been used in previous parts following~\cite{deleu2023generative}, we give a generalization for clarity in the following.

By defining
\begin{equation}
\label{eq:flows as probability measures}
    Z_{\text{state}} = \sum_{s \in S} F(s), \quad \pi(s) = \frac{F(s)}{Z_{\text{state}}},
\end{equation}
we directly obtain the probability measures at each state $s \in S$. Note that $\pi(\cdot)$ is a probability measure since $\sum_{s \in S} \pi(s)=1$.

Equipped with~\eqref{eq:flows as probability measures} and App.~\ref{app:supporting theoretical framework}, we now derive the proofs for corresponding statements.

\subsection{Proof for Prop.~\ref{prop:reversibility of equally mixed policy}}
\label{app:proof for reversibility of equally mixed policy}
Since $P_{0.5}=\frac{1}{2}P_F+\frac{1}{2}P_B$, given the partial trajectory $\mathfrak{t}'=(s_k,s_{k+1},\dots,s_{k+m})$, the reversibility of $P_{0.5}$ suggests
\begin{equation*}
    \pi(s_k) \prod_{i=1}^{m} P_{0.5}(s_{k+i}|s_{k+i-1}) 
    =
    \pi(s_{k+m})\prod_{i=1}^{m} P_{0.5}(s_{k+i-1}|s_{k+i})
\end{equation*}
which extends to 
\begin{equation*}
    \pi(s_k) \prod_{i=1}^{m} \left(\frac{1}{2}P_{F}(s_{k+i}|s_{k+i-1})+\frac{1}{2}P_B(s_{k+i}|s_{k+i-1})\right) 
    =
    \pi(s_{k+m})\prod_{i=1}^{m} \left(\frac{1}{2}P_{F}(s_{k+i-1}|s_{k+i})+\frac{1}{2}P_B(s_{k+i-1}|s_{k+i})\right)
\end{equation*}
 By noticing the fact that both $P_F$ and $P_B$ are one-directional within this partial trajectory, i.e. $\forall (s,s') \subset \mathfrak{t}'$, if $P_F(s'|s)>0$, then $P_F(s|s')=0, P_B(s|s')>0, P_B(s'|s)=0$, it follows that
 \begin{equation*}
     (\frac{1}{2})^m \pi(s_k) \prod_{i=1}^{m} P_{F}(s_{k+i}|s_{k+i-1})
     =
     (\frac{1}{2})^m \pi(s_{k+m}) \prod_{i=1}^mP_B(s_{k+i-1}|s_{k+i})
 \end{equation*}
 By eliminating $(\frac{1}{2})^m$ on both sides of the equation, one has
 \begin{equation*}
     \pi(s_k)\prod_{i=1}^{m} P_{F}(s_{k+i}|s_{k+i-1})
     =
    \pi(s_{k+m}) \prod_{i=1}^mP_B(s_{k+i-1}|s_{k+i})     
 \end{equation*}

\subsection{Proof for Thm.~\ref{thm:gflownet objectives and reversibility}}
\label{app:proof for gflownet objectives and reversibility}
\textbf{We first prove the claim that DB, SubTB and TB correspond to edge-level, partial-trajectory-level and trajectory-level reversibility of $P_{0.5}$.} Using~\eqref{eq:flows as probability measures}, we obtain that for DB, 
\begin{equation*}
    \frac{F(s)}{Z_{\text{state}}} P_F(s'|s) = \frac{F(s)}{Z_{\text{state}}} P_B(s|s')
\end{equation*}
which translates into
\begin{equation*}
    \pi(s) P_F(s'|s)=\pi(s')P_B(s|s').
\end{equation*}
Obviously, this equation is the reversibility in Prop.~\ref{prop:reversibility of equally mixed policy} at an edge-level. Similarly, one can obtain the proofs for SubTB and TB by plugging~\eqref{eq:flows as probability measures} into~\eqref{eq:subtb} and~\eqref{eq:tb target} and comparing the equations with Prop.~\ref{prop:reversibility of equally mixed policy}.

\textbf{Next, we show that such objectives converge to unique state flows from the Markov chain perspective.}

For \textbf{DB} and \textbf{SubTB($\lambda$)}, they both account for the edge-level reversibility (see Def.~\ref{def:DB} and Def.~\ref{def:SubTB(lambda)}), whereas reaching the edge-level reversibility is actually equivalent to  the convergence of DB and, particularly, SubTB($\lambda$). The corresponding proof for DB is straightforward since the edge-level reversibility is identically its training target (see Def.~\ref{def:DB}). For SubTB($\lambda$), by noticing that given a complete trajectory $\mathfrak{t}^f=(s_0=s_s,s_{1},\dots,s_{n-1}=x,s_{n+1}=s_f)$, edge-level reversibility suggests
\begin{equation*}
    F(s_i)P_F(s_{i+1}|s_i)=F(s_{i+1})P_B(s_{i+1}|s_i), \quad \text{for all } (s_i,s_{i+1}) \subset \mathfrak{t}^f.
\end{equation*}
Therefore, for every partial trajectory $\mathfrak{t}'=(s_k,s_{k+1},\dots,s_{k+m}) \subset \mathfrak{t}^f$, by using the fact that $\forall 0\leq i \leq k-1,F(s_{i+1})=\frac{F(s_i)P_F(s_{i+1}|s_i)}{P_B(s_{i}|s_{i+1})}$, we have
\begin{equation}
    F(s_k)\prod_{i=1}^{k}P_F(s_{k+i}|s_{k+i-1}) = F(s_{k+m})\prod_{i=1}^kP_B(s_{k+i-1}|s_{k+i})
\end{equation}
which suggests the target of SubTB($\lambda$). 
Hence, it is only required to check whether the edge-level reversibility can be achieved from the Markov chain perspective. 
In fact, this edge-level reversibility is equivalent to the \textbf{detailed balance conditions}~\citep{douc2018markov}. If detailed balance conditions are satisfied, the probability measures are unique if the corresponding finite-state Markov chain is irreducible and positive recurrent~\citep{douc2018markov}.  Given that flows are also unique in GFlowNets~\citep{bengio2023gflownet}, it is required to show whether our MC modeling of GFlowNets achieve unique flows as well. Since flows are unnormalized probability measures, it suffices to show that the Markov transition kernel $P_{0.5}$ yields a irreducible and positive recurrent Markov chain, which is direct from the fact that both the forward policy $P_F$ and the backward policy $P_B$ yield irreducible and positive recurrent chains. 
Therefore, DB and SubTB($\lambda$)'s convergence to unique flows is ensured from a Markov chain viewpoint.

For \textbf{TB}, we relate to the \textbf{Kolmogorov's criterion}~\citep{douc2018markov}, which is equivalent to the detailed balance conditions for a finite discrete Markov chain like GFlowNets. Kolmogorov's criterion suggest that for any loop $(s_0=\bar{s},s_1,\dots,s_{n-1},s_n=\bar{s})$, the reversibility at the loop is achieved, i.e.
\begin{equation}
\label{eq:kolmogorov's criterion}
    \pi(s_0) \prod_{i=1}^n P(s_i|s_{i-1}) = \pi(s_n)\prod_{i=1}^n P(s_{i-1}|s_{i})
\end{equation}
if the corresponding finite discrete Markov chain is irreducible and positive recurrent.
If one mergers the source state $s_s$ and the final state $s_f$ as~\cite{deleu2023generative}, i.e. $s_s=s_f=\bar{s}$, then TB of $P_{0.5}$ targets at
\begin{equation*}
    F(\bar{s})\prod_{i=1}^m P_{0.5}(s_i|s_{i-1}) = F(\bar{s})\prod\limits_{i=1}^m P_{0.5}(s_{i-1}|s_i)
\end{equation*}
which translates into
\begin{equation*}
    \pi(\bar{s})\prod_{i=1}^m P_{0.5}(s_i|s_{i-1}) = \pi(\bar{s})\prod\limits_{i=1}^m P_{0.5}(s_{i-1}|s_i)
\end{equation*}
for the complete trajectory $\mathfrak{t}^f=(s_0=s_s,s_1,\dots,s_{n-1}=x,s_n=s_f)$ since $\pi(s)=\frac{F(s)}{Z_{\text{state}}}$ is the corresponding probability measure and $P_B(x|s_f)=\frac{R(x)}{\sum_{x' \in \mathcal{X}}R(x')}$.
Therefore, the convergence of TB is a necessary condition for Kolmogorov's criterion to hold. Due to the fact that $P_{0.5}$ yields a irreducible and positive recurrent Markov chain, if Kolmogorov's criterion holds, the uniqueness of flows is achieved by the convergence of TB as well. However, since TB is only a necessary condition, this uniqueness is relatively fragile, which potentially contribute to instability of TB training~\citep{madan2023learning}.

\subsection{Proof for Prop.~\ref{prop:convergence of alpha-gfn objectives}}
\label{app:proof for convergence of alpha-gfn objectives}
\textbf{We first prove the claim that $\alpha$-DB, $\alpha$-SubTB and $\alpha$-TB correspond to edge-level, partial-trajectory-level and trajectory-level reversibility of $P_{\alpha}$.} The proof is similar to the proof for Thm.~\ref{thm:gflownet objectives and reversibility}, and we only show the proof for $\alpha$-DB. 
Recall the reversibility of $\alpha$-DB: for any $(s,s') \in \mathbb{A}$
\begin{equation}
\label{eq:alpha-db reversibility}
    \pi(s) P_{\alpha}(s'|s) = \pi(s') P_{\alpha}(s|s').
\end{equation}
Since both $P_F$ and $P_B$ are one-directional,~\eqref{eq:alpha-db reversibility} implies
\begin{equation}
\label{eq:alpha-db reversibility revisited}
    \alpha \pi(s) P_{F}(s'|s)  = (1-\alpha) \pi(s') P_B(s|s').
\end{equation}
On the other hand, recall $\alpha$-DB, which targets at
\begin{equation}
\label{eq:alpha-db target}
    \alpha F(s) P_{F}(s'|s)  = (1-\alpha) F(s') P_B(s|s').
\end{equation}
Plugging $\pi(s)=\frac{F(s)}{Z_{\text{state}}}$ into Equation\ref{eq:alpha-db target} yields
\begin{equation*}
    \alpha \pi(s) P_{F}(s'|s)  = (1-\alpha) \pi(s') P_B(s|s')
\end{equation*}
which is identical to~\eqref{eq:alpha-db reversibility revisited}. The proofs for $\alpha$-SubTB and $\alpha$-SubTB($\lambda$) are similar.

\textbf{Next, we show the convergence of such objectives lead to unique state flows from the Markov chain perspective.} This also follows the proof for Thm.~\ref{thm:gflownet objectives and reversibility}. The convergence of $\alpha$-DB and $\alpha$-SubTB($\lambda$) to unique flows are derived by the detailed balance conditions, irreducibility and positive recurrence of the Markov chain with $P_{\alpha}$, whereas that of $\alpha$-TB follows a necessary condition of Kolmogorov's criterion of $P_{\alpha}$. 

\textbf{Nevertheless, App.~\ref{app:convergence rate} reveals that the convergence rates to unique flows vary for different $\alpha$ values.} Even though this aspect may not be perfectly reflected by loss curves due to the difference between GFlowNet training and MCMC algorithms, the tuning of $\alpha$ still contributes the training of GFlowNets, as suggested in Prop.~\ref{prop:gradient of alpha-gfn objectives}.

\subsection{Proof for Prop.~\ref{prop:gradient of alpha-gfn objectives}}
\label{app:proof for gradient of alpha-gfn objectives}
Following Def.~\ref{def:alpha-gfn objectives}, the loss function of $\alpha$-SubTB at the partial trajectory $\mathfrak{t}=(s_k,s_{k+1},\dots,s_{k+m})$ is
\begin{equation*}
    L_{\alpha-\text{SubTB}}(\mathfrak{t}')=\log^2
    \left(
    \frac{\alpha^m F(s_k)\prod_{i=1}^m P_F(s_{k+i}|s_{k+i-1})}{(1-\alpha)^m F(s_{k+m})\prod_{i=1}^{m}P_B(s_{k+i-1}|s_{k+i})}
    \right).
\end{equation*}
We denote by $P_B(\mathfrak{t}')=\prod_{i=1}^m P_B(s_{k+i-1}\mid s_{k+i1})$. Then, the gradient to $P_F(\mathfrak{t}')=\prod_{i=1}^m P_F(s_{k+i}\mid s_{k+i-1})$ is
\begin{align*}
    \frac{\partial L_{\alpha-\text{SubTB}}(\mathfrak{t}')}{\partial P_F(\mathfrak{t'})} 
    &= \frac{2}{P_F(\mathfrak{t}')} \log \frac{\alpha^m F(s_k)P_F(\mathfrak{t}')}{(1-\alpha)^m F(s_{k+m})P_B(\mathfrak{t}')} \\
    &= \frac{2}{P_F(\mathfrak{t}')} \log \frac{F(s_k)P_F(\mathfrak{t}')}{F(s_{k+m})P_B(\mathfrak{t}')} + \frac{2m}{P_F(\mathfrak{t}')}\log \frac{\alpha}{1-\alpha}.
\end{align*}
Meanwhile, taking a gradient of~\eqref{eq:subtb loss} suggests
\begin{equation*}
    \frac{\partial L_{\text{SubTB}}(\mathfrak{t}')}{\partial P_F(\mathfrak{t'})}  = \frac{2}{P_F(\mathfrak{t}')} \log \frac{F(s_k)P_F(\mathfrak{t}')}{F(s_{k+m})P_B(\mathfrak{t}')}.
\end{equation*}
Therefore, it is direct that
\begin{equation*}
        \frac{\partial L_{\alpha-\text{SubTB}}(\mathfrak{t}')}{\partial P_F(\mathfrak{t'})} = \frac{\partial L_{\text{SubTB}}(\mathfrak{t}')}{\partial P_F(\mathfrak{t'})} + \frac{2m}{P_F(\mathfrak{t}')} \log \frac{\alpha}{1-\alpha}.
\end{equation*}
\newpage

\section{Experiments}
In this section, we present the detailed experimental settings, computational resource usage, ablation studies of our $\alpha$-GFNs along with more numerical results and corresponding analysis to support our findings. All experiments are run on a cluster consisting of NVIDIA RTX3090, NVIDIA RTX4090 and NVIDIA ADA6000 GPUs.

\subsection{Detailed Experimental Setups}
\label{app:experimental settings}

\subsubsection{Set Generation}
\paragraph{Implementation Details.} We implement $\alpha$-DB, $\alpha$-TB, $\alpha$-SubTB($\lambda$), $\alpha$-FL-DB, and $\alpha$-FL-SubTB($\lambda$) based on the open-source code of~\citet{pan2023better}, where setting $\alpha=0.5$ yields the baselines. Most details of this task follow~\citet{pan2023better}, including the models and hyperparameters. Specifically, the vocabulary size (action space) is 30, 80 and 100 for small, medium and large sets, respectively, and the maximum set capacity is capped at 20, 60, and 80. We employ the same intermediate energy function $\mathcal{E}(\cdot)$: energies are sampled uniformly from $[-1,1]$, and exactly $\frac{|S|}{10}$ elements share identical energy values, resulting in multiple optimal solutions. The GFlowNet agent is parameterized by an MLP with two hidden layers of 256 units and LeakyReLU activations. We generate 16 samples per training step and use the Adam optimizer~\citep{kingma2014adam} to optimize all objectives for 10,000 training steps.
For $\alpha$-DB and $\alpha$-FL-DB, the learning rate is set to 0.001. For $\alpha$-TB, we use a learning rate of 0.001 for the MLP parameters and 0.1 for the learnable normalizing constant $Z$ (the total flow), following~\citet{pan2023better}. For $\alpha$-SubTB($\lambda$) and $\alpha$-FL-SubTB($\lambda$), we use the same optimizer and learning rate as $\alpha$-DB, and set $\lambda=0.99$ by default. We implement SubTB($\lambda$) by summing balance residuals over all subtrajectories within each sampled trajectory, with $\lambda^{\text{length}}$ weighting; terminal rewards are injected at the end of subtrajectories (as in the standard SubTB formulation). For FL-SubTB($\lambda$), we follow the forward-looking variant where intermediate rewards are incorporated along the trajectory (i.e., per-step), while keeping the same subtrajectory weighting scheme.
In particular, we notice that the parameter $\epsilon$ in the $\epsilon$-greedy sampling trick varies across different values of $\alpha$ and set sizes (in some cases the sampling policy becomes nearly uniform with $\epsilon=1$). Therefore, we adopt a unified schedule: we start from $\epsilon=1$ and linearly anneal it to 0.05 during training. Models are trained for 10,000 steps, where the first 9,000 steps correspond to stage 1 of Alg.~\ref{alg:two-staged-training}. Throughout the experiments, we fix the initialization of model weights and use different random seeds for sample generation. All results are averaged over 5 random seeds, which are integers from 0 to 4.

\paragraph{Evaluation Metrics.} Following~\cite{pan2023better}, the evaluation is conducted online and independent of training samples. Training samples are generated with the $\epsilon$-greedy policy, while evaluation samples are generated only with the forward policy $P_F$. Details of the three evaluation metrics are presented in what follows:
\begin{itemize}[leftmargin=*, topsep=0pt, noitemsep]
\item \textbf{Modes}: The count of unique samples with rewards exceeding a predefined threshold. This metric evaluates the policy's ability to explore diverse high-reward regions. Following \citet{jang2023learning}, we set the threshold to $0.25$ for small sets. For medium and large sets, we use a threshold of $700,000$. Note that while \citet{pan2023better} introduces these tasks, they do not specify a threshold; thus, our choices are based on standard benchmarks in the literature.
\item \textbf{Top-1000 R}: The average reward of the top $1,000$ unique samples with the highest rewards. This measures the policy's exploitation efficiency in identifying the most optimal candidates.
\item \textbf{Spearman}: The Spearman correlations calculated between the sampling probability $P_F^{\top}$ and the ground-truth reward $R$. We evaluate this on a held-out test set of $1,000$ samples generated by the initialized policy to assess the alignment of the learned distribution with the reward landscape.
\end{itemize}
In addition to these metrics, we report the \textbf{average reward} of all evaluation samples in Figure~\ref{fig:main_figure} to provide a global view of the training progress.

\subsubsection{Bit Sequence Generation} 

\paragraph{Implementation Details.}
Following~\citet{tiapkin2024generative}, we adopt the non-autoregressive version of the Bit Sequence Generation task. Unlike the original tree-structured state space in~\citet{malkin2022trajectory}, this version operates on a DAG-structured state space, which presents a more significant challenge for credit assignment and mode discovery. We implement $\alpha$-DB, $\alpha$-TB, and $\alpha$-SubTB based on the framework of~\citet{tiapkin2024generative}, where $\alpha=0.5$ serves as the baseline.
We additionally implement the forward-looking variants (FL-DB and FL-SubTB) by supplying intermediate rewards via a shaped partial log-reward for incomplete sequences: we treat unfilled slots as a sentinel token and compute the minimum token-level Hamming mismatch to the mode set $M$ while ignoring sentinel positions, yielding $\widetilde{\log R}(s_{\le t})$; we then define the per-step intermediate log-reward as the increment $\log r^{\mathrm{FL}}t=\widetilde{\log R}(s{\le t+1})-\widetilde{\log R}(s_{\le t})$ (with the same reward exponent used for the terminal reward), so that these increments telescope to the final log-reward when the sequence is complete. In FL-DB, we subtract $\log r^{\mathrm{FL}}t$ from each one-step DB residual (and analogously at the terminal step); in FL-SubTB, for every segment $(i,j)$ we subtract the accumulated intermediate reward $\sum{t=i}^{j-1}\log r^{\mathrm{FL}}_t$ inside the SubTB residual, matching the forward-looking SubTB formulation. 
The GFlowNet agent is parameterized by a Transformer~\citep{vaswani2017attention} with 3 hidden layers, 64-dimensional hidden states, and 8 attention heads.

All models are trained for 50,000 steps using the Adam optimizer~\citep{kingma2014adam} with a batch size of 16. The learning rate is set to $2 \times 10^{-3}$ for MLP parameters and $10^{-3}$ for the learnable normalizing constant $Z$ in TB-based objectives. For SubTB($\lambda$), we set $\lambda=1.9$. To stabilize training, we apply gradient clipping with a norm of 20. The reward function is augmented as $\tilde{R}(x)=R(x)^{\beta}$ with $\beta=2$. During training, we employ an $\epsilon$-noisy strategy that mixes the forward policy $P_F$ with a uniform distribution using $\epsilon=0.001$. As in the Set Generation task, we adopt a two-stage training procedure where the first 40,000 steps correspond to Stage 1 of Alg.~\ref{alg:two-staged-training}. All results are averaged over 5 random seeds (0--4) with fixed weight initializations.

\paragraph{Evaluation Metrics.} Similar to the setup in Set Generation, we separate training and evaluation: training samples use the $\epsilon$-greedy policy, while evaluation is conducted online using only the forward policy $P_F$. The evaluation metrics are detailed below:
\begin{itemize}[leftmargin=*, topsep=0pt, noitemsep]
\item \textbf{Modes}: The number of unique modes discovered (out of a maximum of 60). A mode is considered identified if a generated sample's Hamming distance to a predefined mode in $M$ is less than 30~\citep{malkin2022trajectory}. To ensure a strict count, once a specific mode from $M$ is identified, it is marked as "found"; subsequent samples falling within the same distance threshold are not counted as additional modes.
\item \textbf{Spearman}: The Spearman correlation between the estimated generation probabilities $P_{\theta}(x)$ and the ground-truth rewards $R(x)$, evaluated on the predefined test set from~\citet{malkin2022trajectory}. To approximate the generation probability $P_{\theta}(x)$, we employ the Monte Carlo estimator from~\citet{zhang2022generative}:\begin{equation*}P_{\theta}(x) \approx \frac{1}{N} \sum_{i=1}^N \frac{P_F(\tau^i)}{P_B(\tau^i | x)},\end{equation*}where $N=10$ and trajectories $\tau^i$ are sampled from the backward policy $P_B$ (which is fixed as uniform).
\end{itemize}

\subsubsection{Molecule Generation}
\paragraph{Implementation Details.} This task aims to generate molecular binders for the soluble epoxide hydrolase (sEH) protein. We implement $\alpha$-variants of DB, TB, SubTB($\lambda$), and their forward-looking (FL) extensions based on the frameworks of~\citet{pan2023better} and~\citet{tiapkin2024generative}. The GFlowNet agent is parameterized by a Message Passing Neural Network (MPNN) with 10 convolution steps, following the configuration in~\citet{tiapkin2024generative}. We use OrderedGNN~\citep{song2023ordered} to ensure reproducible message-passing computations.All models are trained for 50,000 steps using the Adam optimizer~\citep{kingma2014adam} with a batch size of 4. The learning rate is set to $5\times10^{-4}$ for all parameters, including the learnable $\log Z$ (initialized at 30) in TB objectives. For SubTB($\lambda$), we set $\lambda=0.99$. We apply a gradient clipping norm of 2 for TB to stabilize training. The reward is augmented as $\tilde{R}(x)=R(x)^{\beta}$ with $\beta=4$. During training, an $\epsilon$-greedy strategy with $\epsilon=0.05$ is used. As with previous tasks, we adopt a two-stage training procedure (Alg.~\ref{alg:two-staged-training}), where the first 40,000 steps constitute Stage 1. To ensure reproducibility, we replace the timestamp-based sampling in the original codebase with fixed random seeds (0--4) for all 5 runs.

\paragraph{Evaluation Metrics.} Following the protocol in Set Generation, evaluation is performed online using only the forward policy $P_F$, independent of the $\epsilon$-greedy training samples. A total of 200,000 molecules are generated for both training and evaluation per objective. The evaluation metrics are detailed below:
\begin{itemize}[leftmargin=*, topsep=0pt, noitemsep]
\item \textbf{Modes}: The count of unique molecules that satisfy both a reward threshold ($R > 7$) and a diversity constraint (Tanimoto similarity $< 0.7$).
\item \textbf{Top-1000 R}: The average reward of the top 1000 unique samples with the highest rewards, reflecting the model's reward exploitation capability.
\item \textbf{Spearman}: The Spearman correlation between the generation probabilities and ground-truth rewards on a predefined test set to assess whether the model learns to match the reward distribution.
\end{itemize}

\subsection{Additional Results}
\label{app:additional results}

\paragraph{Sample Diversity.}
In addition to the mode discovery results presented in the main text, we provide the Top-1000 Similarity for the Set and Molecule Generation tasks as a supplemental metric. This is calculated as the average pairwise similarity among the top 1,000 unique samples with the highest rewards.
Specifically, the similarity metrics for each task are implemented as follows:
\begin{itemize}[leftmargin=*, topsep=0pt, noitemsep]
\item \textbf{Set Generation:} We employ the average Jaccard similarity to quantify the element overlap among the sets. 
\item \textbf{Molecule Generation:} We calculate the average Tanimoto similarity of the molecules. 
\end{itemize}
Results are shown in Table~\ref{tab:similarity}. Across the evaluated settings, the similarity metrics of our methods remain at a similar level to the baselines, despite the improvements in reward and mode discovery.

In Set Generation, we observe a slight increase in similarity scores as the task scale grows. This reflects the models' focus on high-reward regions, which can lead to more concentrated samples. However, when considering the increased number of modes found, these similarity values (remaining between 0.7 and 0.9) suggest that the models are still identifying a variety of high-quality solutions. The results indicate that the performance gains do not result in a significant collapse of sample variety.

In Molecule Generation, the similarity scores are consistent with the baseline results. For certain objectives like FL-DB and FL-SubTB($\lambda$), our methods yield similarity values, such as 0.50, that are slightly lower than or equal to the baseline values. These metrics suggest that the $\alpha$-GFN objectives can improve reward-related metrics while retaining structural diversity. Overall, the analysis indicates that the observed improvements in other metrics do not come at the cost of a significant loss in sample variety.

In summary, these results suggest that our methods remain consistent with standard GFlowNet objectives in terms of sample similarity, even while exploring high-reward regions more effectively.
\begin{table*}[t]
\caption{\textit{Top-1000 Similarity for Set and Molecule Generation.} We report the average Jaccard similarity for sets and Tanimoto similarity for molecules among the top 1,000 unique high-reward samples. The results illustrate the similarity levels of our methods relative to the baselines. Standard deviations are shown in \textcolor{gray}{gray}.}
\centering
\setlength{\tabcolsep}{6pt}
\renewcommand{\arraystretch}{1.2}
\resizebox{\linewidth}{!}{%
\begin{tabular}{@{\hspace{.6em}}lcccccccccc@{\hspace{.6em}}}
\toprule
\multicolumn{1}{c}{Top-1000 Similarity} &
\multicolumn{2}{c}{\textbf{DB}} &
\multicolumn{2}{c}{\textbf{FL-DB}} &
\multicolumn{2}{c}{\textbf{SubTB($\lambda$)}} &
\multicolumn{2}{c}{\textbf{FL-SubTB($\lambda$)}} &
\multicolumn{2}{c}{\textbf{TB}} \\
\cmidrule(lr){2-3}\cmidrule(lr){4-5}\cmidrule(lr){6-7}\cmidrule(lr){8-9}\cmidrule(lr){10-11}
\textbf{Task} &
Baseline & Ours &
Baseline & Ours &
Baseline & Ours &
Baseline & Ours &
Baseline & Ours \\
\midrule

Set Generation, Small Sets&
\msl{0.69}{0.00} & \msl{0.72}{0.01} & \msl{0.73}{0.00} & \msl{0.74}{0.00} & \msl{0.69}{0.00} & \msl{0.70}{0.01} & \msl{0.74}{0.00} & \msl{0.74}{0.00} & \msl{0.69}{0.00} & \msl{0.69}{0.00} \\
\cmidrule(lr){1-11}

Set Generation, Medium Sets &
\msl{0.71}{0.00} & \msl{0.80}{0.01} & \msl{0.78}{0.01} & \msl{0.81}{0.01} & \msl{0.73}{0.01} & \msl{0.84}{0.01} & \msl{0.78}{0.00} & \msl{0.87}{0.01} & \msl{0.69}{0.00} & \msl{0.86}{0.01} \\
\cmidrule(lr){1-11}

Set Generation, Large Sets &
 \msl{0.76}{0.00} & \msl{0.88}{0.01} & \msl{0.85}{0.01} & \msl{0.87}{0.00} & \msl{0.79}{0.02} & \msl{0.87}{0.02} &
 \msl{0.85}{0.01} & \msl{0.86}{0.01}
 & \msl{0.74}{0.00} & \msl{0.87}{0.01} \\
\cmidrule(lr){1-11}

Molecule Generation & 
\msl{0.56}{0.00} & \msl{{0.56}}{0.00} &
\msl{0.61}{0.03} & \msl{0.50}{0.05} &
\msl{{0.56}}{0.00} & \msl{0.56}{0.01} &
\msl{0.54}{0.10} & \msl{0.50}{0.00} &
\msl{0.60}{0.03} & \msl{0.61}{0.04} \\

\bottomrule
\end{tabular}
}
\label{tab:similarity}
\end{table*}

\paragraph{Length-controlling side-effects.}
Fig.~\ref{fig:length_alpha} illustrates how the parameter $\alpha$ influences the average sample length. Since lengths are fixed in the Set and Bit Sequence Generation tasks, we focus on Molecule Generation as a representative case for variable-length scenarios. We observe that for forward-looking (FL) variants, the generated sample length correlates positively with $\alpha$, showing a increase as $\alpha$ grows. The underlying mechanism of this correlation remains an open question and is left for future investigation.
\begin{figure}[htbp]
  \centering
  \setlength{\tabcolsep}{0pt}
  \begin{tabular}{@{}c@{\hspace{0pt}}c@{\hspace{0pt}}c@{\hspace{0pt}}c@{\hspace{0pt}}c@{}}
    \includegraphics[width=.19\textwidth]{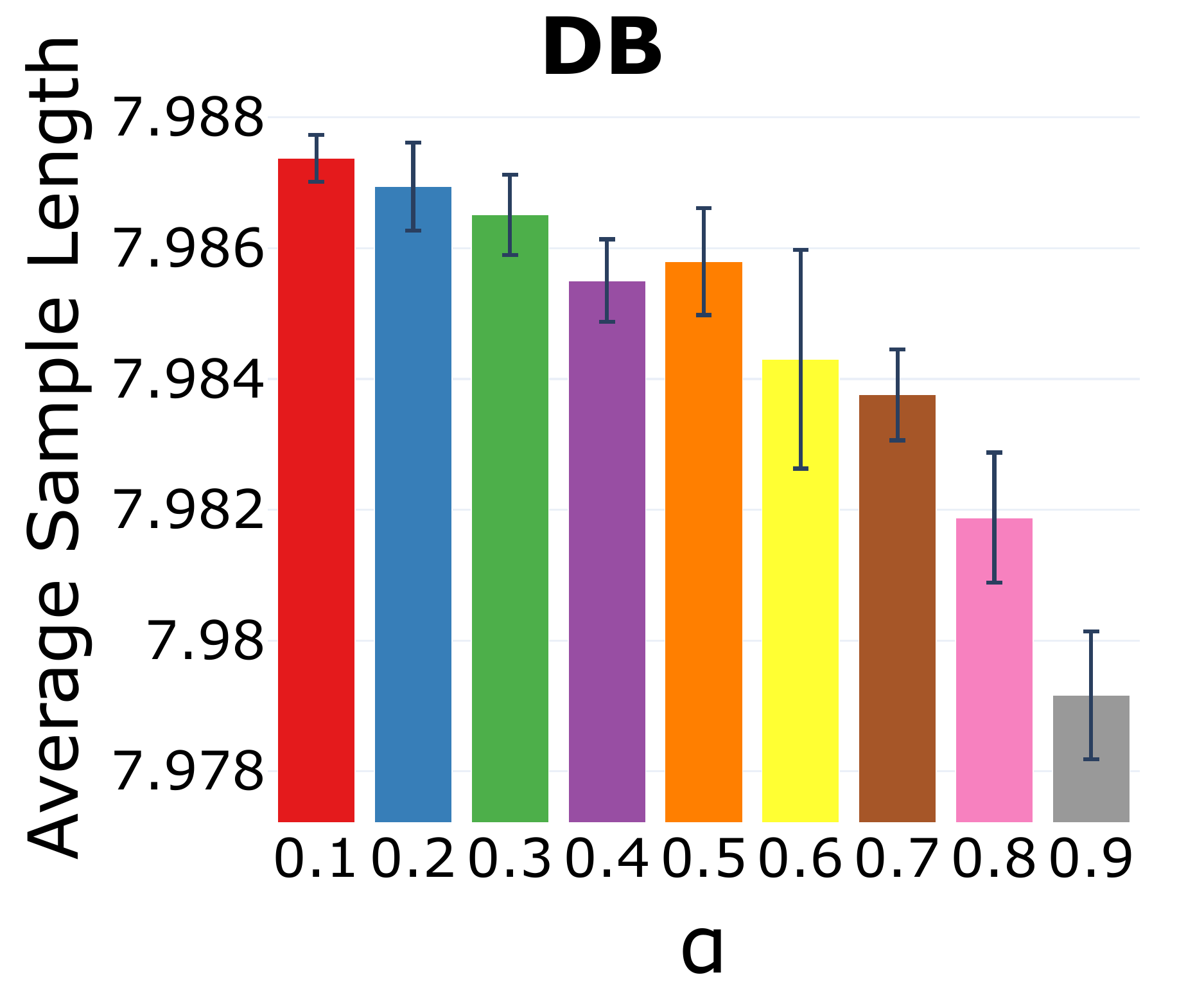} &
    \includegraphics[width=.19\textwidth]{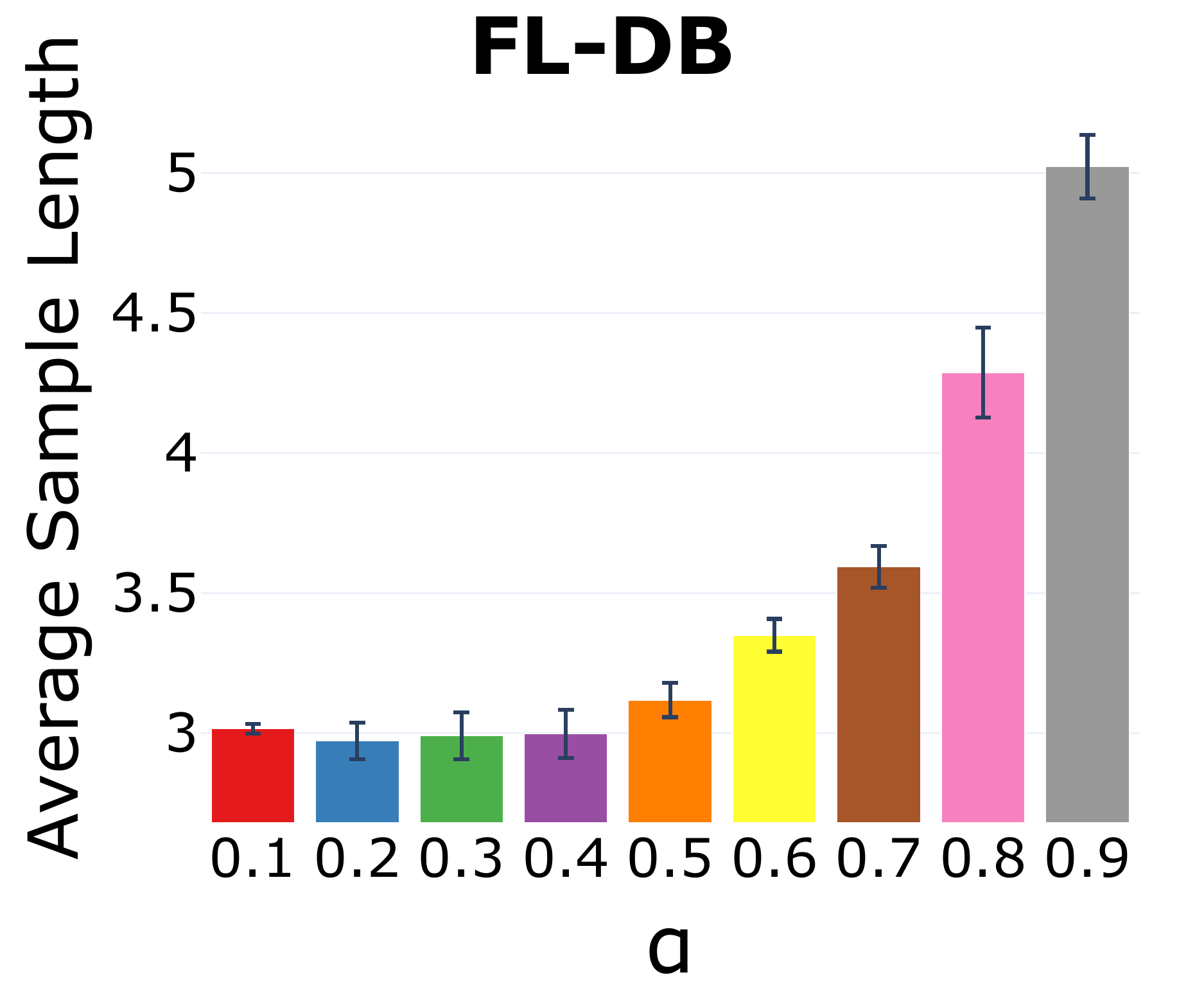} &
        \includegraphics[width=.19\textwidth]{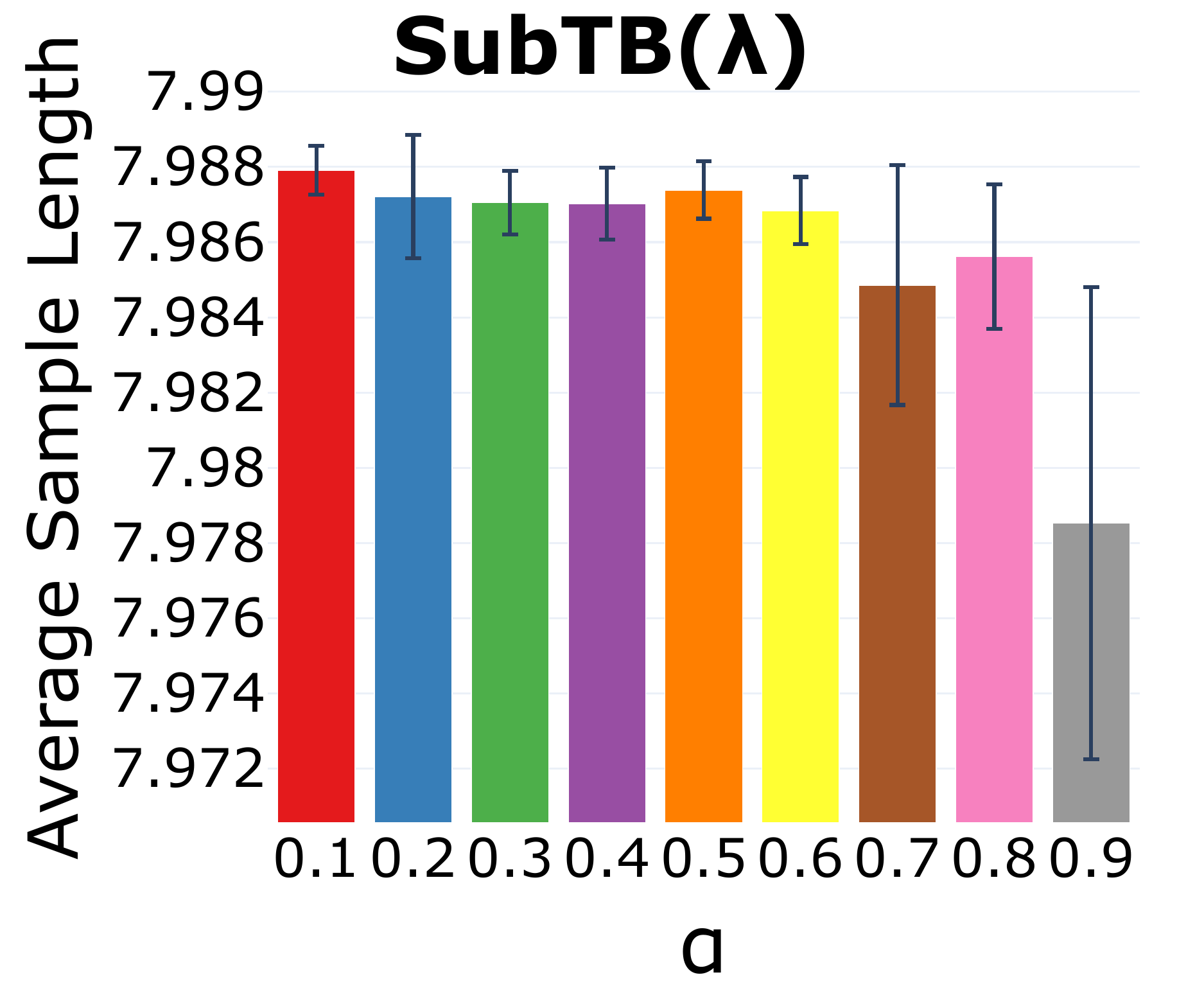} &
    \includegraphics[width=.19\textwidth]{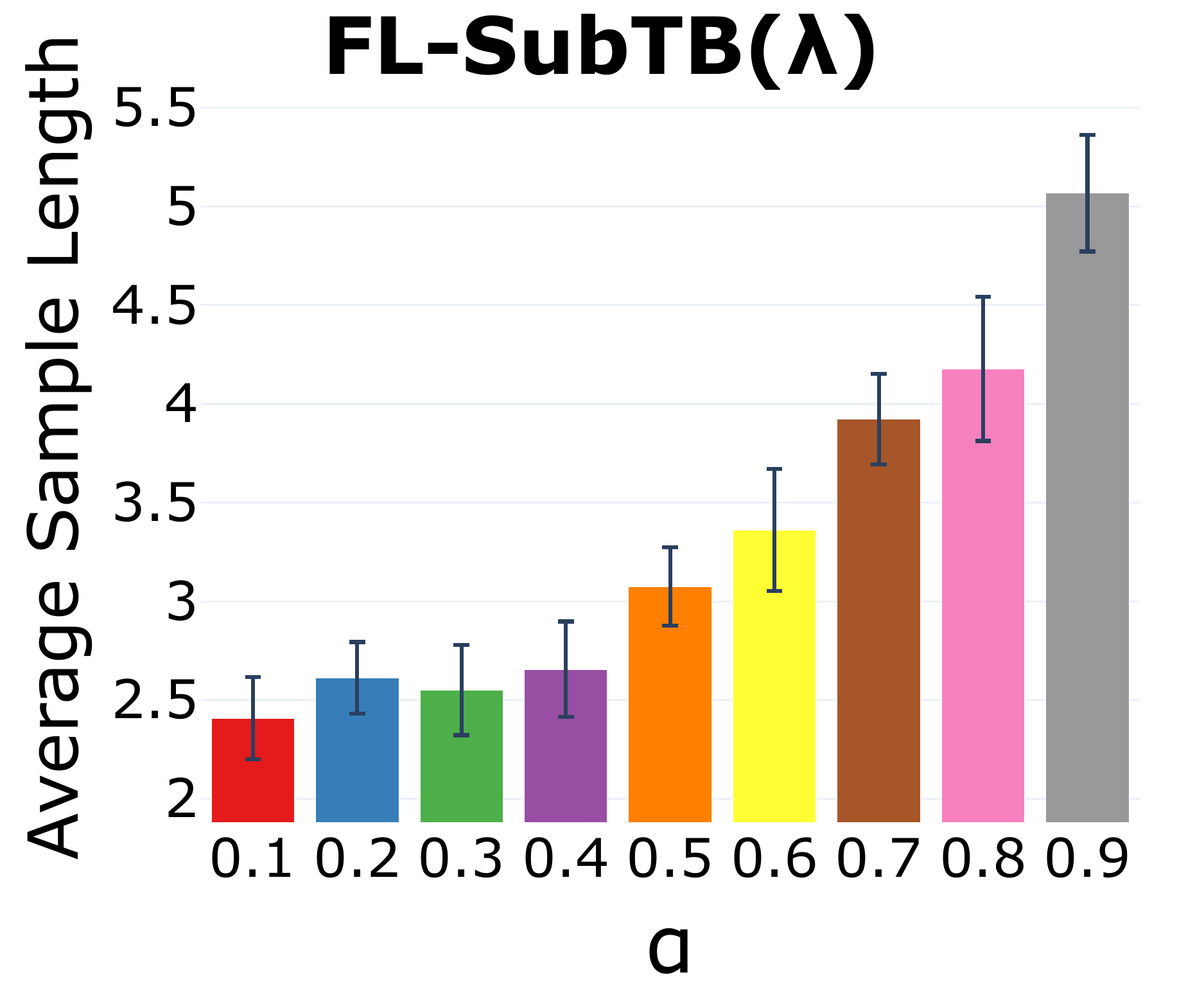} &
    \includegraphics[width=.19\textwidth]{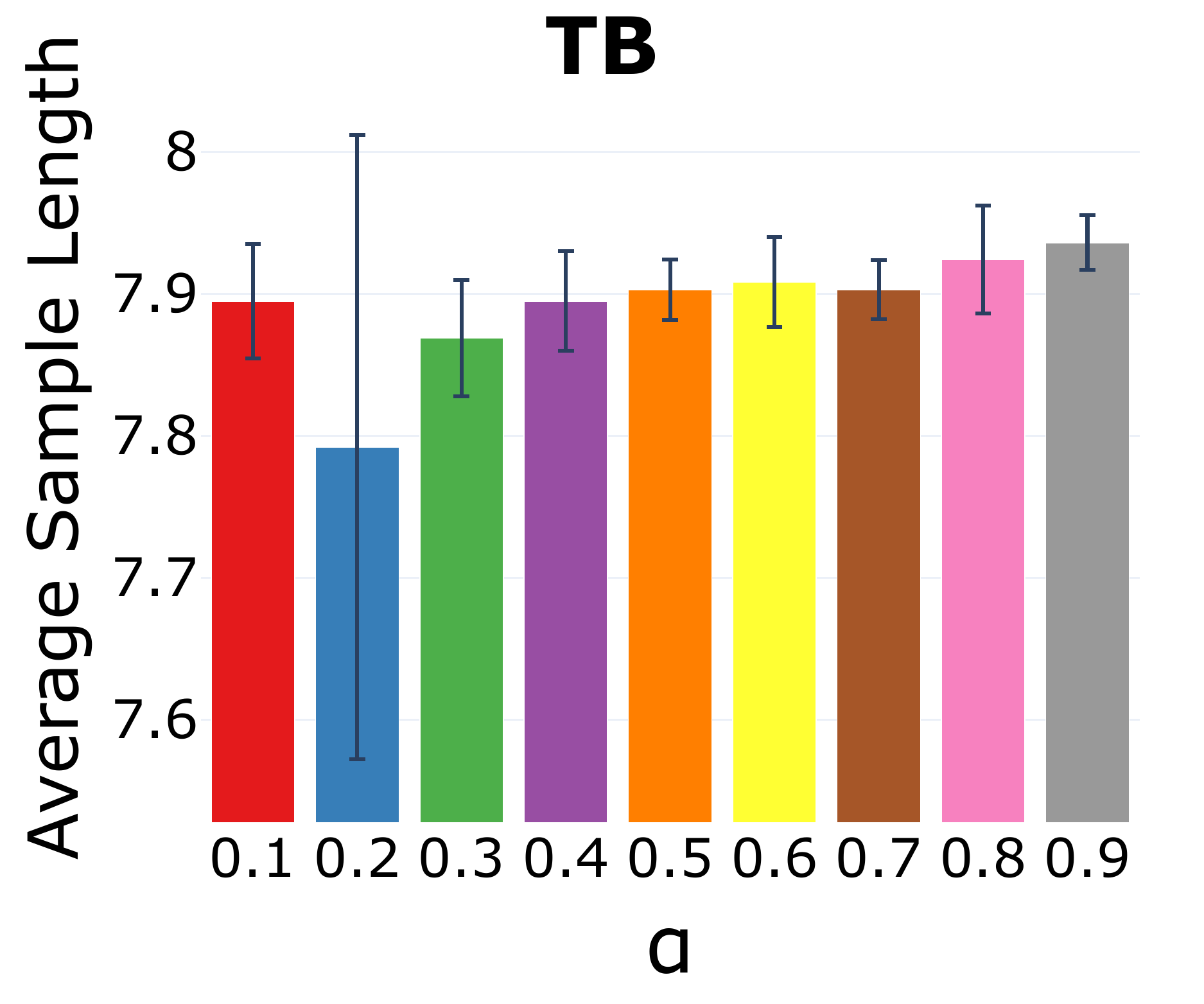} \\
  \end{tabular}
  \caption{\textit{Average Sample Length vs $\alpha$ in Molecule Generation.}}
  \label{fig:length_alpha}
\end{figure}

\paragraph{Dynamics of Scheduled Training.} 
To characterize the behavior of the scheduled scheme (Alg.~\ref{alg:two-staged-training}), we track the evolution of key performance metrics across training steps. Detailed plots are provided in Figs.~\ref{fig:set_metric_mean_top_1000_R}--\ref{fig:set_metric_forward_policy_entropy_eval} for Set Generation, Figs.~\ref{fig:bit_metric_modes}--\ref{fig:bit_metric_loss} for Bit Sequence Generation (with k=4), and Figs.~\ref{fig:mols_metric_num_modes_eval}--\ref{fig:mols_metric_current_loss} for Molecule Generation.

\begin{figure}[htbp]
  \centering
  \setlength{\tabcolsep}{0pt}
   \begin{tabular}{@{}c@{\hspace{0pt}}c@{\hspace{0pt}}c@{\hspace{0pt}}c@{\hspace{0pt}}c@{}}
    \includegraphics[width=0.2\textwidth]{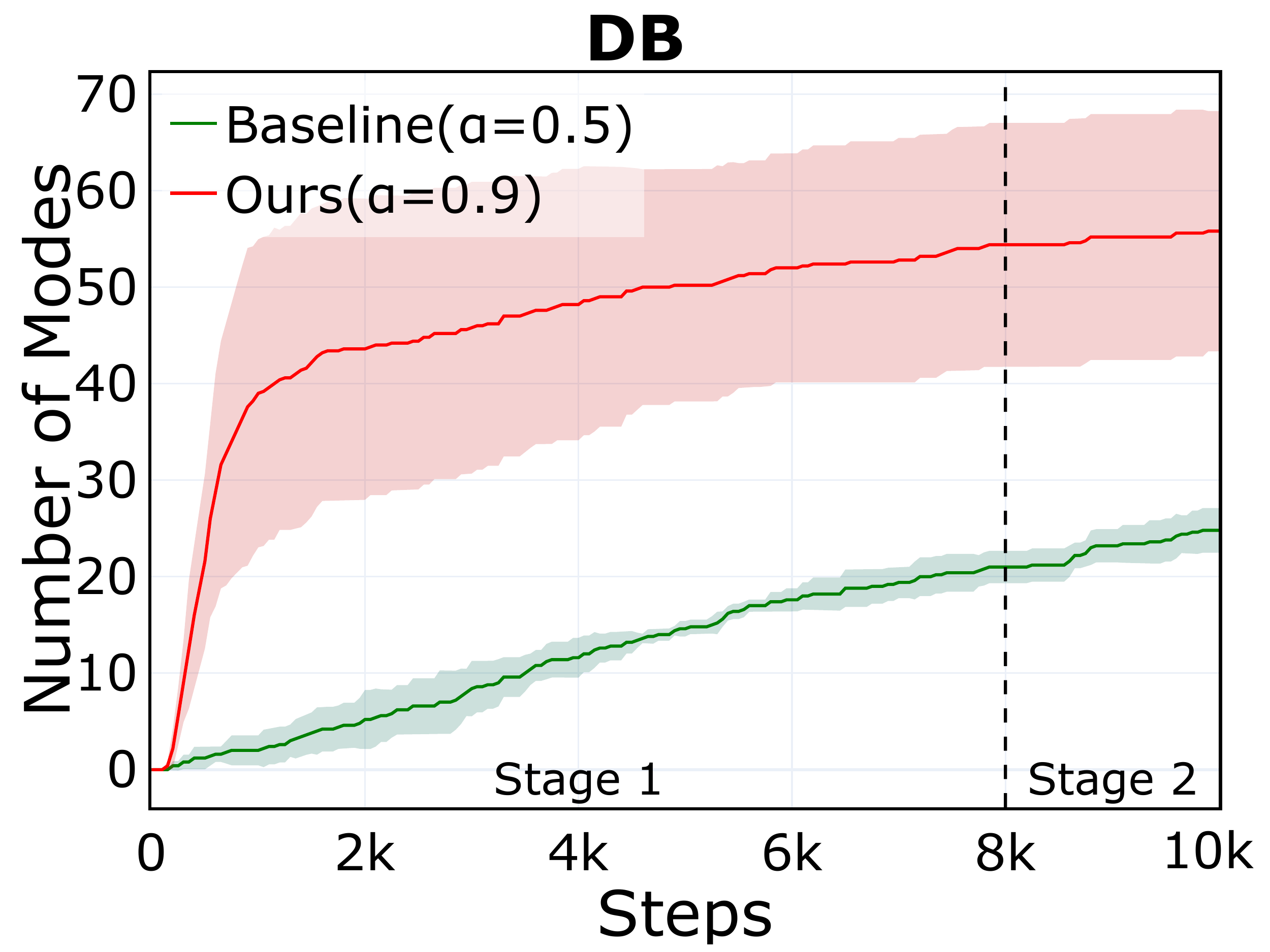} &
    \includegraphics[width=0.2\textwidth]{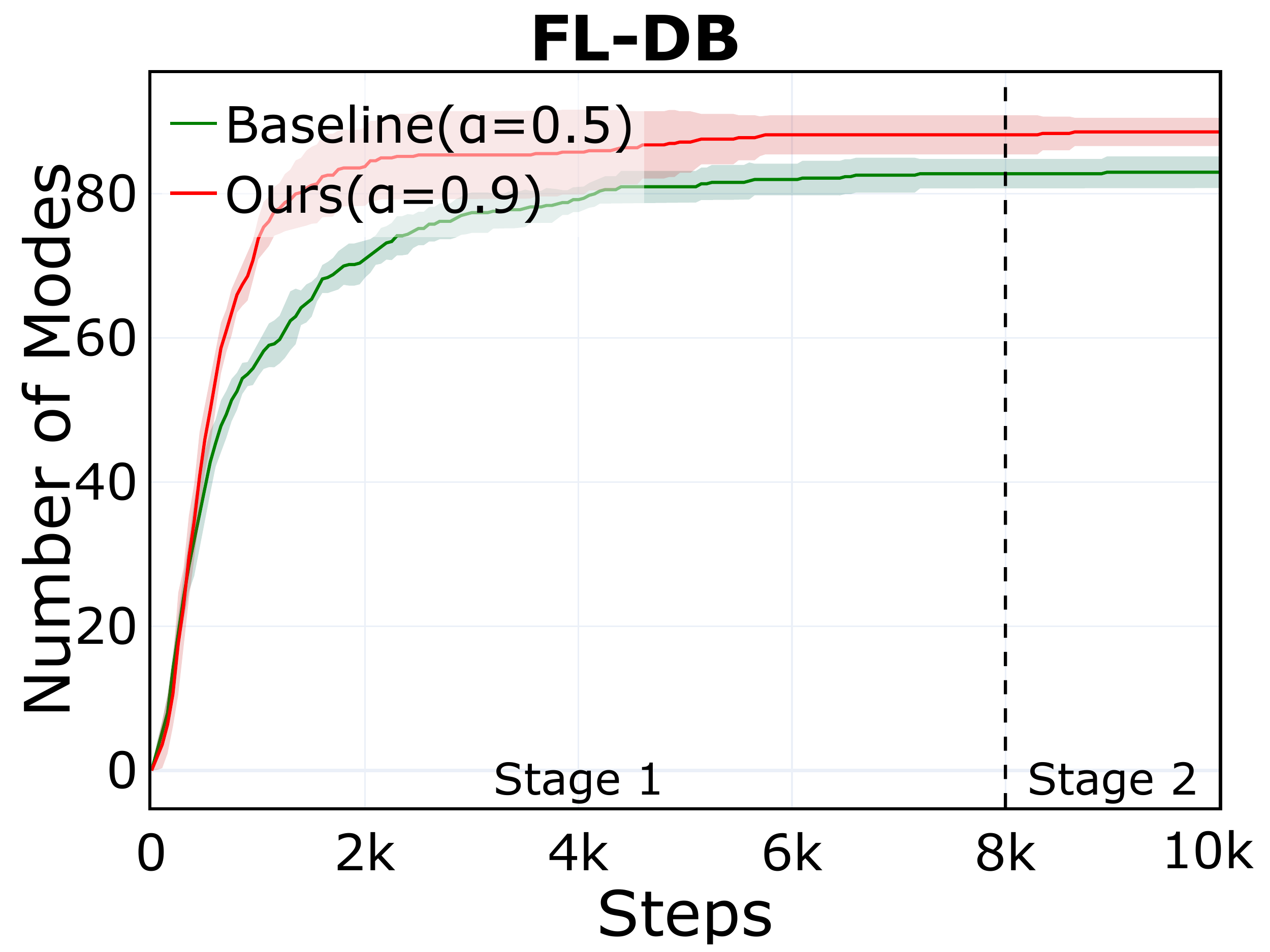} &
    \includegraphics[width=0.2\textwidth]{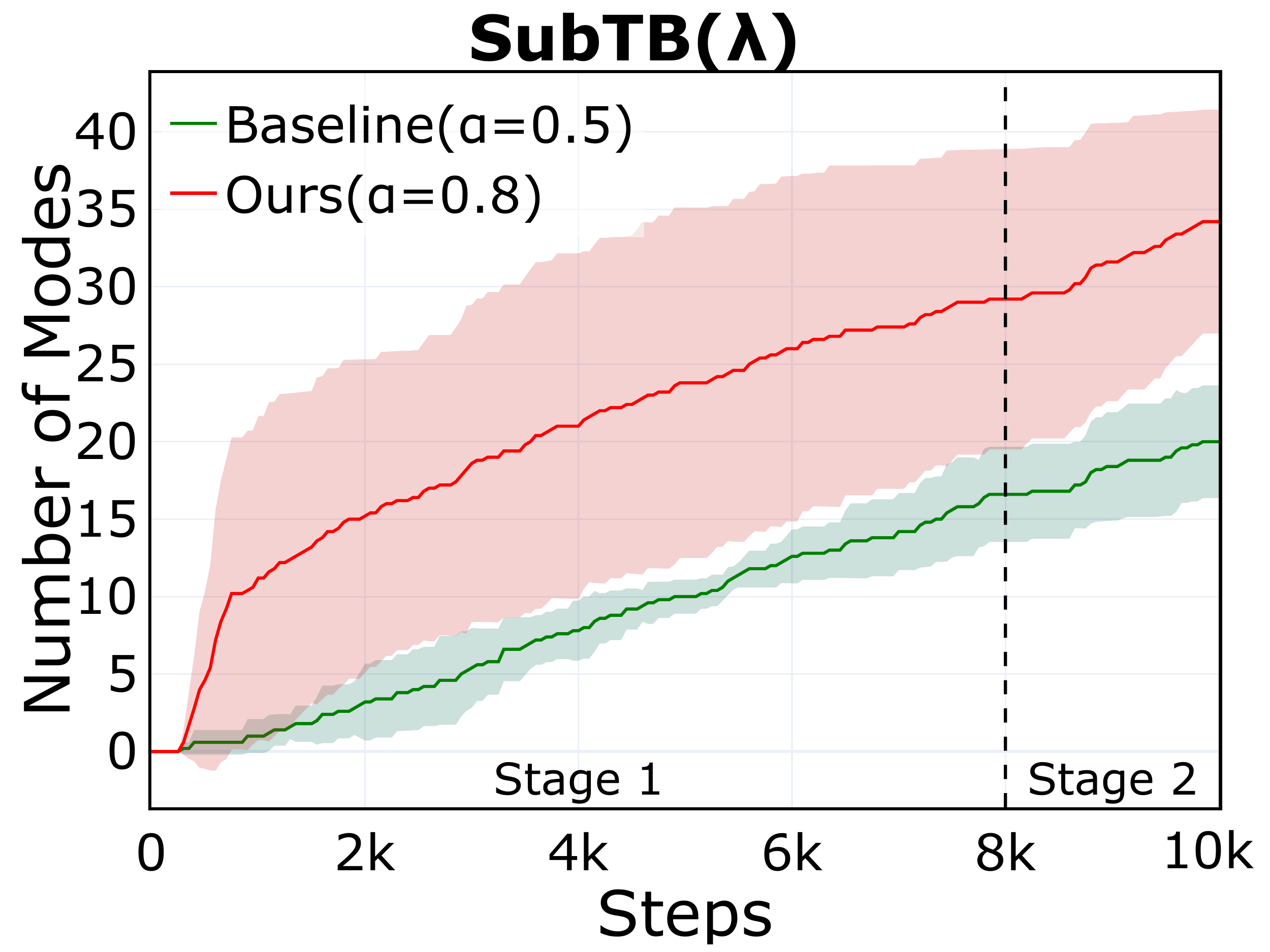} & 
    \includegraphics[width=0.2\textwidth]{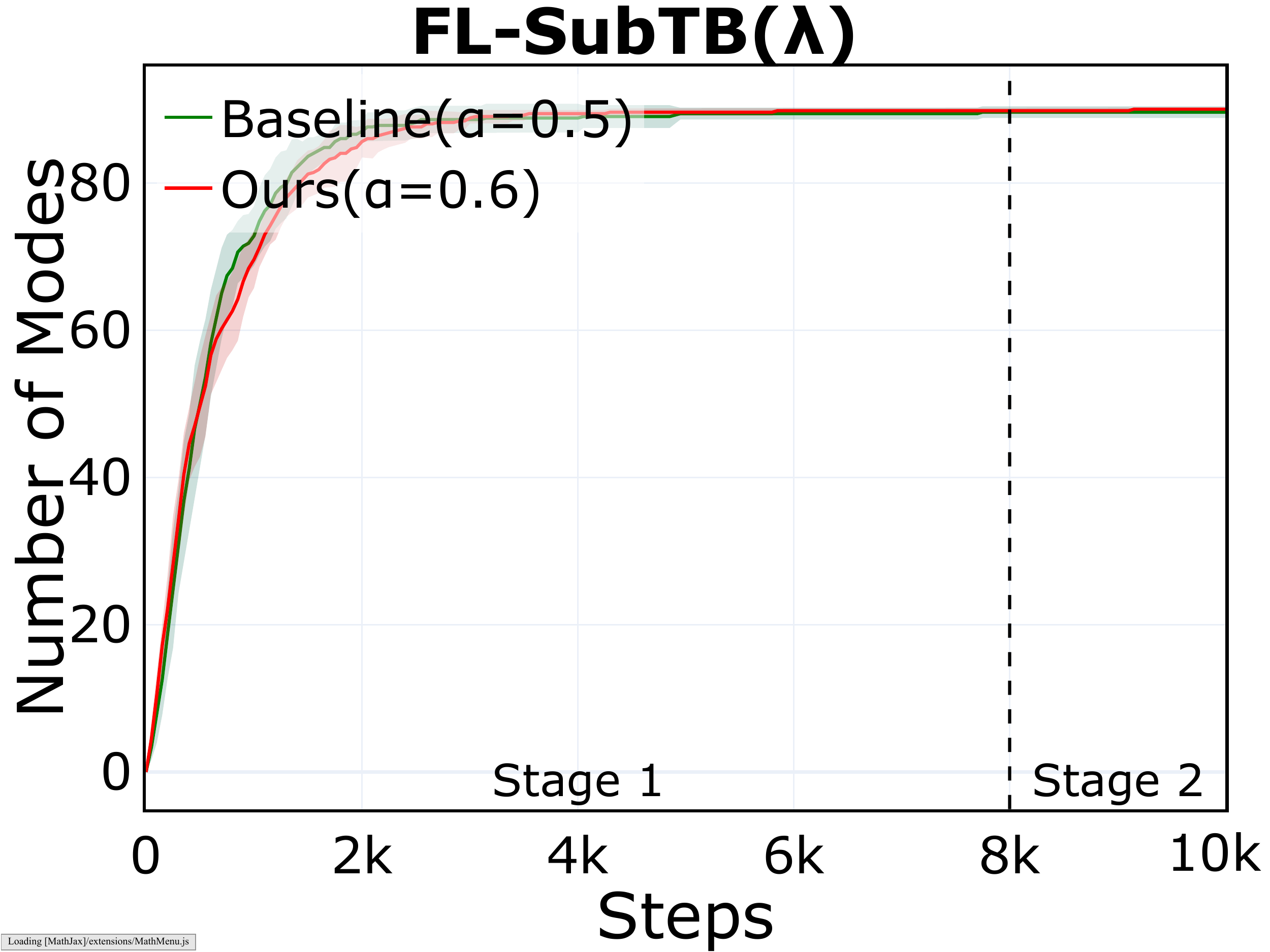} &
    \includegraphics[width=0.2\textwidth]{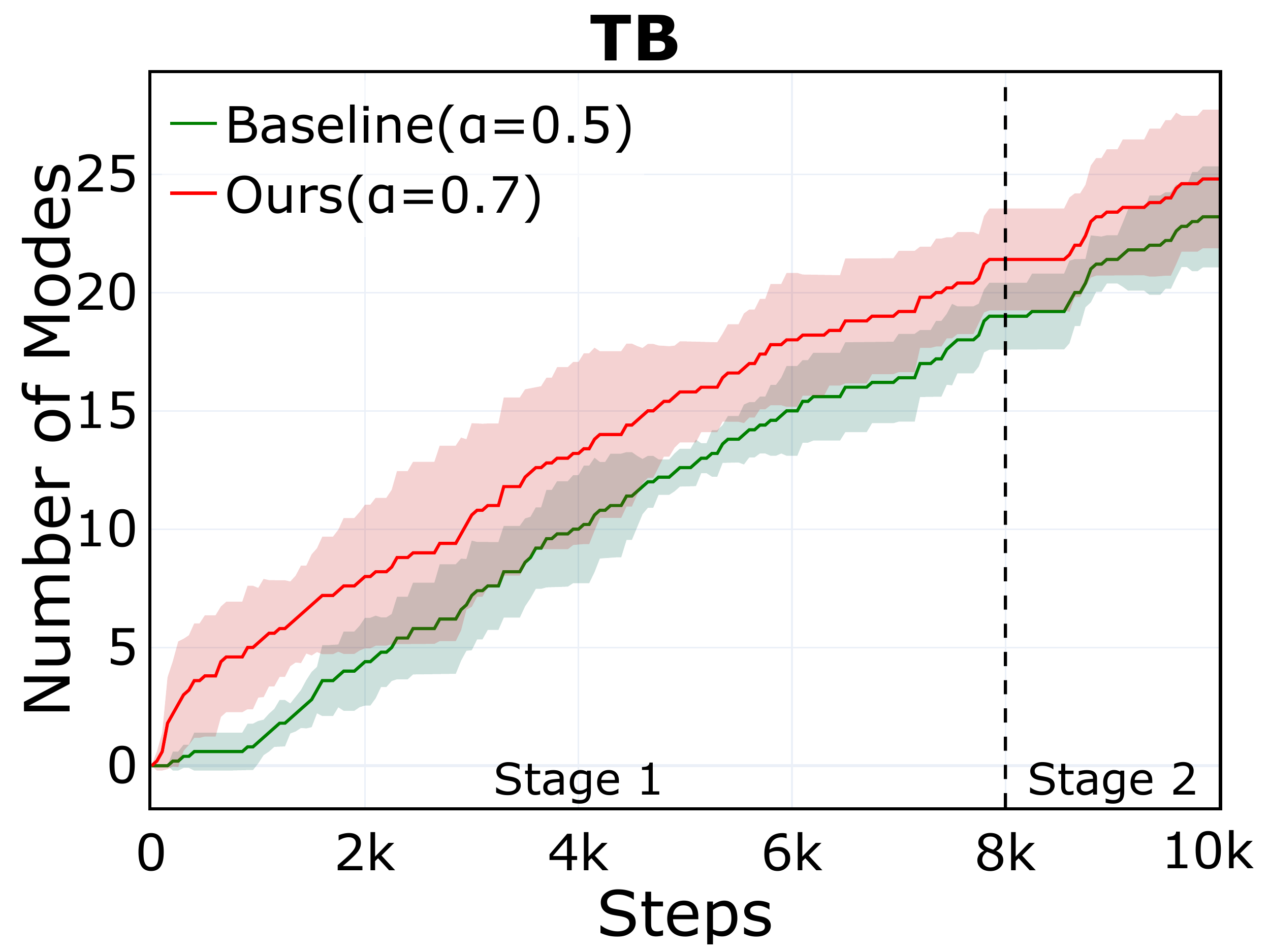} \\
    \small{(a)} DB (small) &
    \small{(b)} FL-DB (small) &
    \small{(c)} SubTB($\lambda$) (small) &
    \small{(d)} FL-SubTB($\lambda$) (small) &
    \small{(e)} TB (small) \\
    \includegraphics[width=0.2\textwidth]{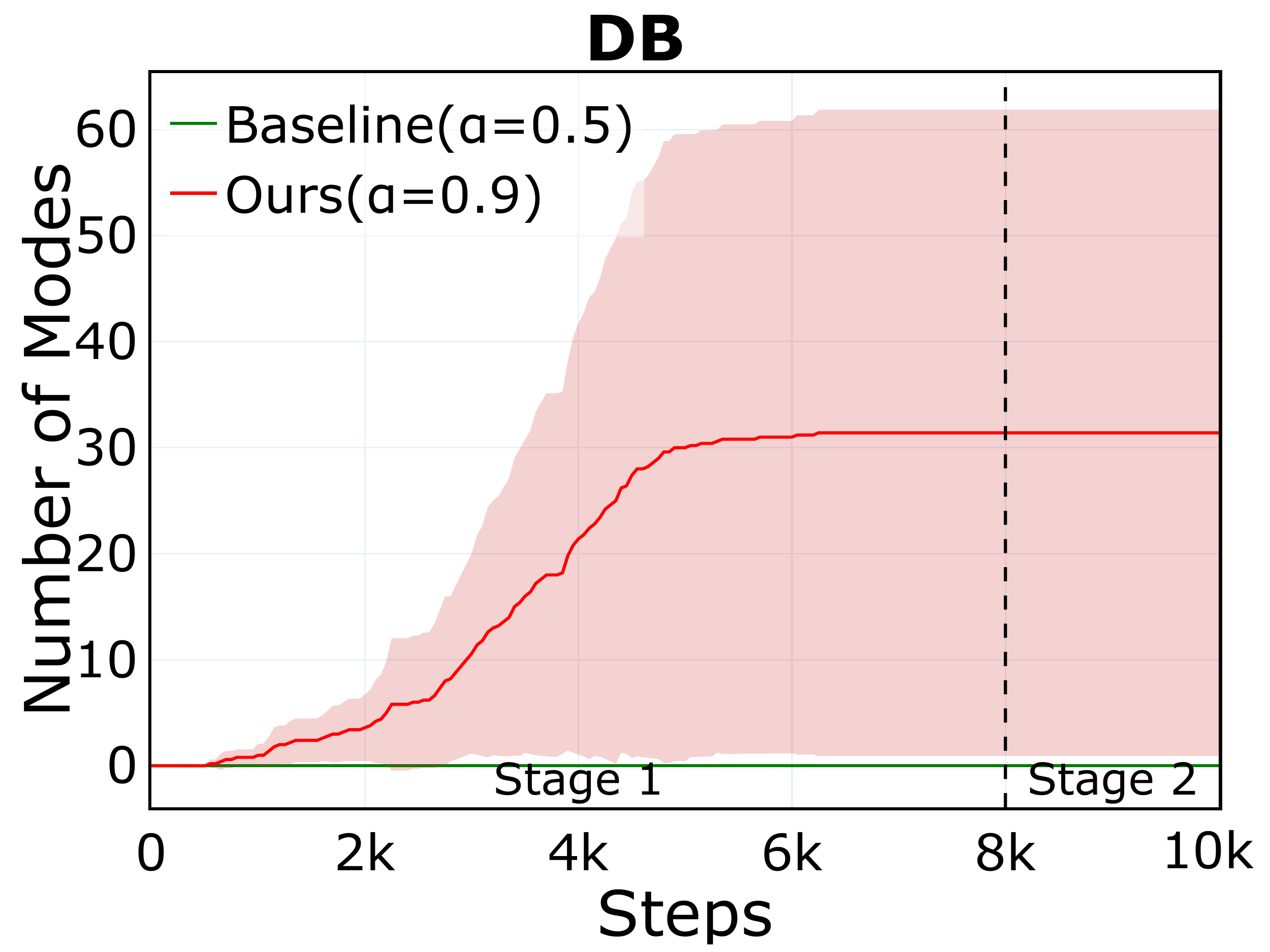} &
    \includegraphics[width=0.2\textwidth]{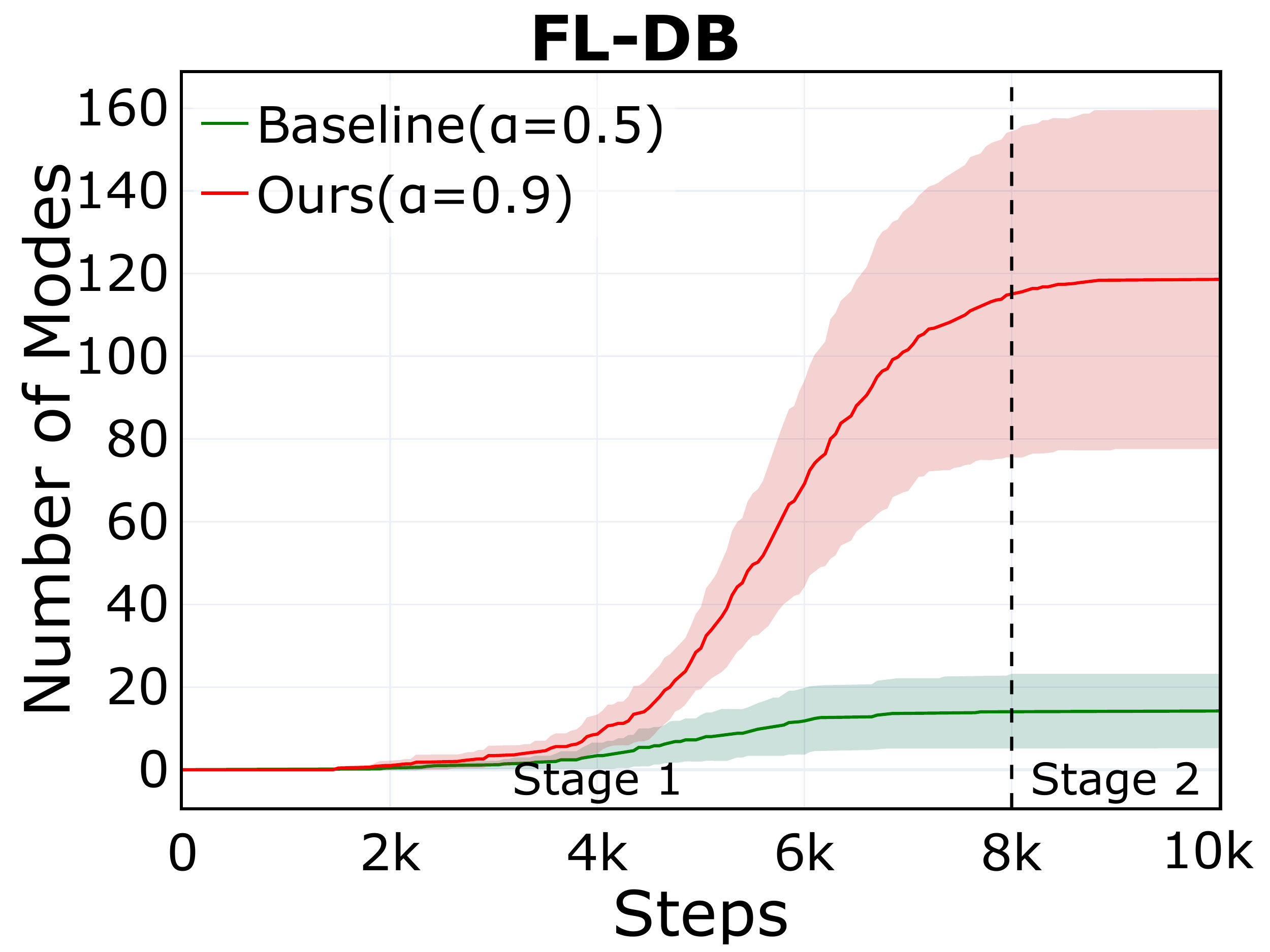} &
     \includegraphics[width=0.2\textwidth]{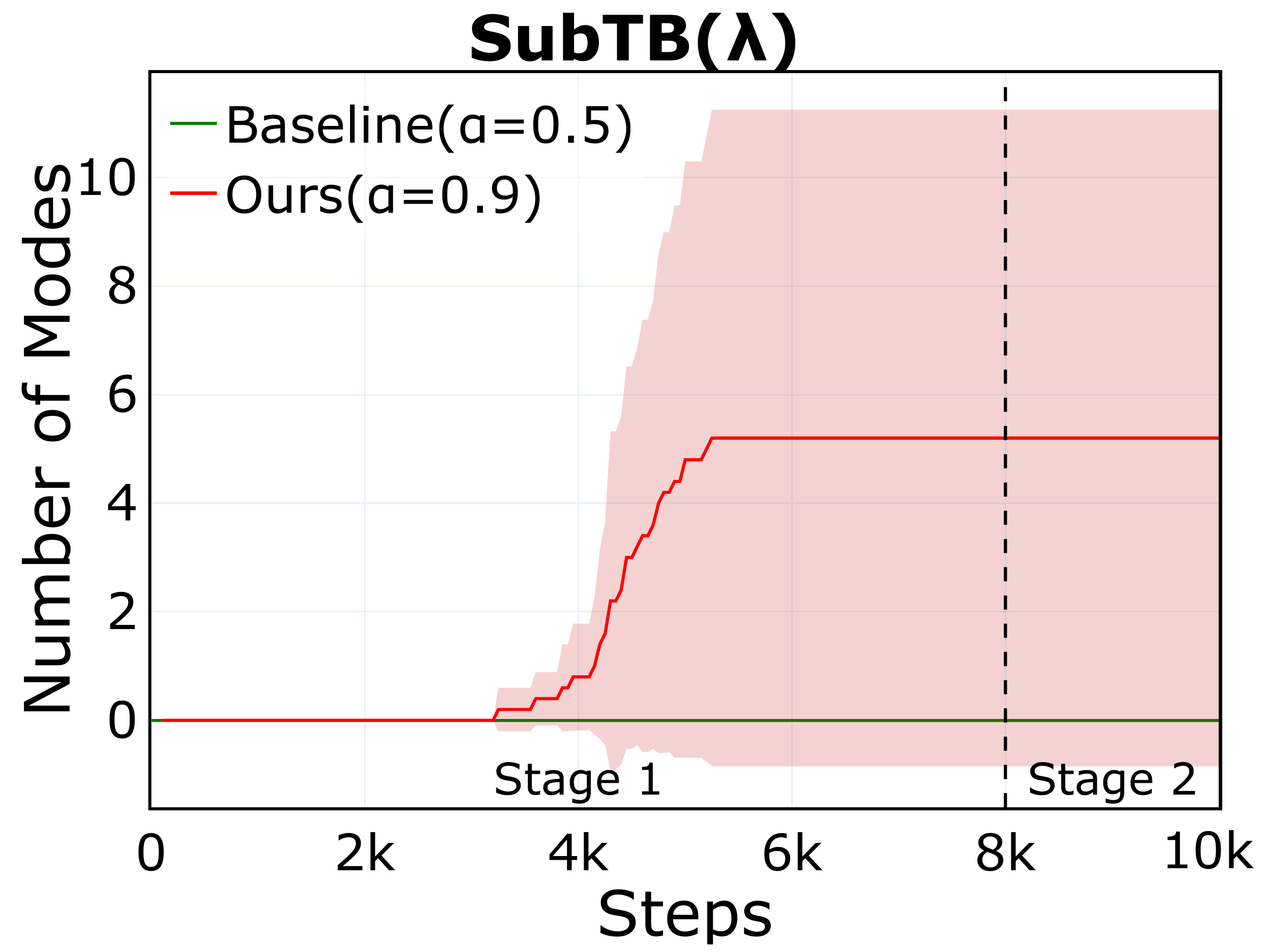}&
    \includegraphics[width=0.2\textwidth]{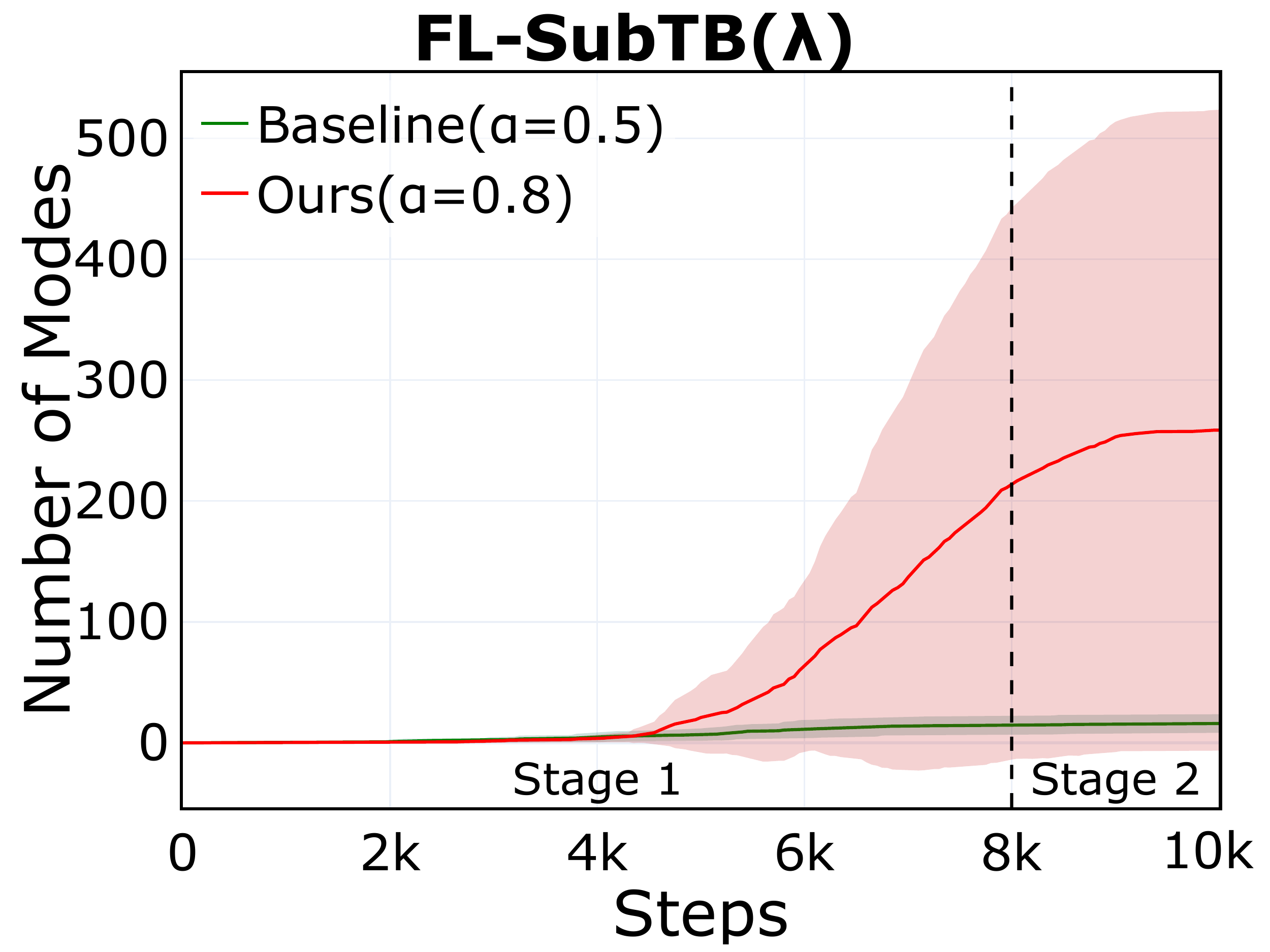} &
    \includegraphics[width=0.2\textwidth]{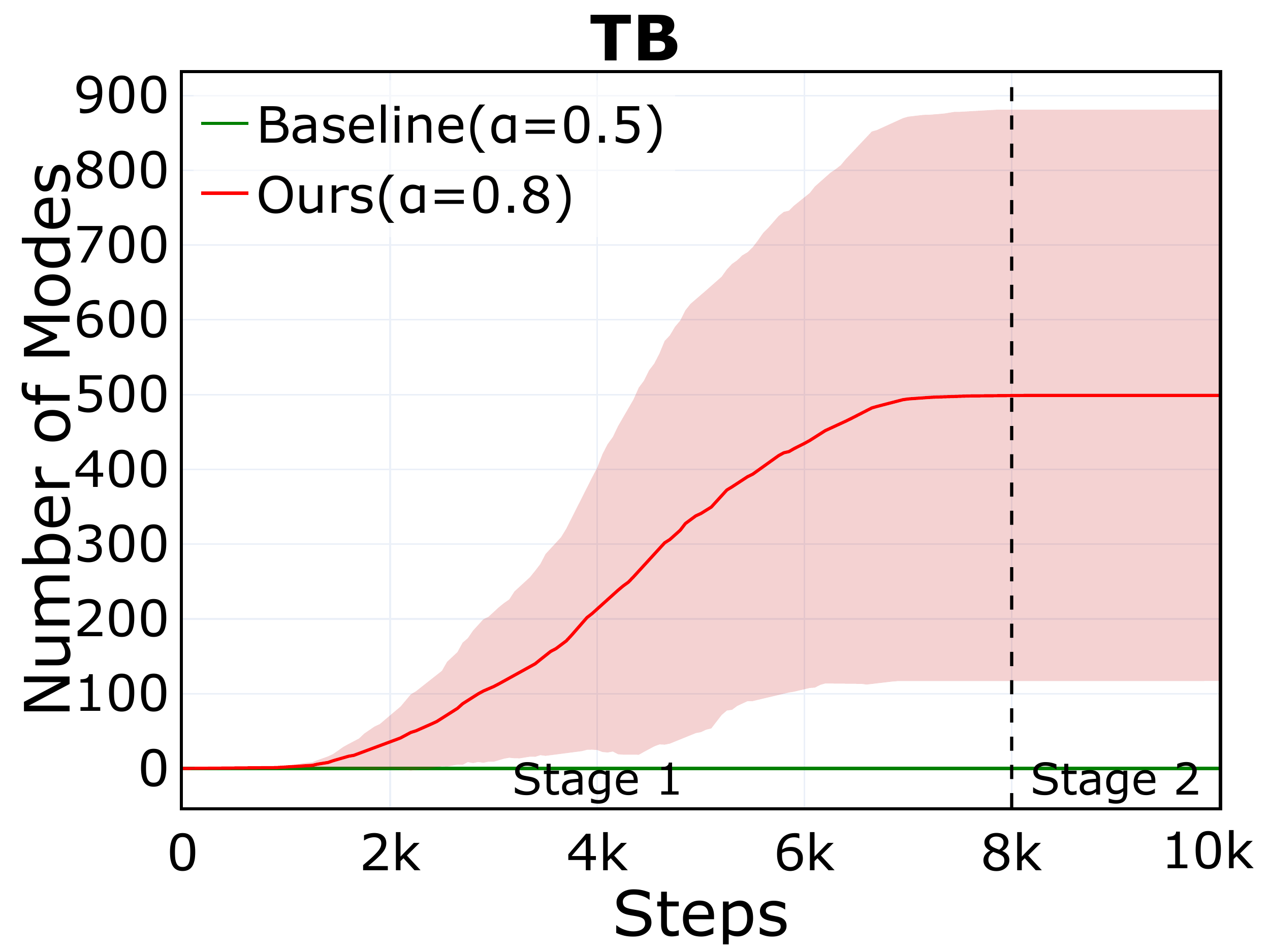} \\
    \small{(f)} DB (medium) &
    \small{(g)} FL-DB (medium) &
    \small{(h)} SubTB($\lambda$) (medium) &
    \small{(i)} FL-SubTB($\lambda$) (medium) &
    \small{(j)} TB (medium) \\
    \includegraphics[width=0.2\textwidth]{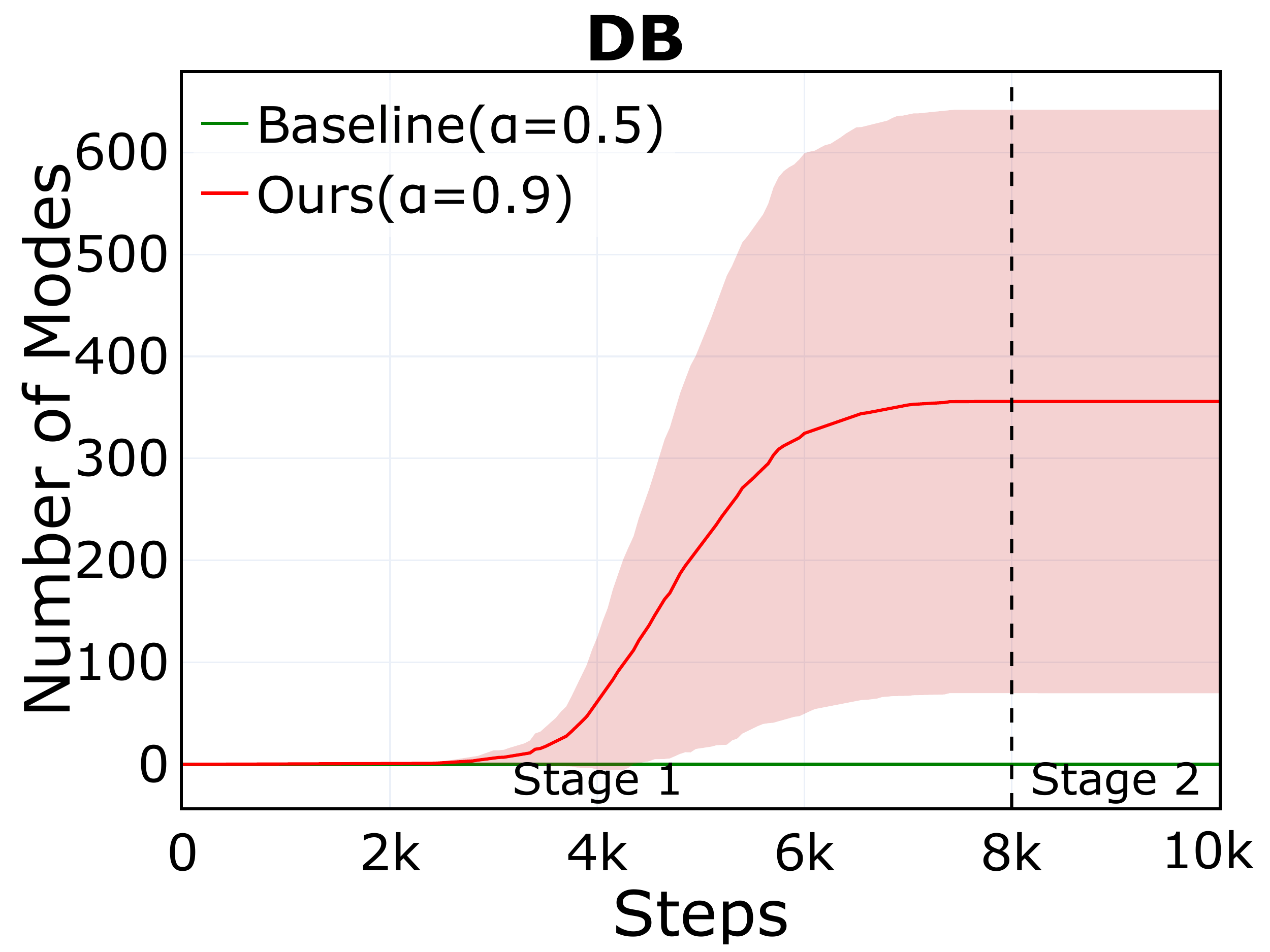} &
    \includegraphics[width=0.2\textwidth]{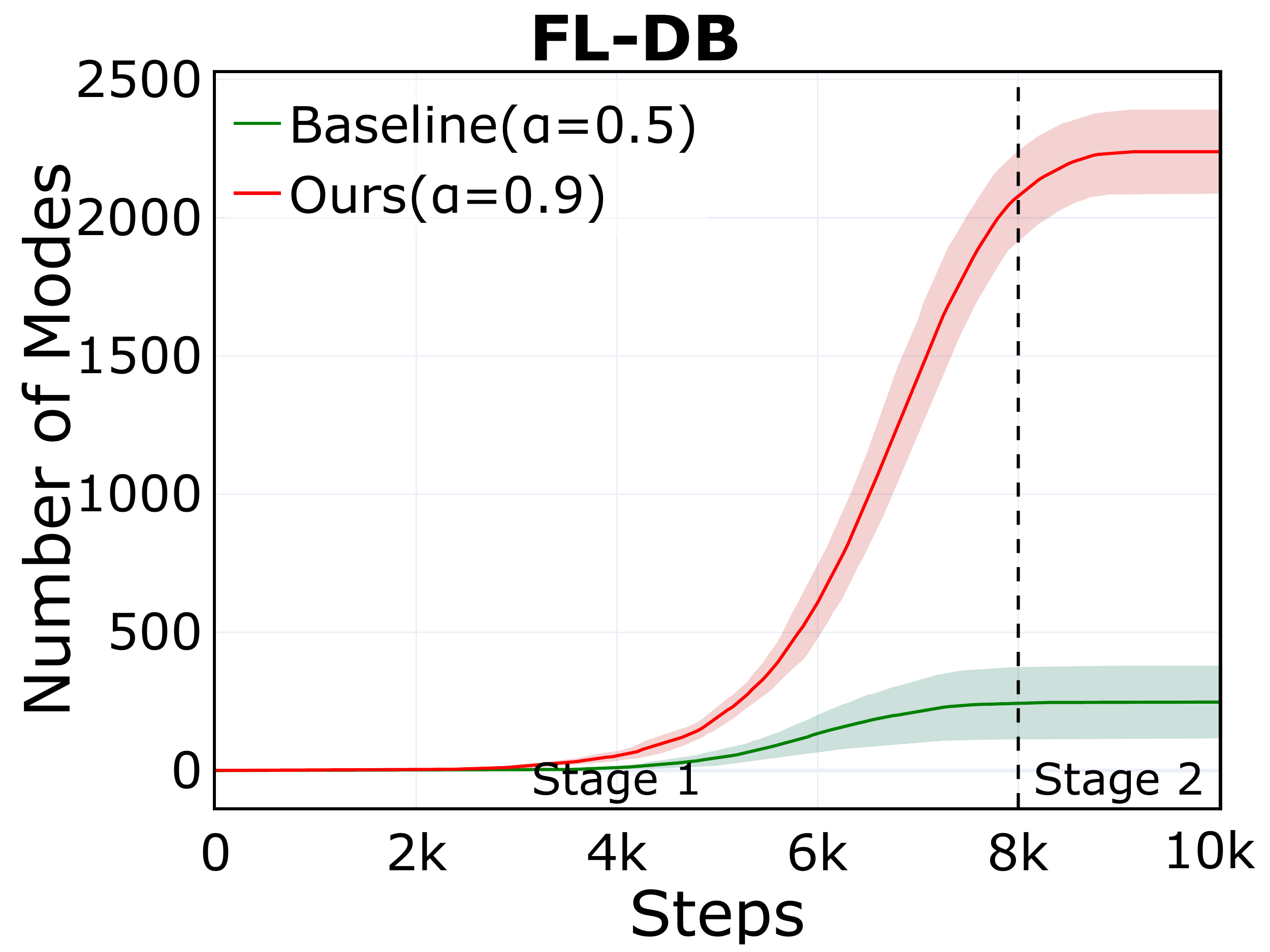} &
     \includegraphics[width=0.2\textwidth]{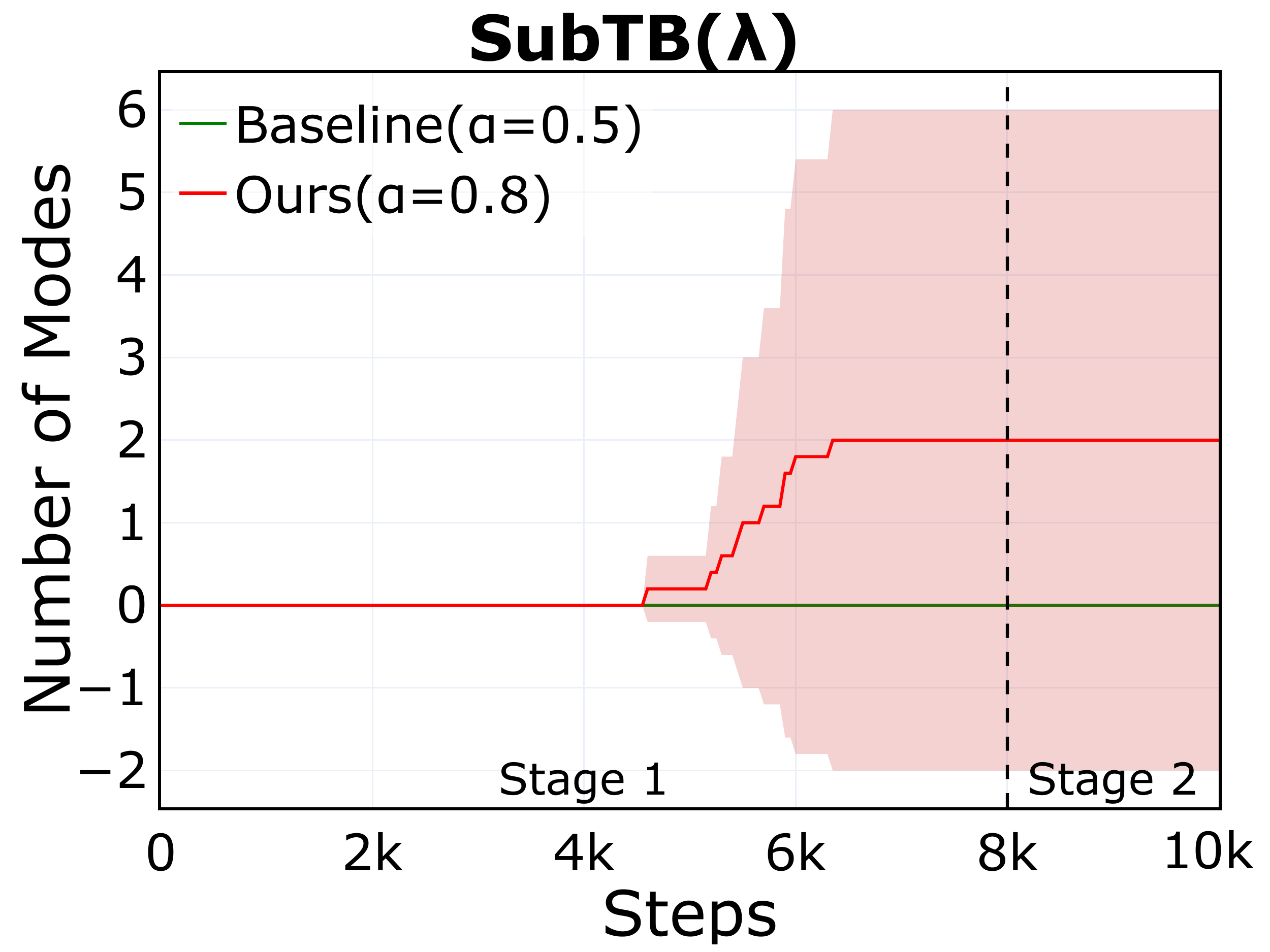} &
     \includegraphics[width=0.2\textwidth]{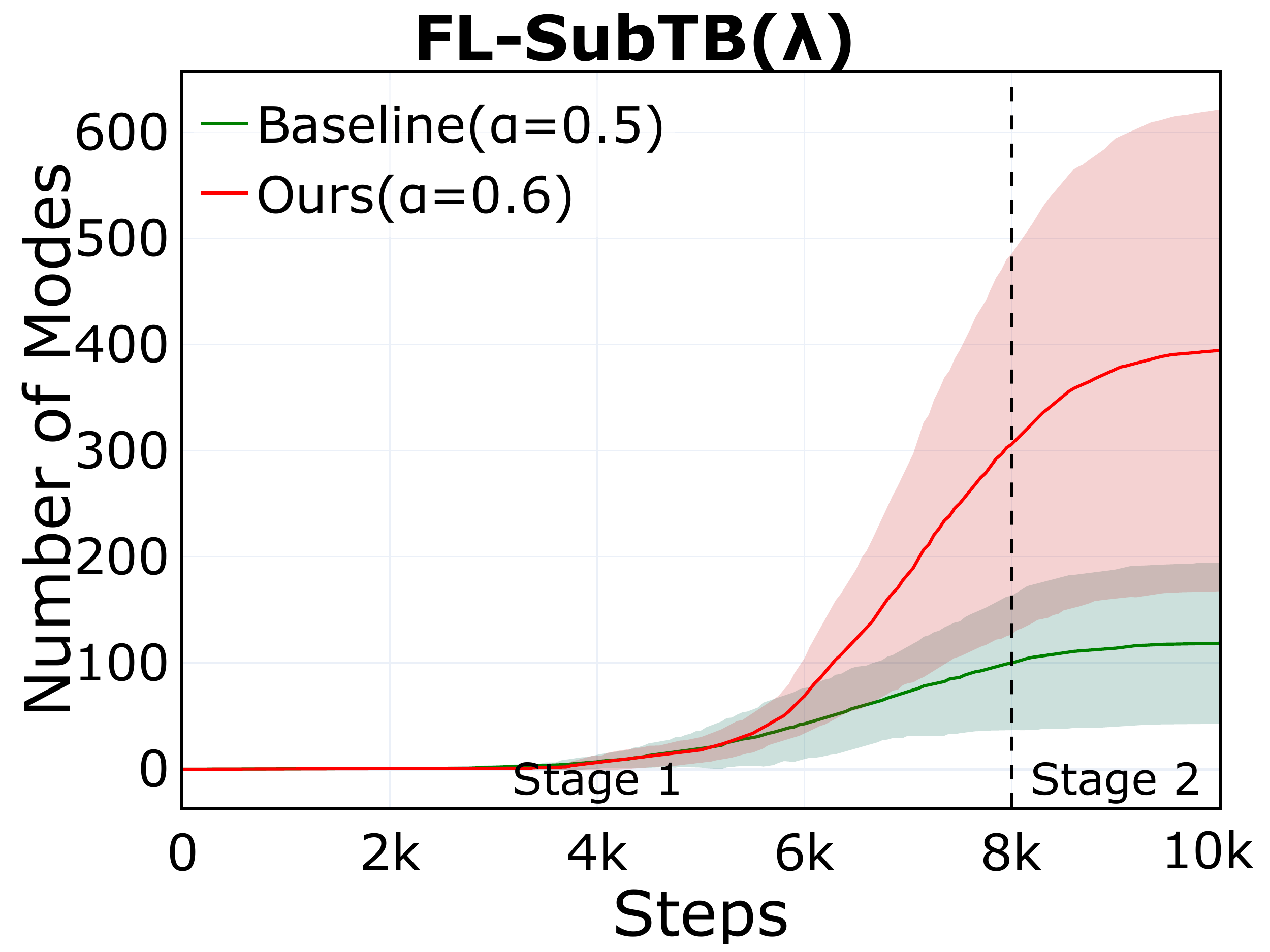}&
    \includegraphics[width=0.2\textwidth]{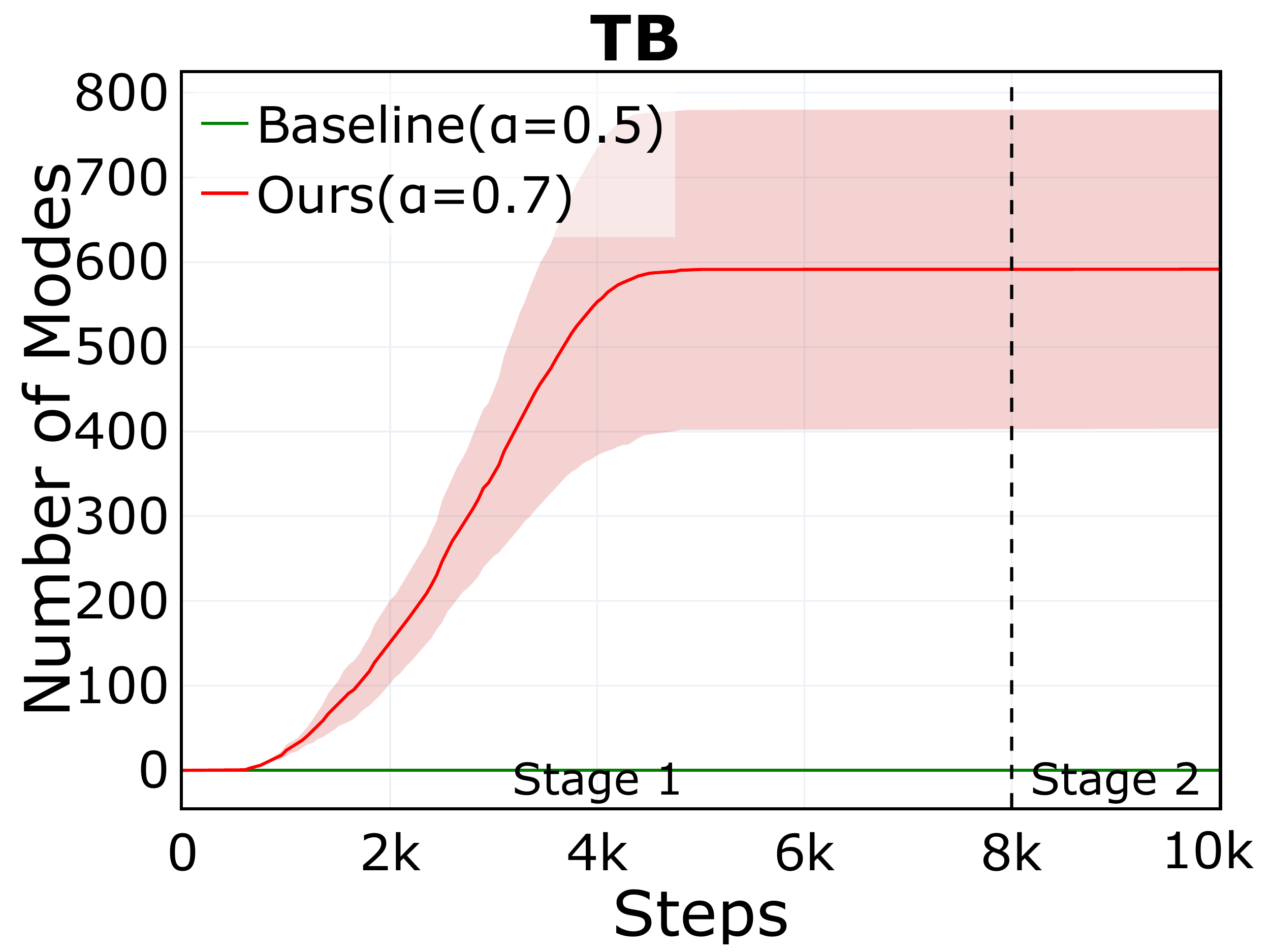} \\
    \small{(k)} DB (large) &
    \small{(l)} FL-DB (large) &
    \small{(m)} SubTB($\lambda$) (large) &
    \small{(n)} FL-SubTB($\lambda$) (large) &
    \small{(o)} TB (large) \\
  \end{tabular}
  \caption{\textbf{Number of Modes} vs Training Steps in \textbf{Set Generation} across different objectives and set sizes.}
  \label{fig:set_metric_modes}
\end{figure}

\begin{figure}[htbp]
  \centering
  \setlength{\tabcolsep}{0pt}
 \begin{tabular}{@{}c@{\hspace{0pt}}c@{\hspace{0pt}}c@{\hspace{0pt}}c@{\hspace{0pt}}c@{}}
    \includegraphics[width=0.2\textwidth]{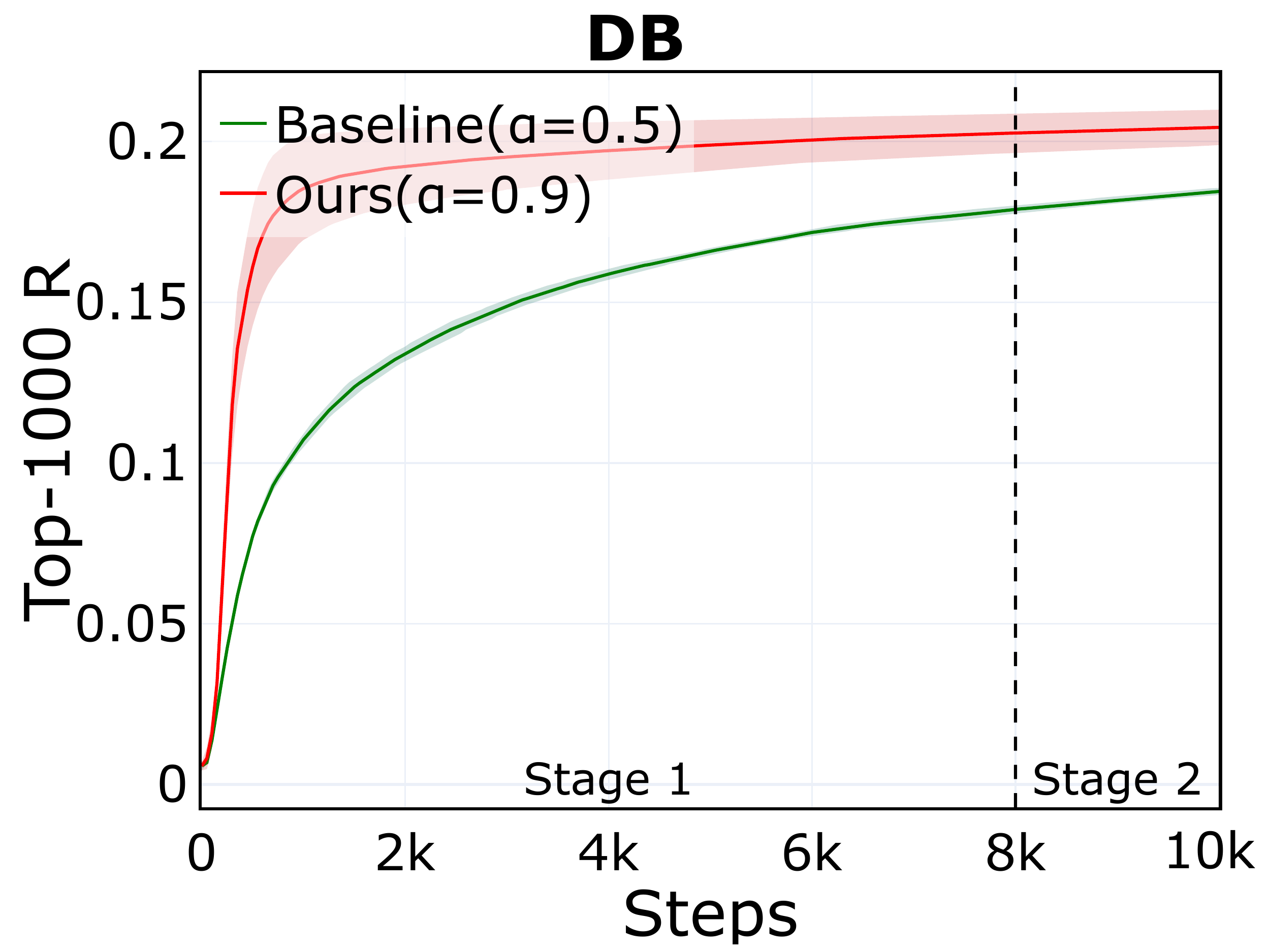} &
    \includegraphics[width=0.2\textwidth]{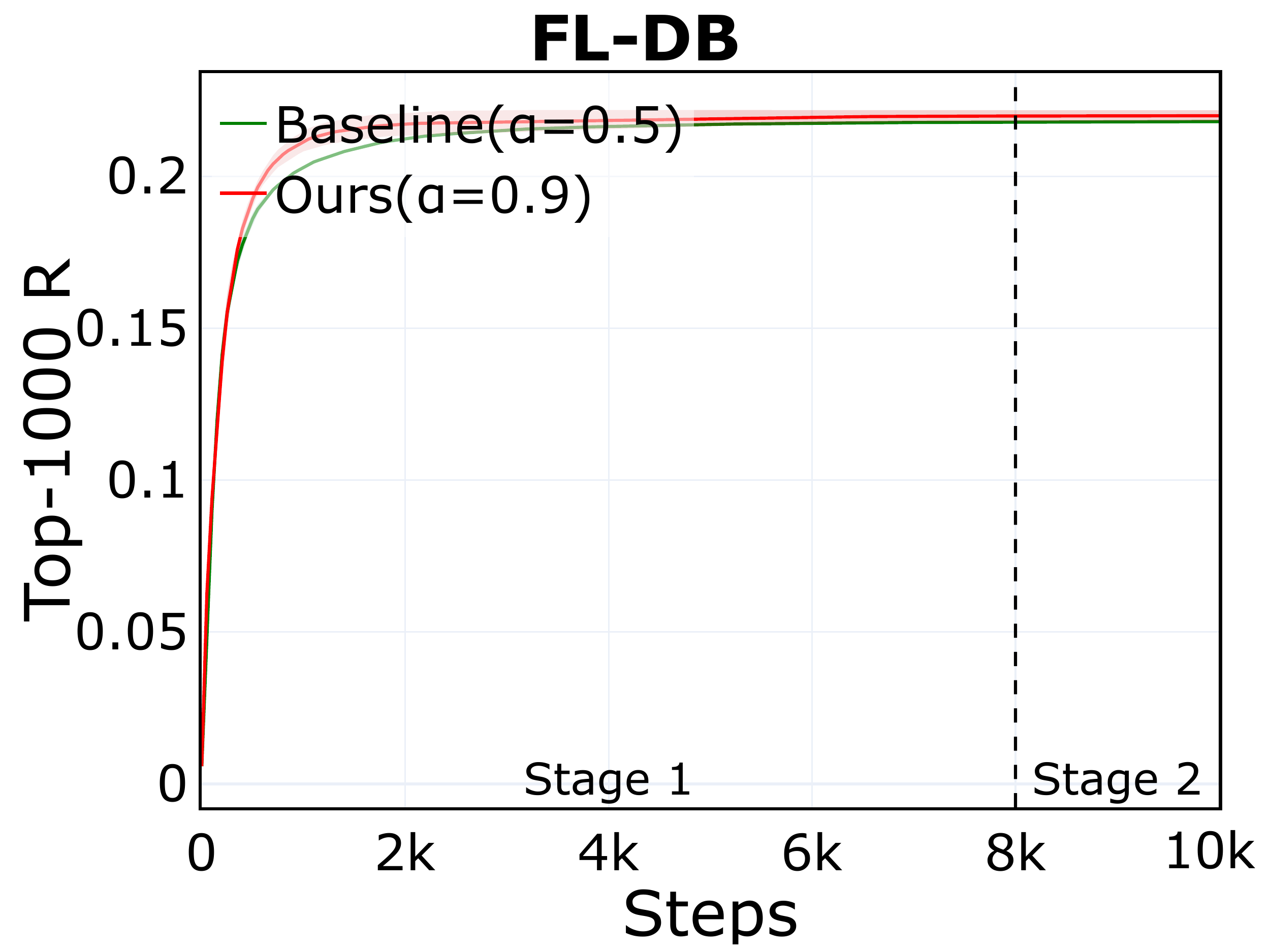} &
    \includegraphics[width=0.2\textwidth]{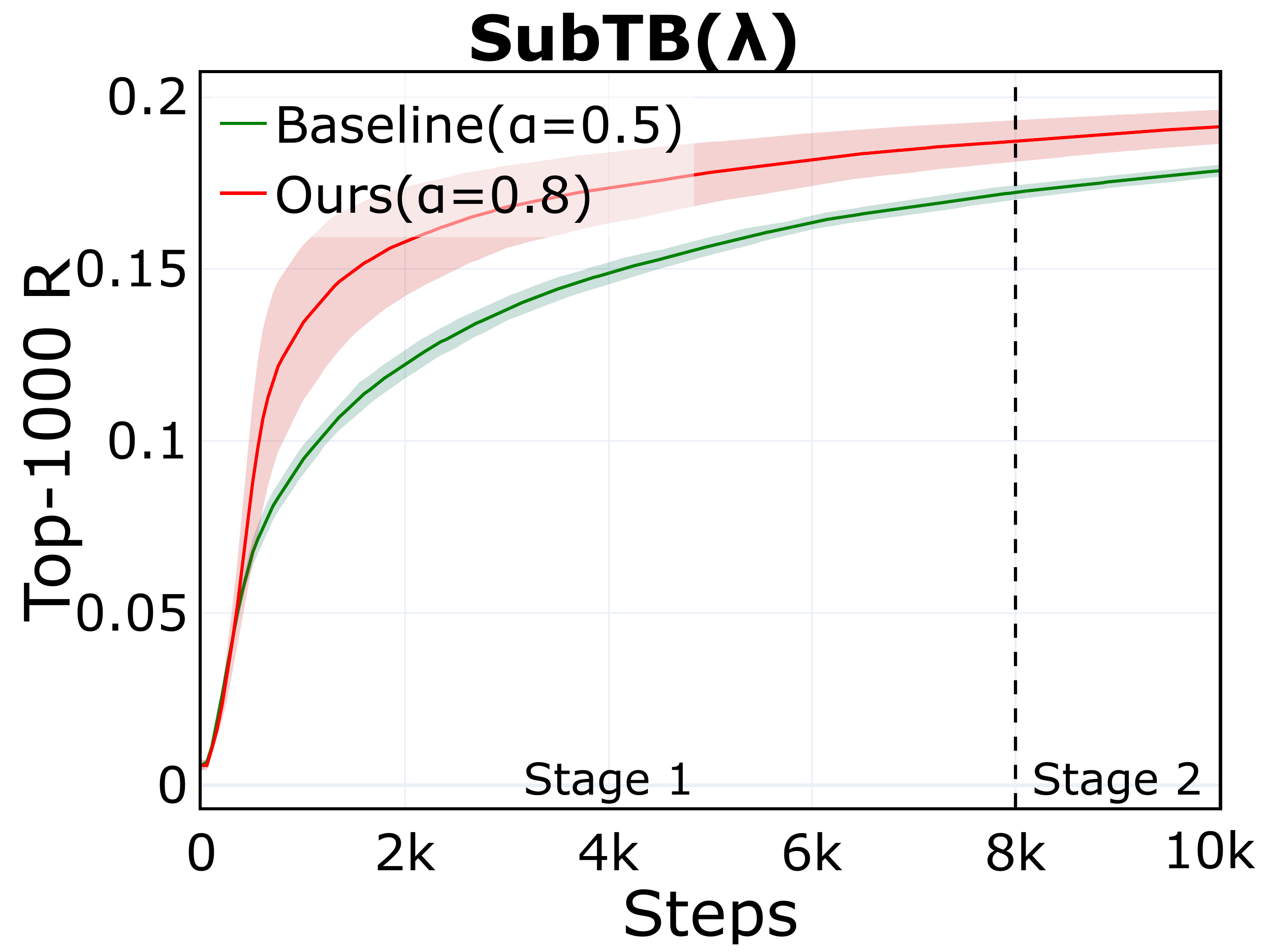} & 
    \includegraphics[width=0.2\textwidth]{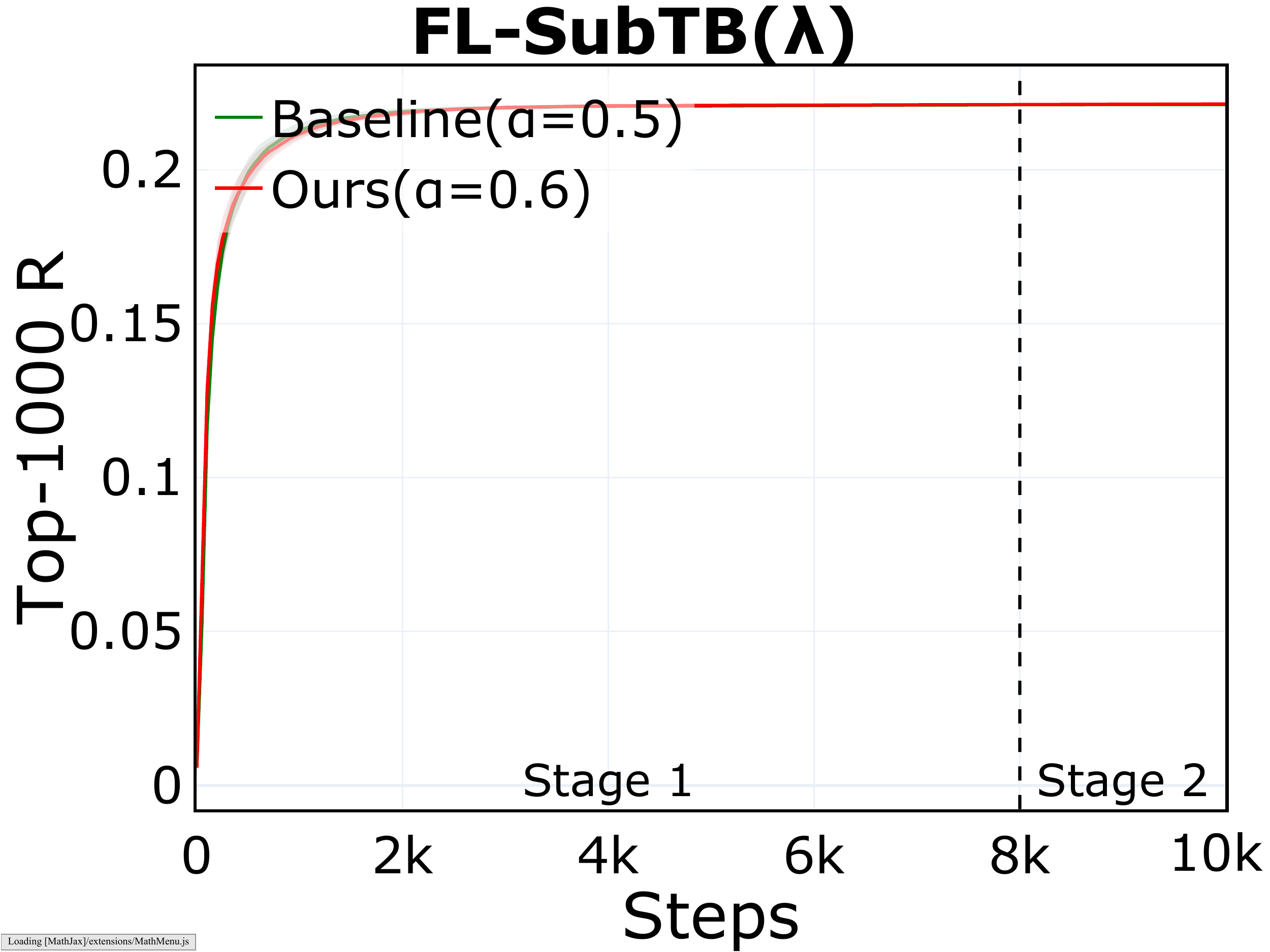} &
    \includegraphics[width=0.2\textwidth]{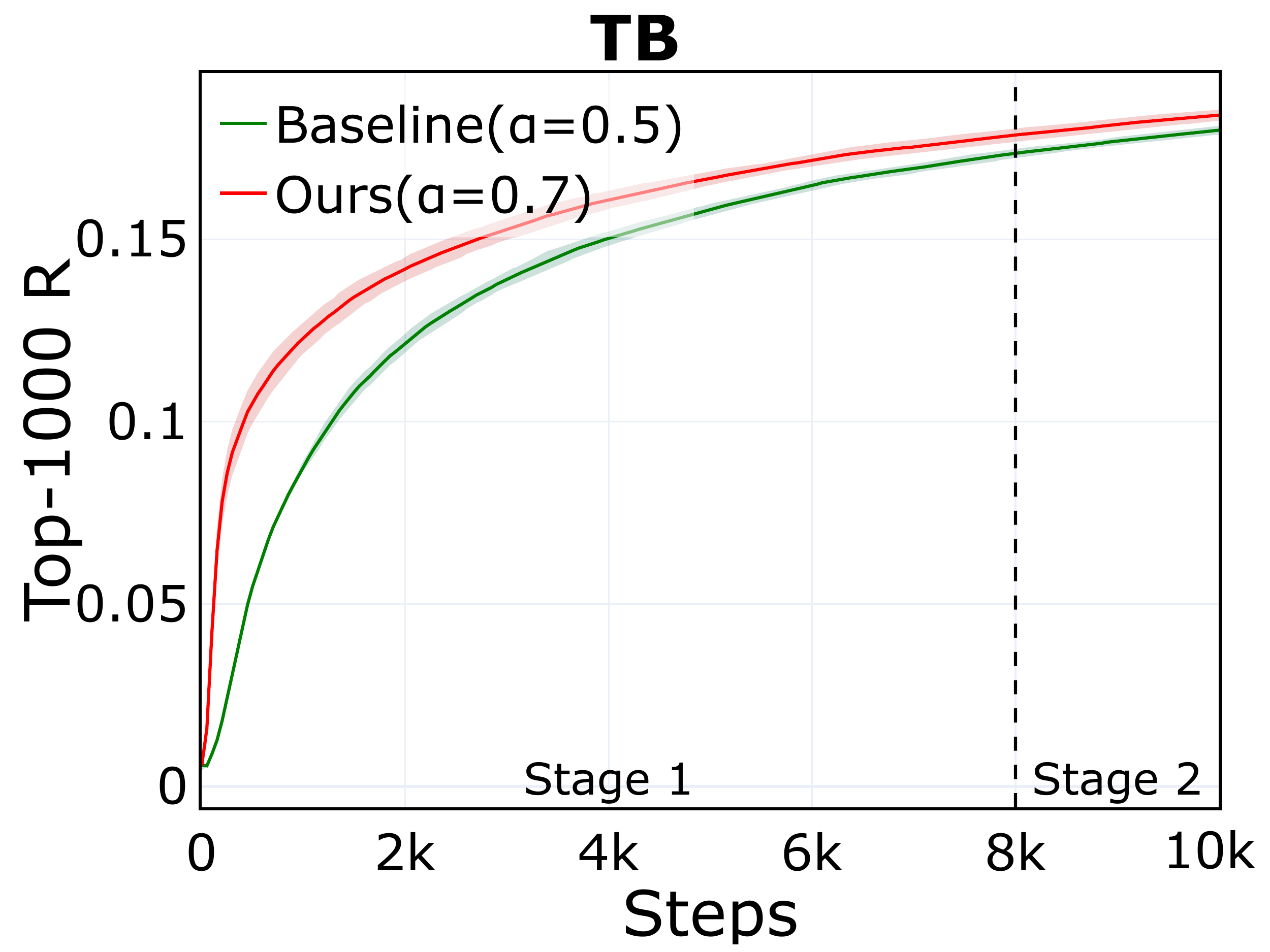} \\
    \small{(a)} DB (small) &
    \small{(b)} FL-DB (small) &
    \small{(c)} SubTB($\lambda$) (small) &
    \small{(d)} FL-SubTB($\lambda$) (small) &
    \small{(e)} TB (small) \\
    \includegraphics[width=0.2\textwidth]{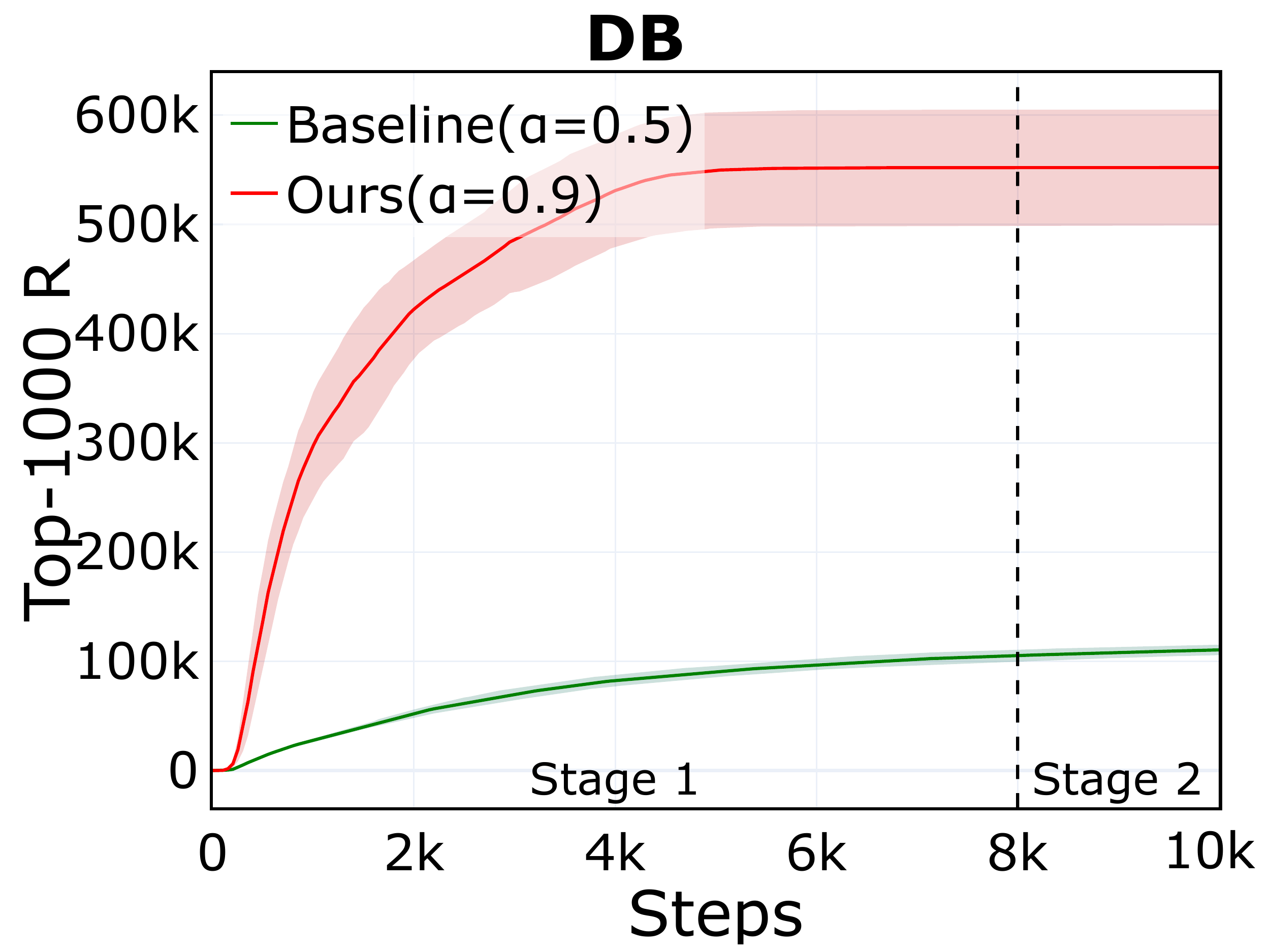} &
    \includegraphics[width=0.2\textwidth]{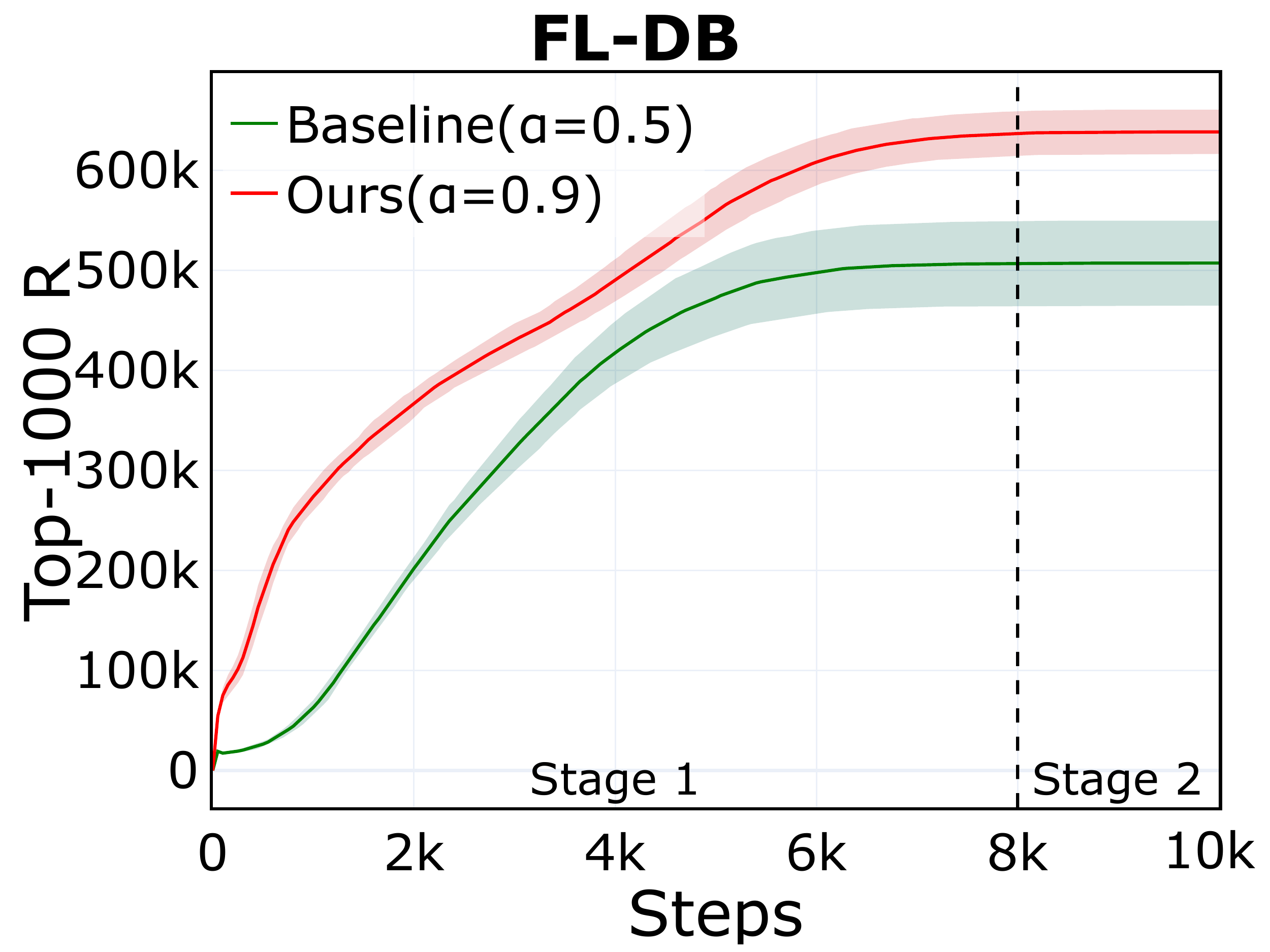} &
     \includegraphics[width=0.2\textwidth]{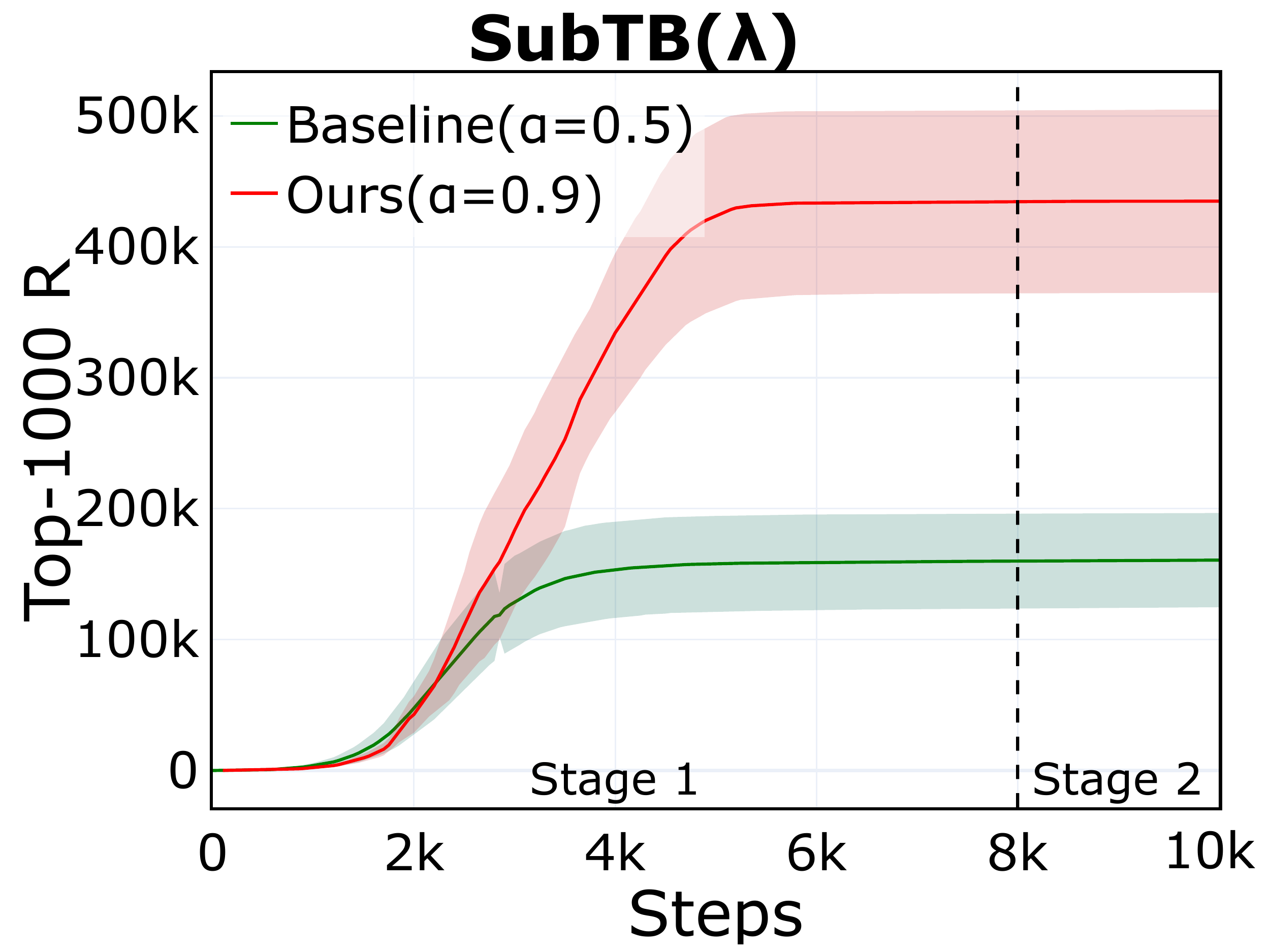}&
    \includegraphics[width=0.2\textwidth]{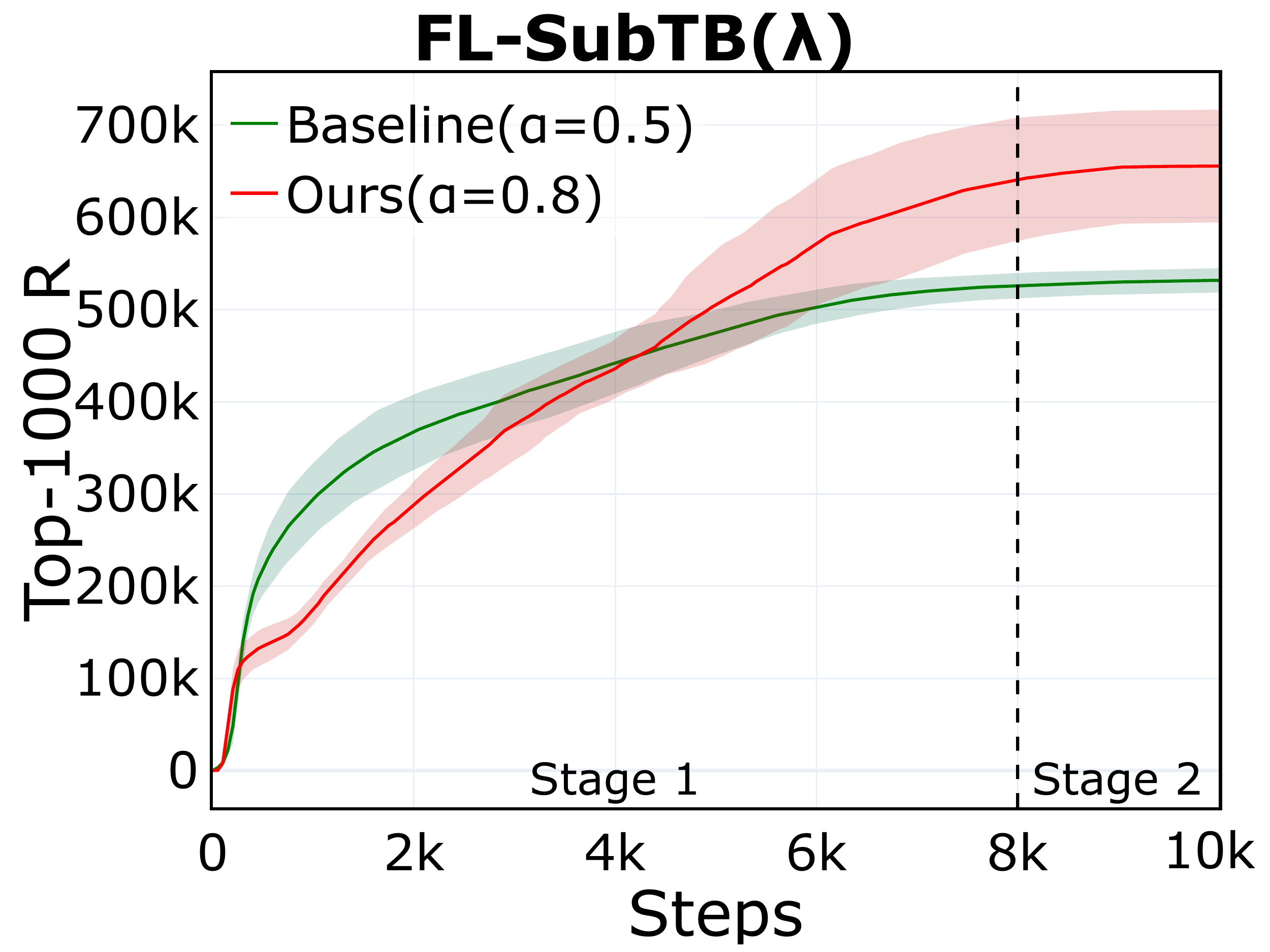} &
    \includegraphics[width=0.2\textwidth]{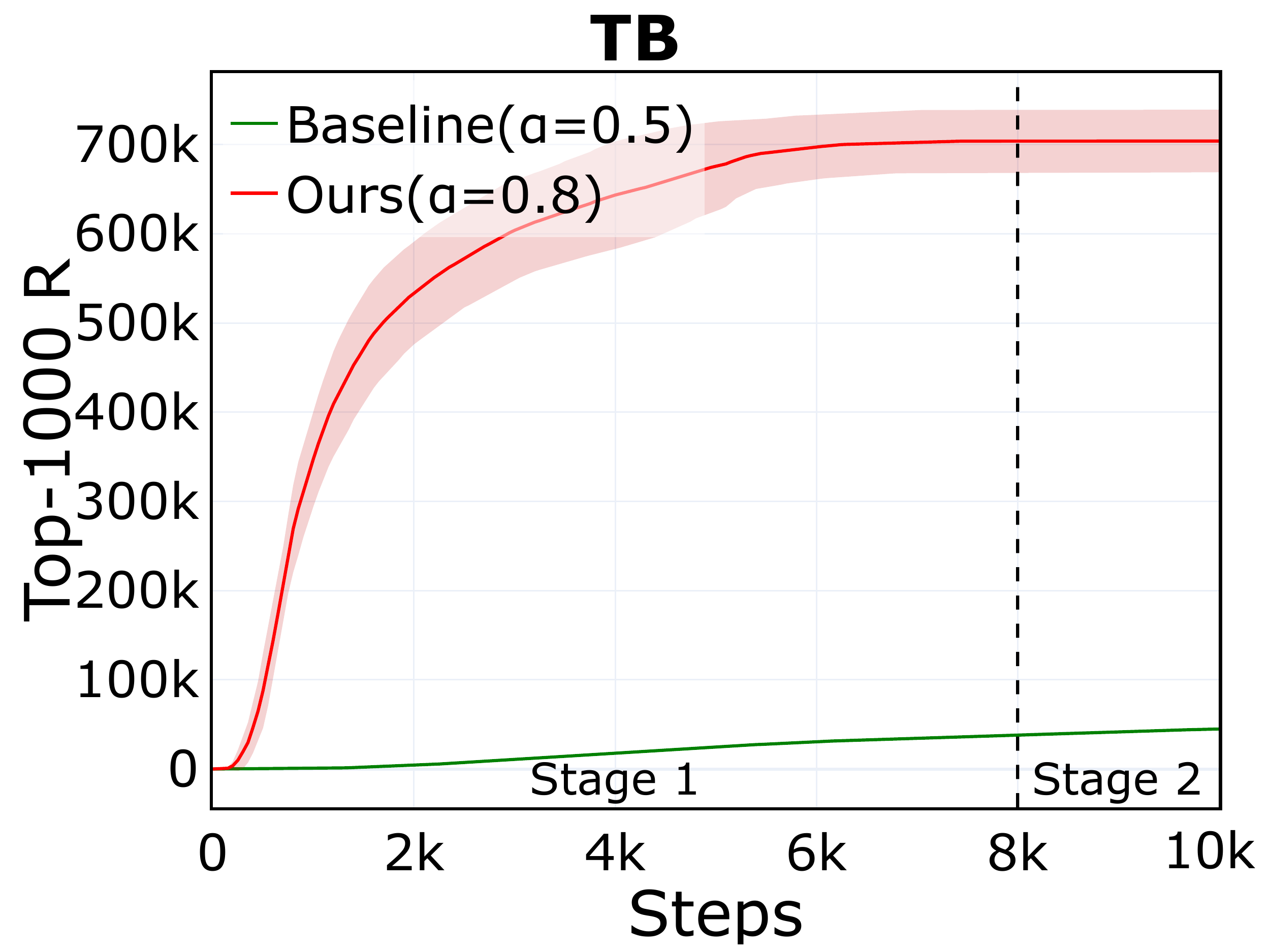} \\
    \small{(f)} DB (medium) &
    \small{(g)} FL-DB (medium) &
    \small{(h)} SubTB($\lambda$) (medium) &
    \small{(i)} FL-SubTB($\lambda$) (medium) &
    \small{(j)} TB (medium) \\
    \includegraphics[width=0.2\textwidth]{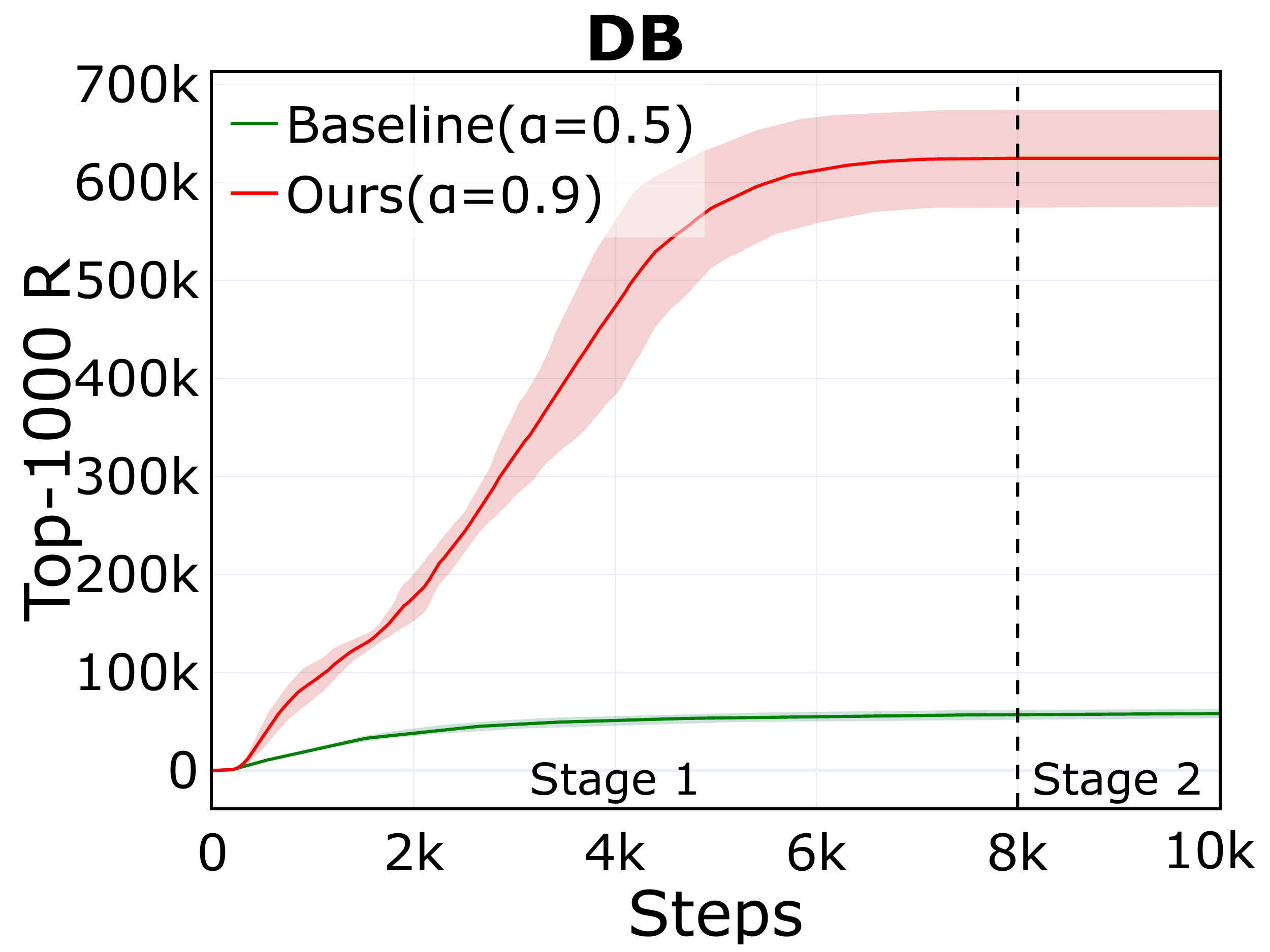} &
    \includegraphics[width=0.2\textwidth]{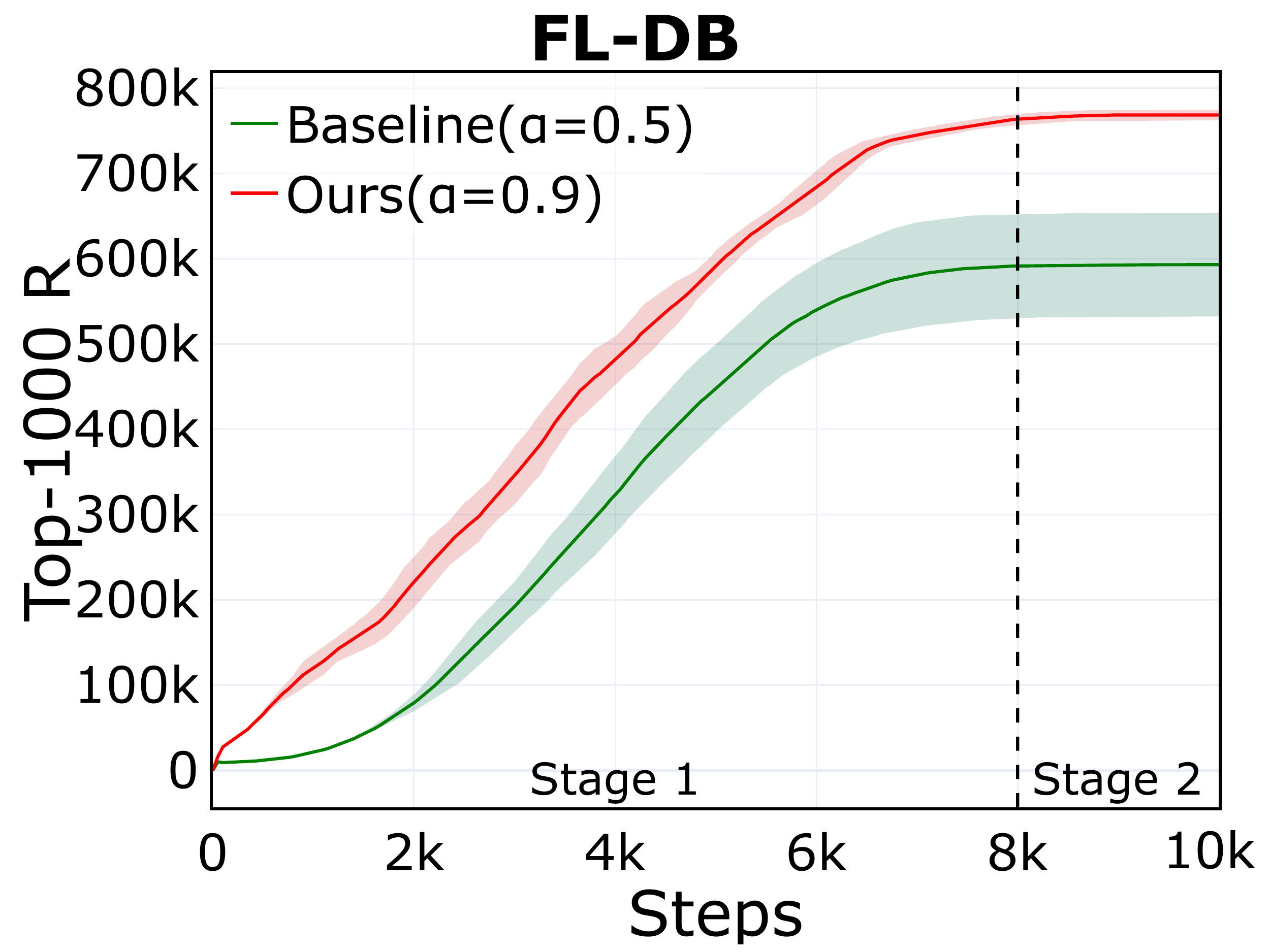} &
     \includegraphics[width=0.2\textwidth]{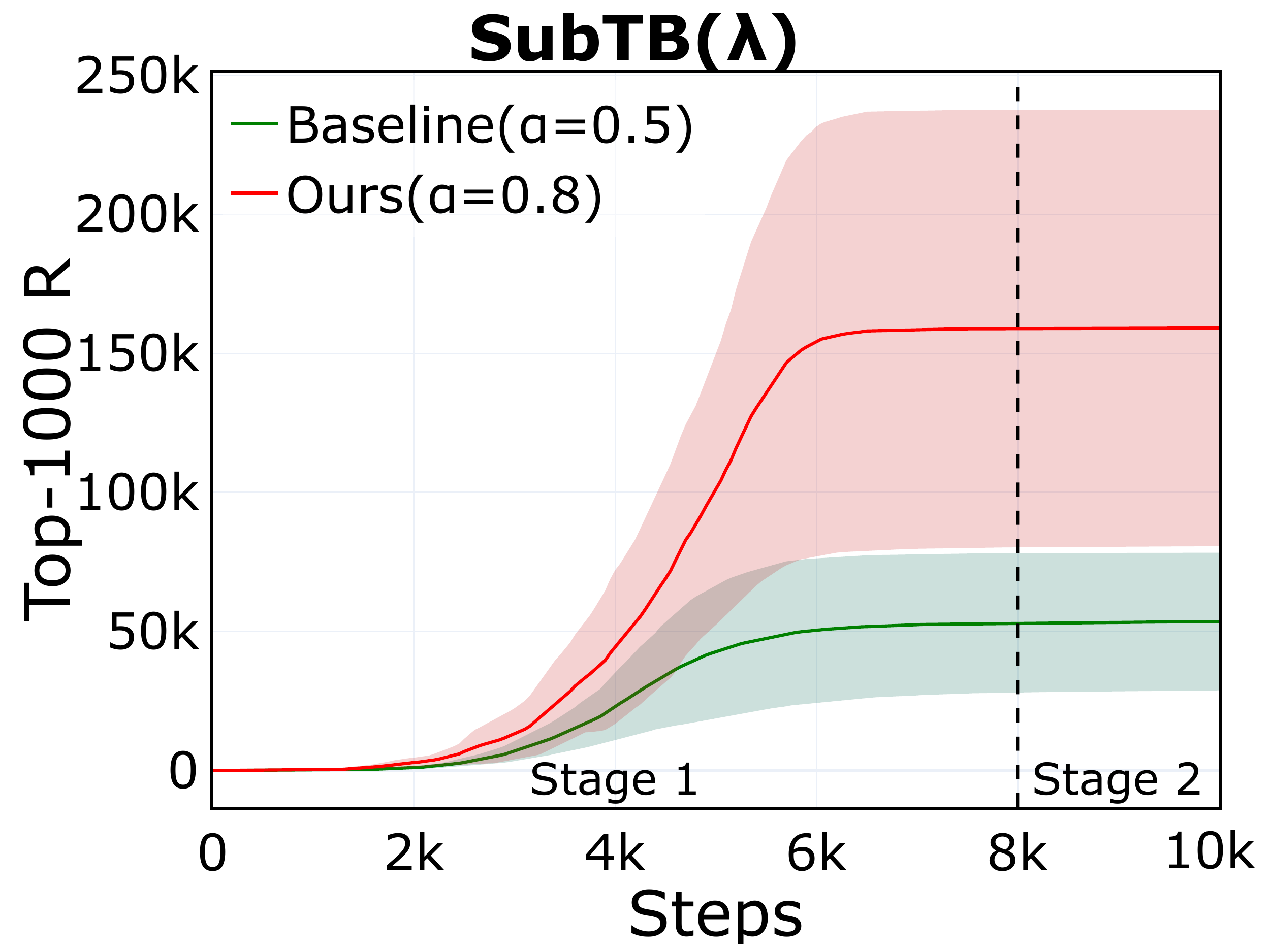} &
     \includegraphics[width=0.2\textwidth]{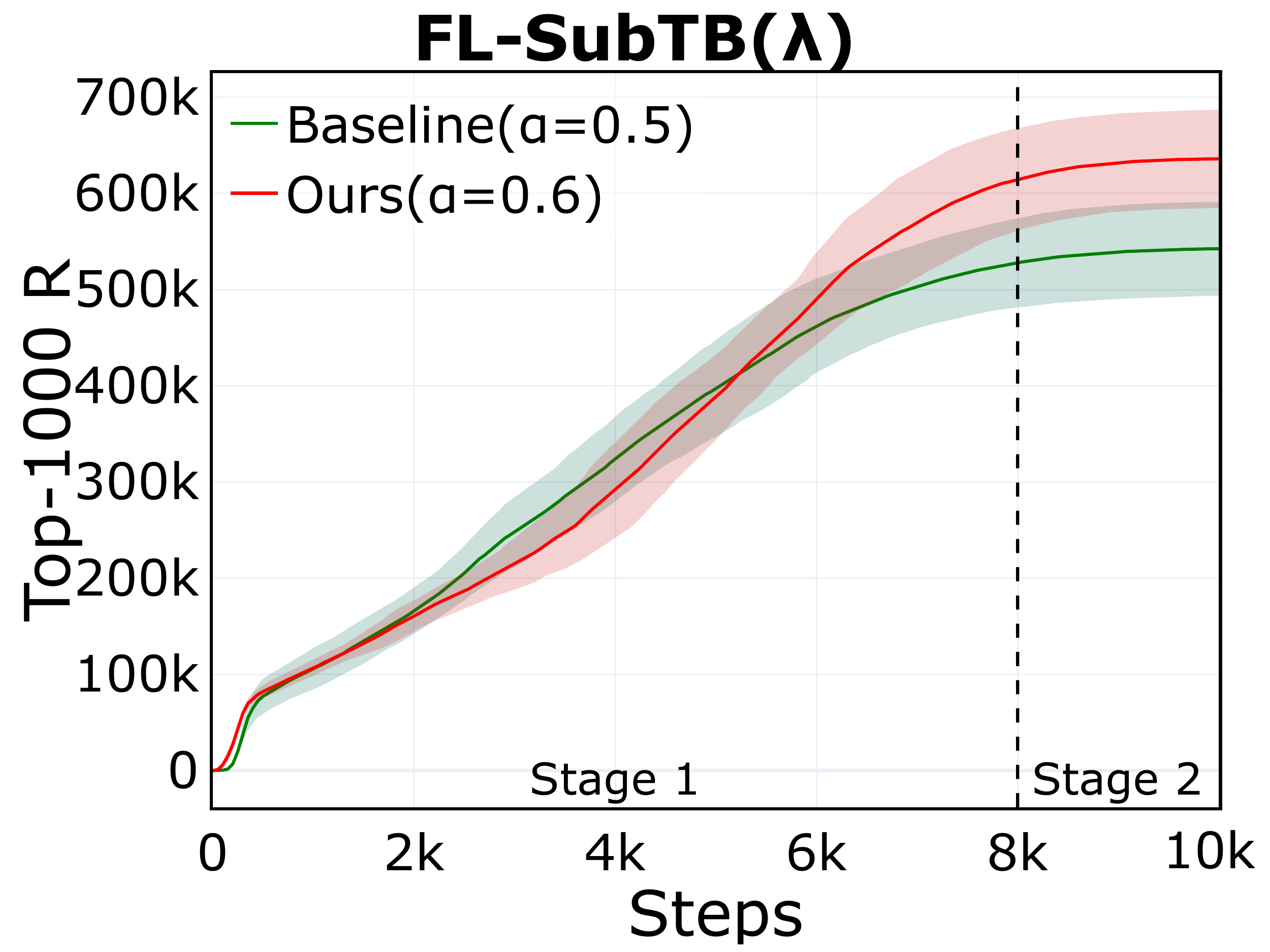}&
    \includegraphics[width=0.2\textwidth]{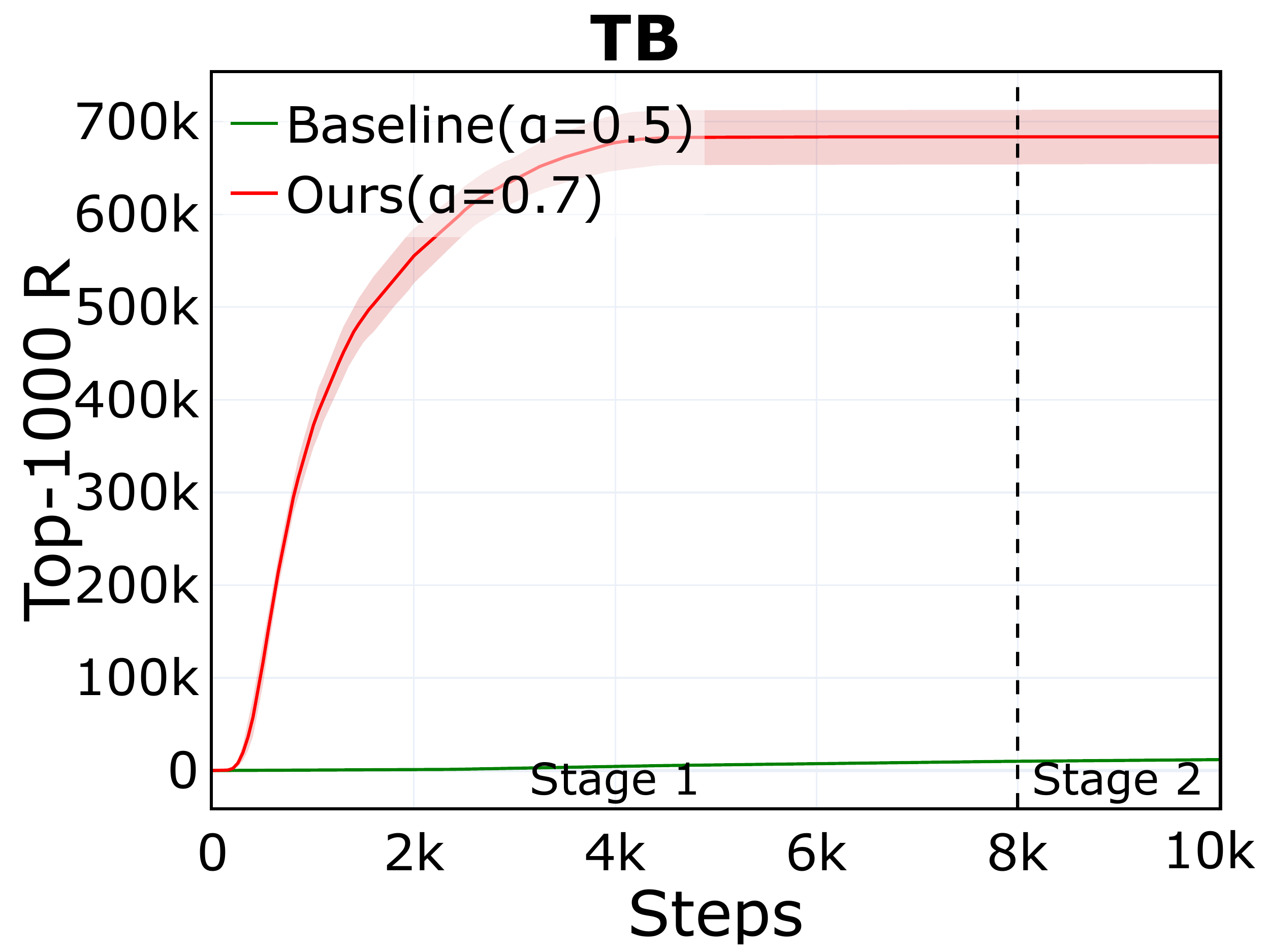} \\
    \small{(k)} DB (large) &
    \small{(l)} FL-DB (large) &
    \small{(m)} SubTB($\lambda$) (large) &
    \small{(n)} FL-SubTB($\lambda$) (large) &
    \small{(o)} TB (large) \\
  \end{tabular}
  \caption{\textbf{Top-1000 R} vs Training Steps in \textbf{Set Generation} across different objectives and set sizes.}
  \label{fig:set_metric_mean_top_1000_R}
\end{figure}

\begin{figure}[htbp]
  \centering
  \setlength{\tabcolsep}{0pt}
  \begin{tabular}{@{}c@{\hspace{0pt}}c@{\hspace{0pt}}c@{\hspace{0pt}}c@{\hspace{0pt}}c@{}}
    \includegraphics[width=0.2\textwidth]{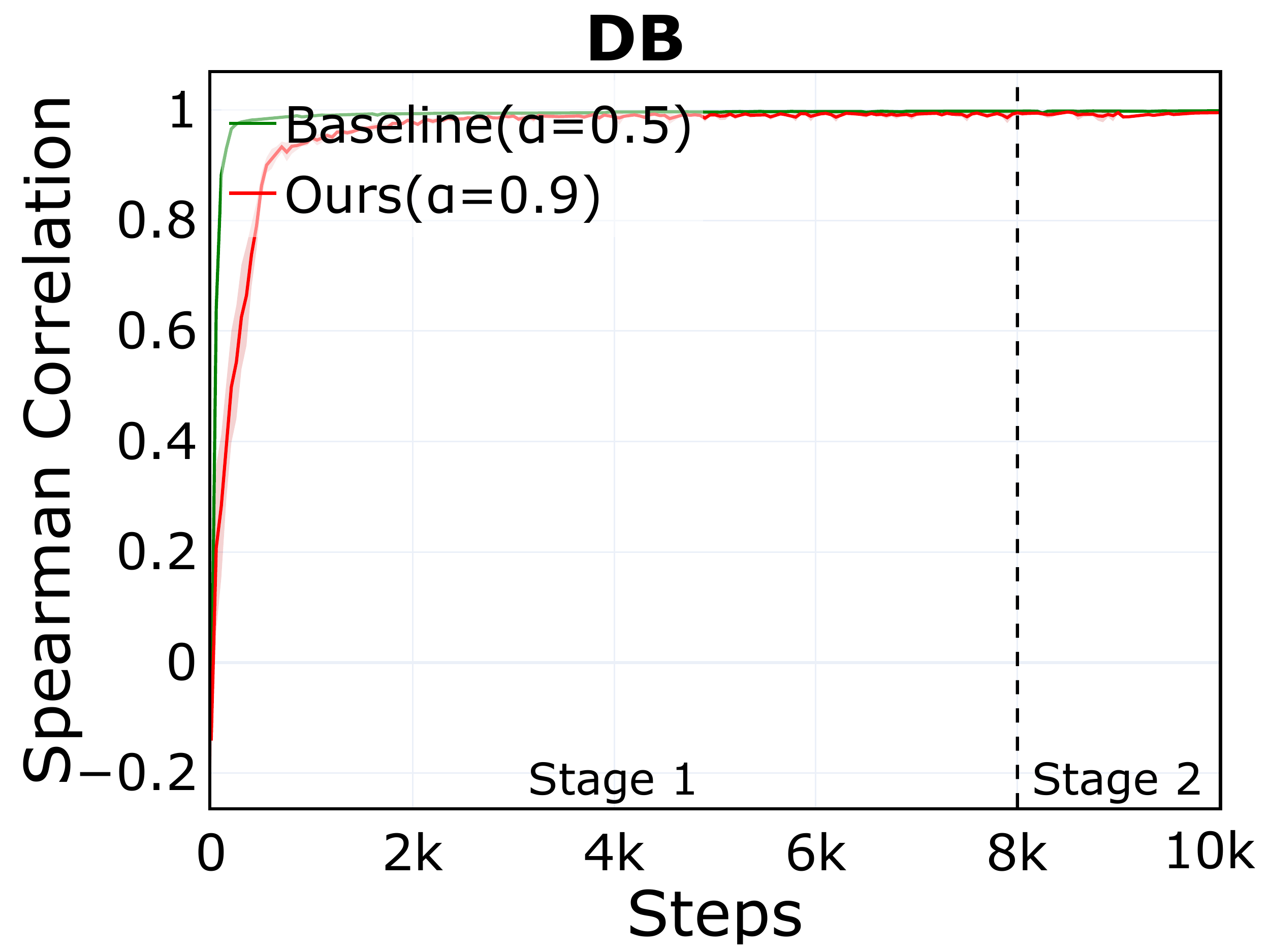} &
    \includegraphics[width=0.2\textwidth]{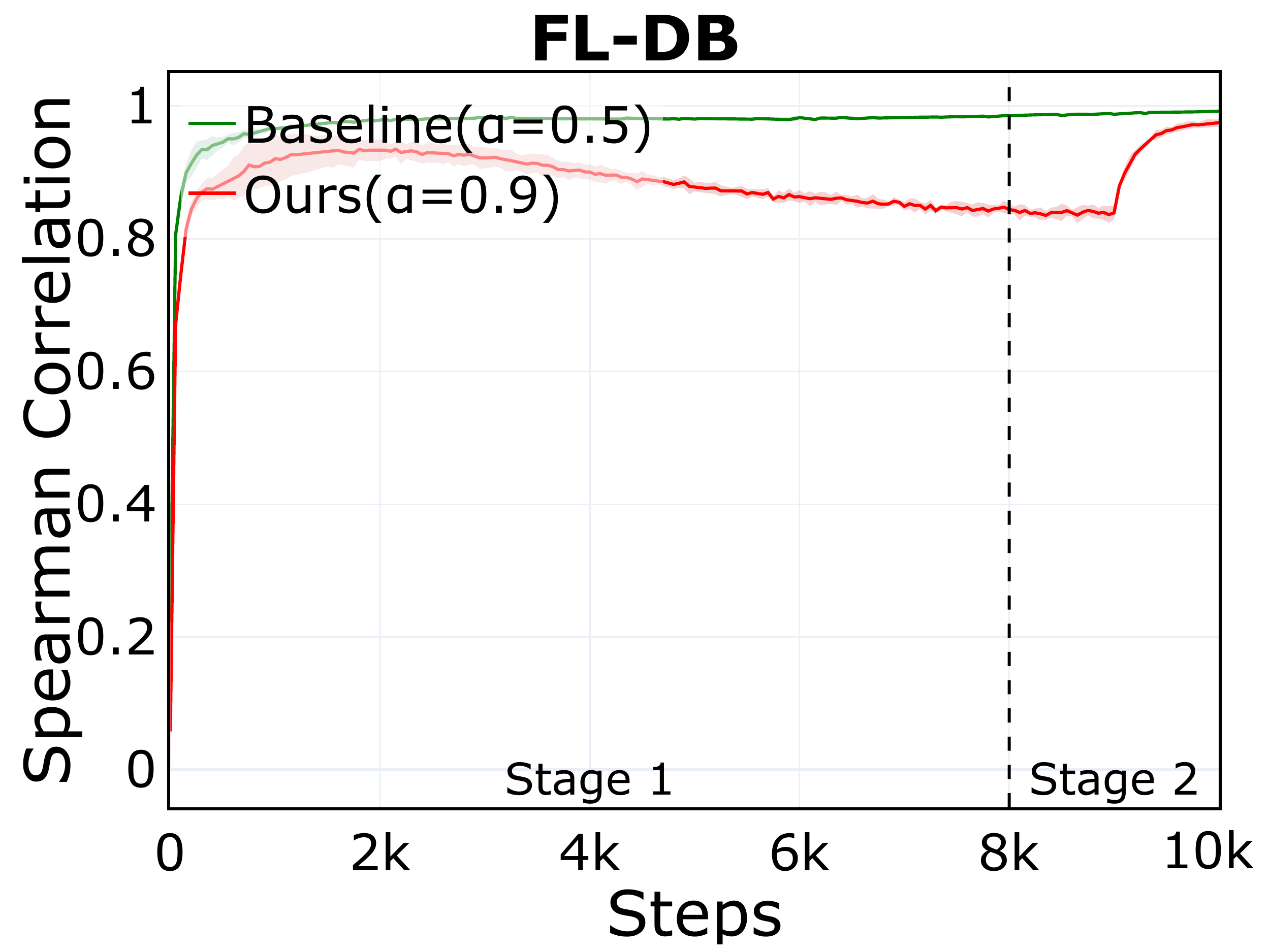} &
    \includegraphics[width=0.2\textwidth]{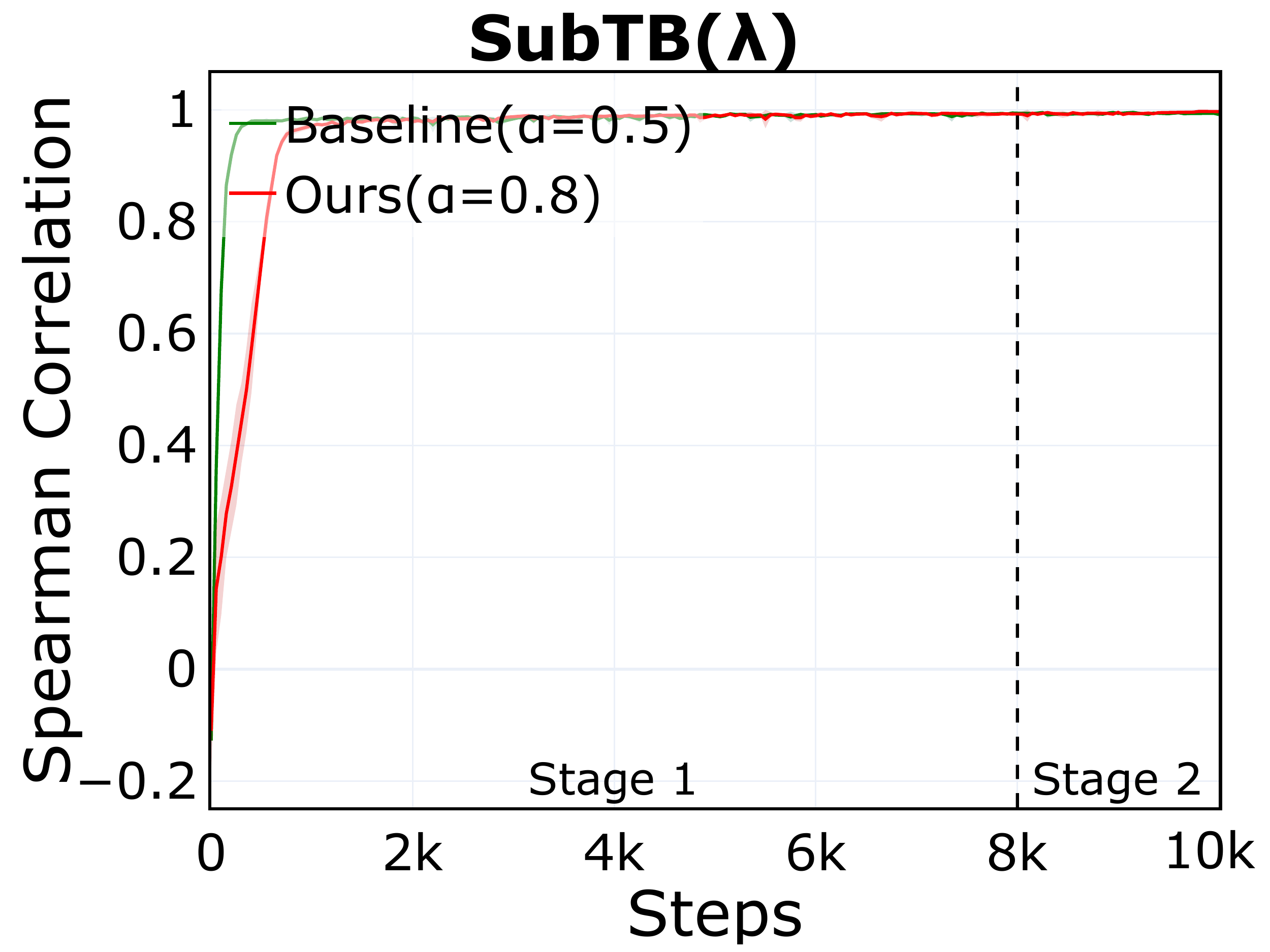} & 
    \includegraphics[width=0.2\textwidth]{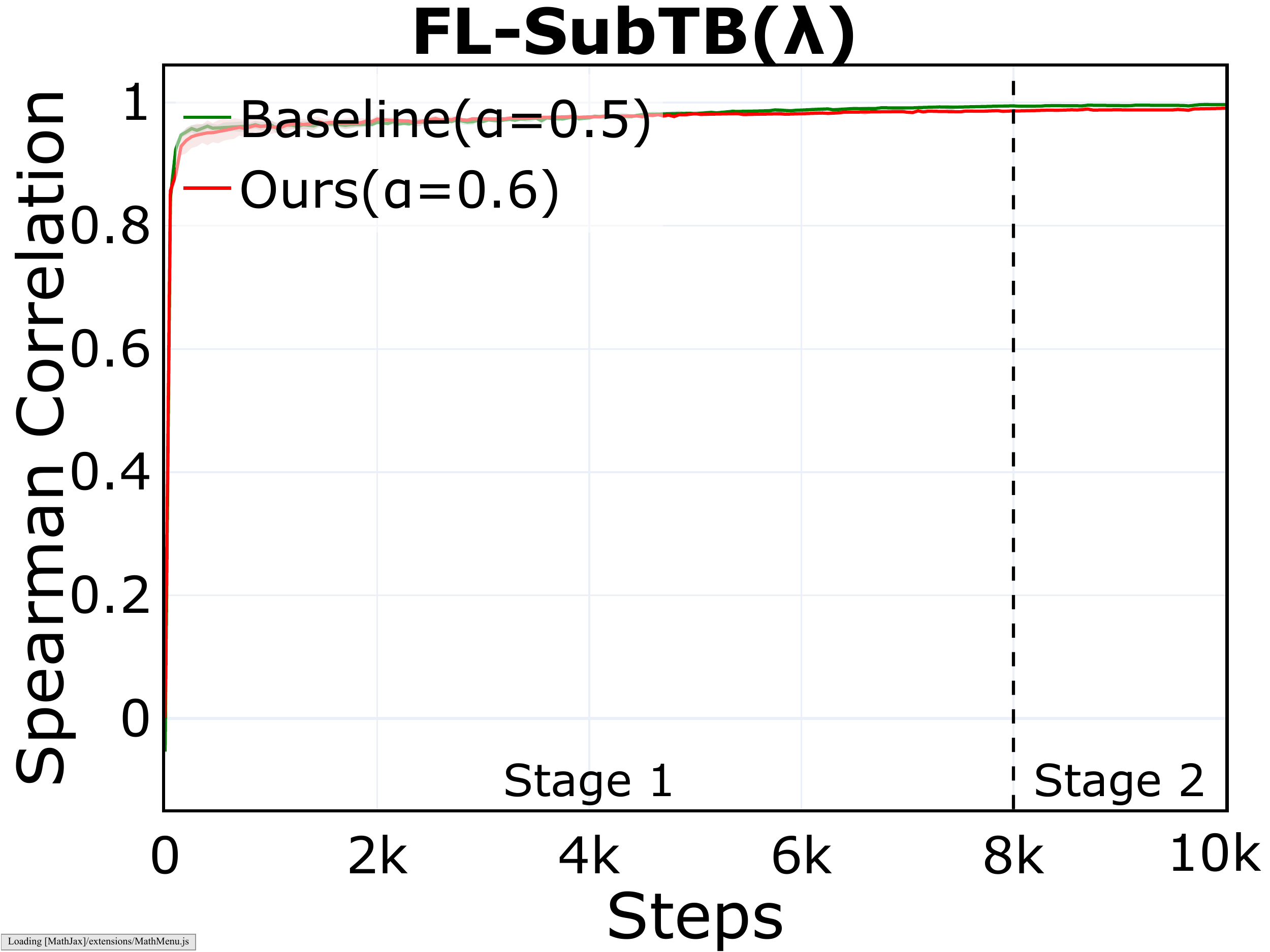} &
    \includegraphics[width=0.2\textwidth]{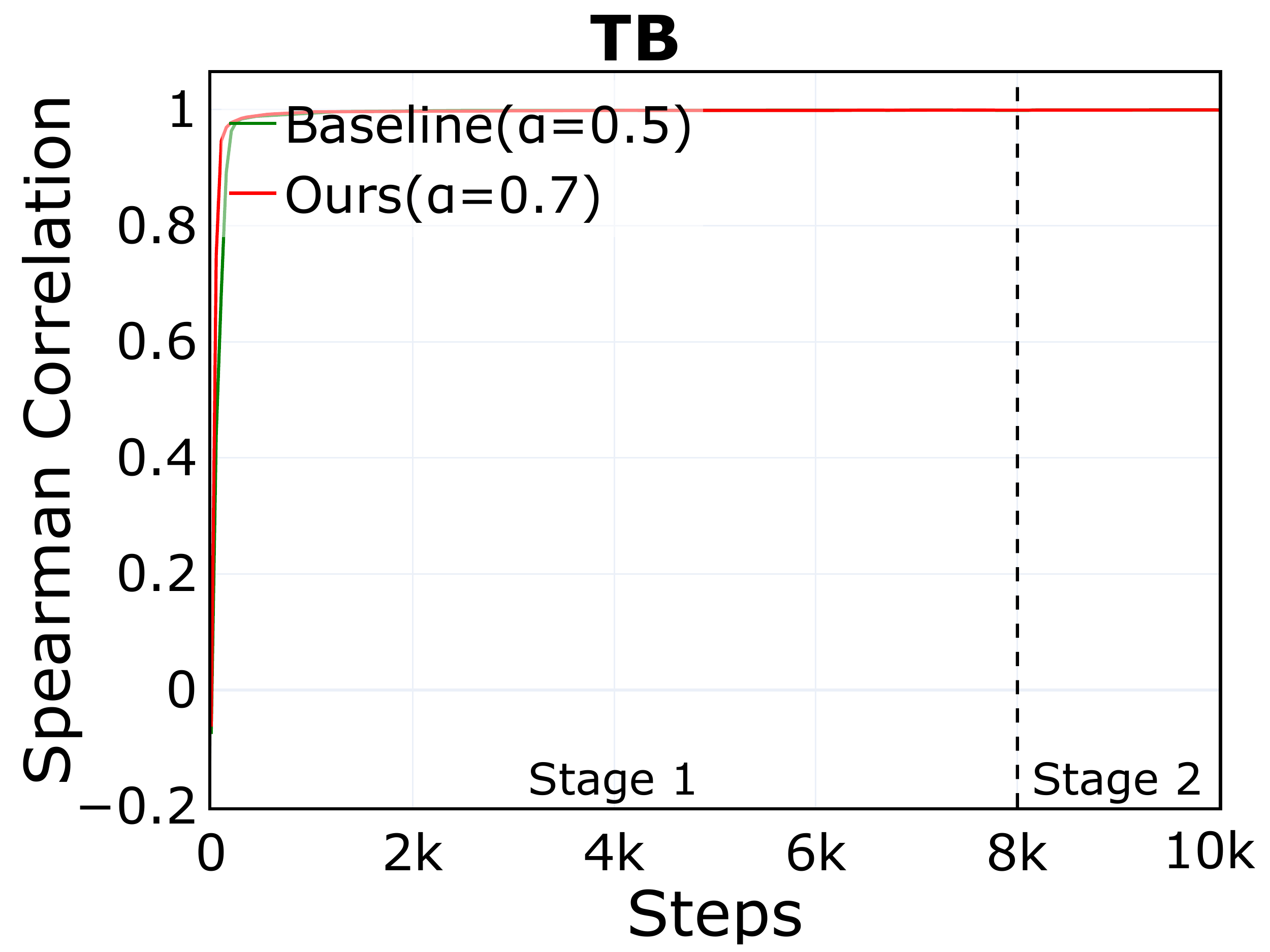} \\
    \small{(a)} DB (small) &
    \small{(b)} FL-DB (small) &
    \small{(c)} SubTB($\lambda$) (small) &
    \small{(d)} FL-SubTB($\lambda$) (small) &
    \small{(e)} TB (small) \\
    \includegraphics[width=0.2\textwidth]{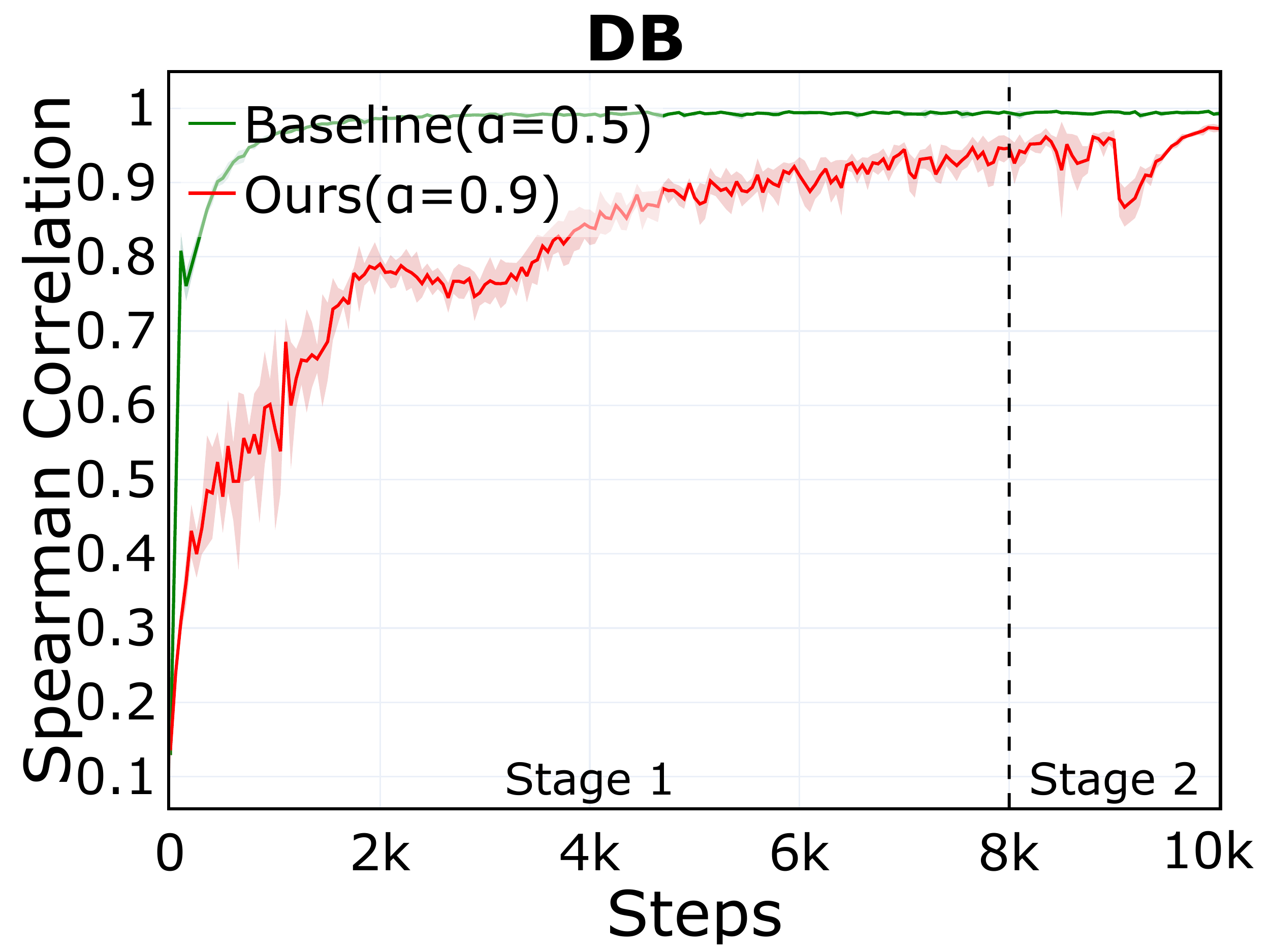} &
    \includegraphics[width=0.2\textwidth]{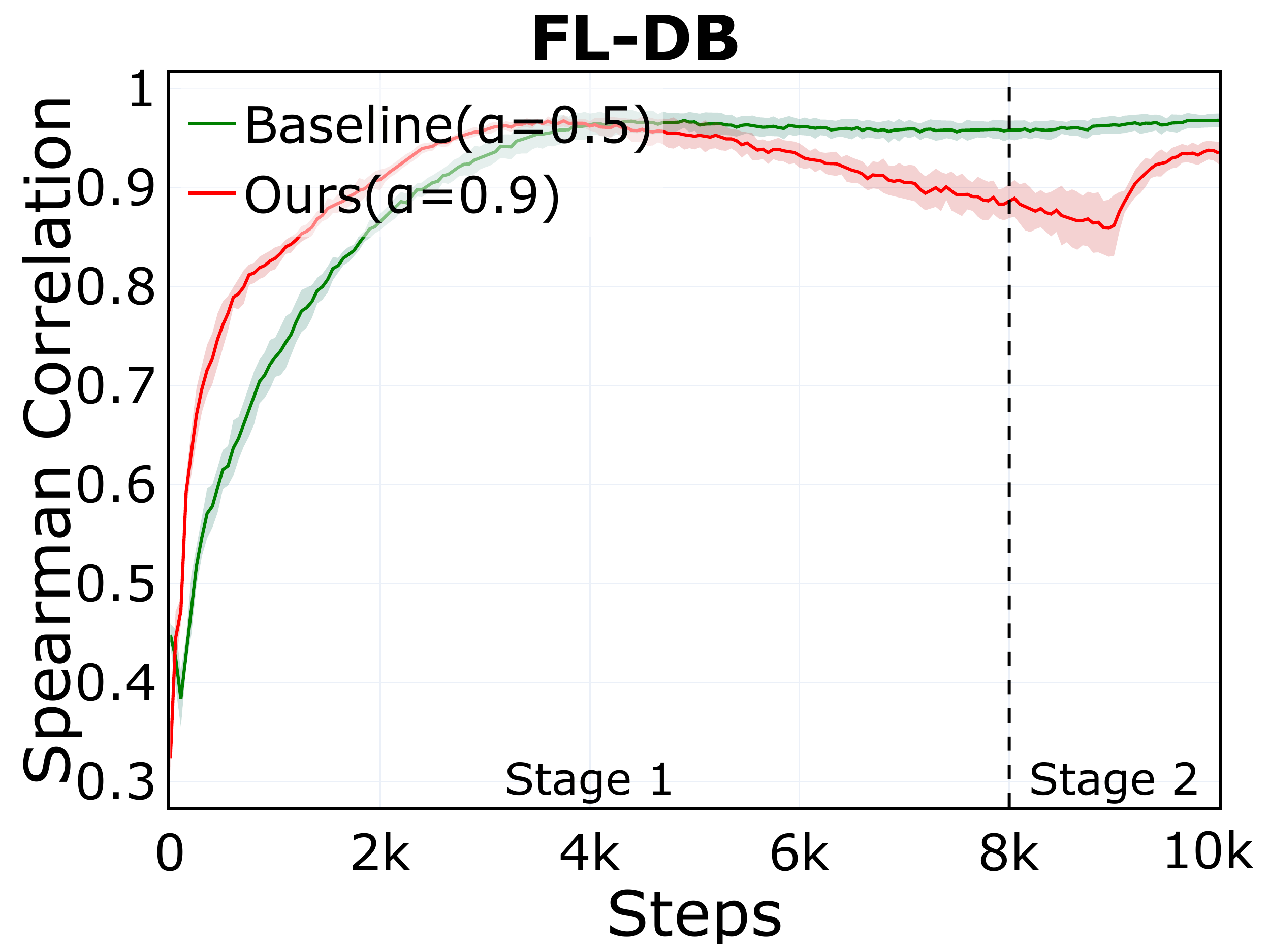} &
     \includegraphics[width=0.2\textwidth]{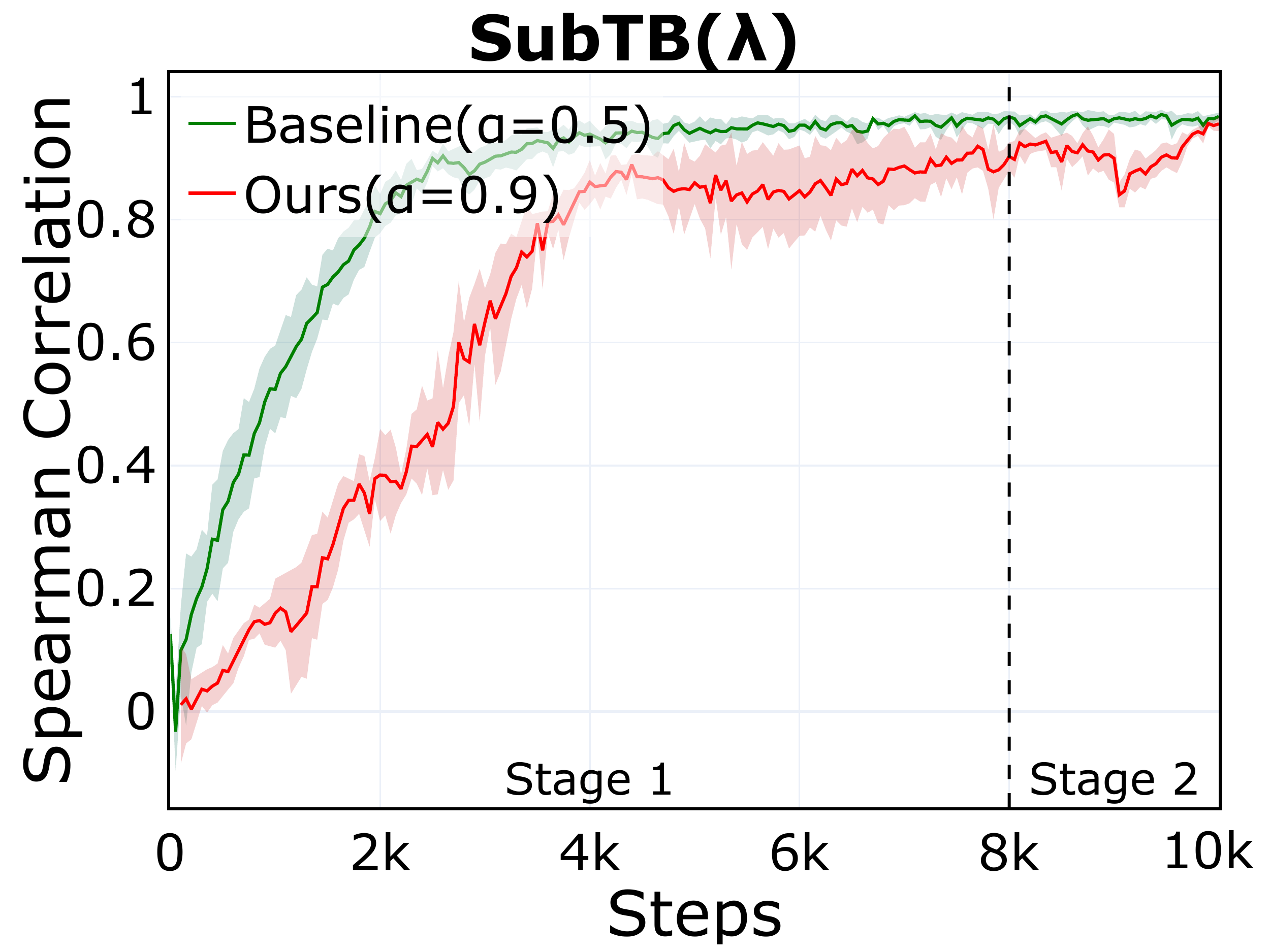}&
    \includegraphics[width=0.2\textwidth]{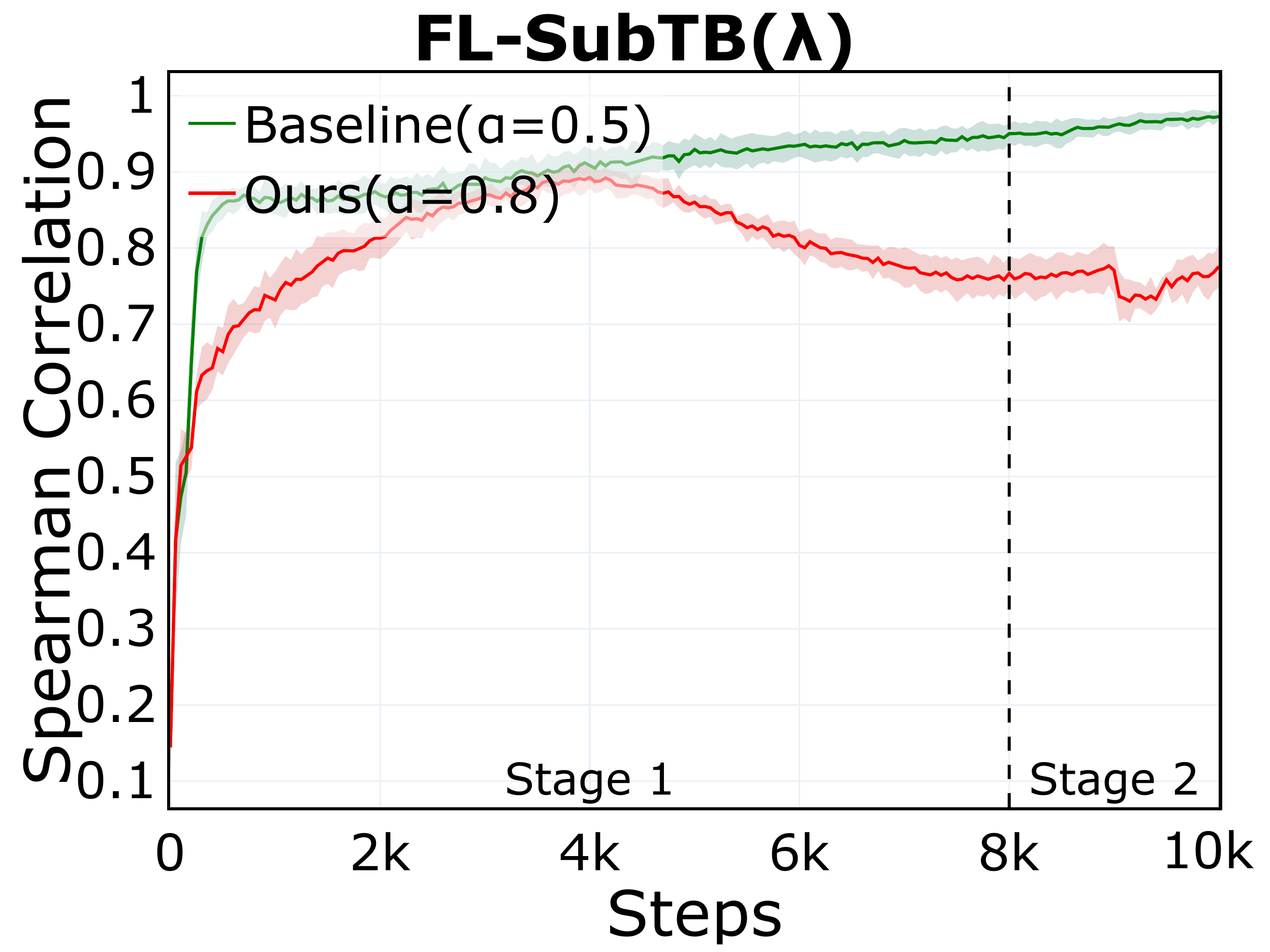} &
    \includegraphics[width=0.2\textwidth]{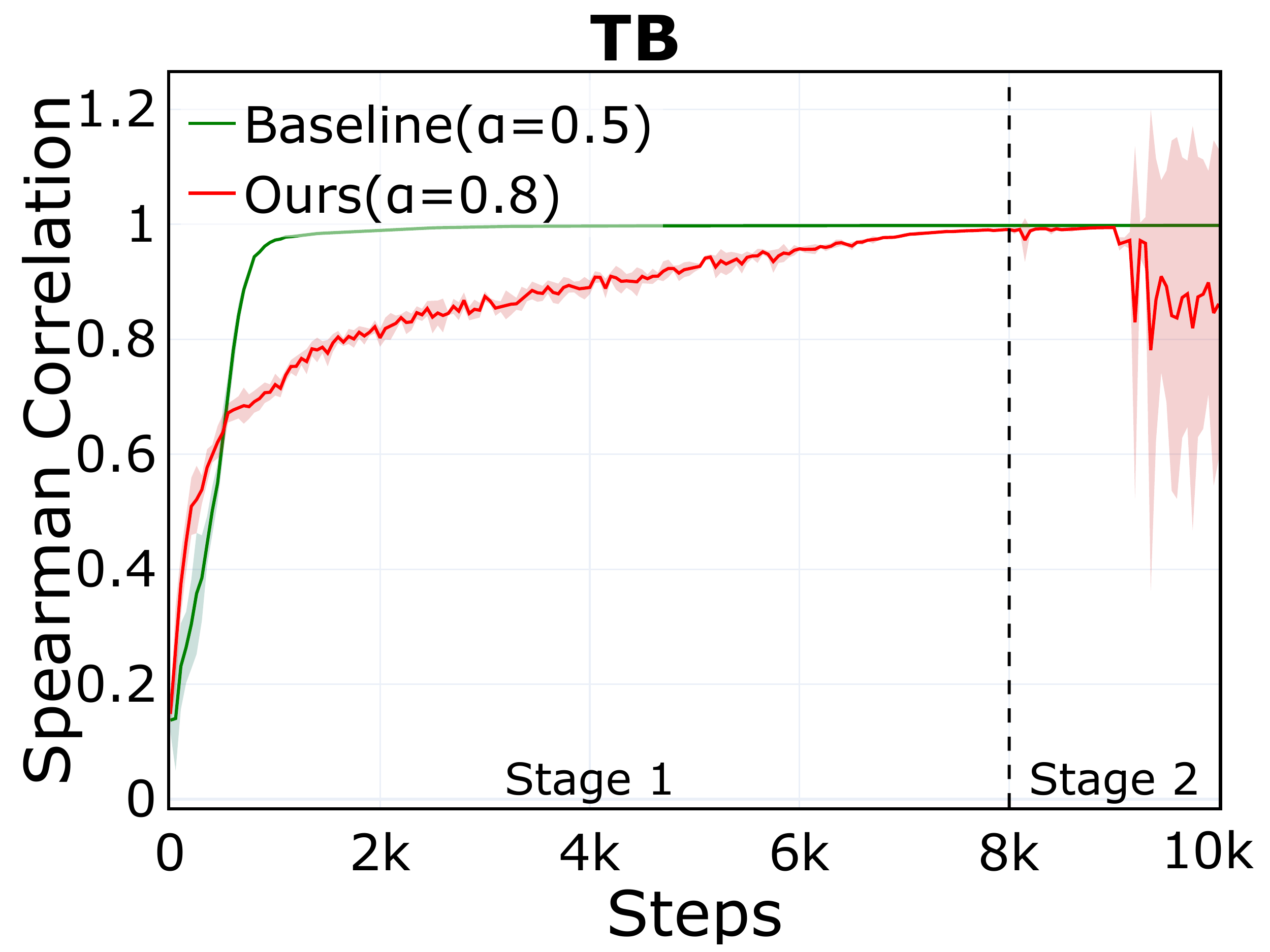} \\
    \small{(f)} DB (medium) &
    \small{(g)} FL-DB (medium) &
    \small{(h)} SubTB($\lambda$) (medium) &
    \small{(i)} FL-SubTB($\lambda$) (medium) &
    \small{(j)} TB (medium) \\
    \includegraphics[width=0.2\textwidth]{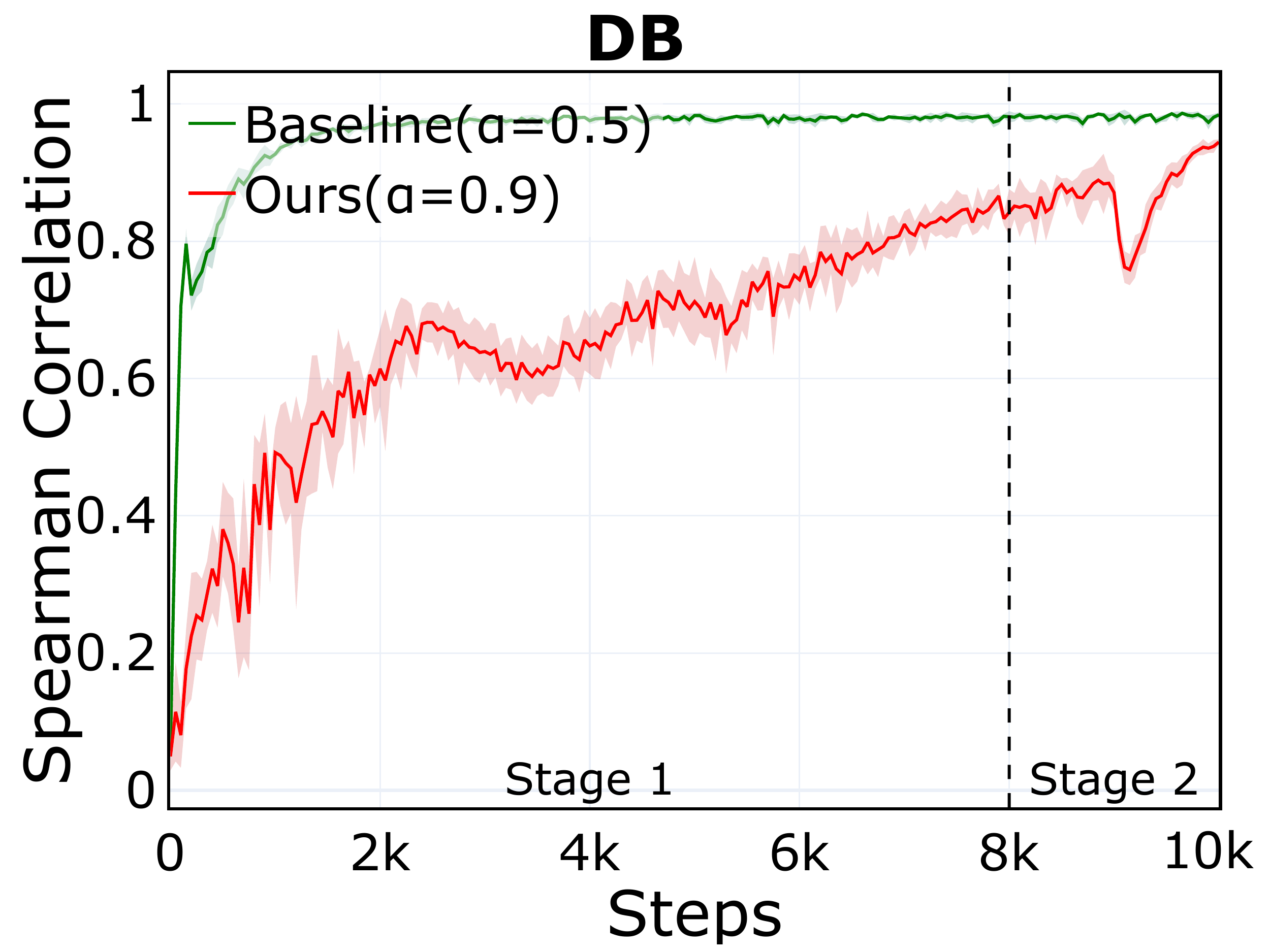} &
    \includegraphics[width=0.2\textwidth]{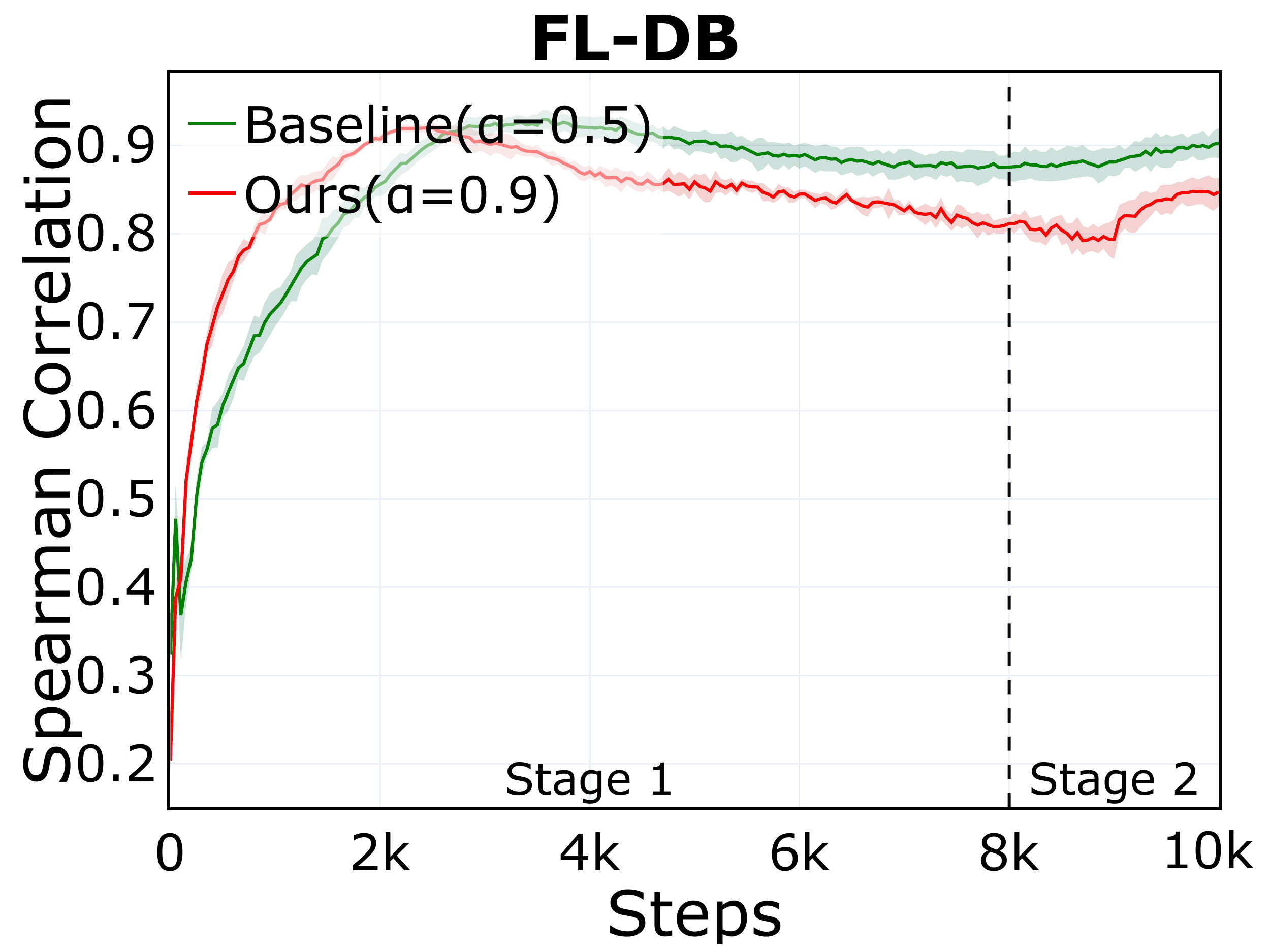} &
     \includegraphics[width=0.2\textwidth]{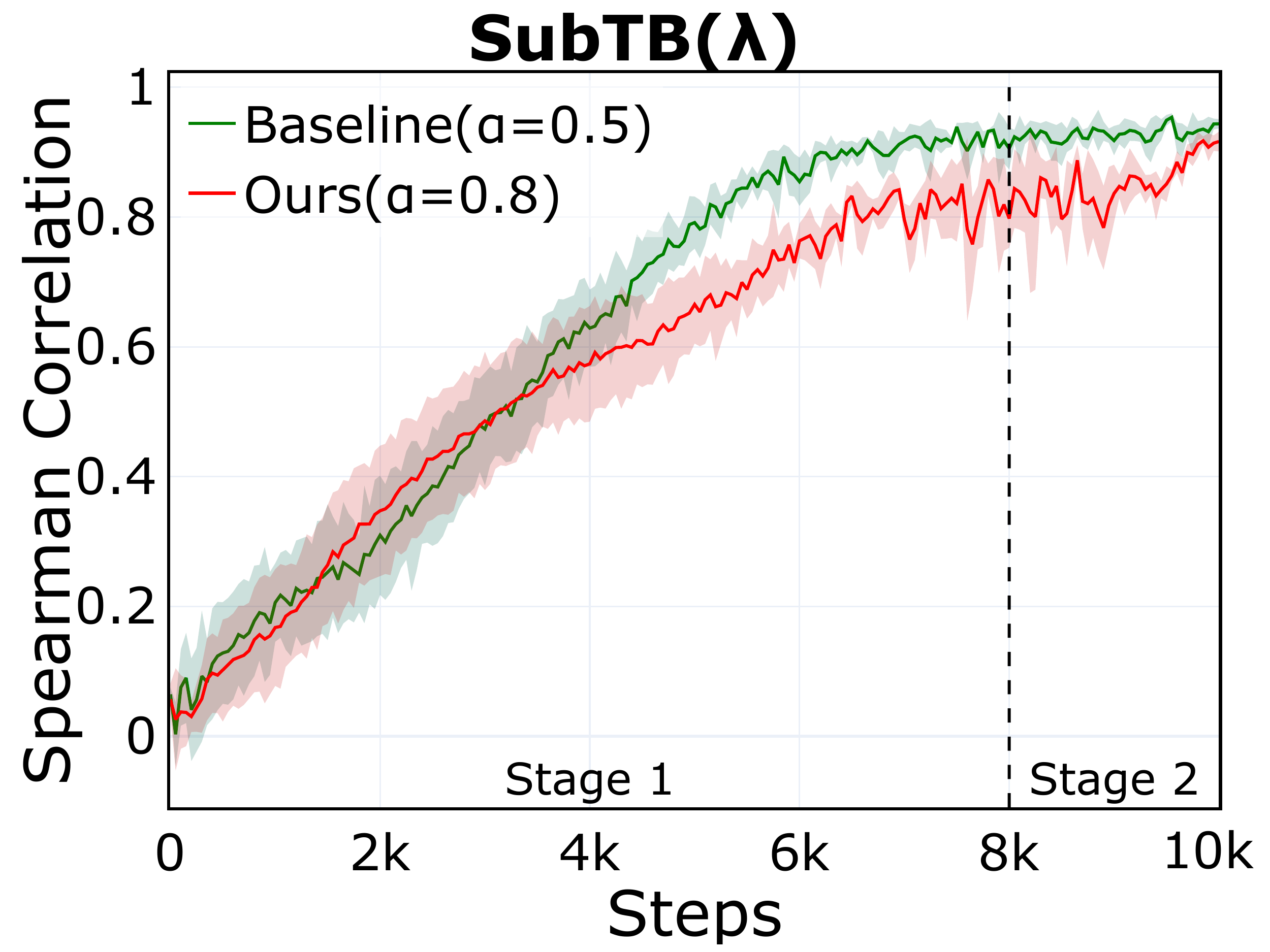} &
     \includegraphics[width=0.2\textwidth]{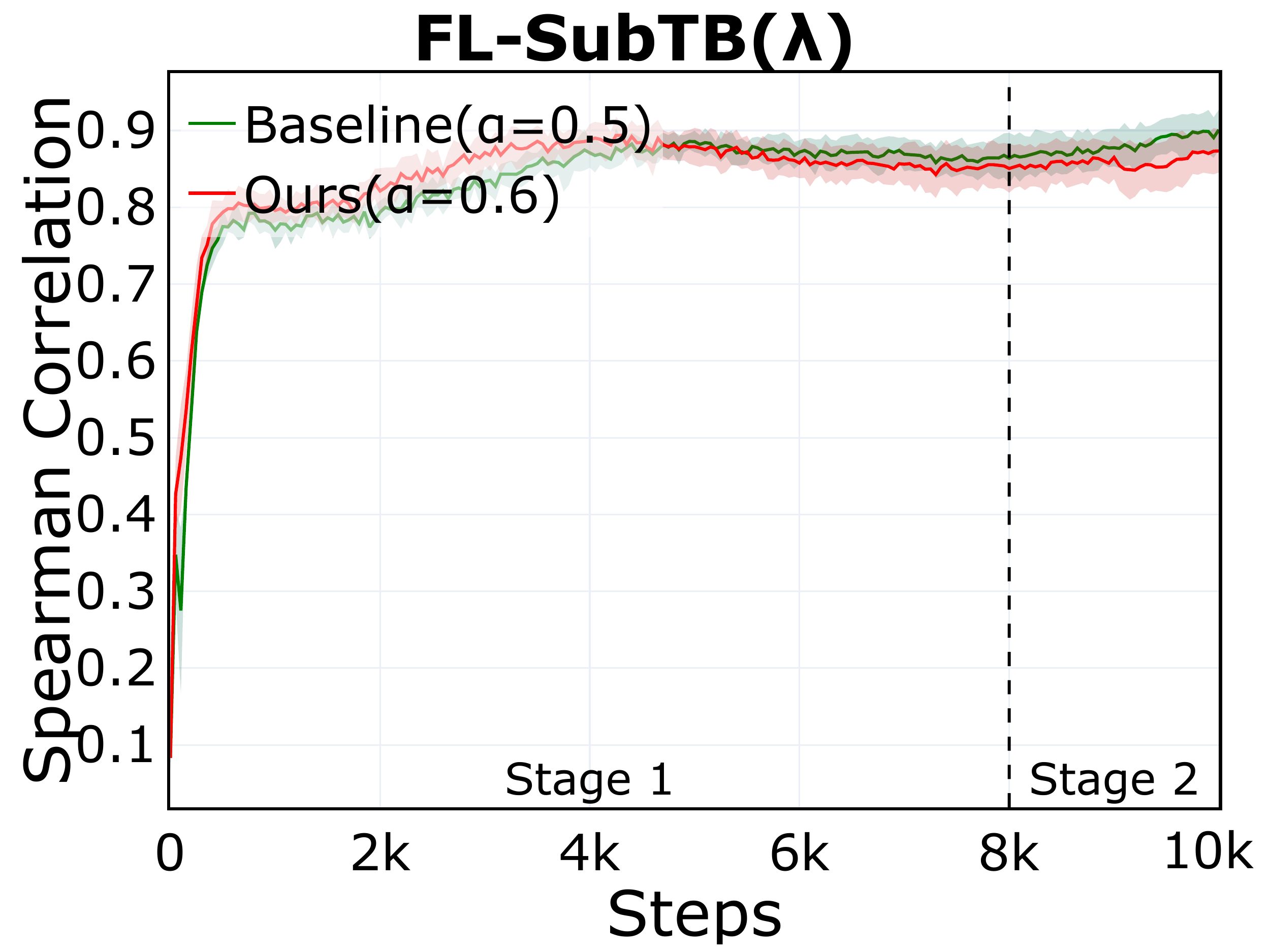}&
    \includegraphics[width=0.2\textwidth]{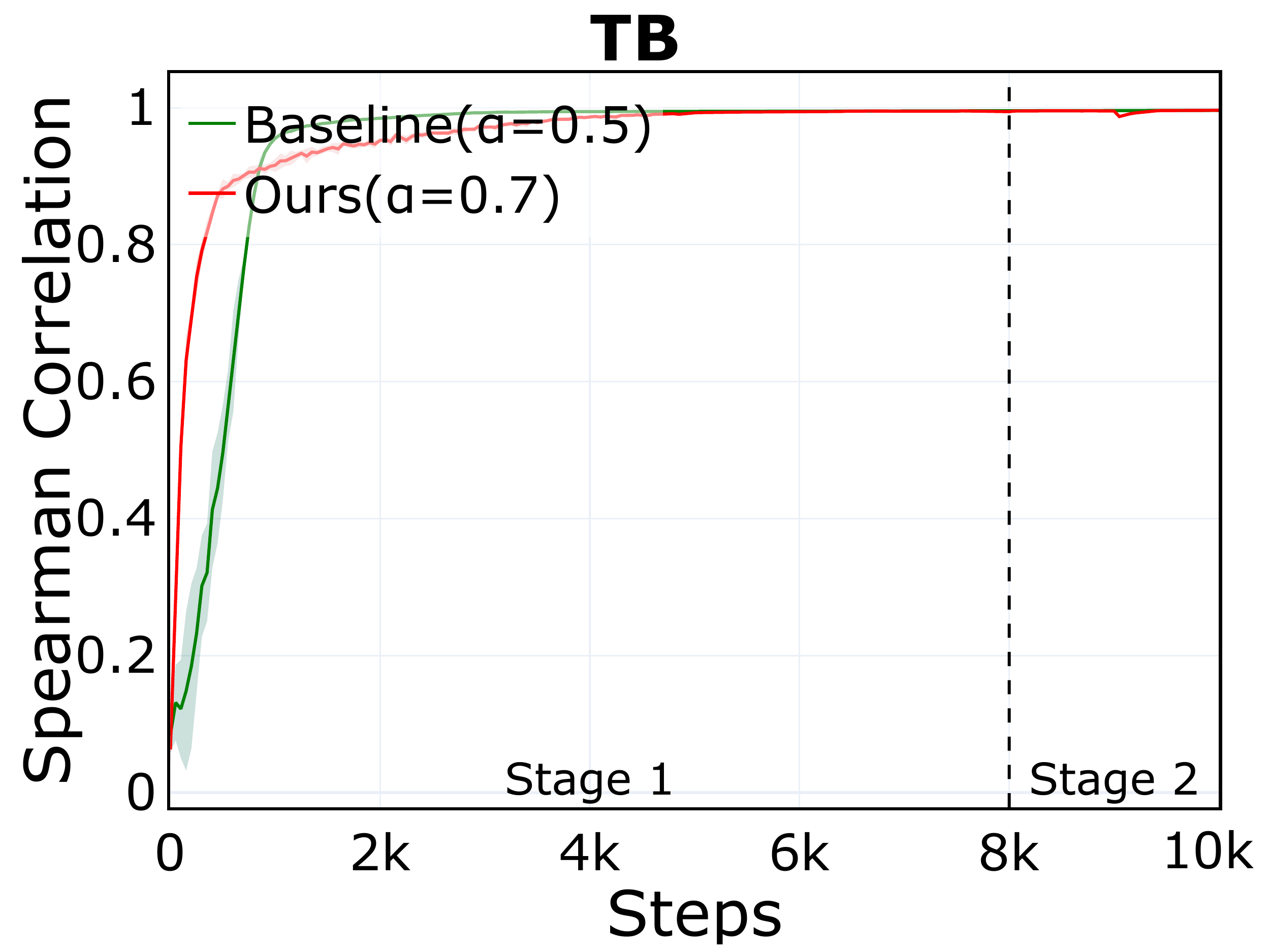} \\
    \small{(k)} DB (large) &
    \small{(l)} FL-DB (large) &
    \small{(m)} SubTB($\lambda$) (large) &
    \small{(n)} FL-SubTB($\lambda$) (large) &
    \small{(o)} TB (large) \\
  \end{tabular}
  \caption{\textbf{Spearman Correlation} vs Training Steps in \textbf{Set Generation} across different objectives and set sizes.}
  \label{fig:set_metric_spearman_corr_test}
\end{figure}

\begin{figure}[htbp]
  \centering
  \setlength{\tabcolsep}{0pt}
  \begin{tabular}{@{}c@{\hspace{0pt}}c@{\hspace{0pt}}c@{\hspace{0pt}}c@{\hspace{0pt}}c@{}}
    \includegraphics[width=0.2\textwidth]{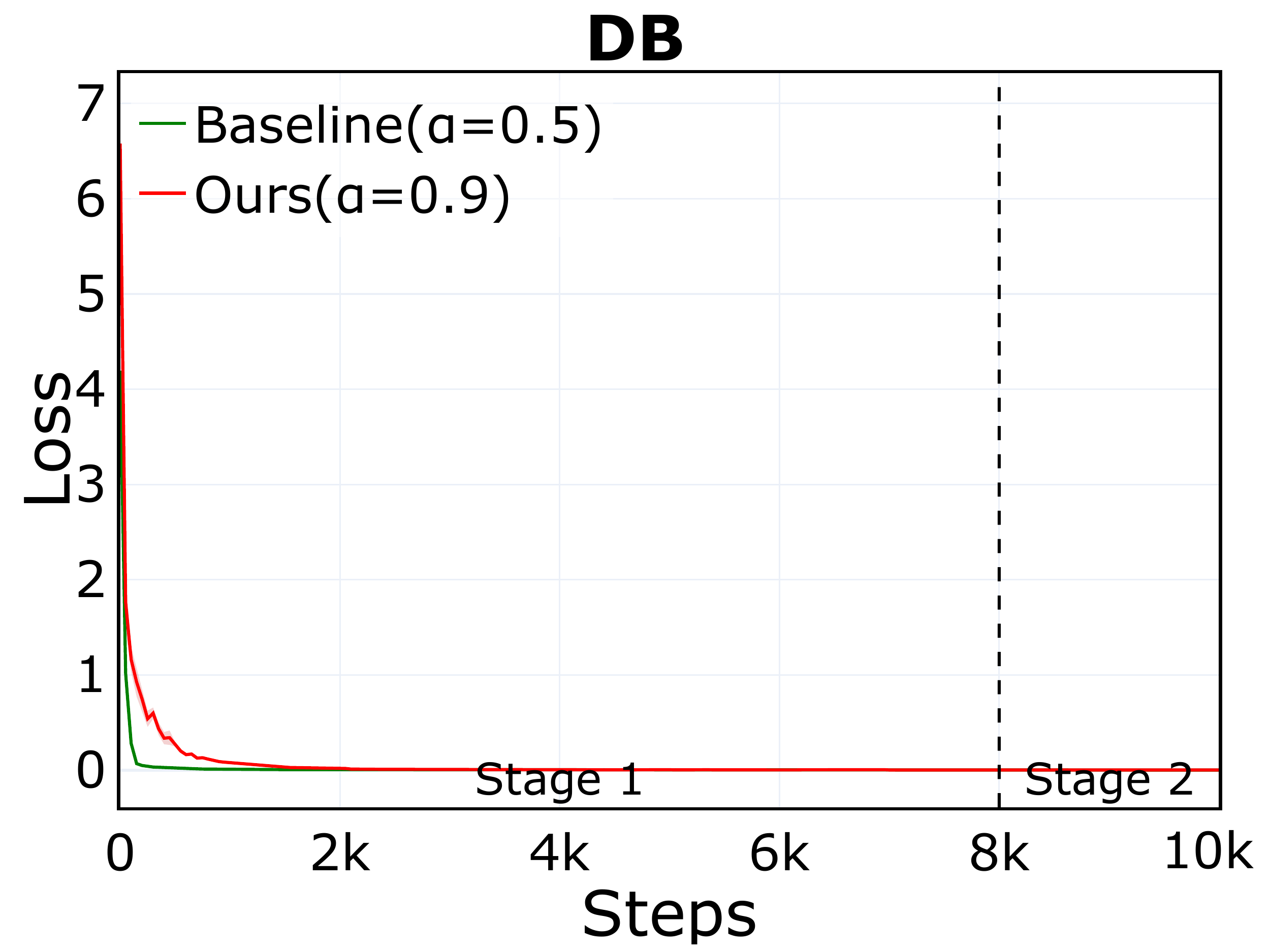} &
    \includegraphics[width=0.2\textwidth]{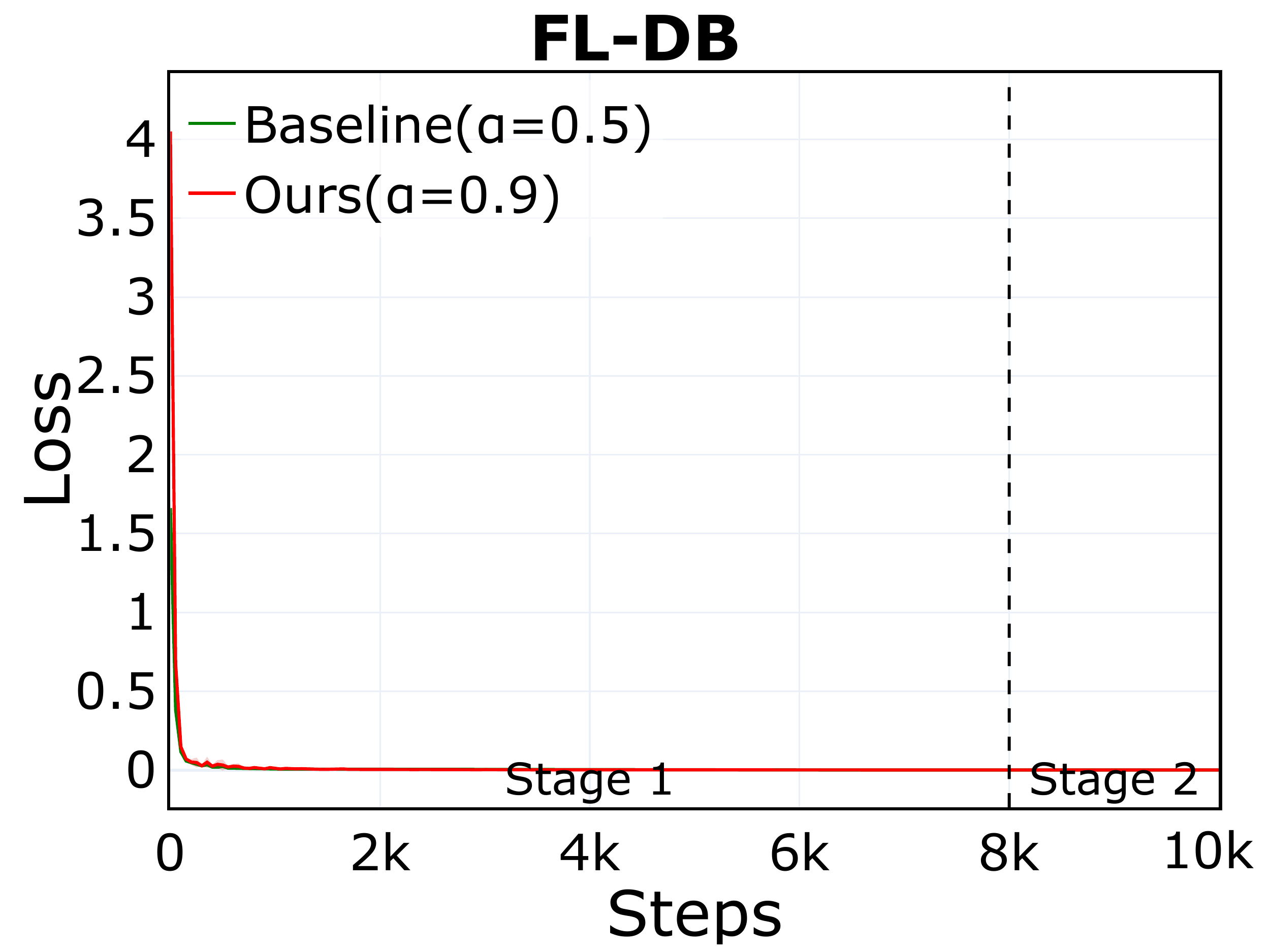} &
    \includegraphics[width=0.2\textwidth]{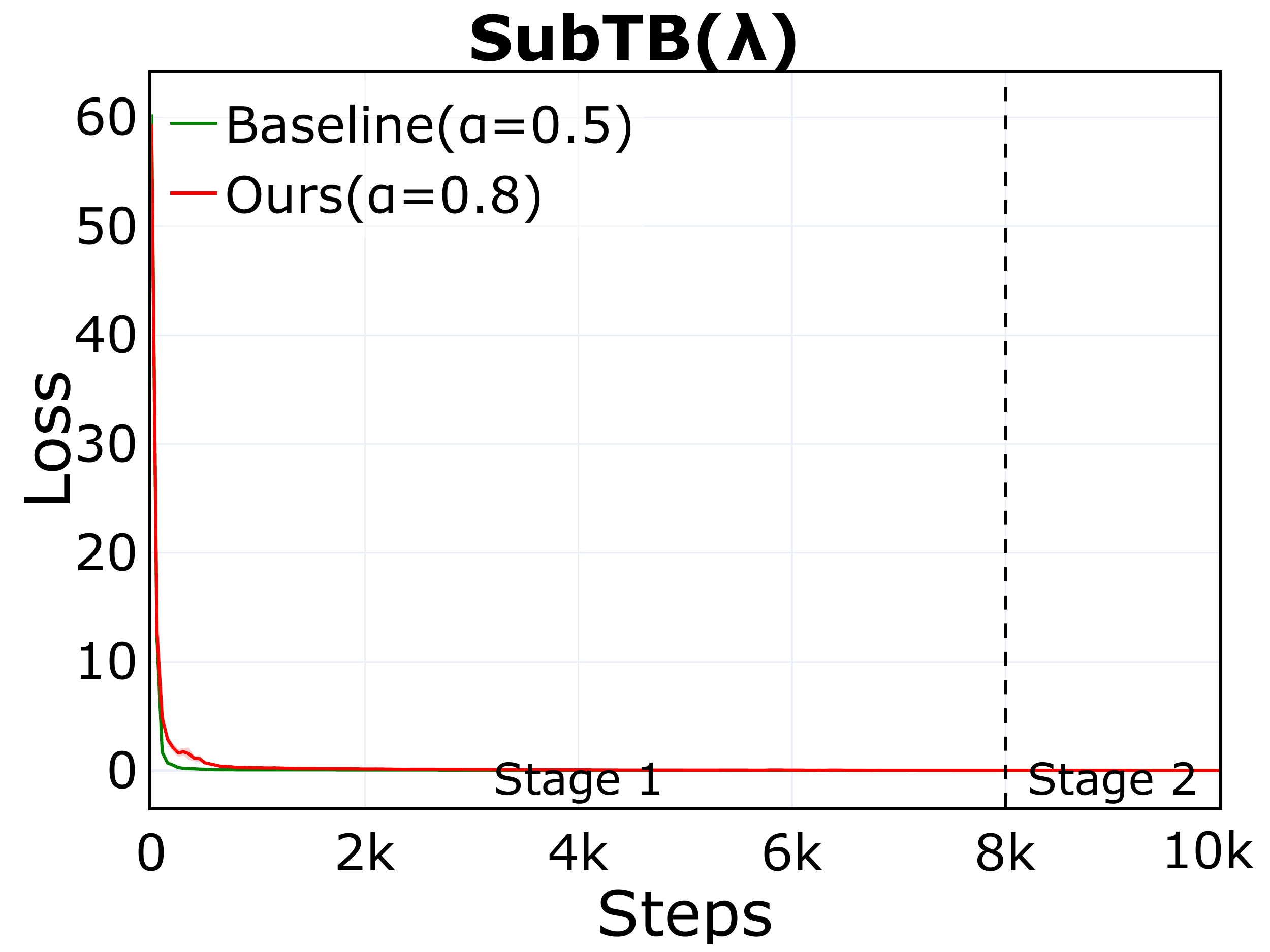} & 
    \includegraphics[width=0.2\textwidth]{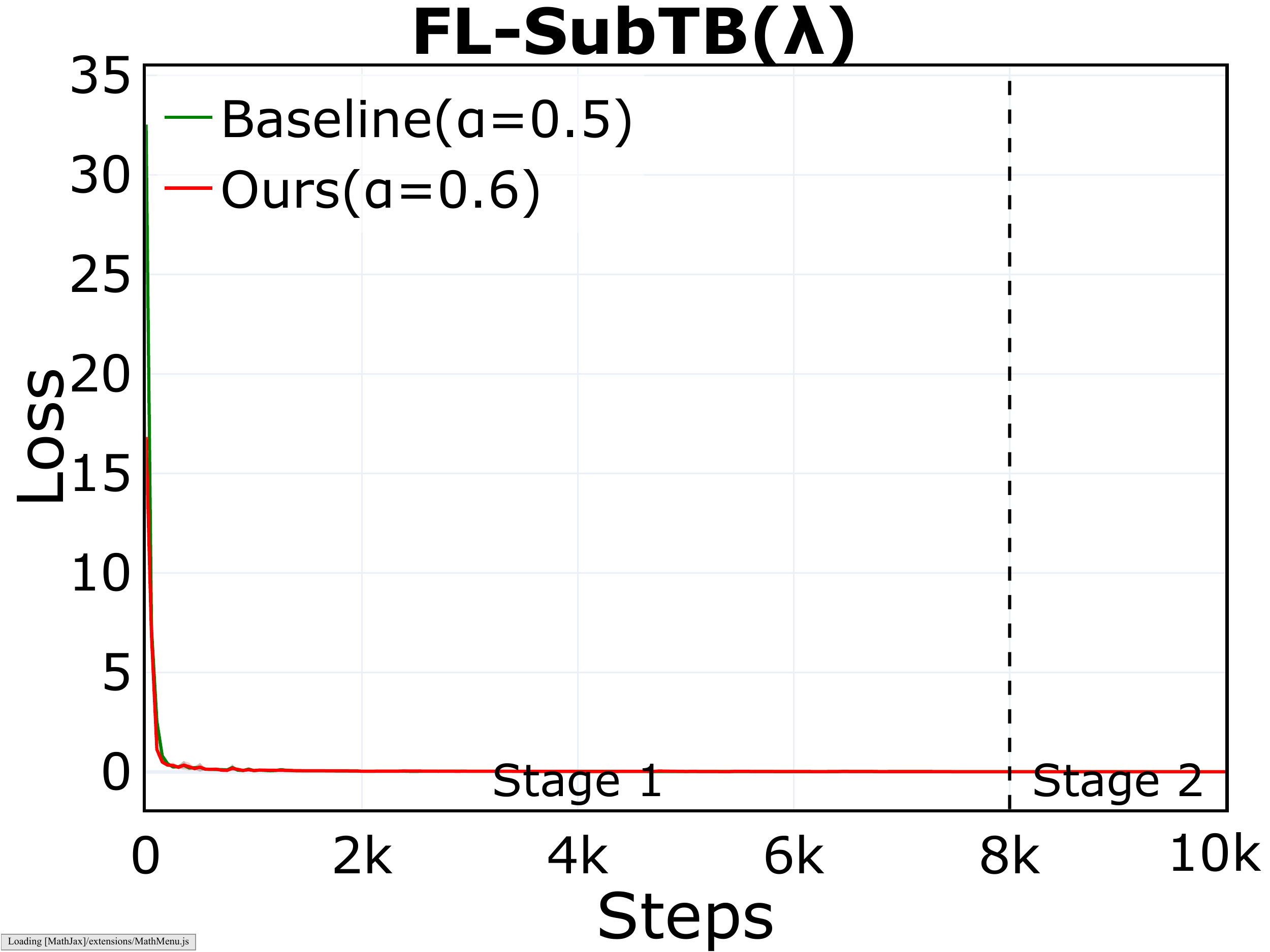} &
    \includegraphics[width=0.2\textwidth]{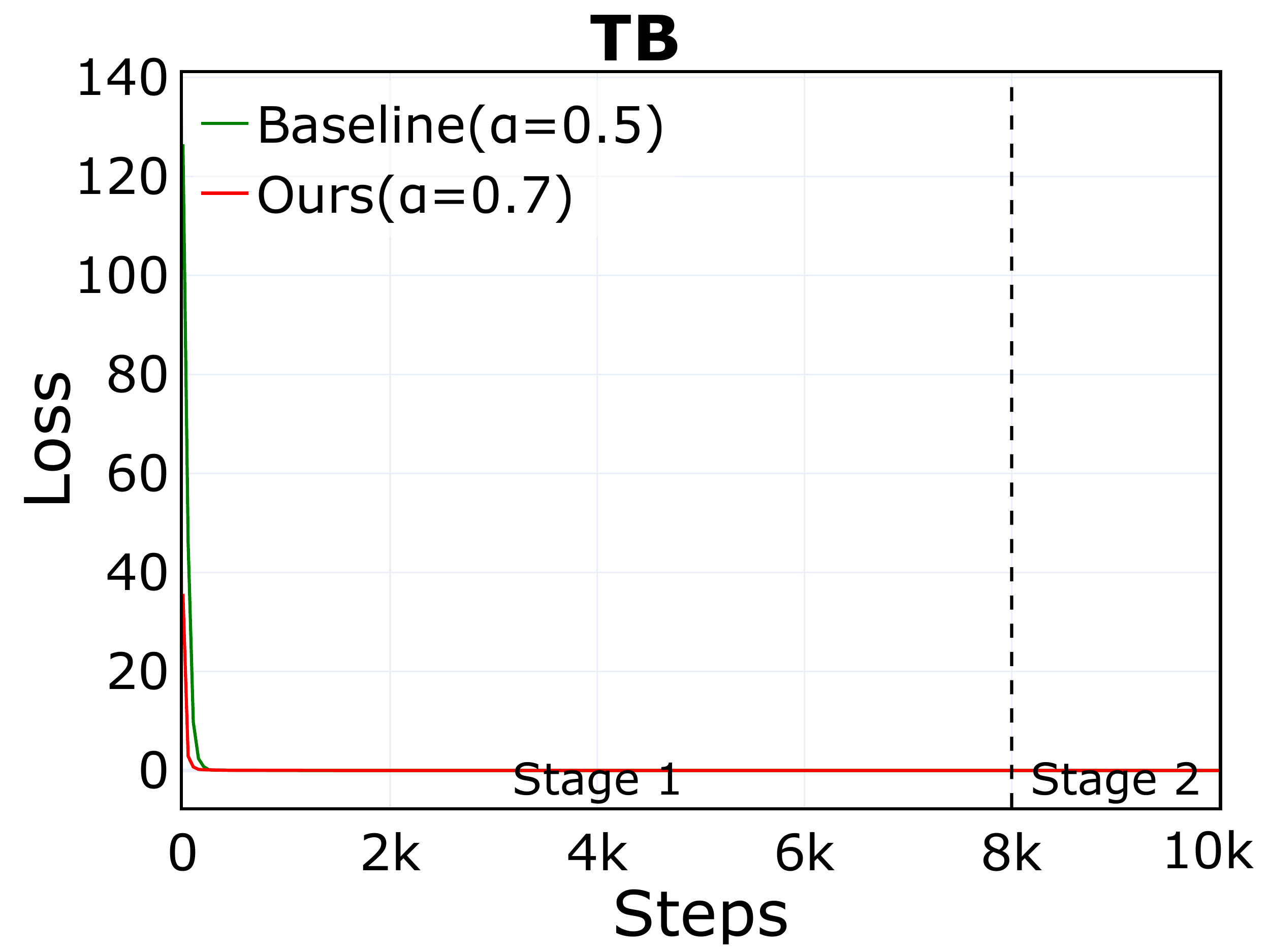} \\
    \small{(a)} DB (small) &
    \small{(b)} FL-DB (small) &
    \small{(c)} SubTB($\lambda$) (small) &
    \small{(d)} FL-SubTB($\lambda$) (small) &
    \small{(e)} TB (small) \\
    \includegraphics[width=0.2\textwidth]{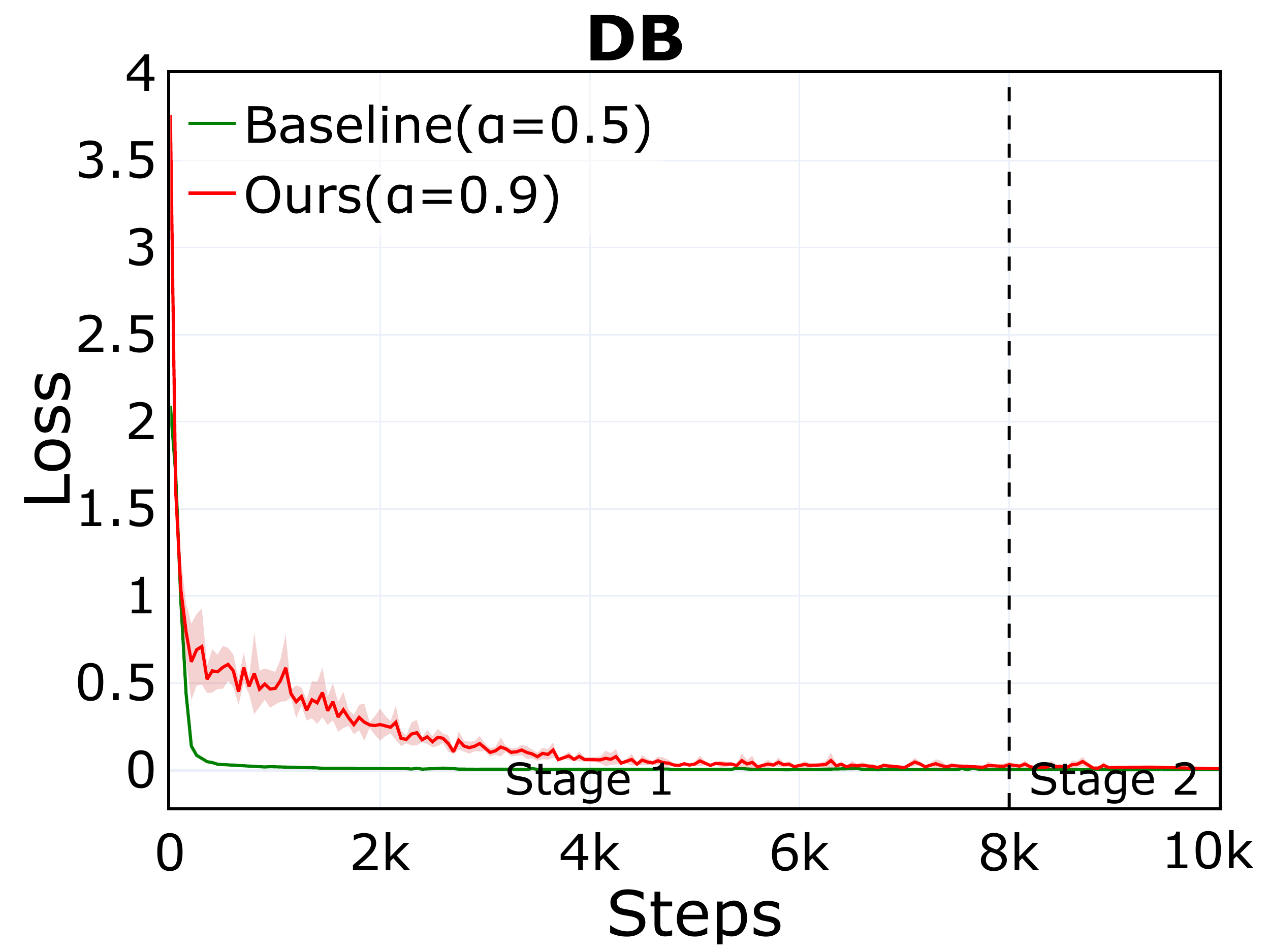} &
    \includegraphics[width=0.2\textwidth]{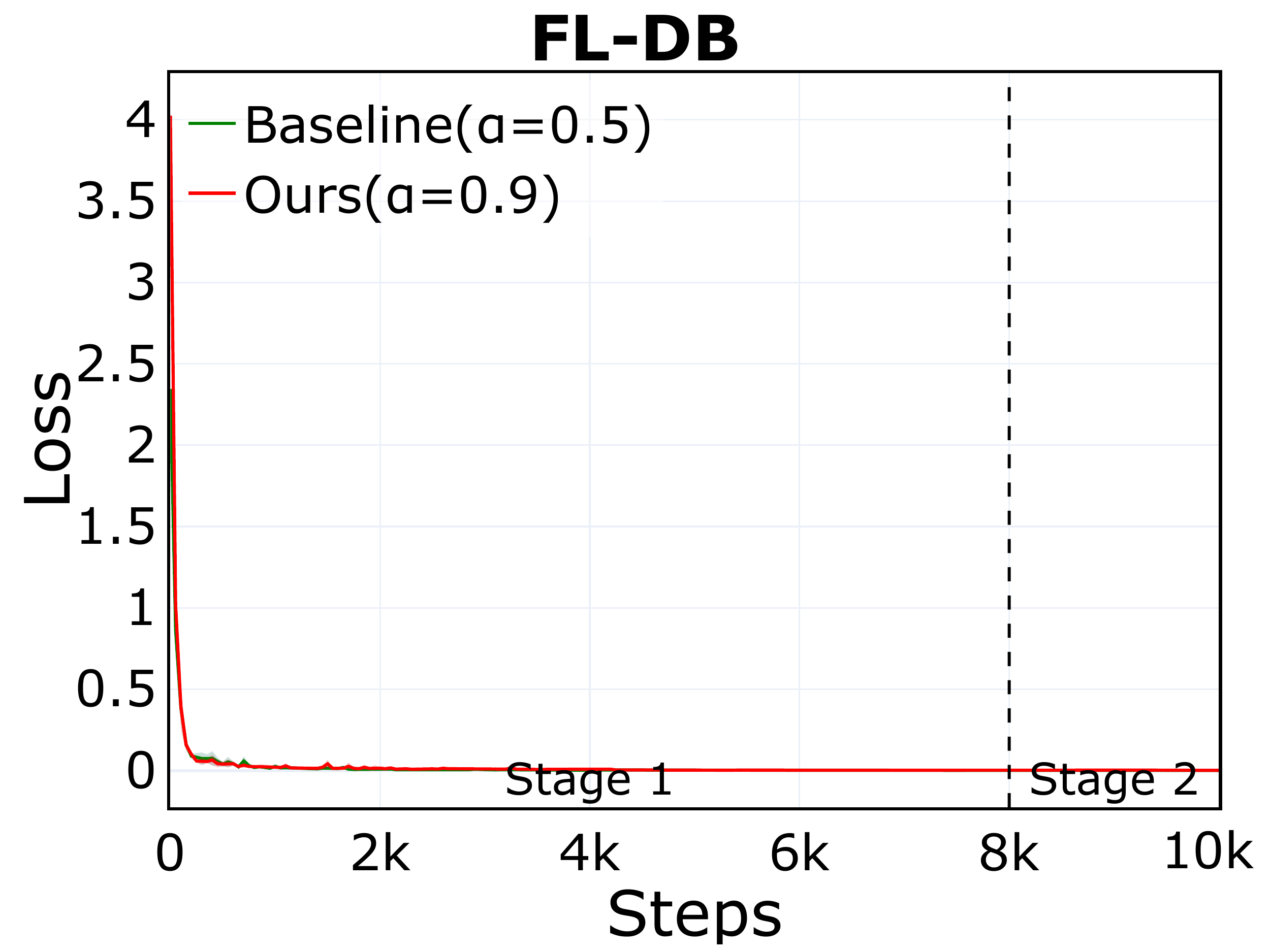} &
     \includegraphics[width=0.2\textwidth]{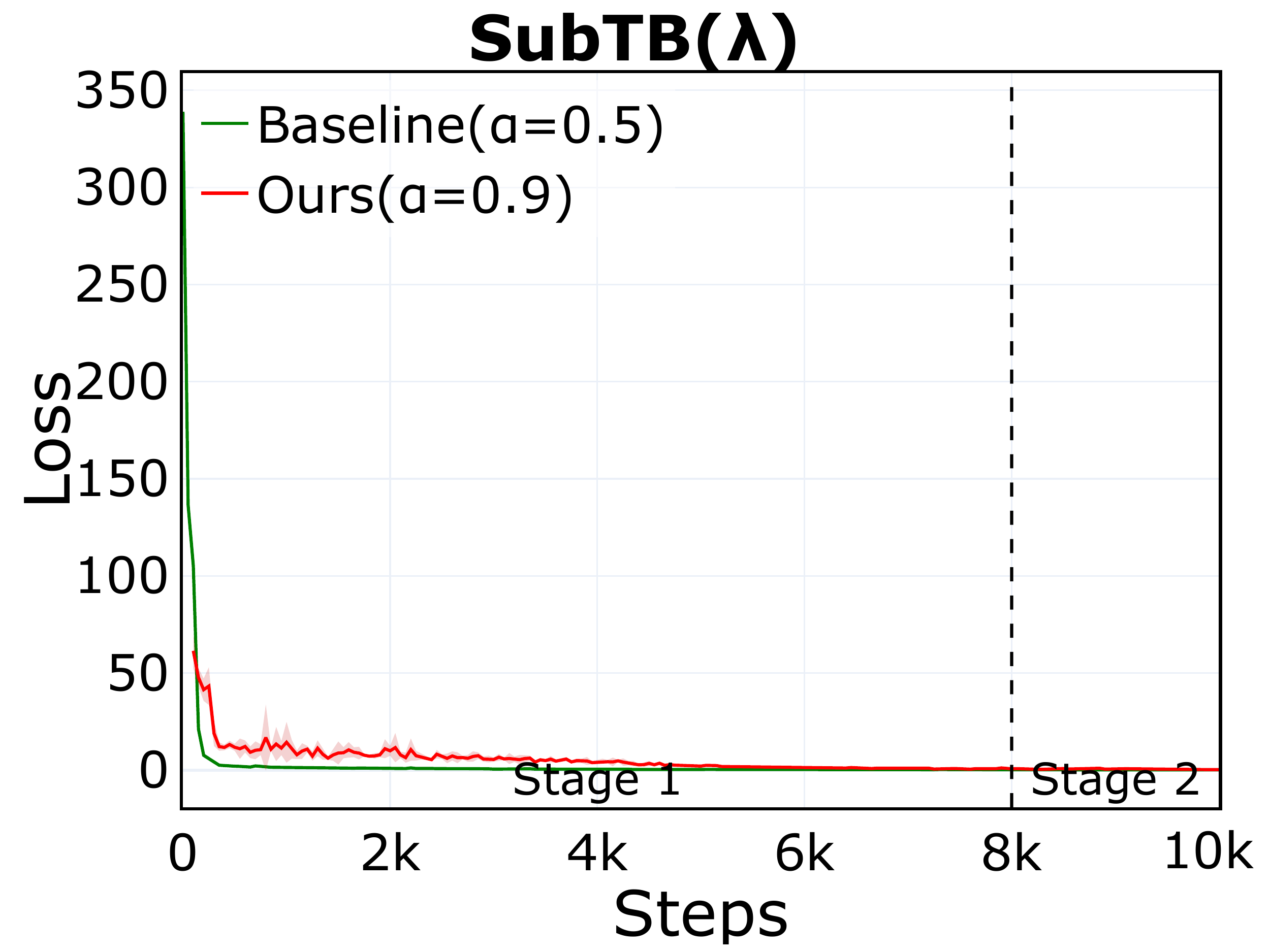}&
    \includegraphics[width=0.2\textwidth]{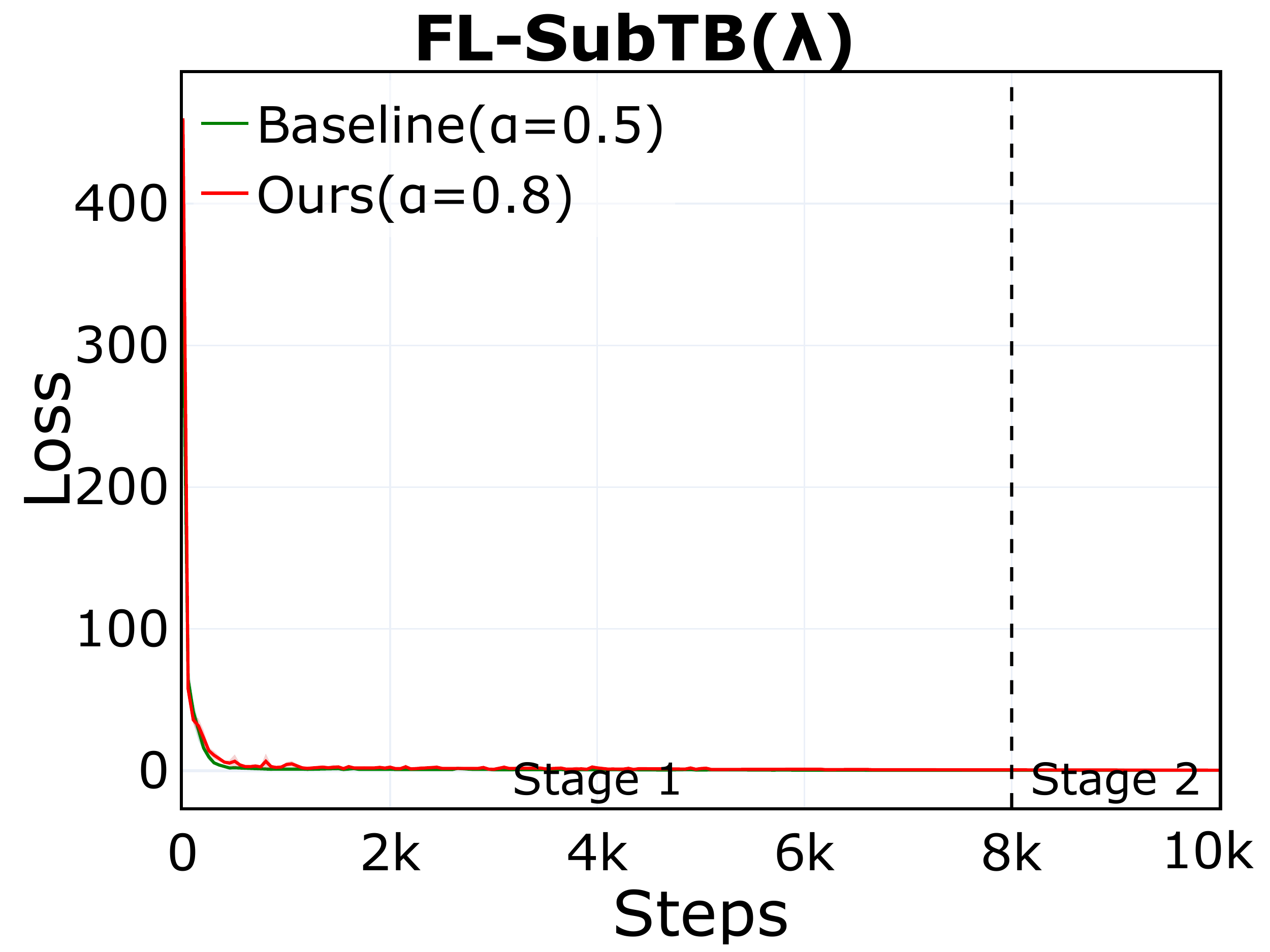} &
    \includegraphics[width=0.2\textwidth]{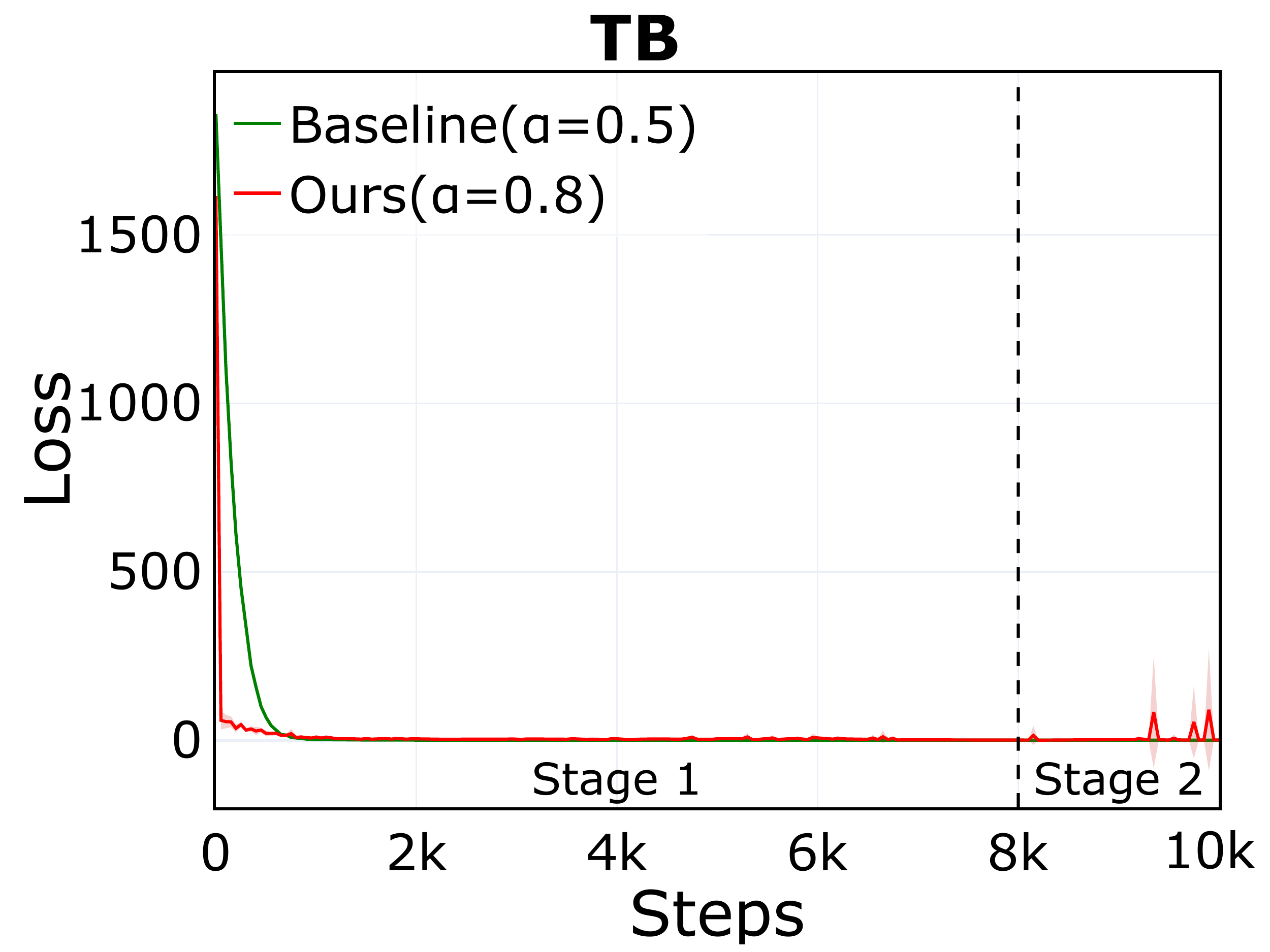} \\
    \small{(f)} DB (medium) &
    \small{(g)} FL-DB (medium) &
    \small{(h)} SubTB($\lambda$) (medium) &
    \small{(i)} FL-SubTB($\lambda$) (medium) &
    \small{(j)} TB (medium) \\
    \includegraphics[width=0.2\textwidth]{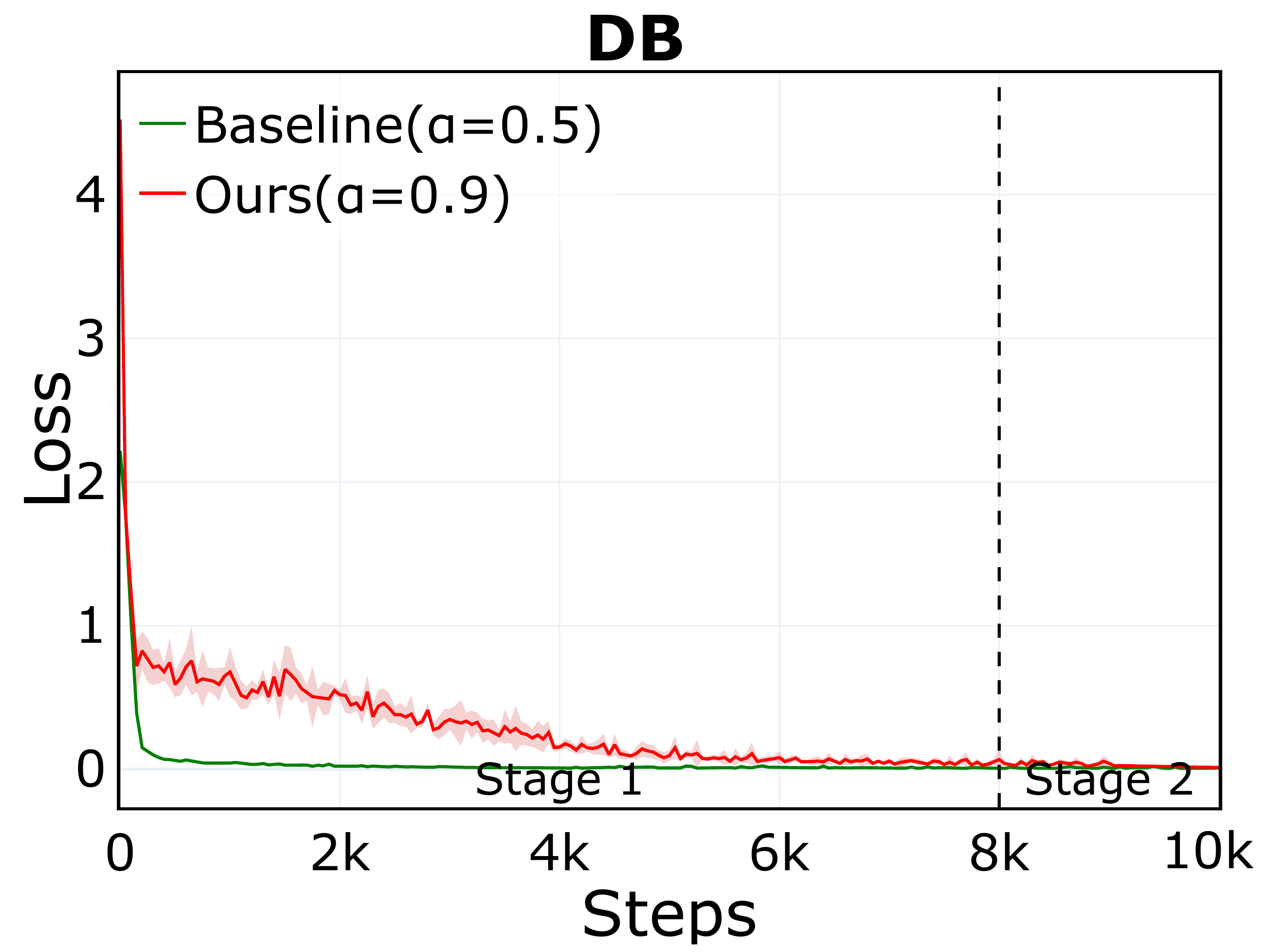} &
    \includegraphics[width=0.2\textwidth]{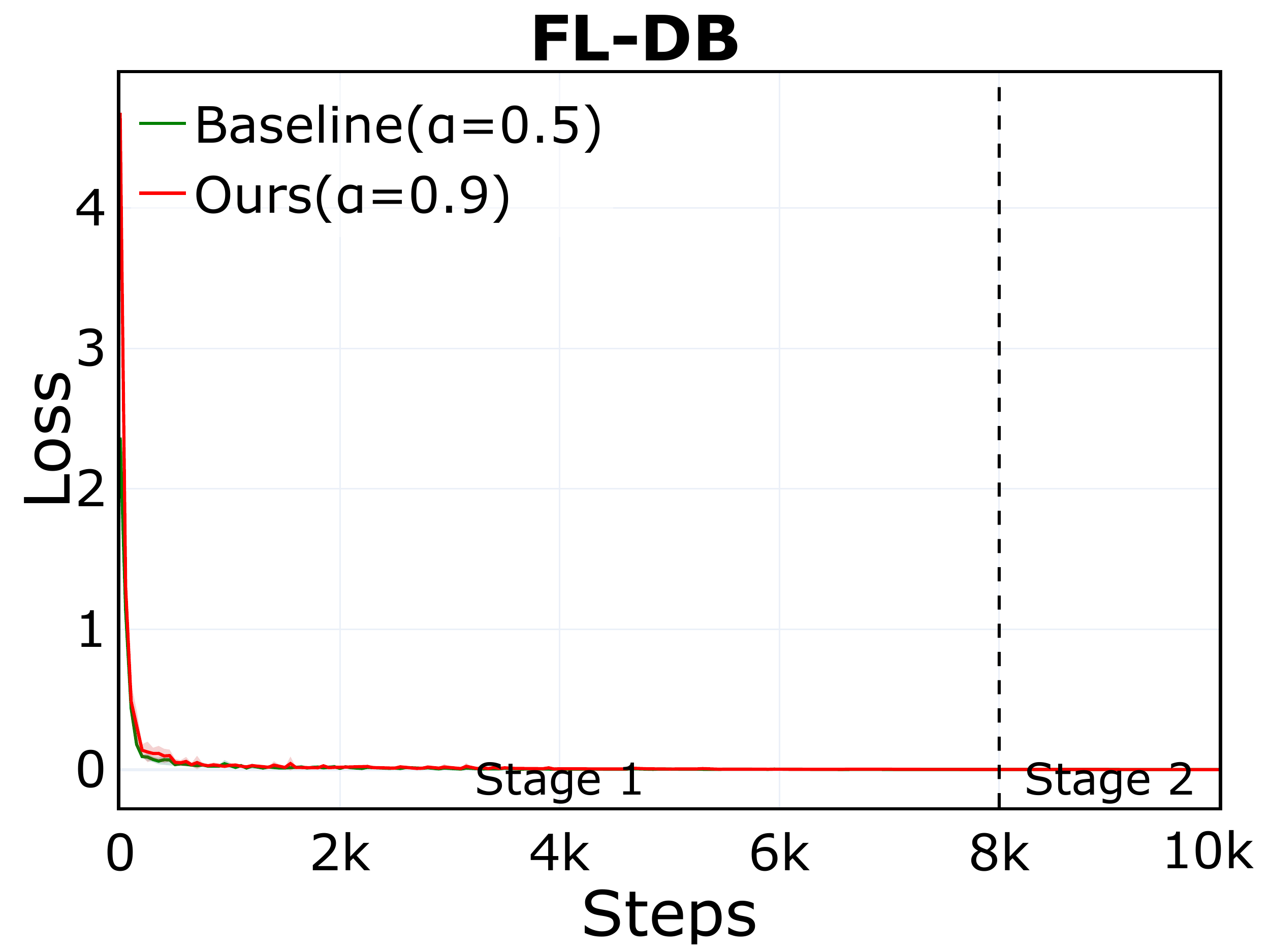} &
     \includegraphics[width=0.2\textwidth]{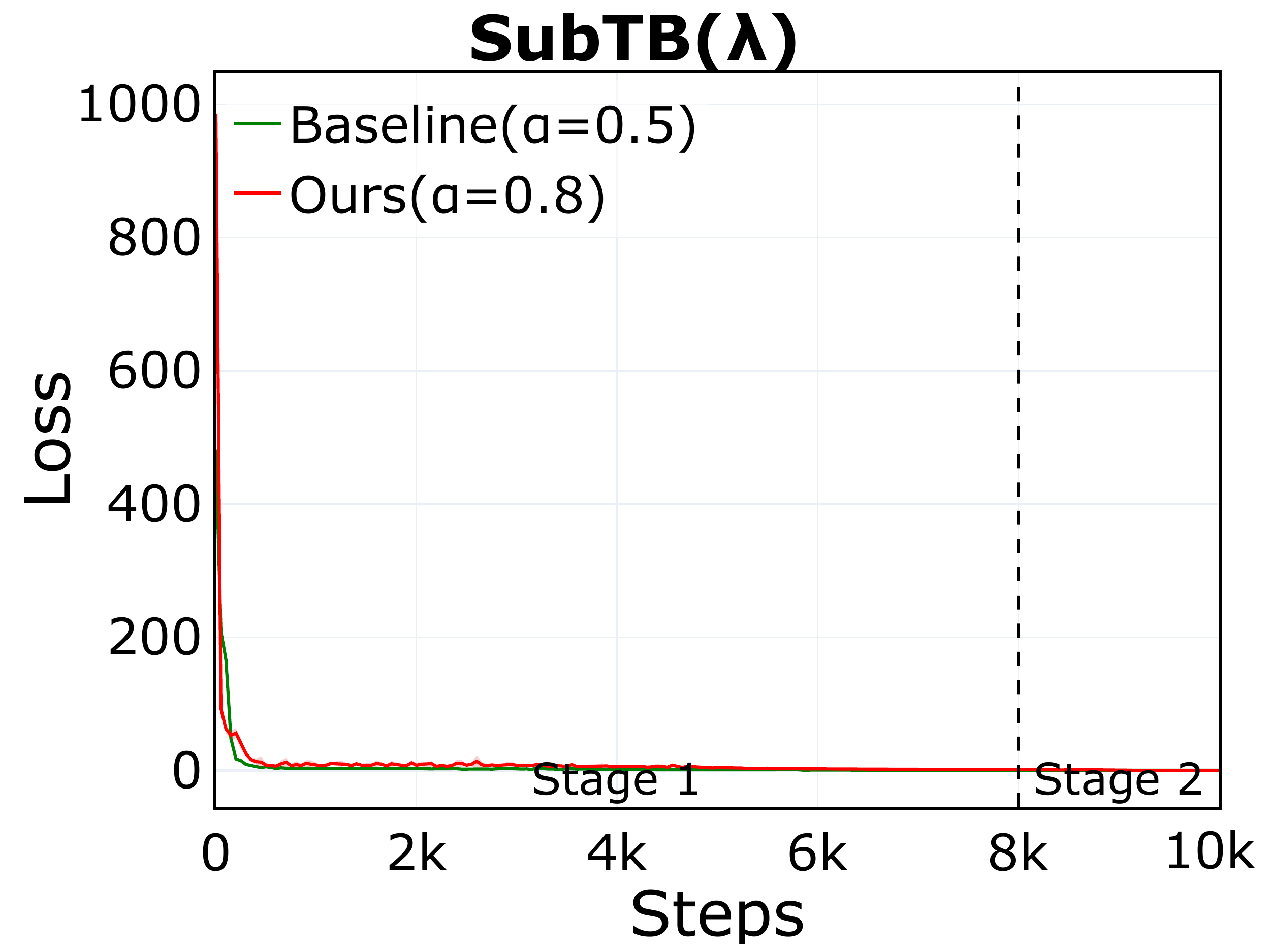} &
     \includegraphics[width=0.2\textwidth]{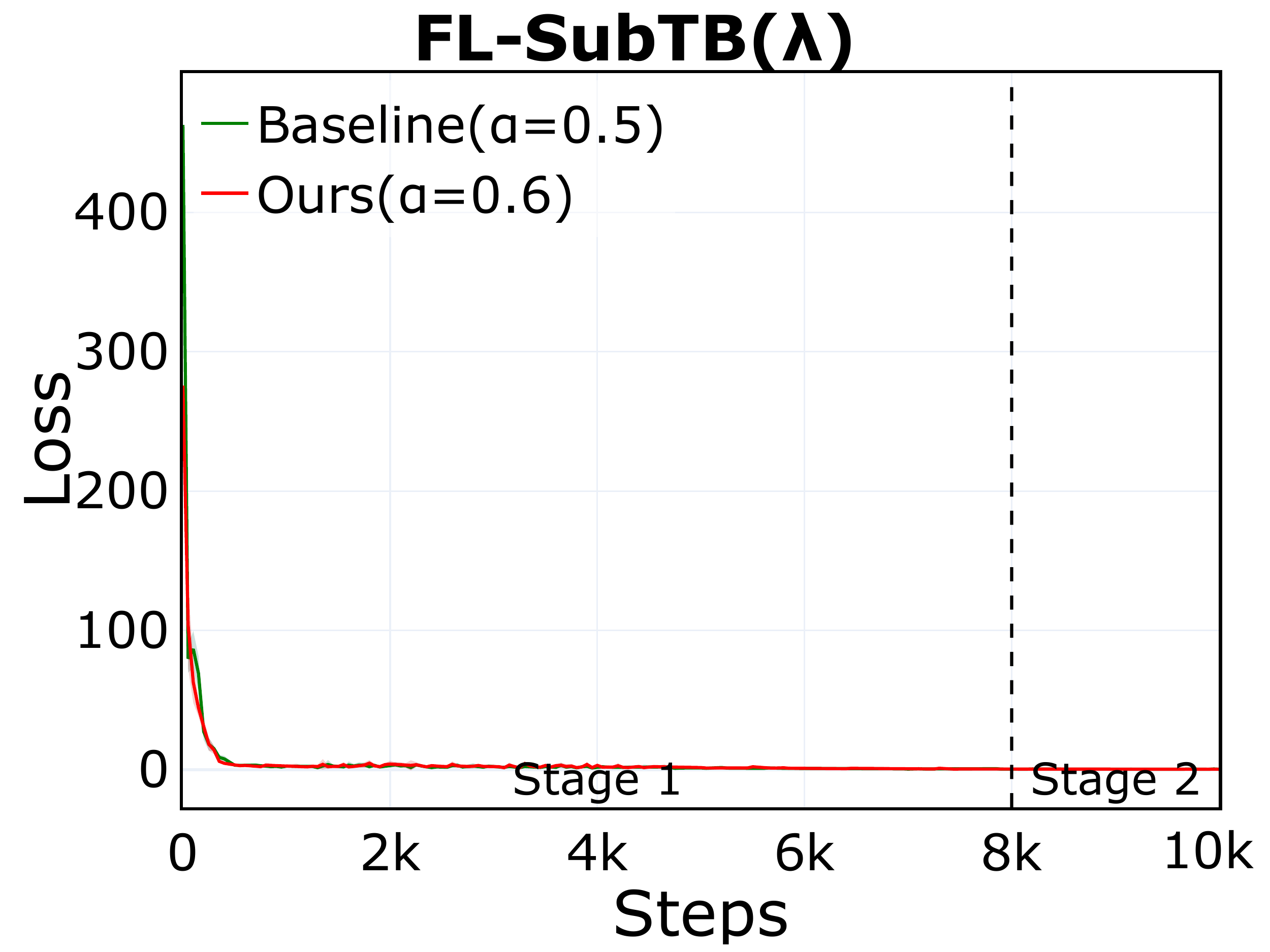}&
    \includegraphics[width=0.2\textwidth]{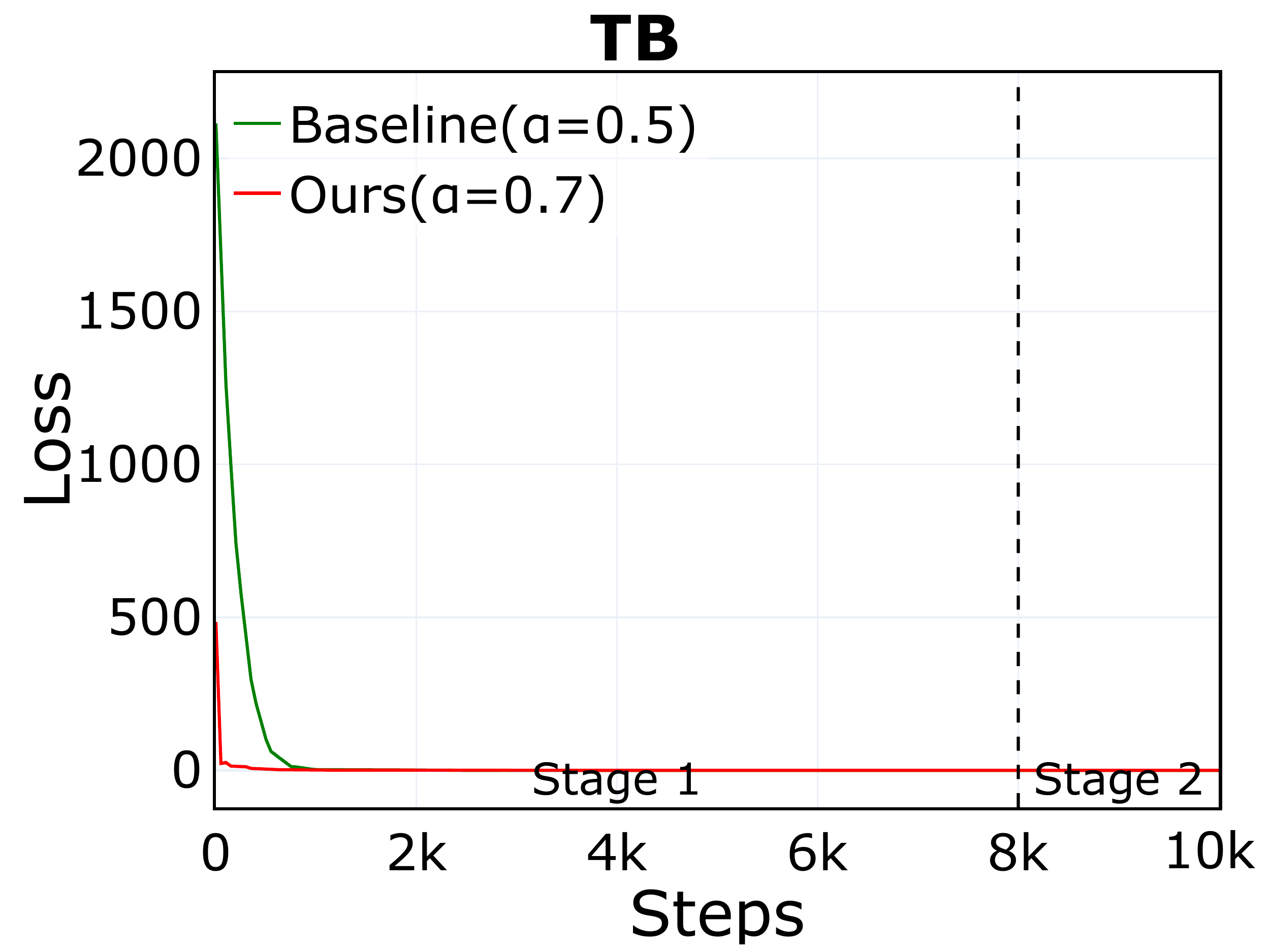} \\
    \small{(k)} DB (large) &
    \small{(l)} FL-DB (large) &
    \small{(m)} SubTB($\lambda$) (large) &
    \small{(n)} FL-SubTB($\lambda$) (large) &
    \small{(o)} TB (large) \\
  \end{tabular}
  \caption{\textbf{Loss} vs Training Steps in \textbf{Set Generation} across different objectives and set sizes.}
  \label{fig:set_metric_loss}
\end{figure}

\begin{figure}[htbp]
  \centering
  \setlength{\tabcolsep}{0pt}
\begin{tabular}{@{}c@{\hspace{0pt}}c@{\hspace{0pt}}c@{\hspace{0pt}}c@{\hspace{0pt}}c@{}}
    \includegraphics[width=0.2\textwidth]{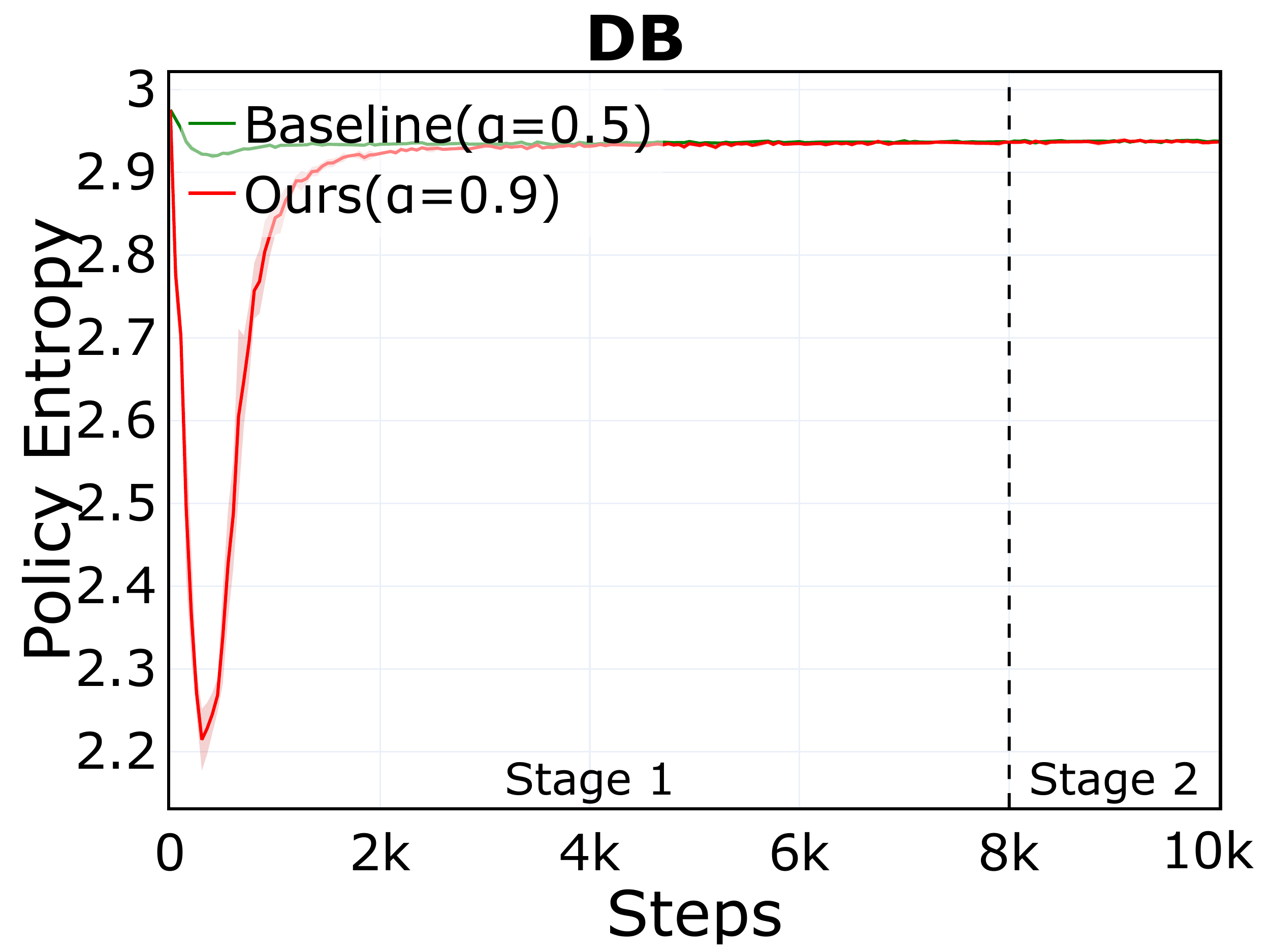} &
    \includegraphics[width=0.2\textwidth]{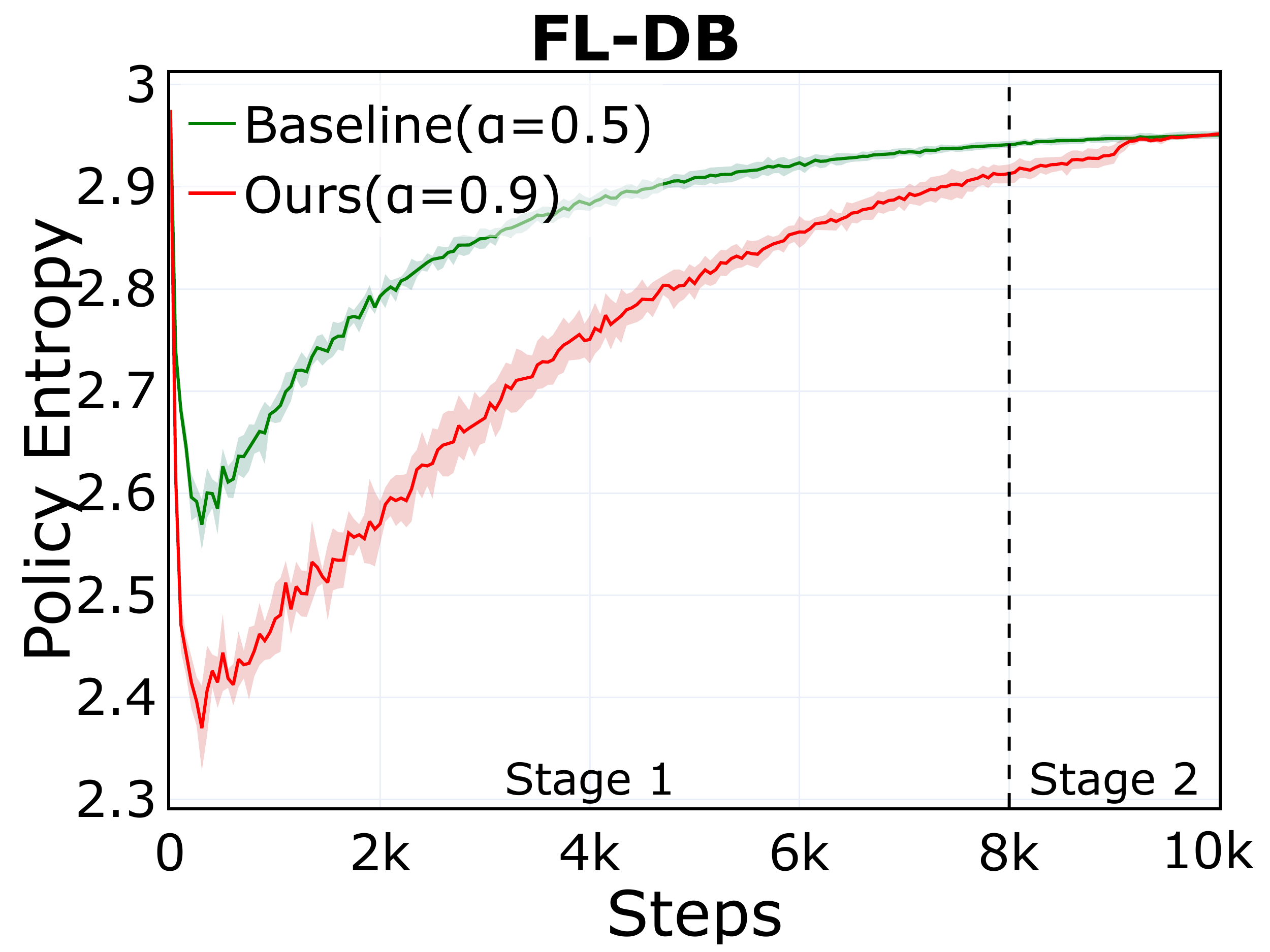} &
    \includegraphics[width=0.2\textwidth]{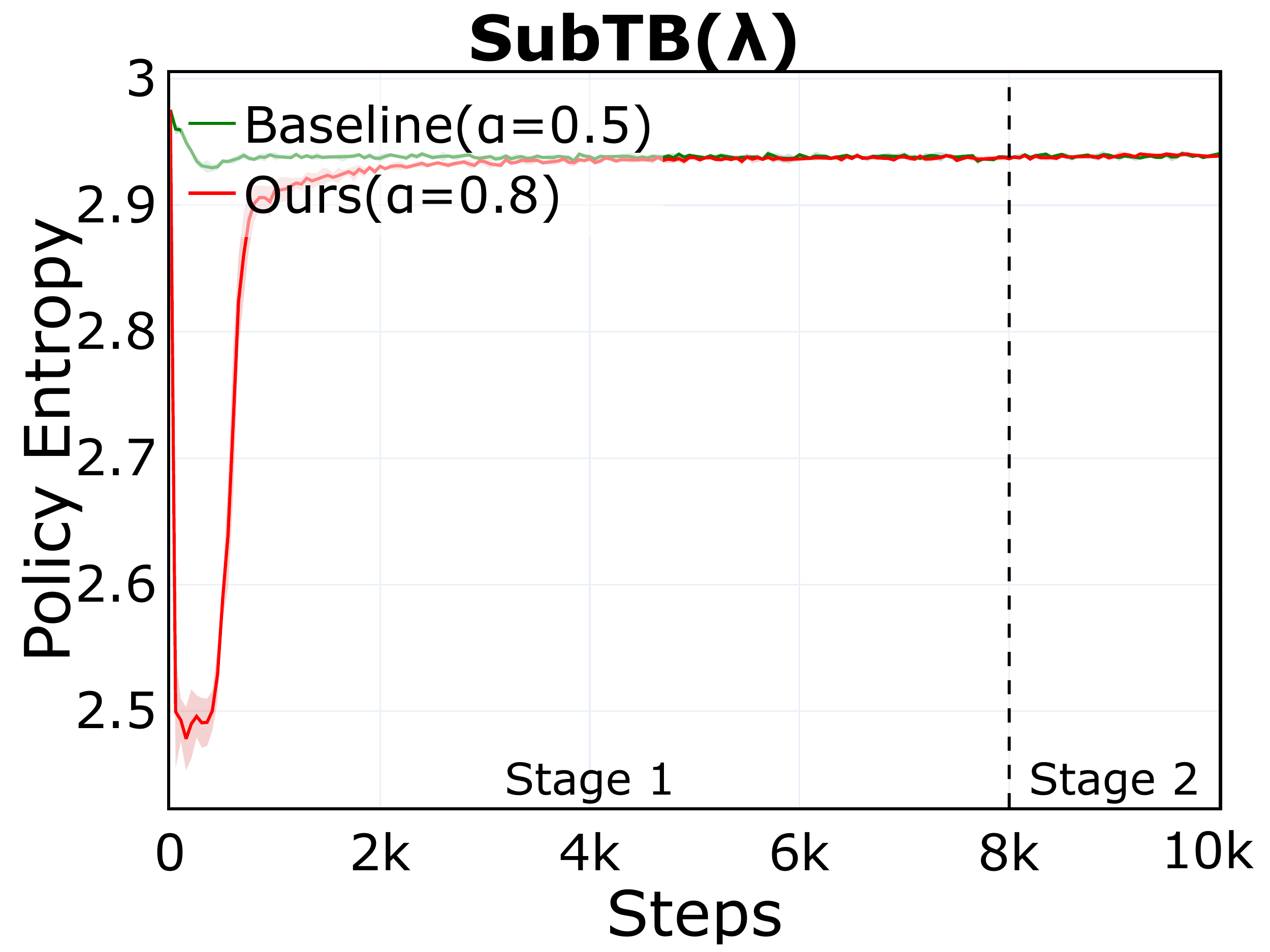} & 
    \includegraphics[width=0.2\textwidth]{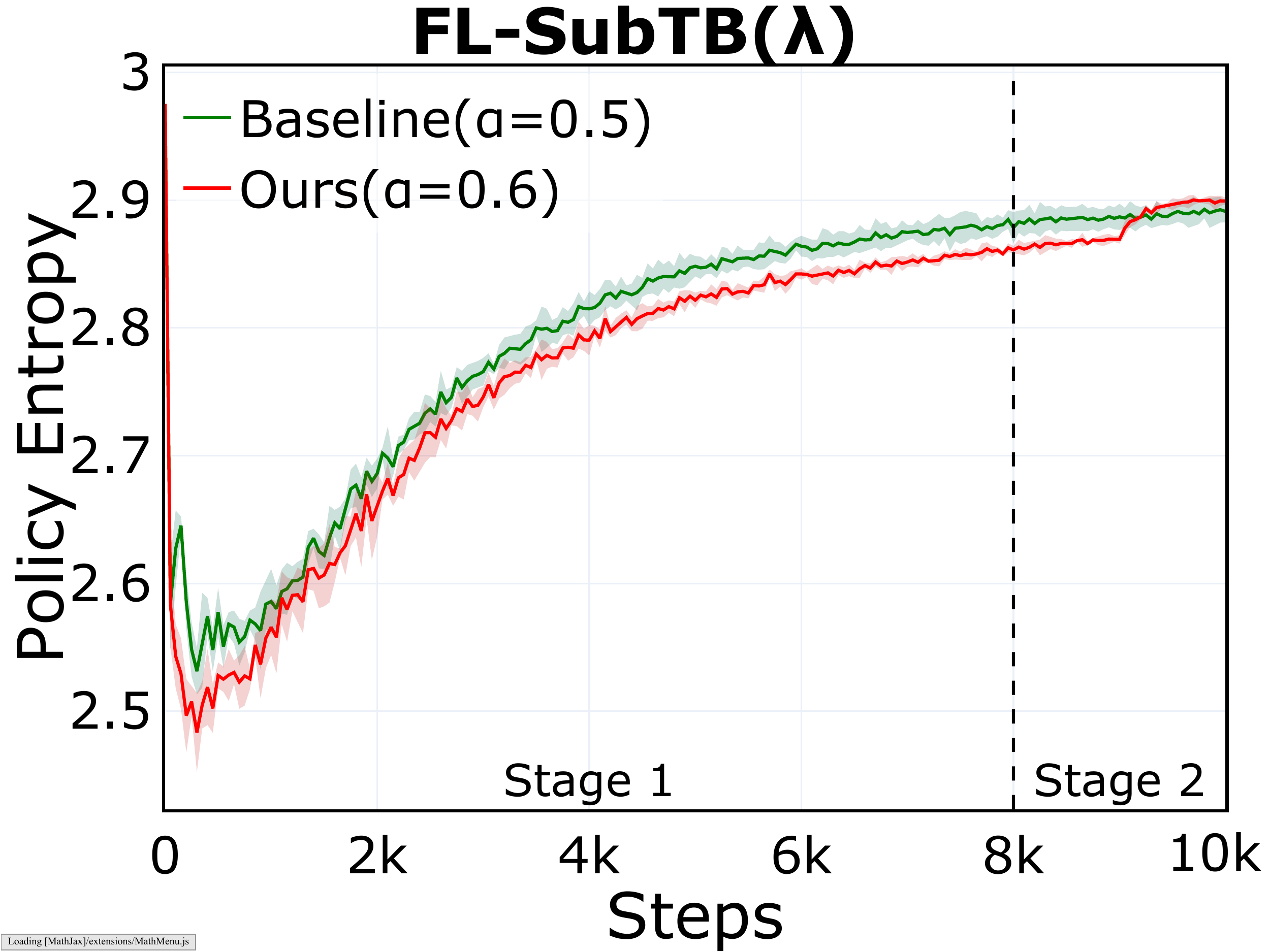} &
    \includegraphics[width=0.2\textwidth]{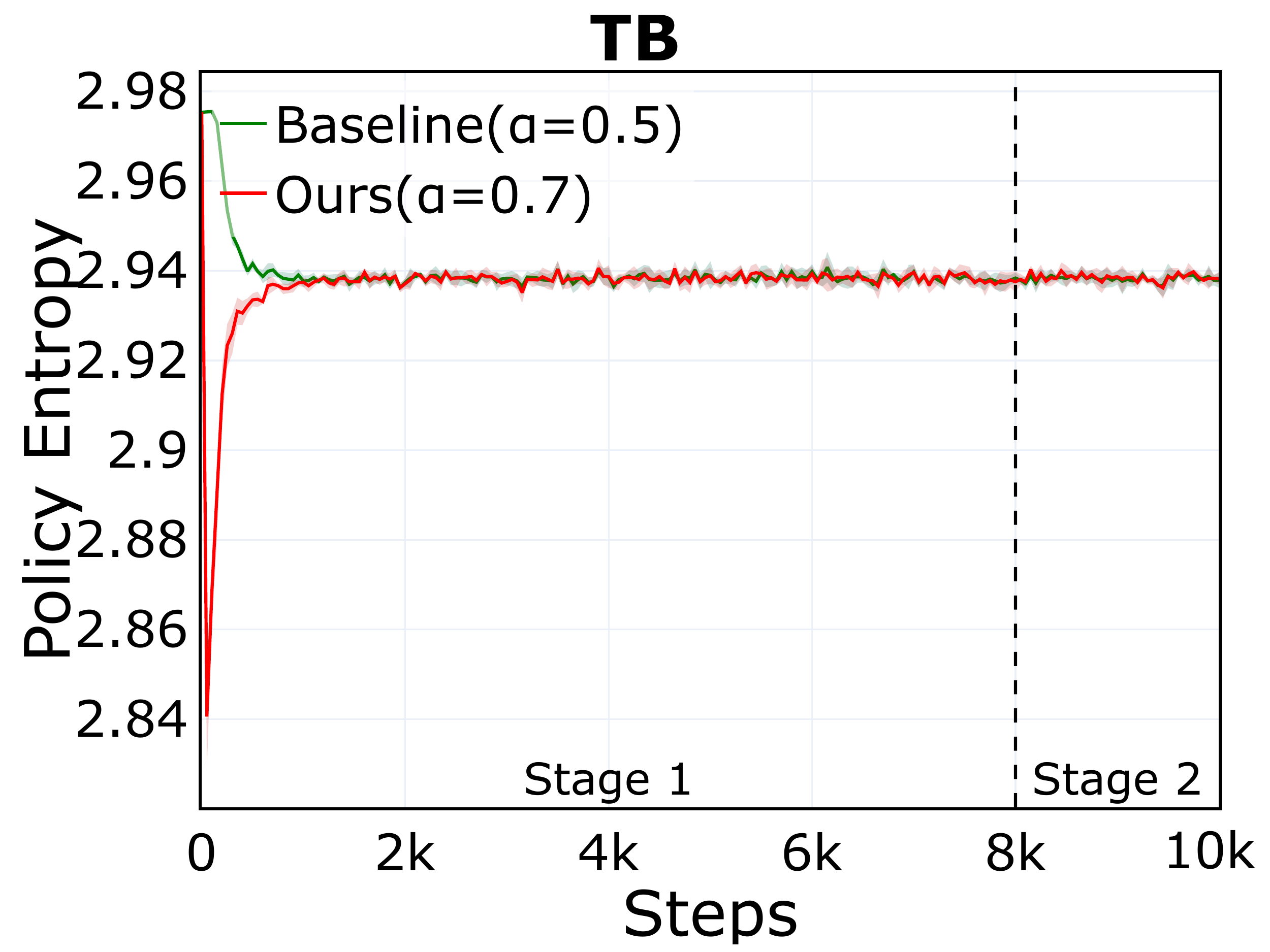} \\
    \small{(a)} DB (small) &
    \small{(b)} FL-DB (small) &
    \small{(c)} SubTB($\lambda$) (small) &
    \small{(d)} FL-SubTB($\lambda$) (small) &
    \small{(e)} TB (small) \\
    \includegraphics[width=0.2\textwidth]{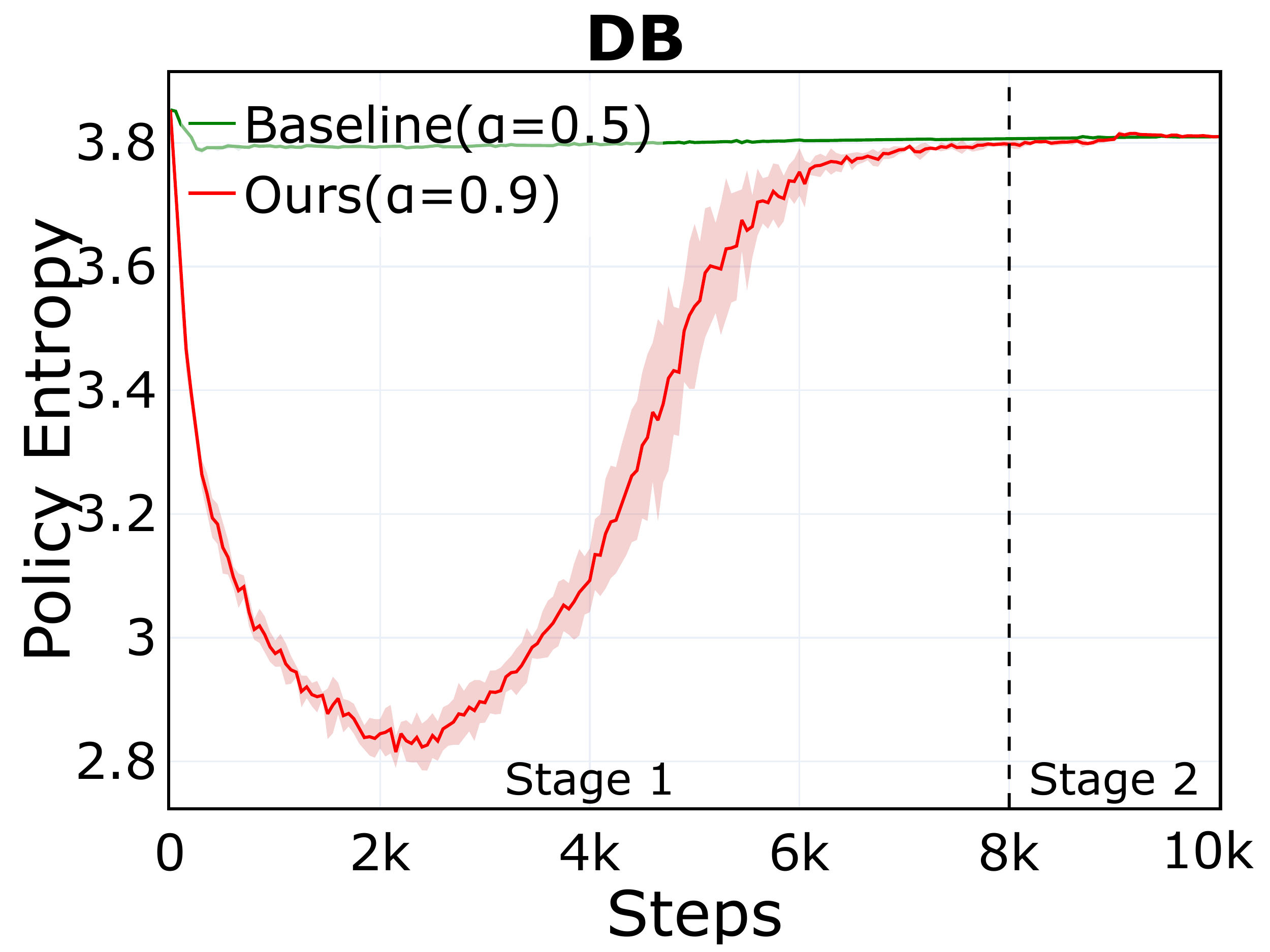} &
    \includegraphics[width=0.2\textwidth]{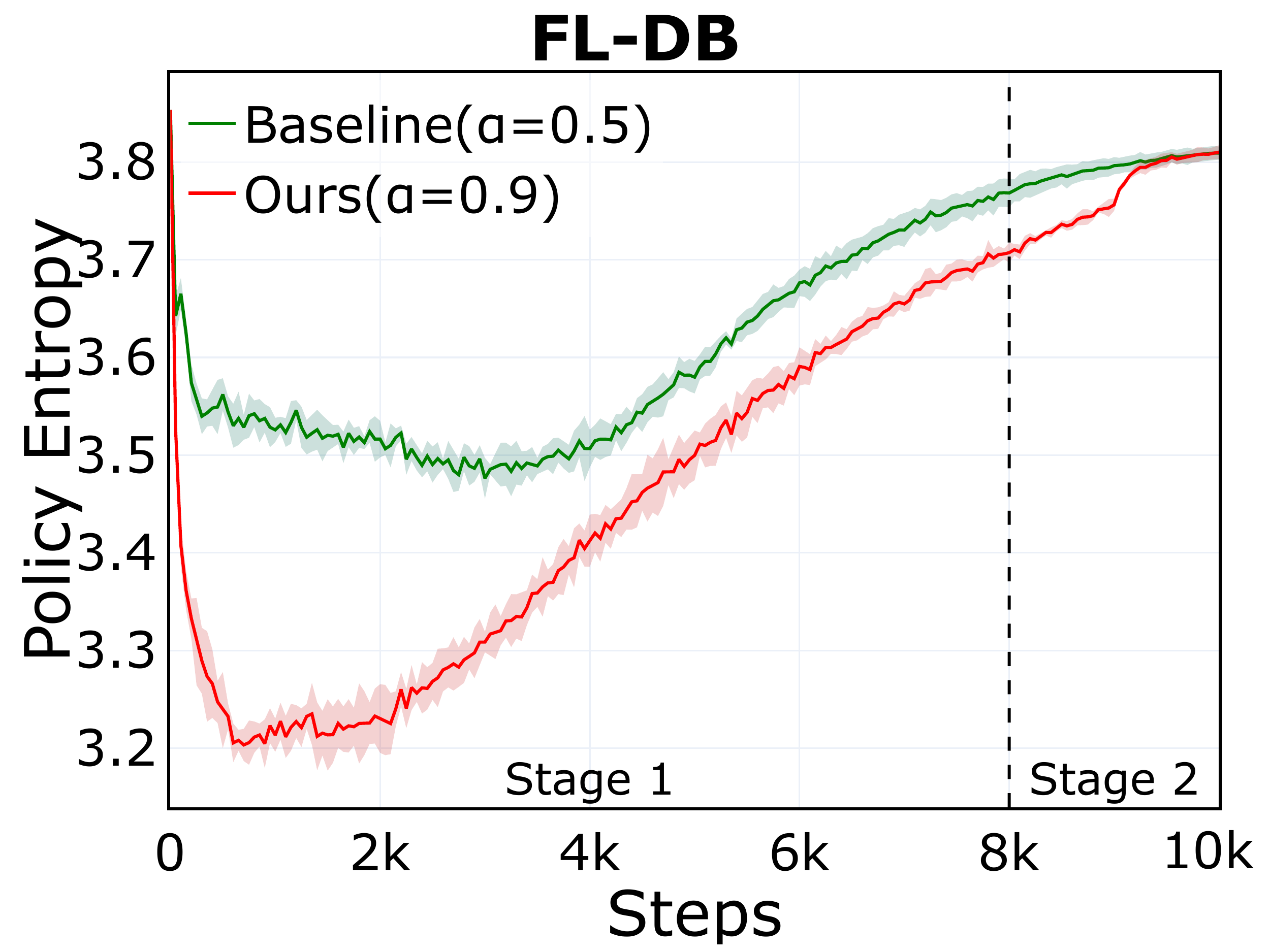} &
     \includegraphics[width=0.2\textwidth]{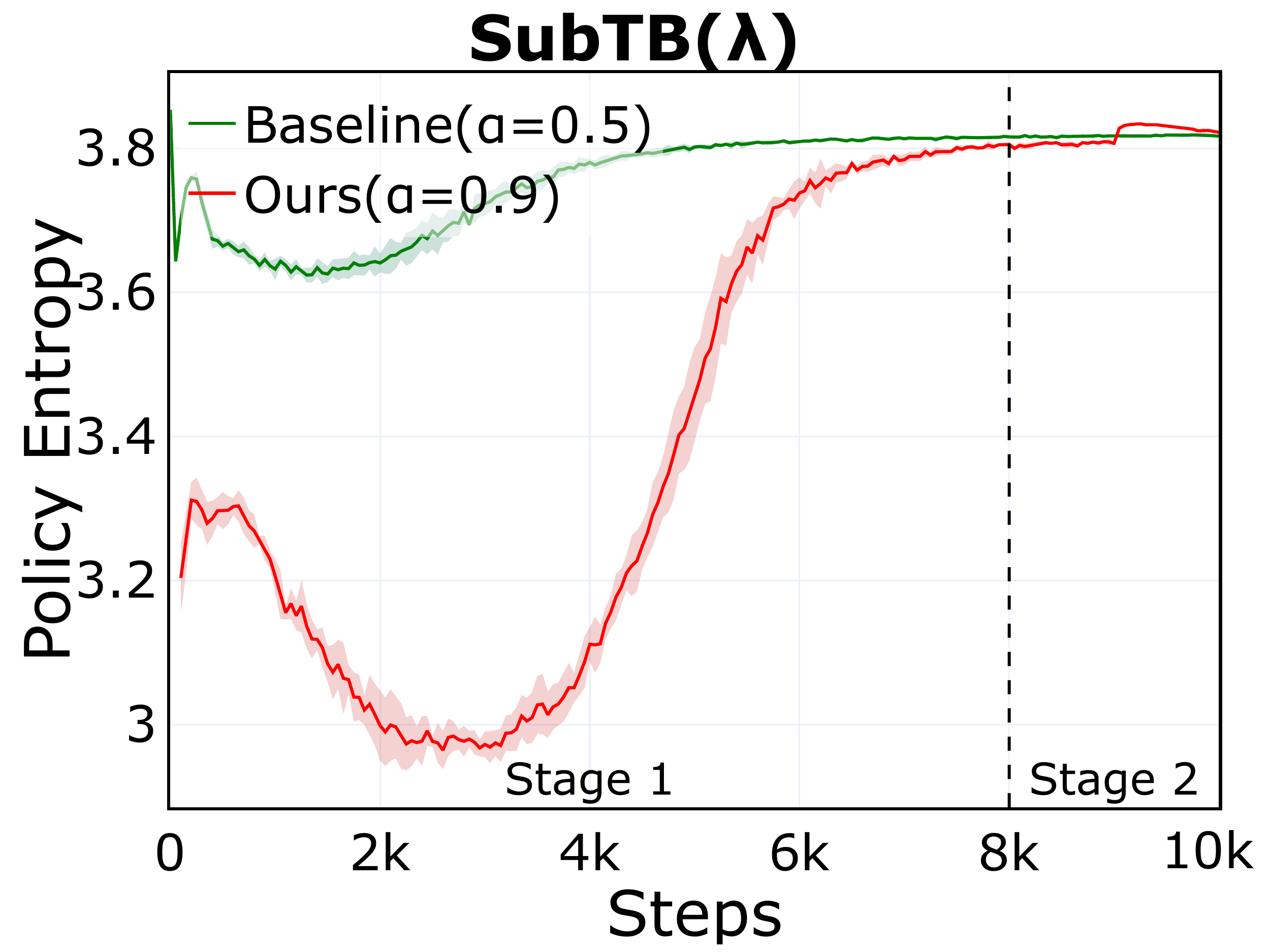}&
    \includegraphics[width=0.2\textwidth]{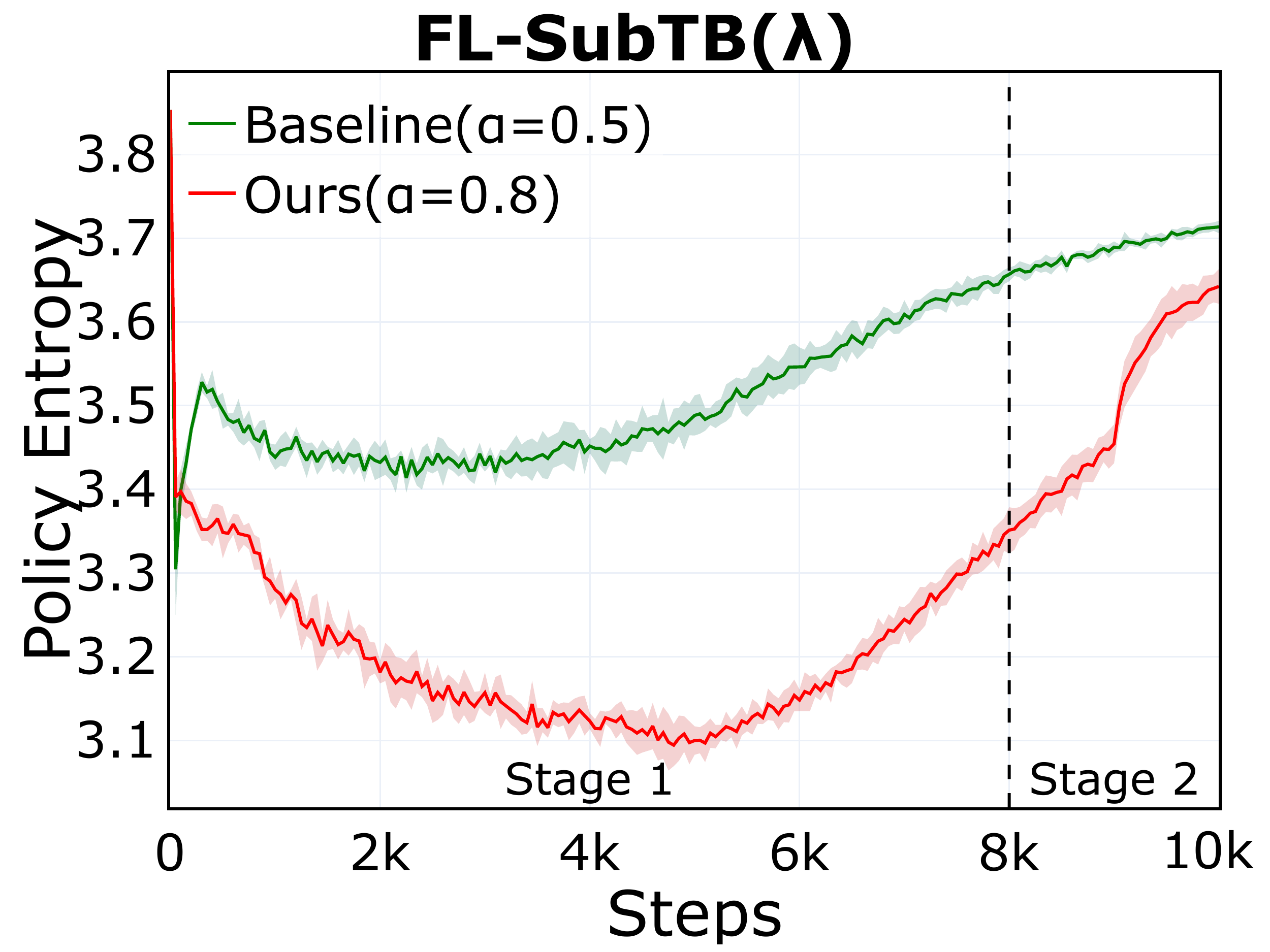} &
    \includegraphics[width=0.2\textwidth]{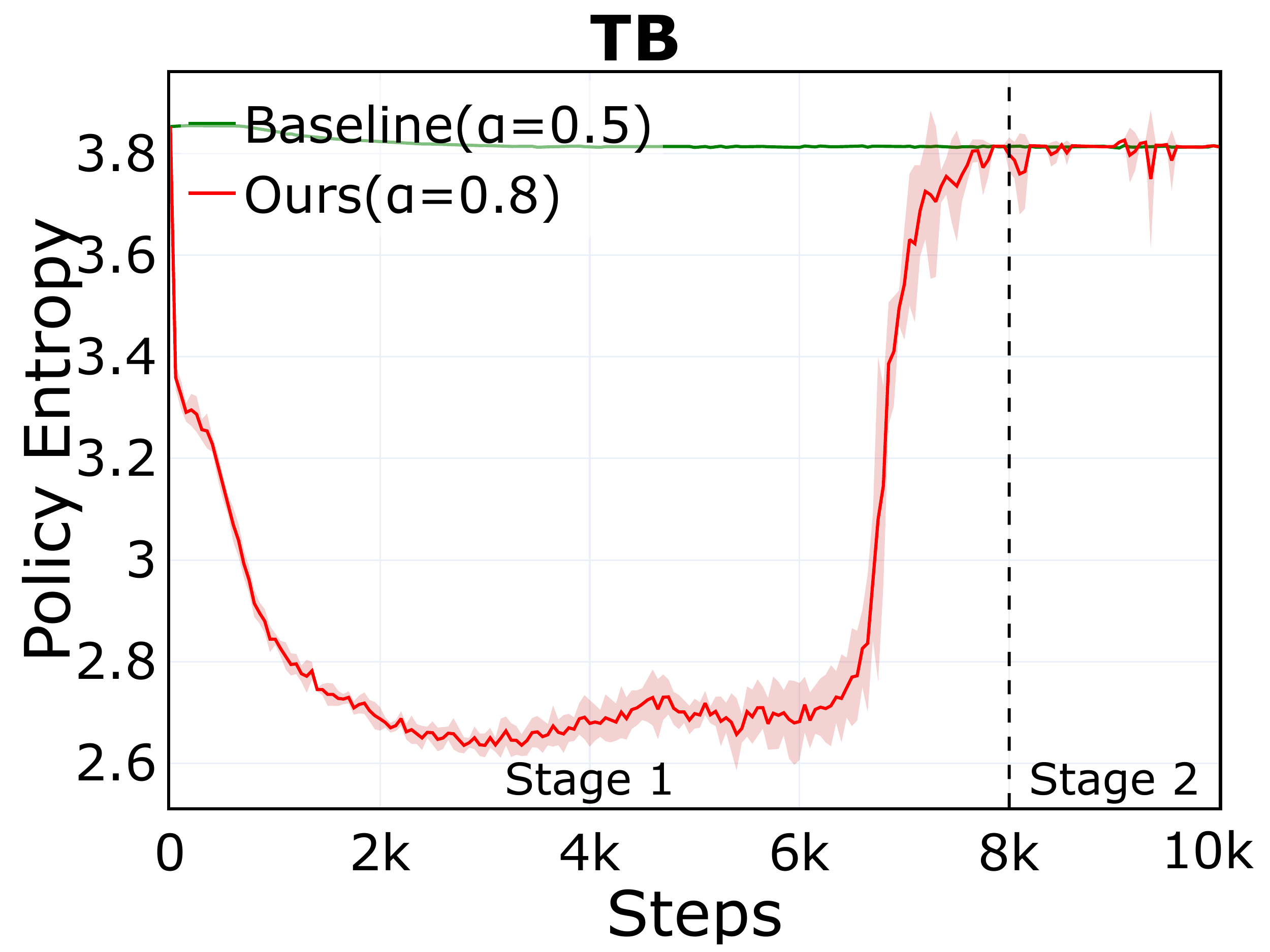} \\
    \small{(f)} DB (medium) &
    \small{(g)} FL-DB (medium) &
    \small{(h)} SubTB($\lambda$) (medium) &
    \small{(i)} FL-SubTB($\lambda$) (medium) &
    \small{(j)} TB (medium) \\
    \includegraphics[width=0.2\textwidth]{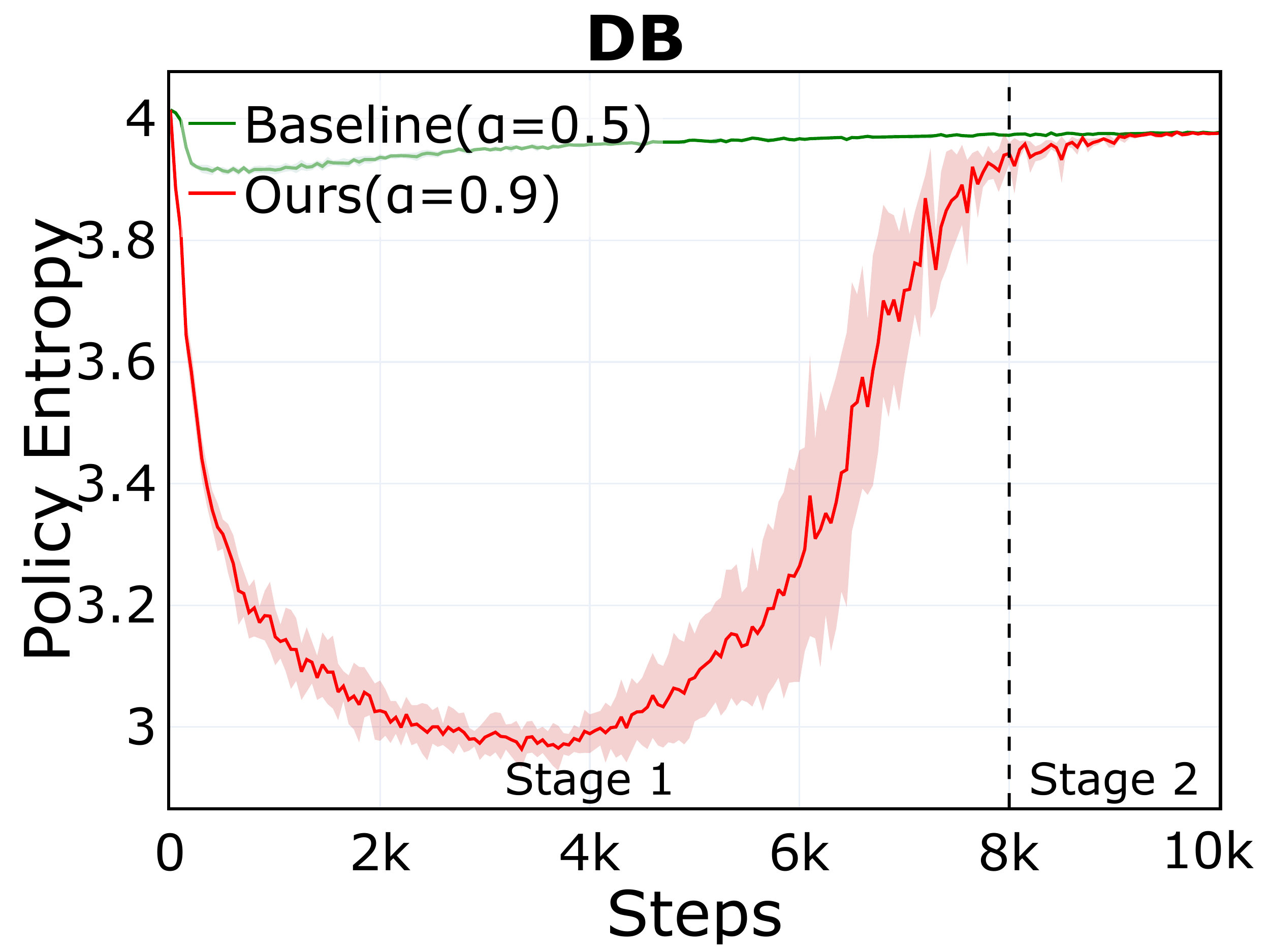} &
    \includegraphics[width=0.2\textwidth]{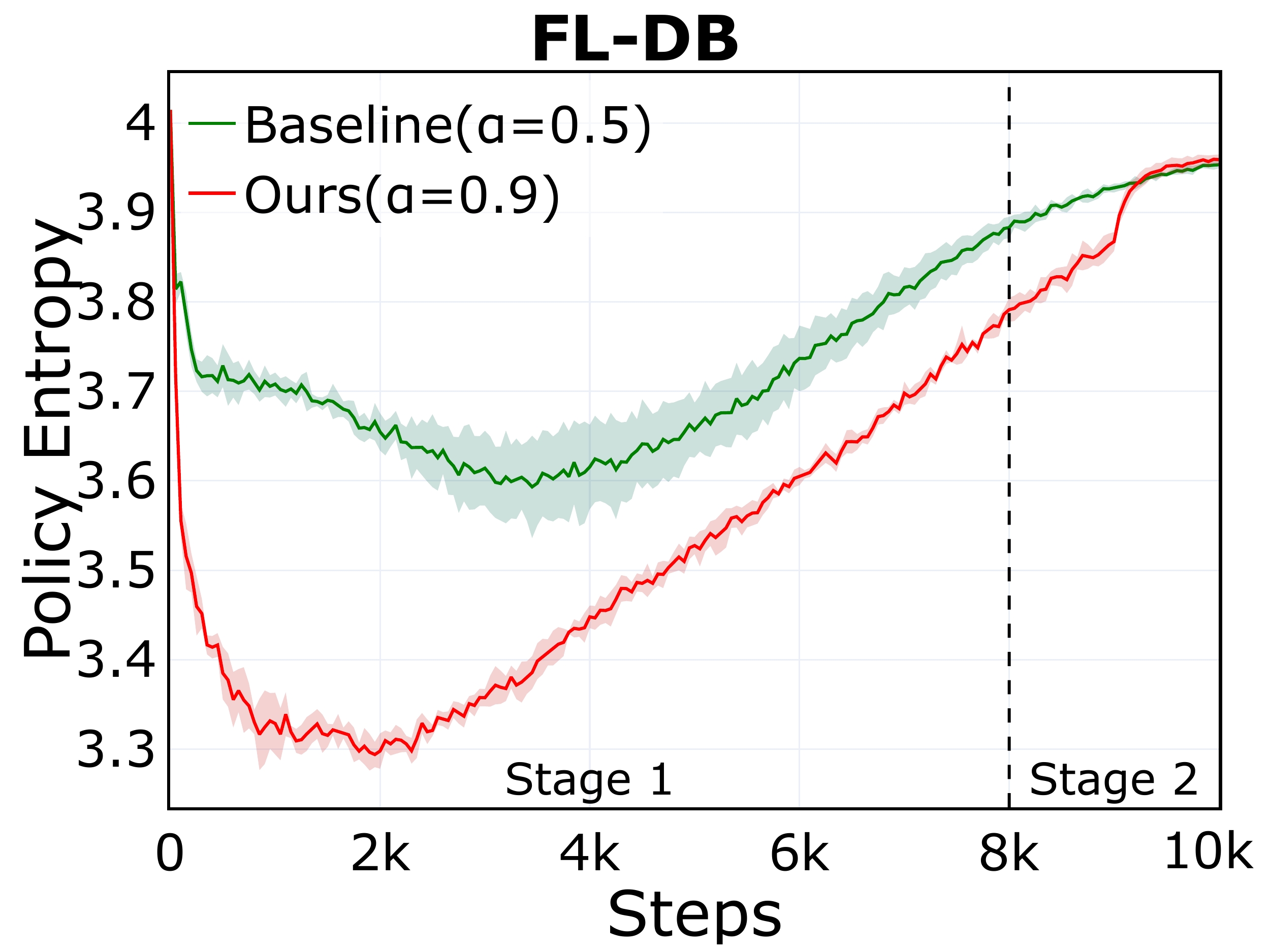} &
     \includegraphics[width=0.2\textwidth]{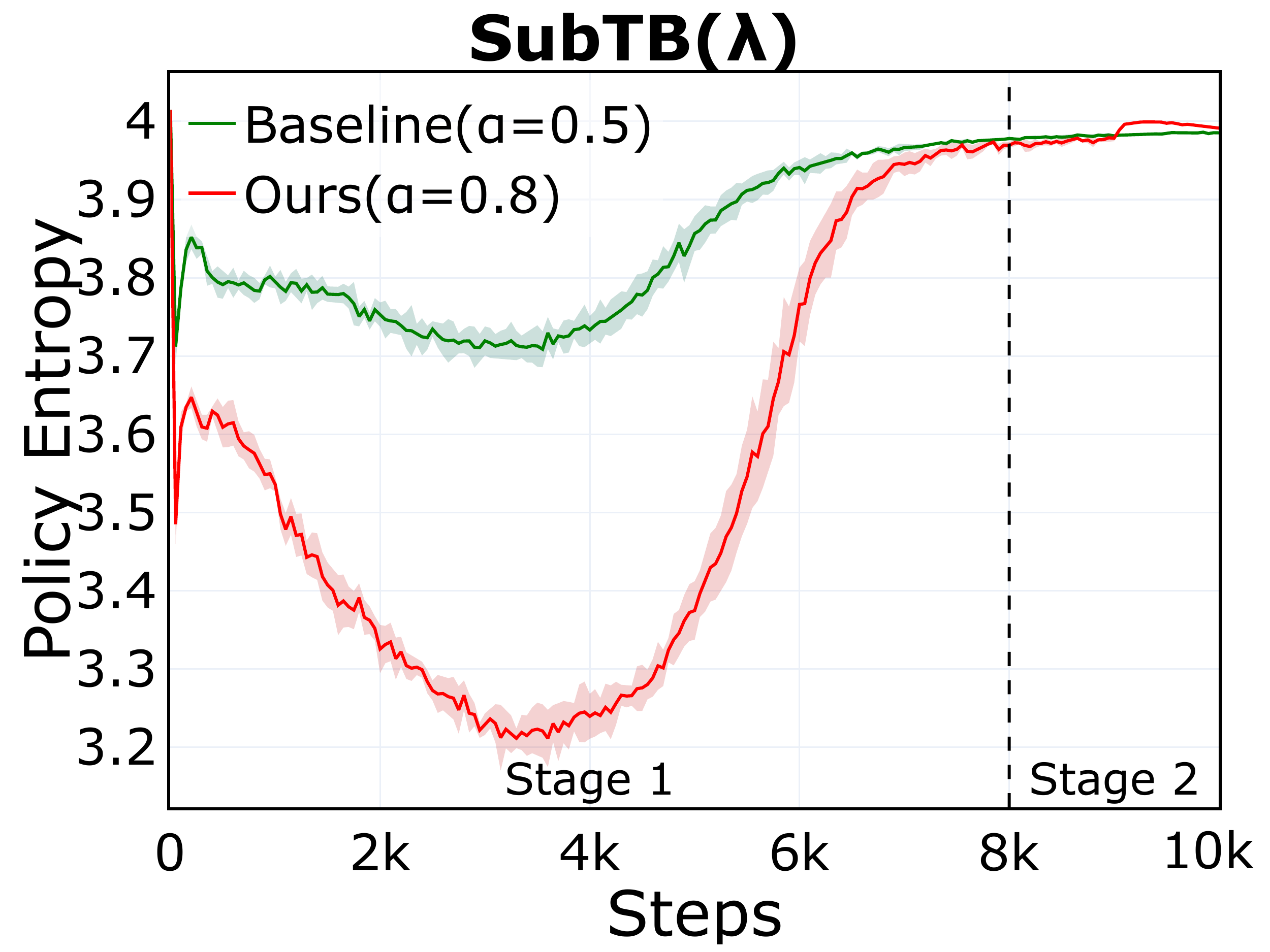} &
     \includegraphics[width=0.2\textwidth]{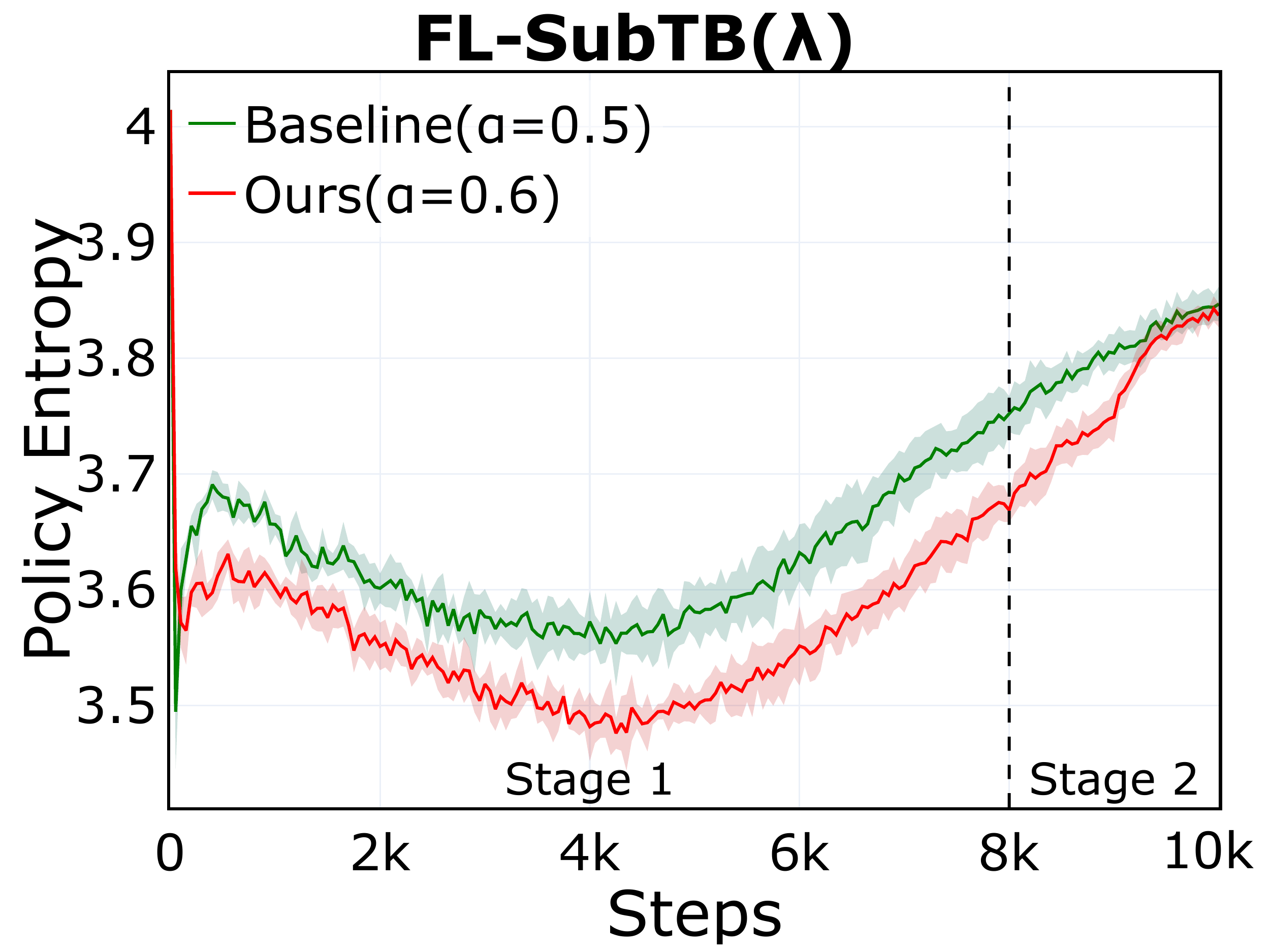}&
    \includegraphics[width=0.2\textwidth]{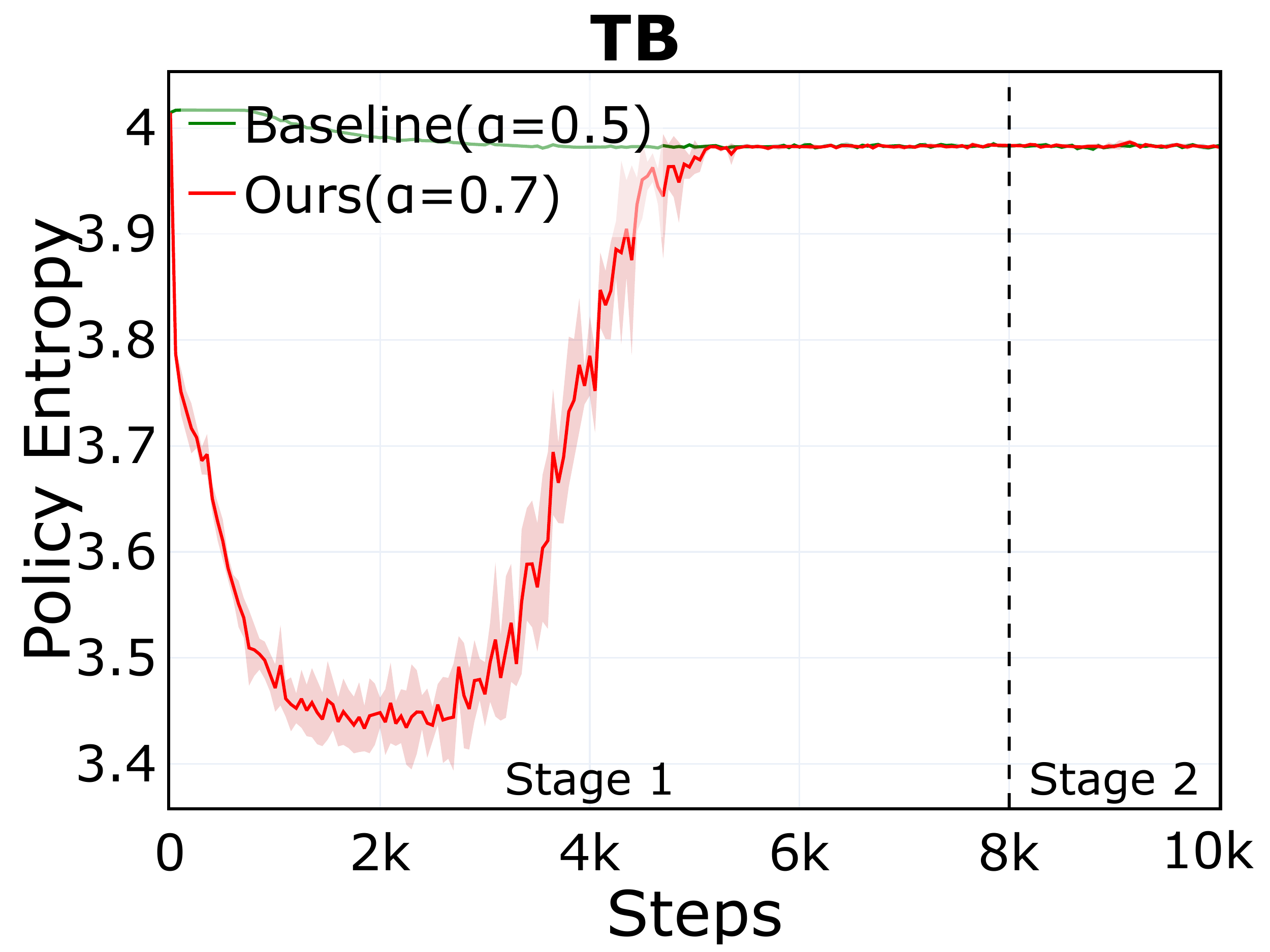} \\
    \small{(k)} DB (large) &
    \small{(l)} FL-DB (large) &
    \small{(m)} SubTB($\lambda$) (large) &
    \small{(n)} FL-SubTB($\lambda$) (large) &
    \small{(o)} TB (large)
  \end{tabular}
  \caption{\textbf{Forward Policy Entropy} vs Training Steps in \textbf{Set Generation} across different objectives and set sizes.}
  \label{fig:set_metric_forward_policy_entropy_eval}
\end{figure}

\begin{figure}[htbp]
  \centering
  \setlength{\tabcolsep}{0pt}
  \begin{tabular}{@{}c@{\hspace{0pt}}c@{\hspace{0pt}}c@{\hspace{0pt}}c@{\hspace{0pt}}c@{}}
    \includegraphics[width=.2\textwidth]{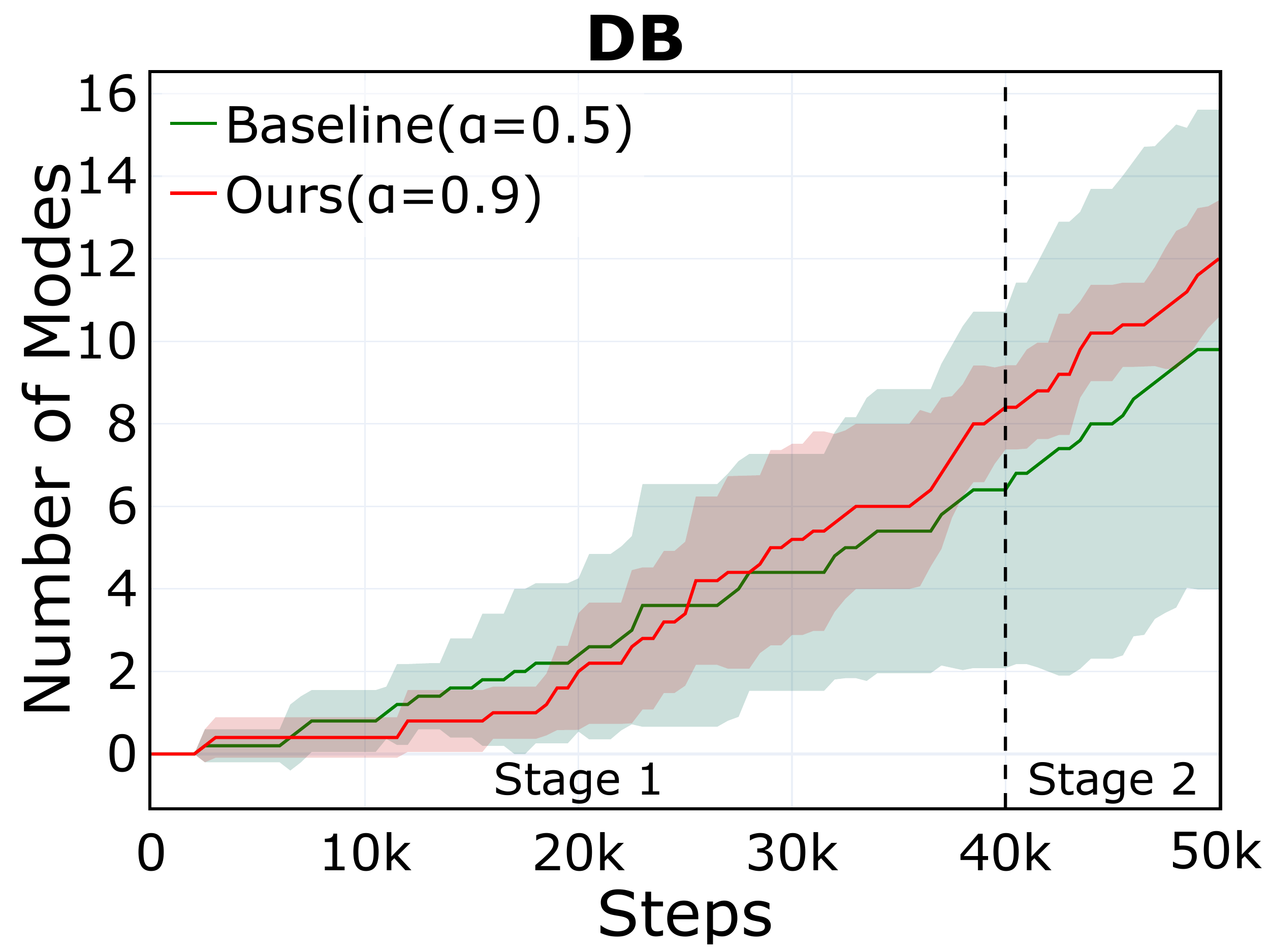} &
    \includegraphics[width=.2\textwidth]{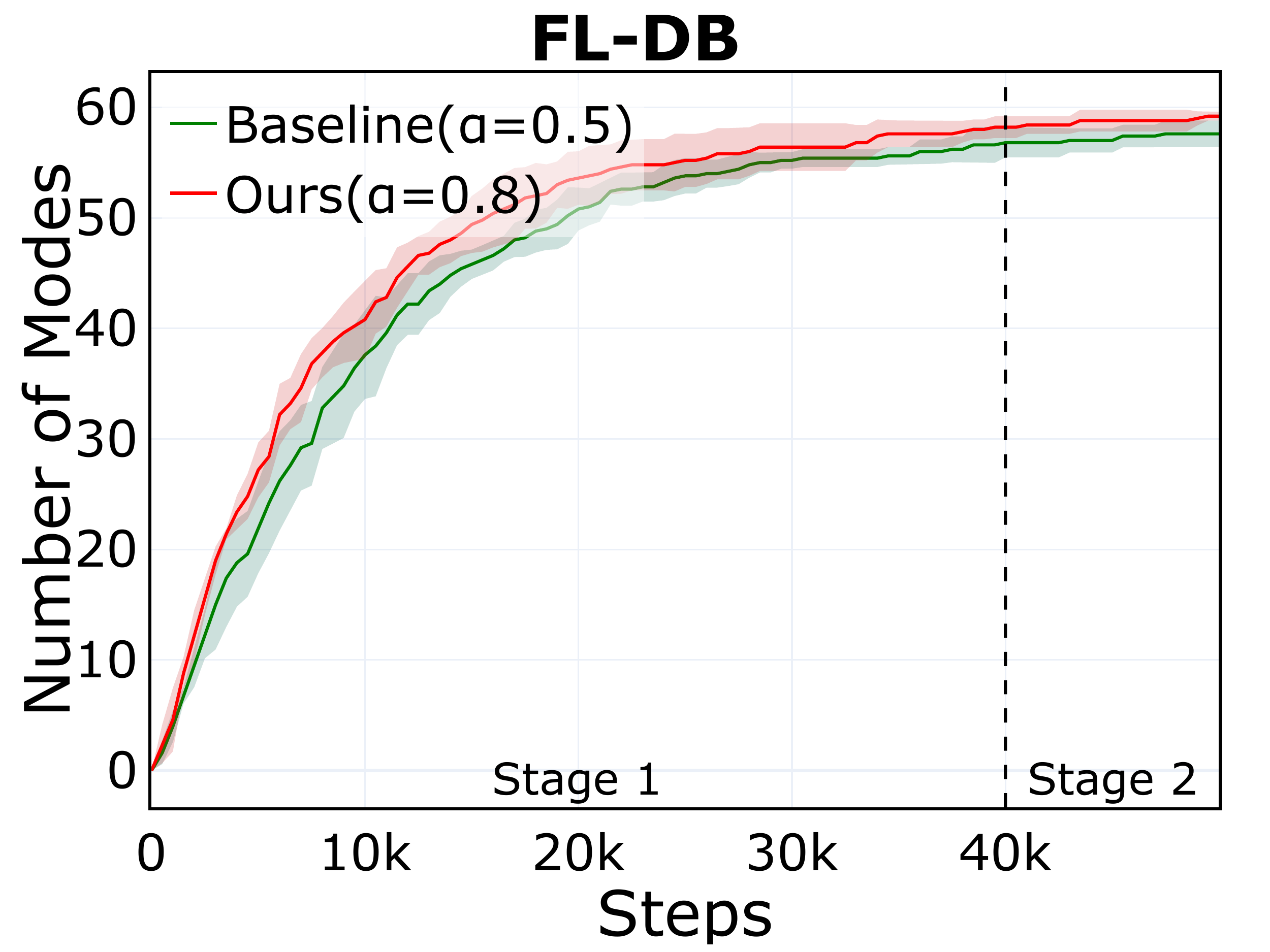} &
    \includegraphics[width=.2\textwidth]{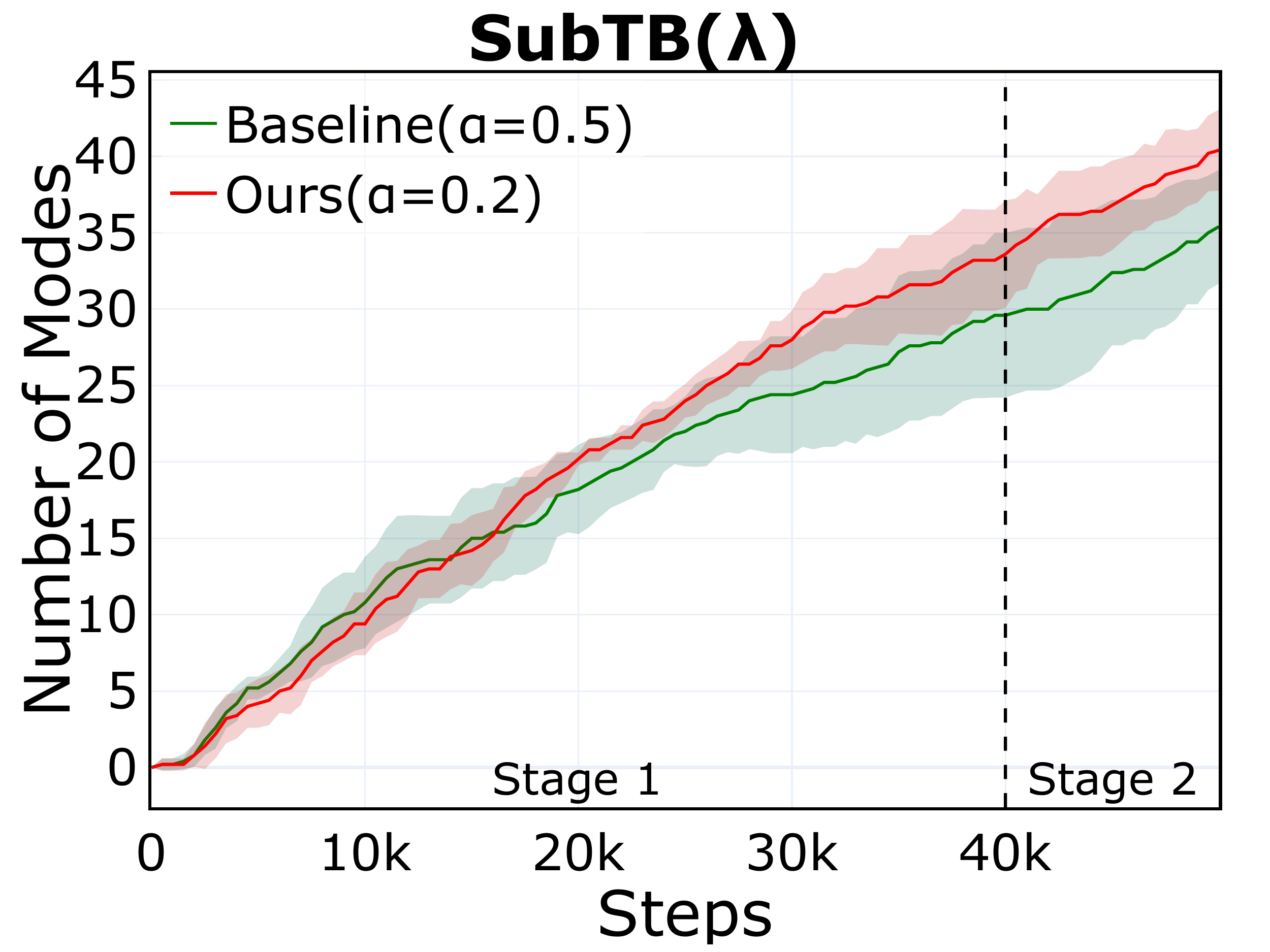} &
    \includegraphics[width=.2\textwidth]{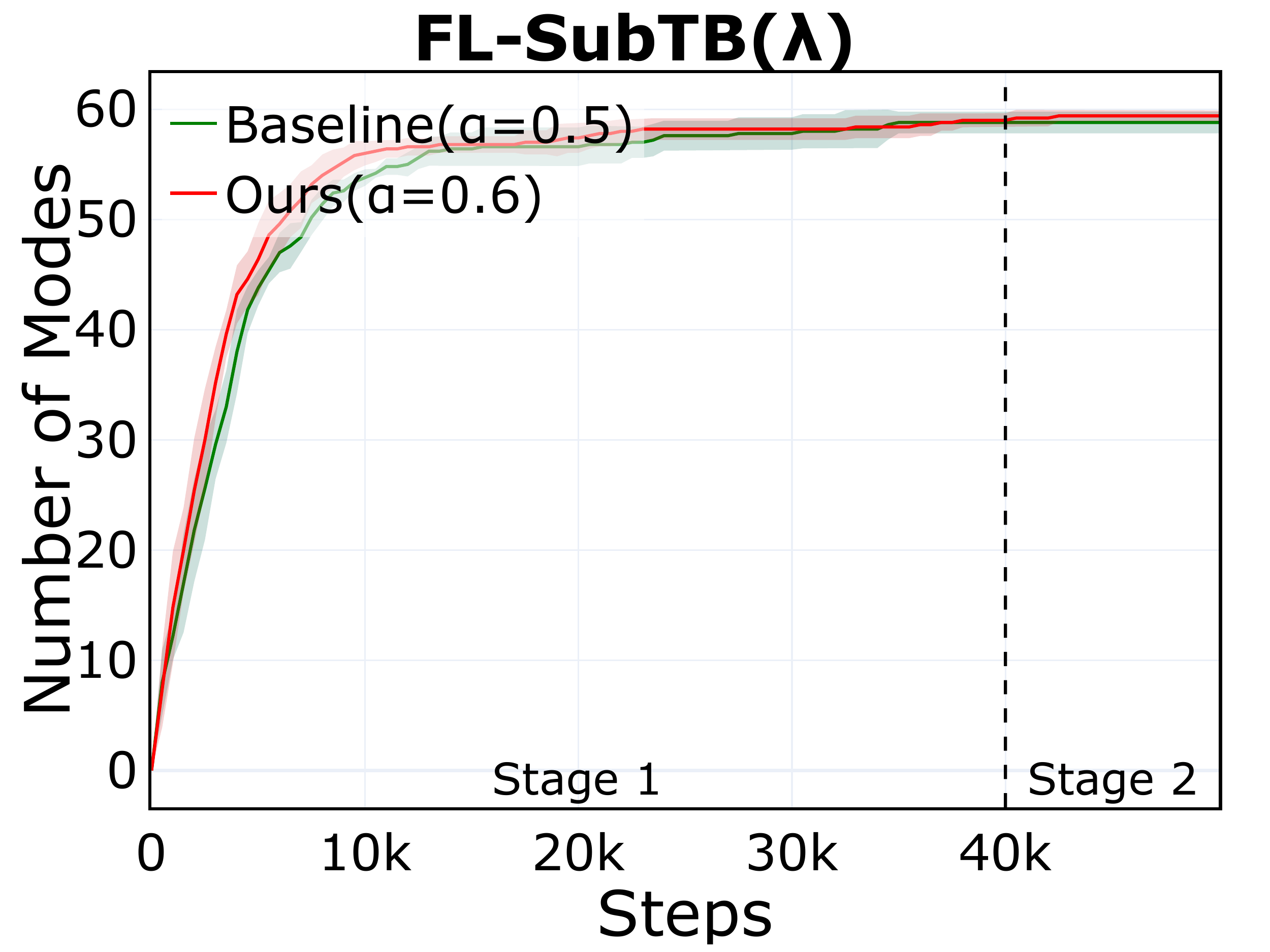} &
    \includegraphics[width=.2\textwidth]{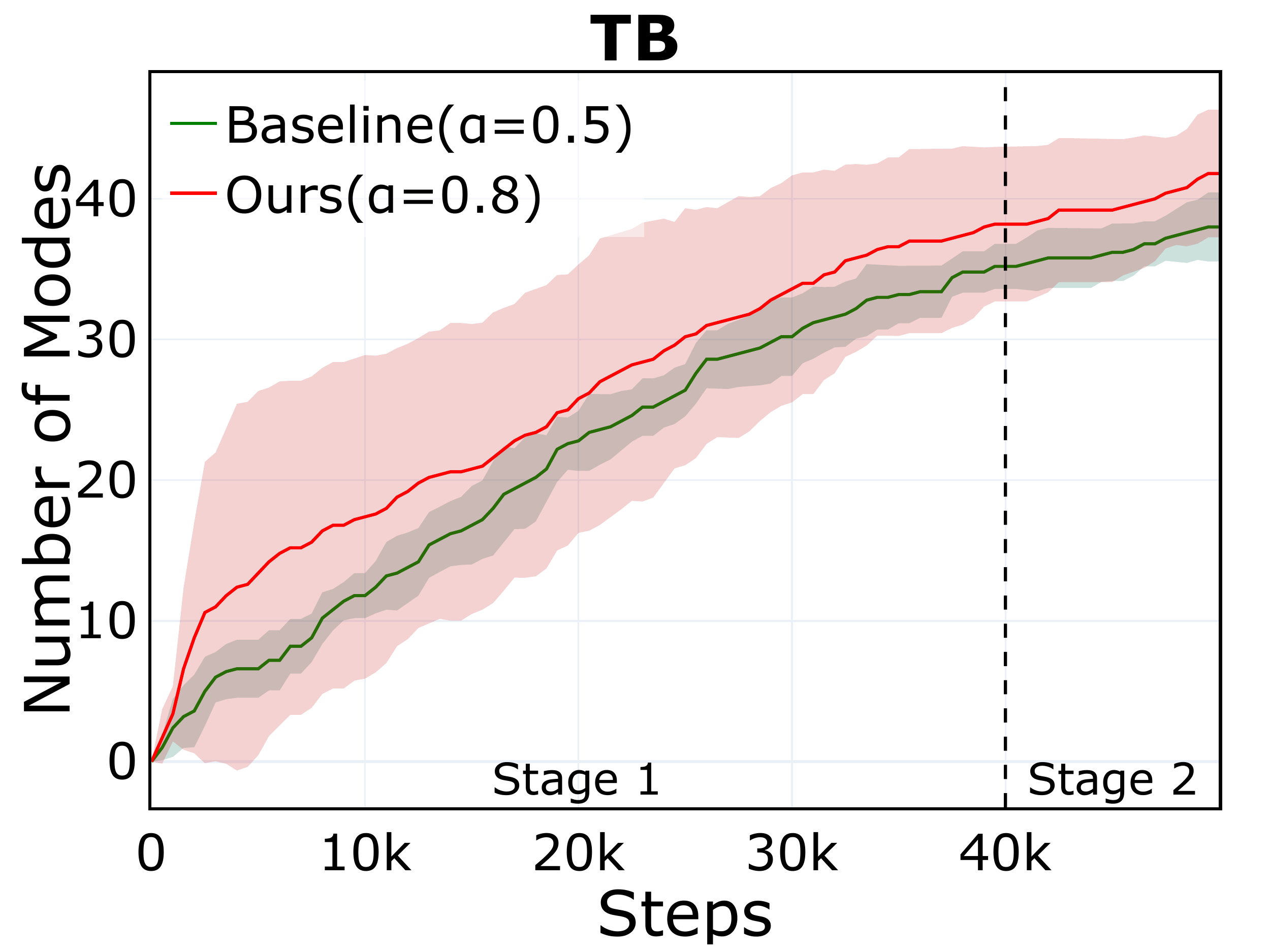} \\
  \end{tabular}
  \caption{\textbf{Number of Modes} vs Training Steps in \textbf{Bit Sequence Generation} across different objectives.}
  \label{fig:bit_metric_modes}
\end{figure}

\begin{figure}[htbp]
  \centering
  \setlength{\tabcolsep}{0pt}
  \begin{tabular}{@{}c@{\hspace{0pt}}c@{\hspace{0pt}}c@{\hspace{0pt}}c@{\hspace{0pt}}c@{}}
    \includegraphics[width=.2\textwidth]{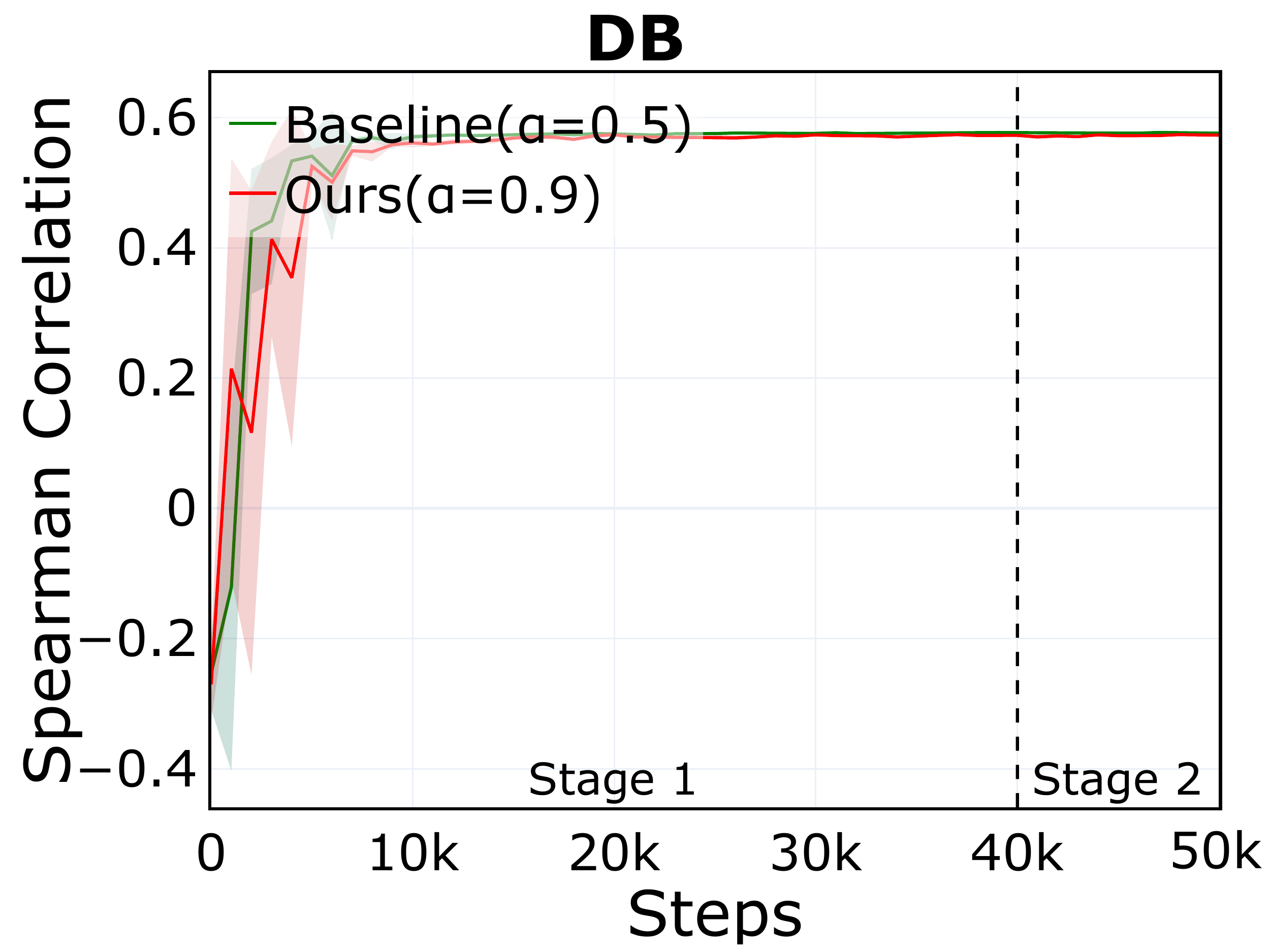} &
    \includegraphics[width=.2\textwidth]{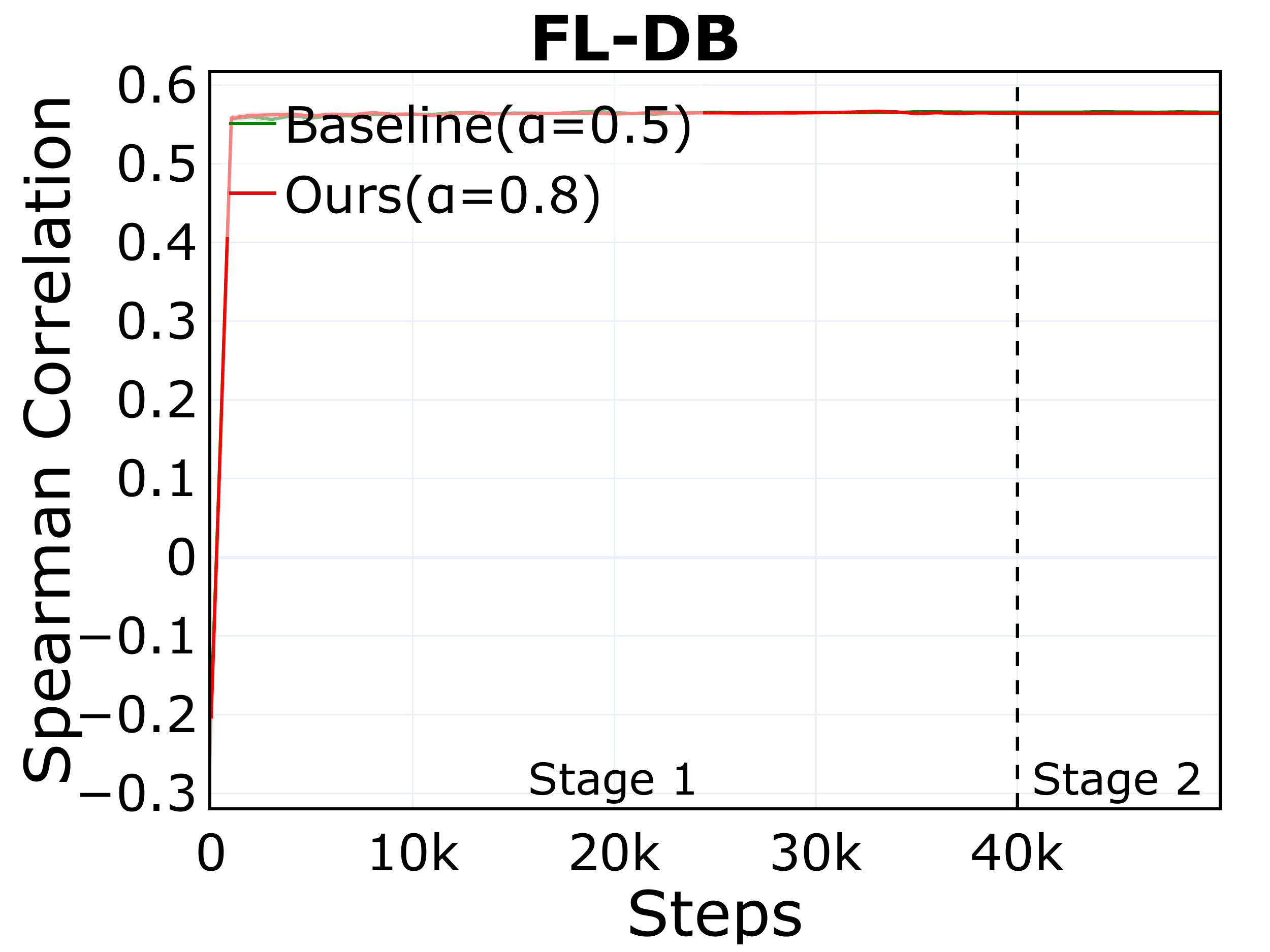} &
    \includegraphics[width=.2\textwidth]{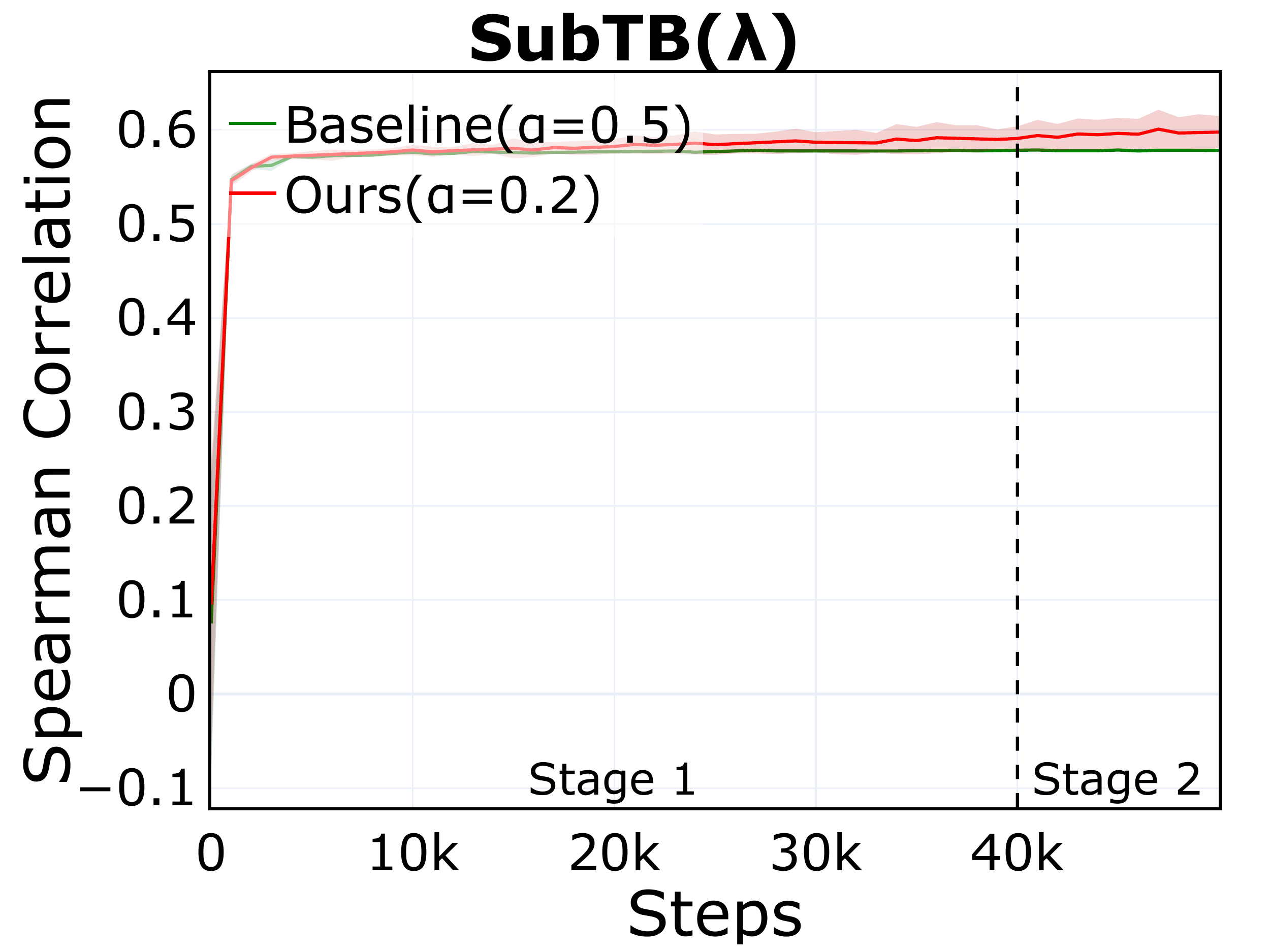} &
    \includegraphics[width=.2\textwidth]{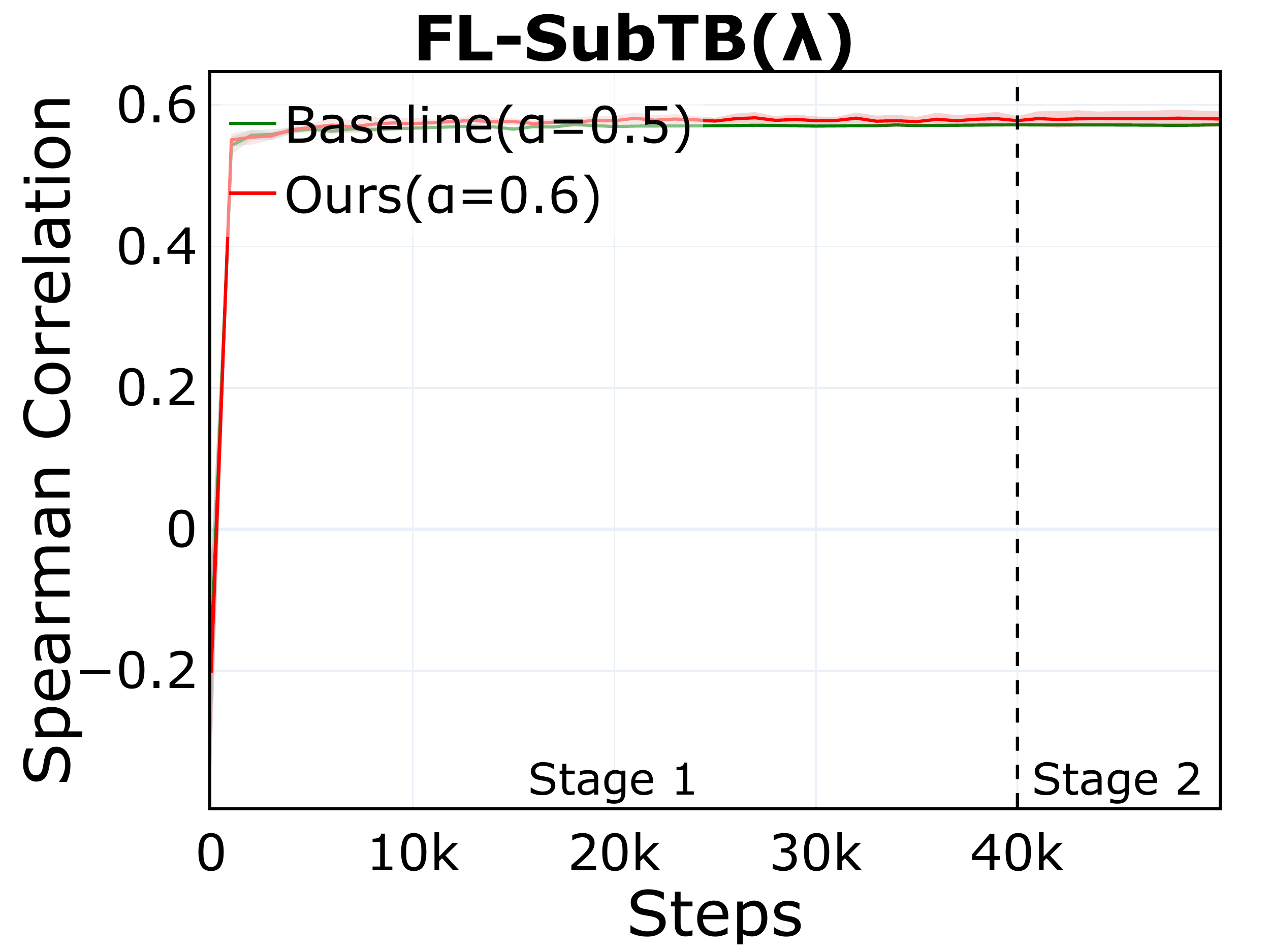} &
    \includegraphics[width=.2\textwidth]{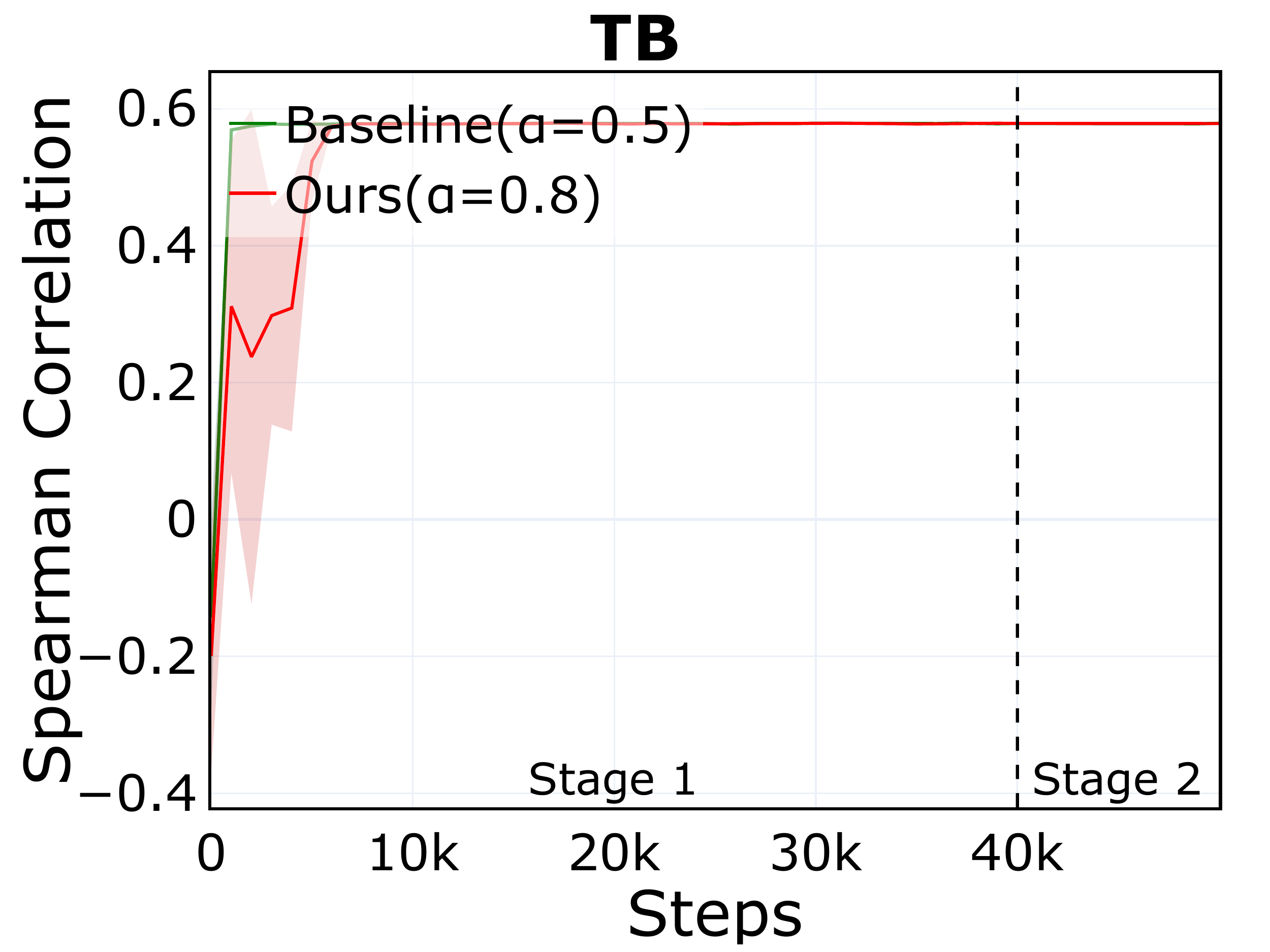} \\
  \end{tabular}
  \caption{\textbf{Spearman Correlation} vs Training Steps in \textbf{Bit Sequence Generation} across different objectives.}
  \label{fig:bit_metric_spearman_corr_test}
\end{figure}

\begin{figure}[htbp]
  \centering
  \setlength{\tabcolsep}{0pt}
  \begin{tabular}{@{}c@{\hspace{0pt}}c@{\hspace{0pt}}c@{\hspace{0pt}}c@{\hspace{0pt}}c@{}}
    \includegraphics[width=.2\textwidth]{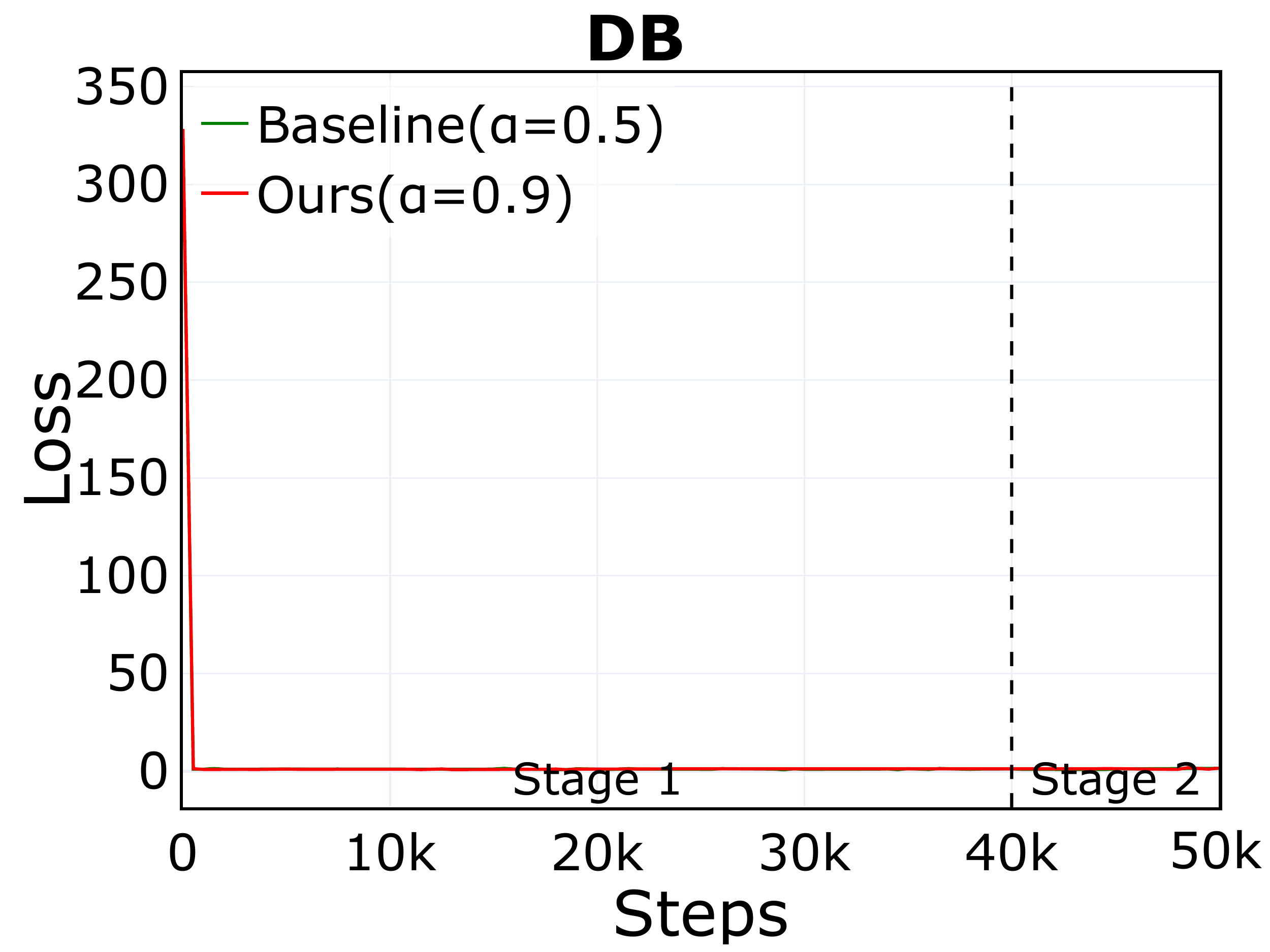} &
    \includegraphics[width=.2\textwidth]{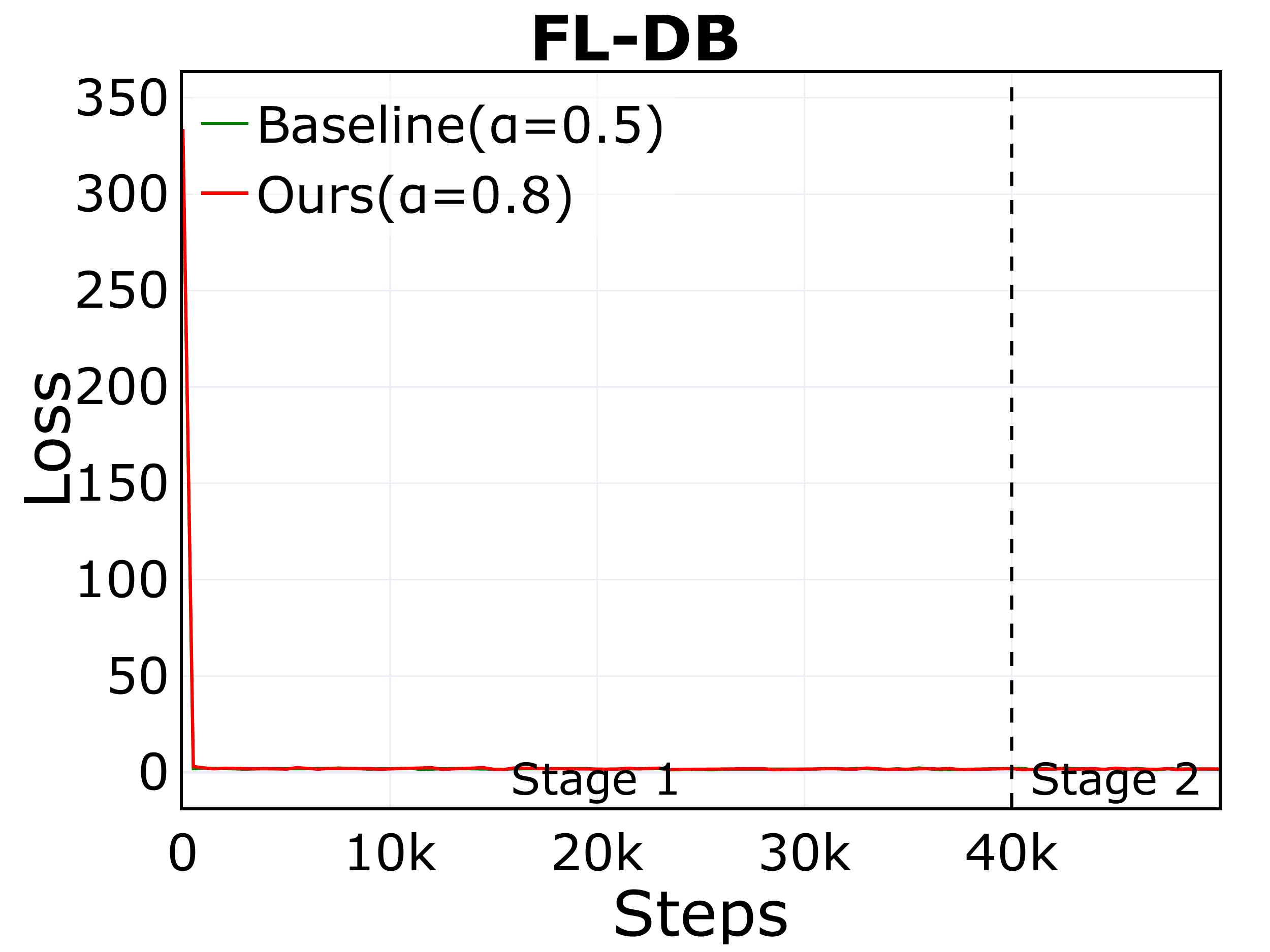} &
    \includegraphics[width=.2\textwidth]{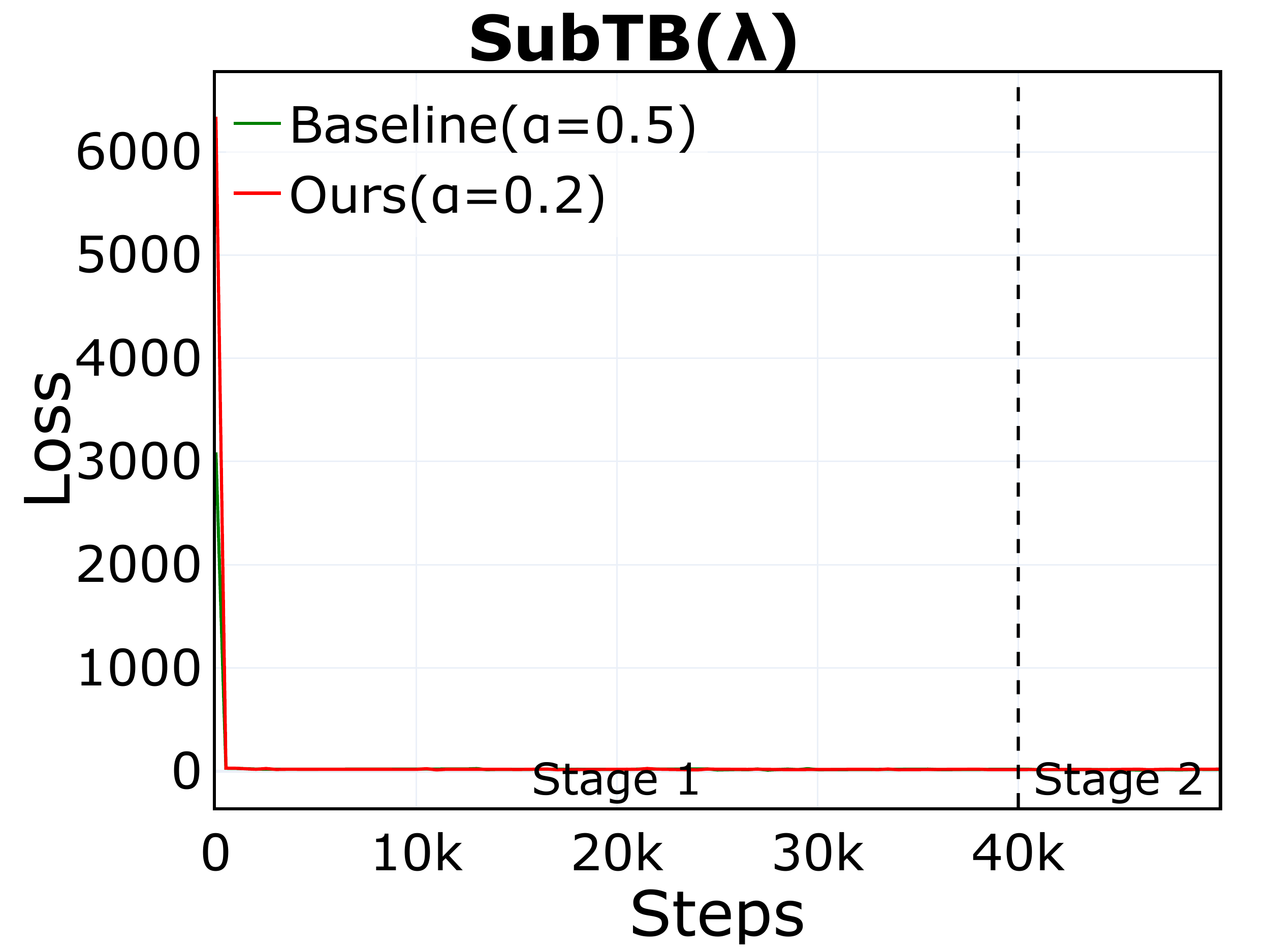} &
    \includegraphics[width=.2\textwidth]{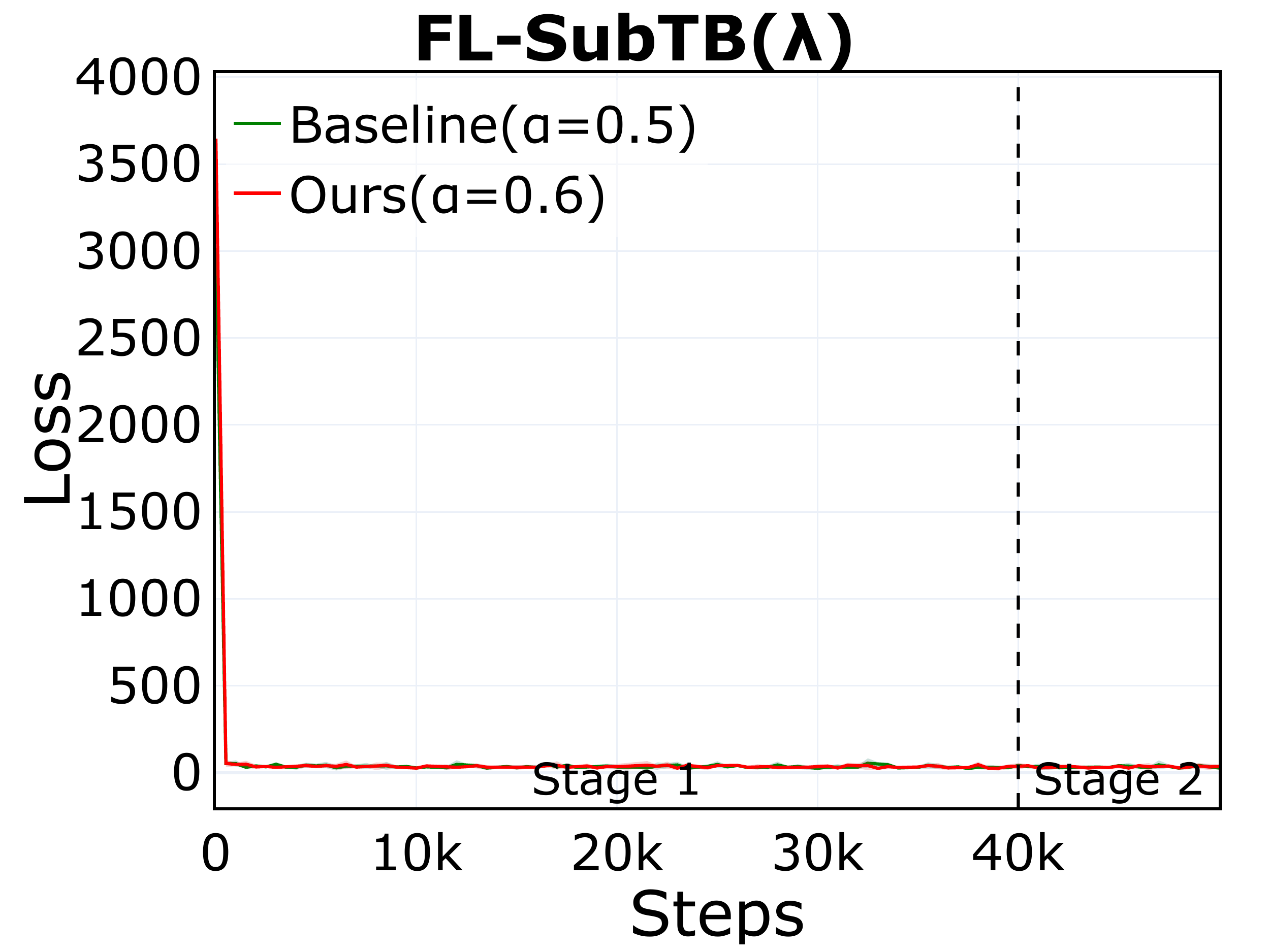} &
    \includegraphics[width=.2\textwidth]{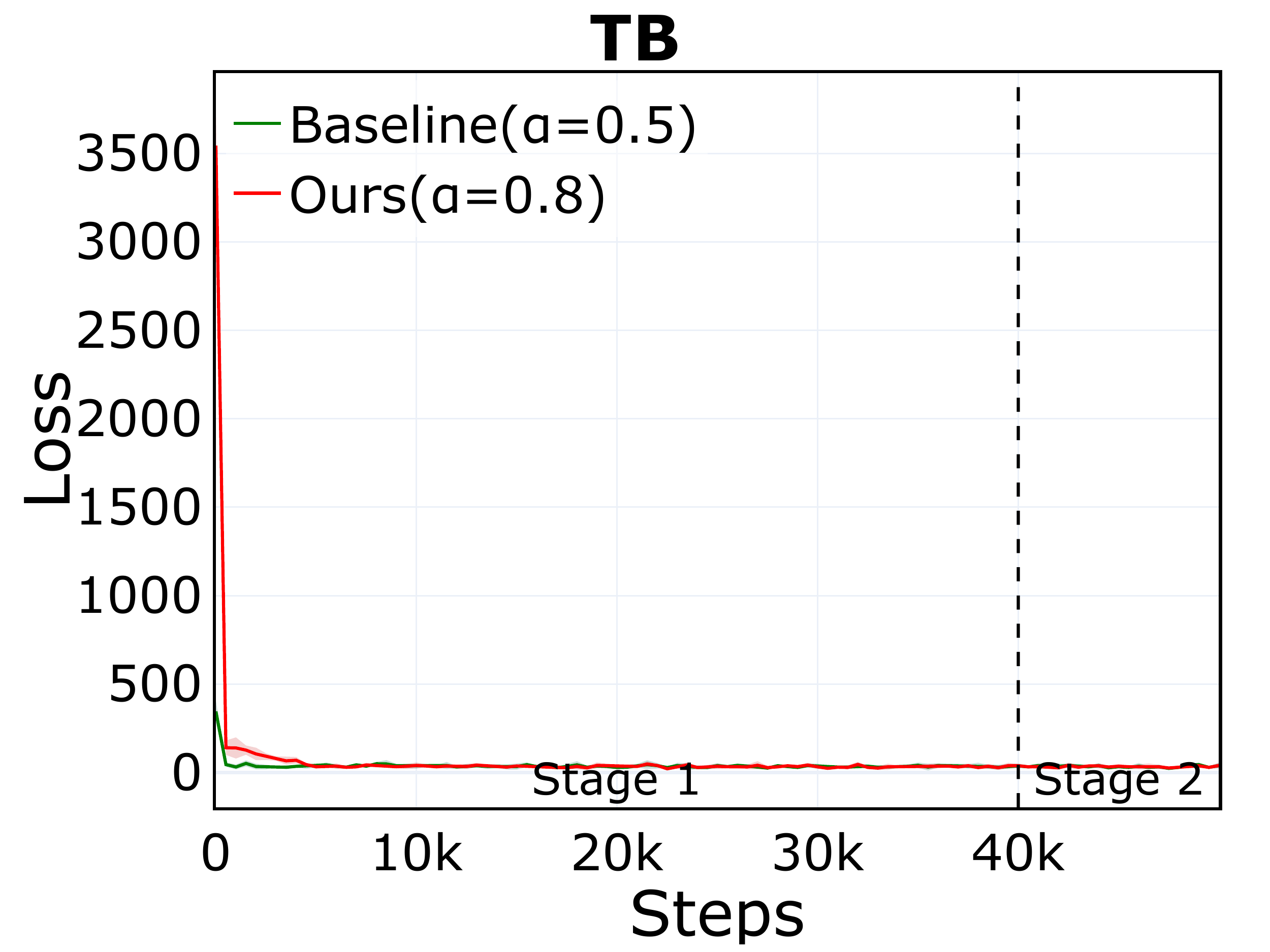} \\
  \end{tabular}
  \caption{\textbf{Loss} vs Training Steps in \textbf{Bit Sequence Generation} across different objectives.}
  \label{fig:bit_metric_loss}
\end{figure}

\begin{figure}[htbp]
  \centering
  \setlength{\tabcolsep}{0pt}
  \begin{tabular}{@{}c@{\hspace{0pt}}c@{\hspace{0pt}}c@{\hspace{0pt}}c@{\hspace{0pt}}c@{}}
    \includegraphics[width=.2\textwidth]{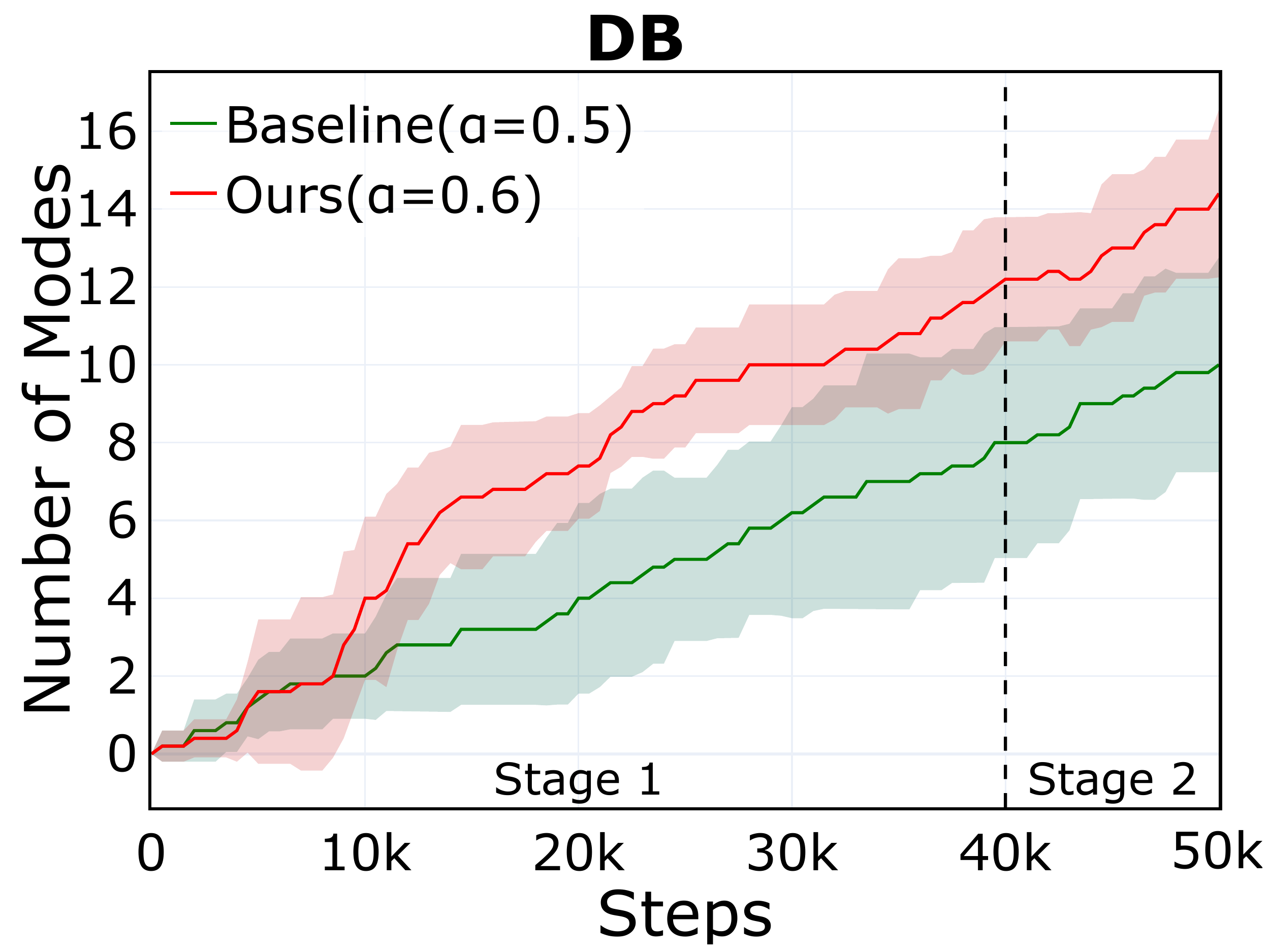} &
    \includegraphics[width=.2\textwidth]{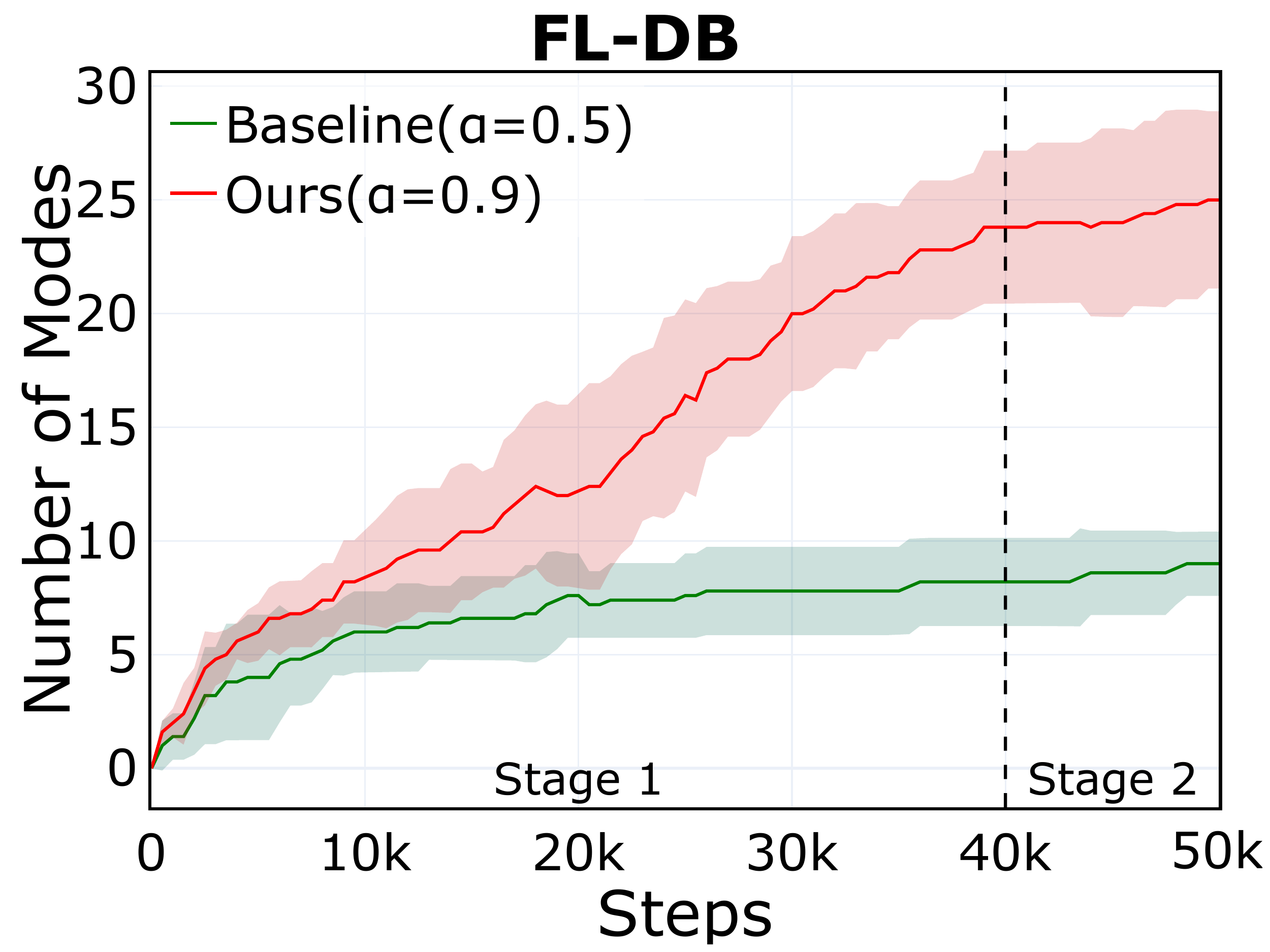} &
    \includegraphics[width=.2\textwidth]{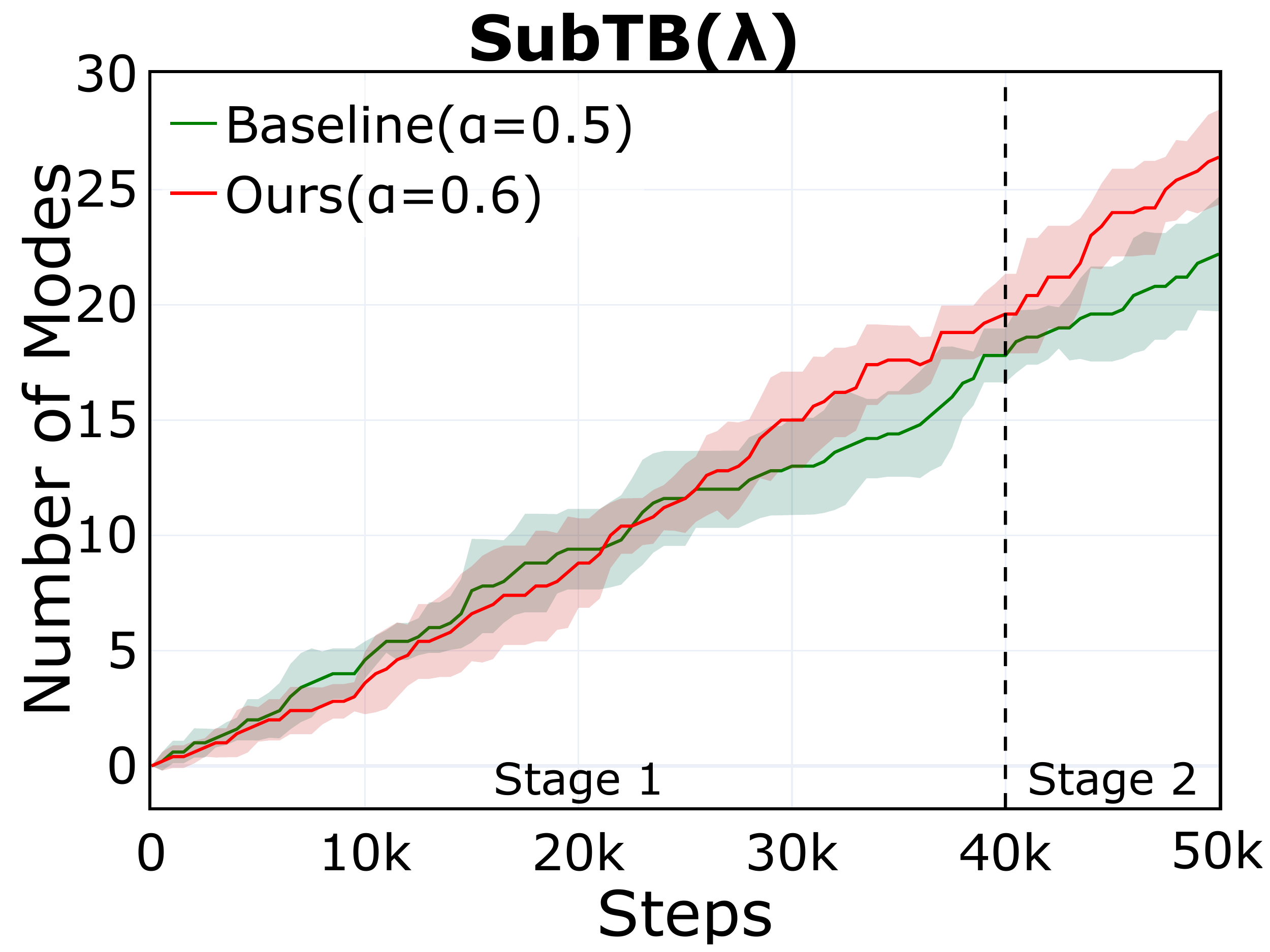} &
    \includegraphics[width=.2\textwidth]{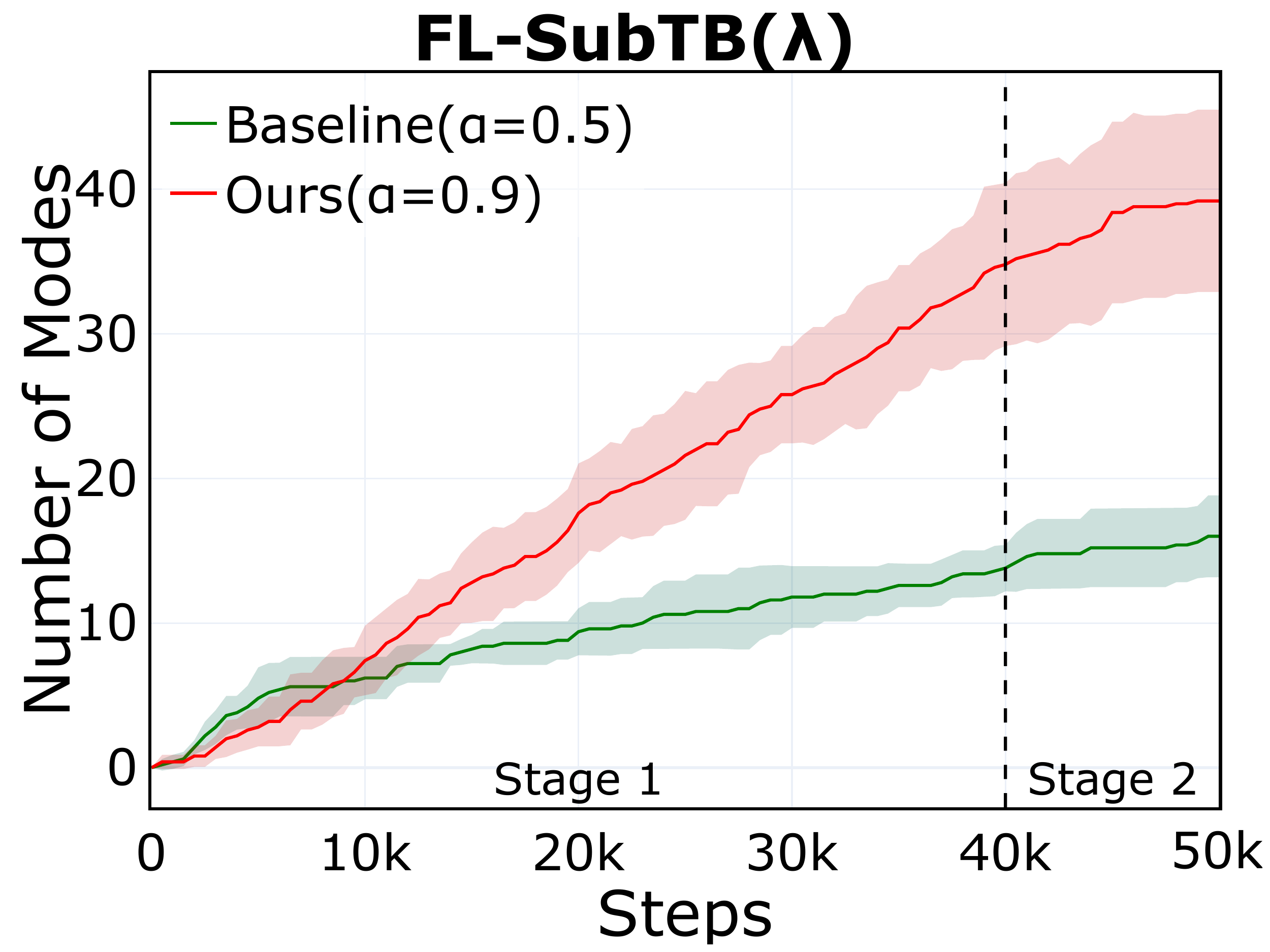} &
    \includegraphics[width=.2\textwidth]{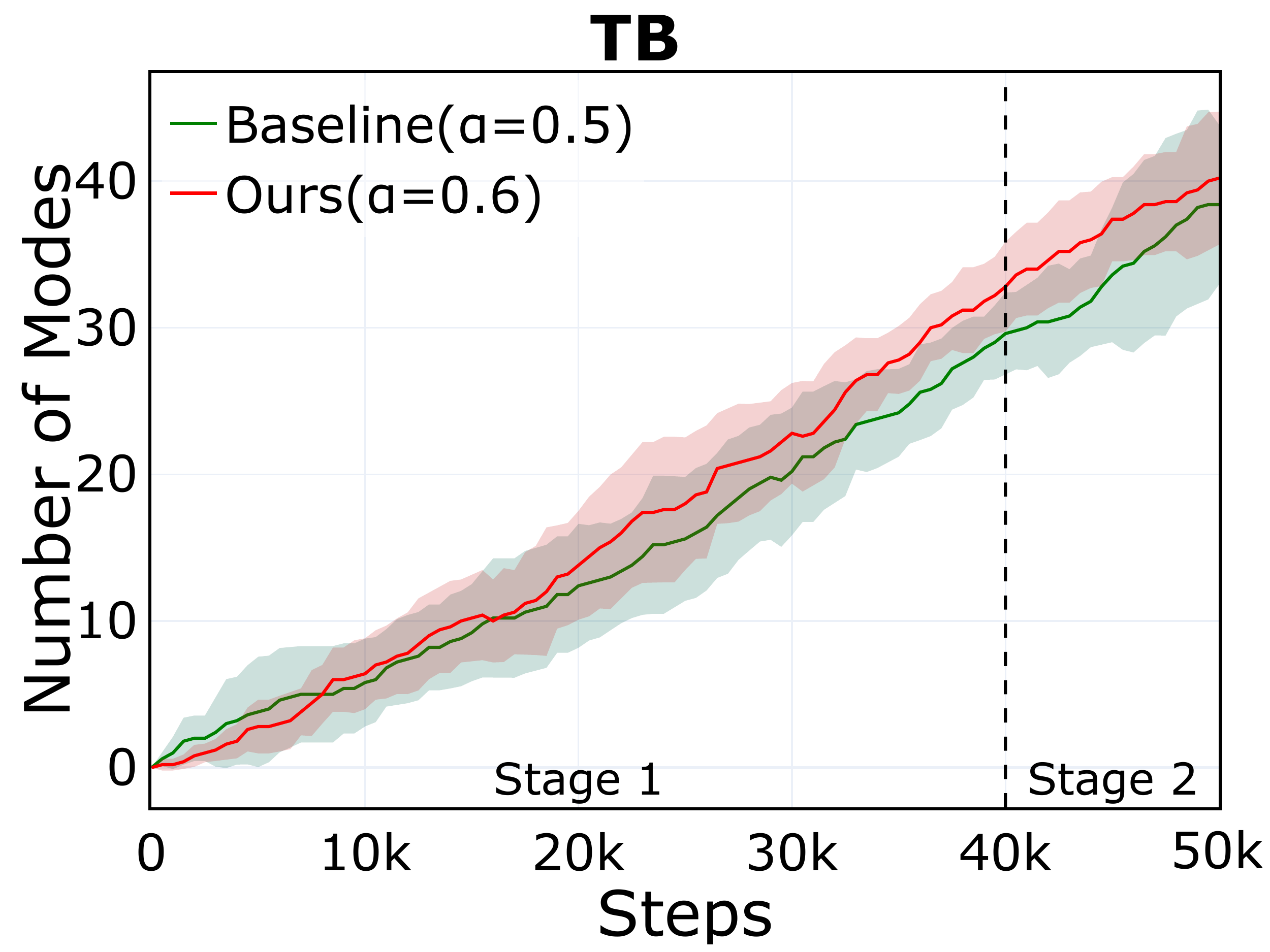} \\
  \end{tabular}
  \caption{\textbf{Number of Modes} vs Training Steps in \textbf{Molecule Generation} across different objectives.}
  \label{fig:mols_metric_num_modes_eval}
\end{figure}

\begin{figure}[htbp]
  \centering
  \setlength{\tabcolsep}{0pt}
  \begin{tabular}{@{}c@{\hspace{0pt}}c@{\hspace{0pt}}c@{\hspace{0pt}}c@{\hspace{0pt}}c@{}}
    \includegraphics[width=.2\textwidth]{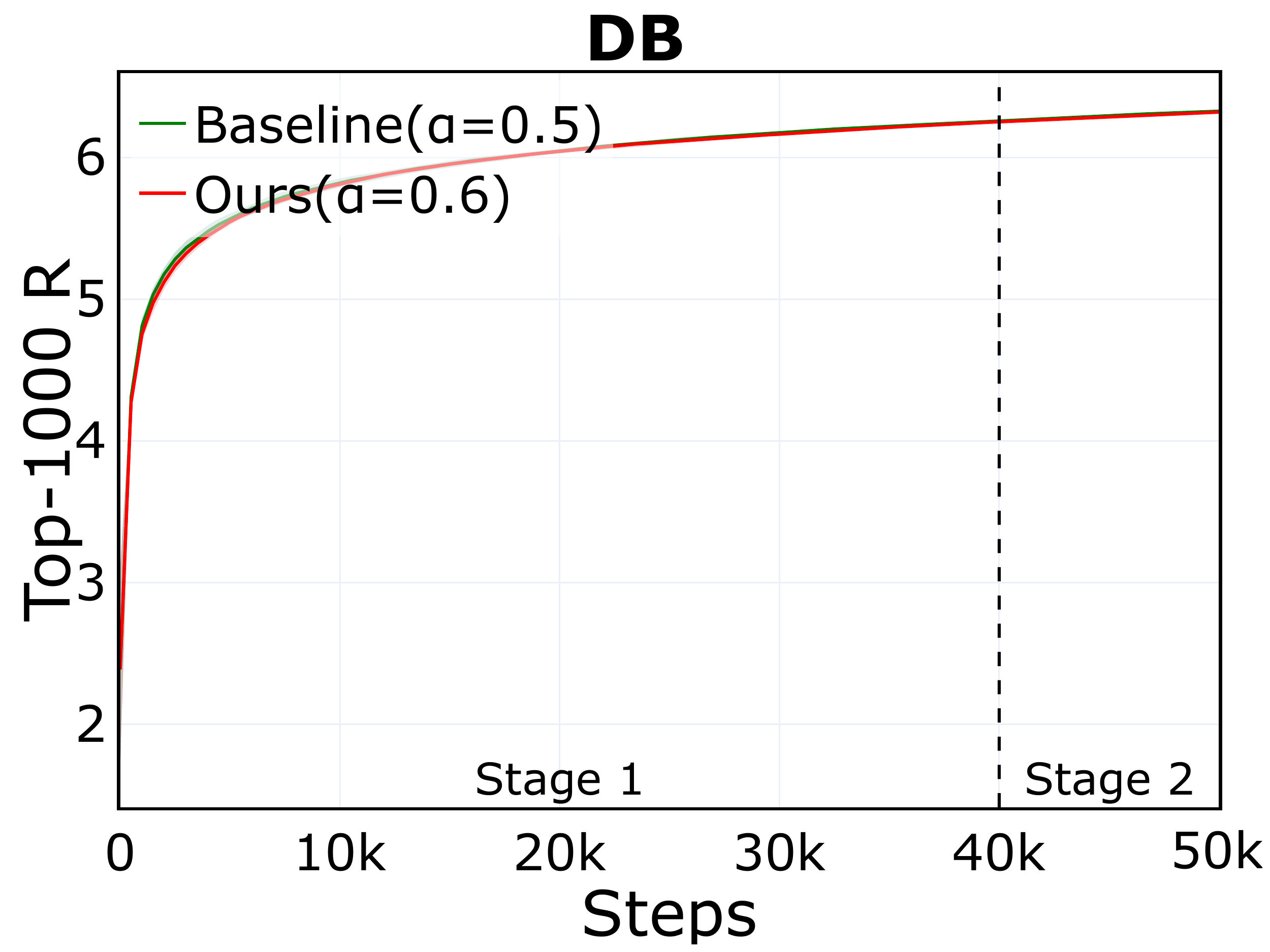} &
    \includegraphics[width=.2\textwidth]{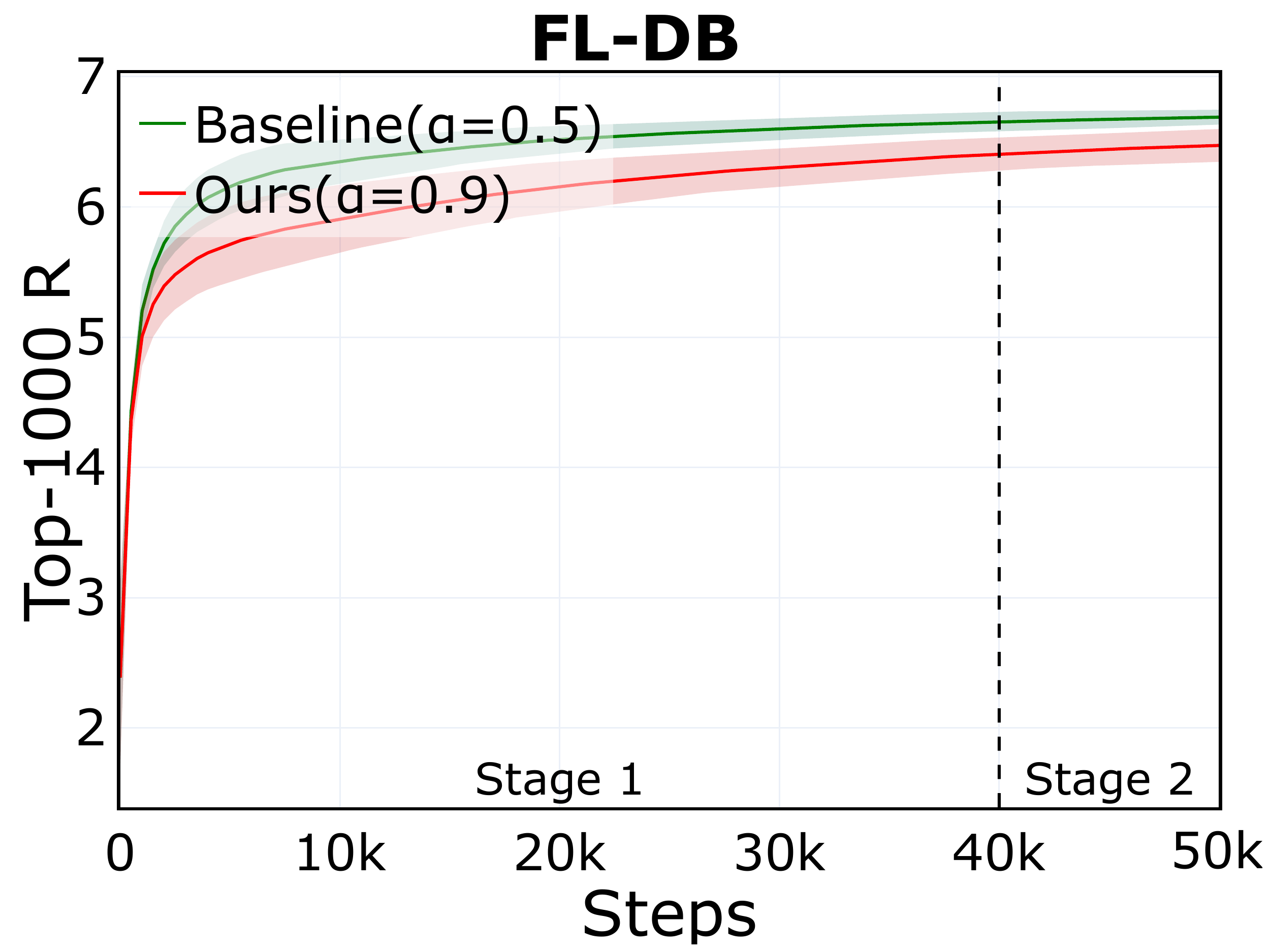} &
    \includegraphics[width=.2\textwidth]{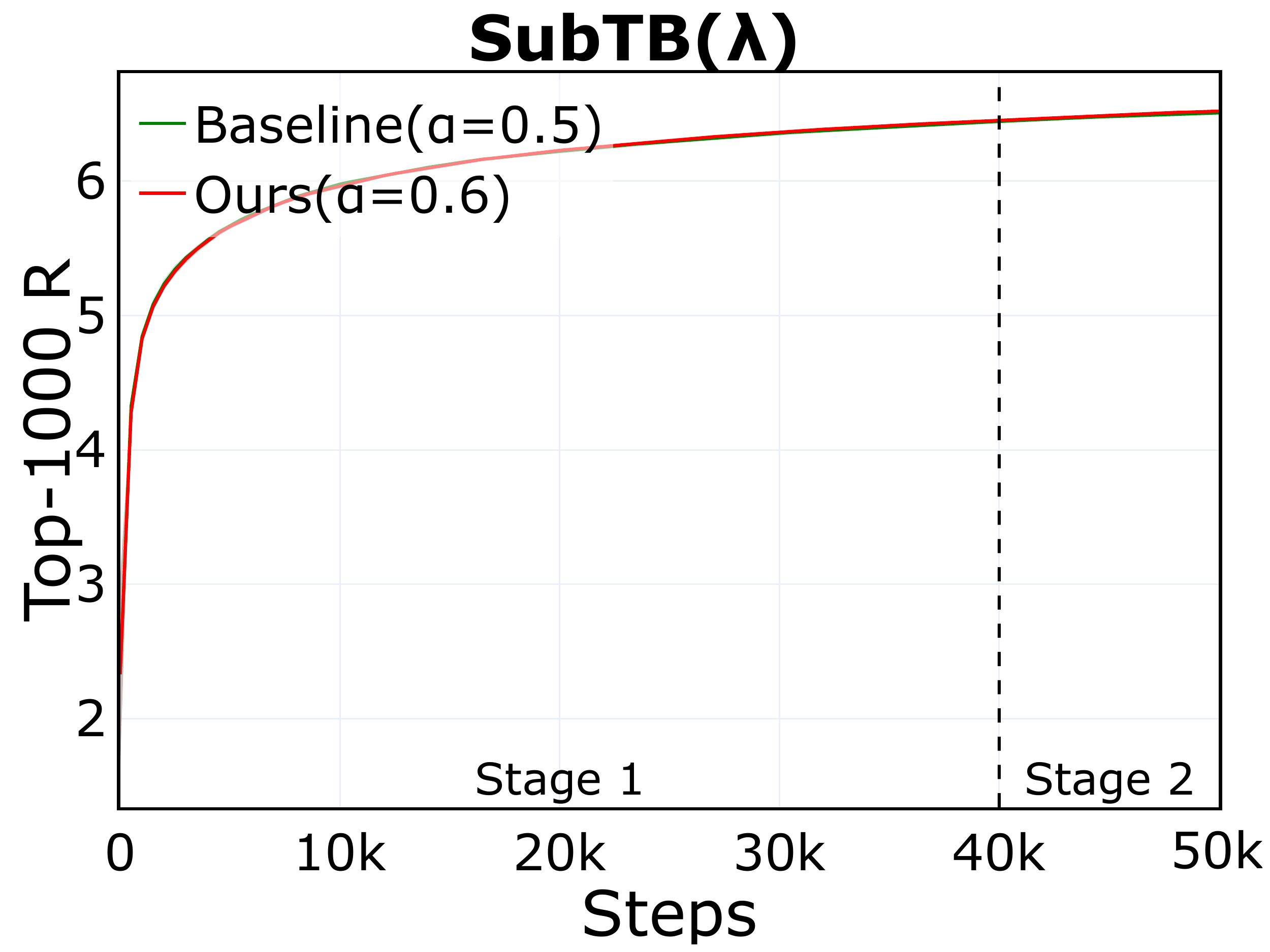} &
    \includegraphics[width=.2\textwidth]{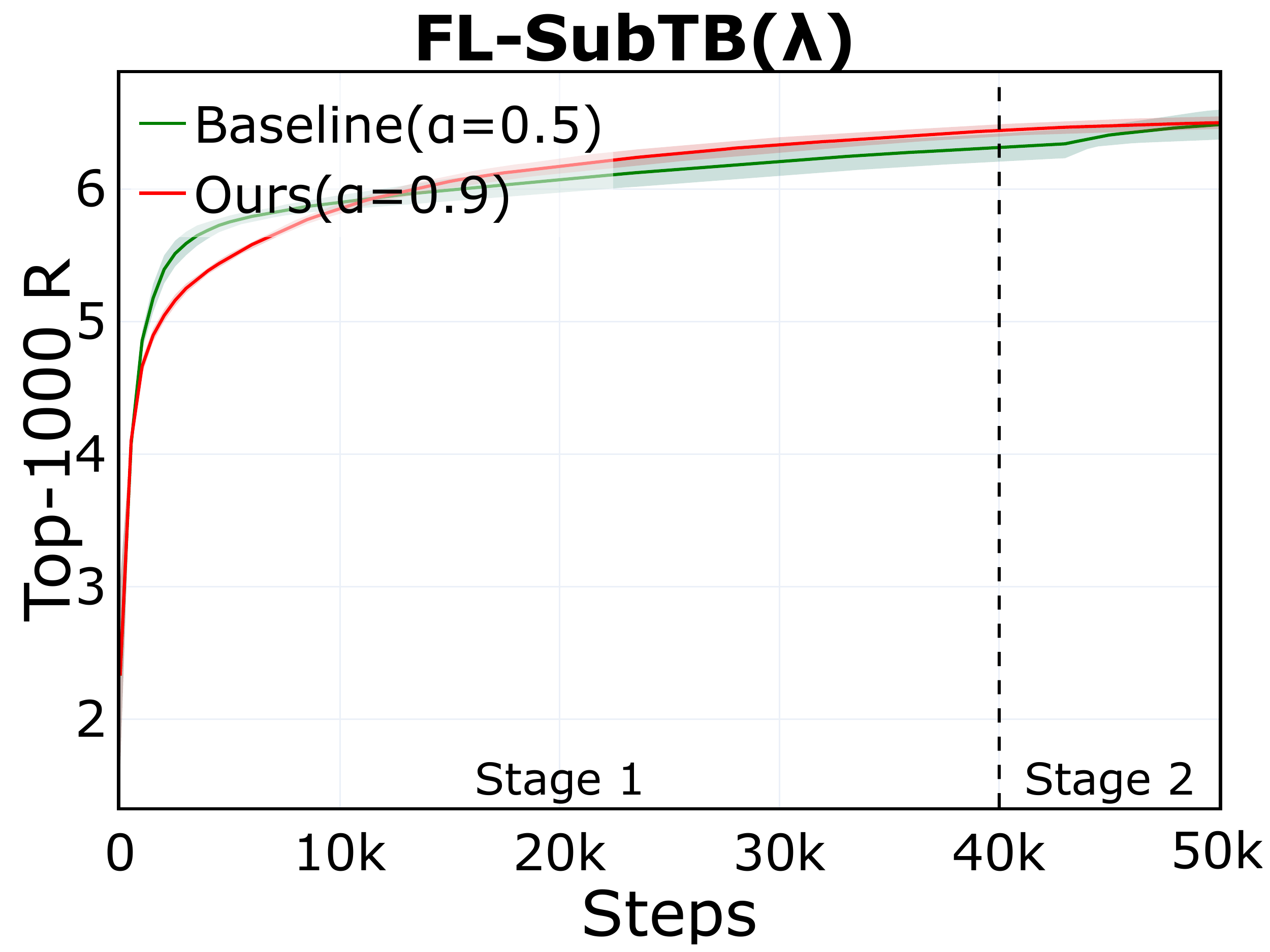} &
    \includegraphics[width=.2\textwidth]{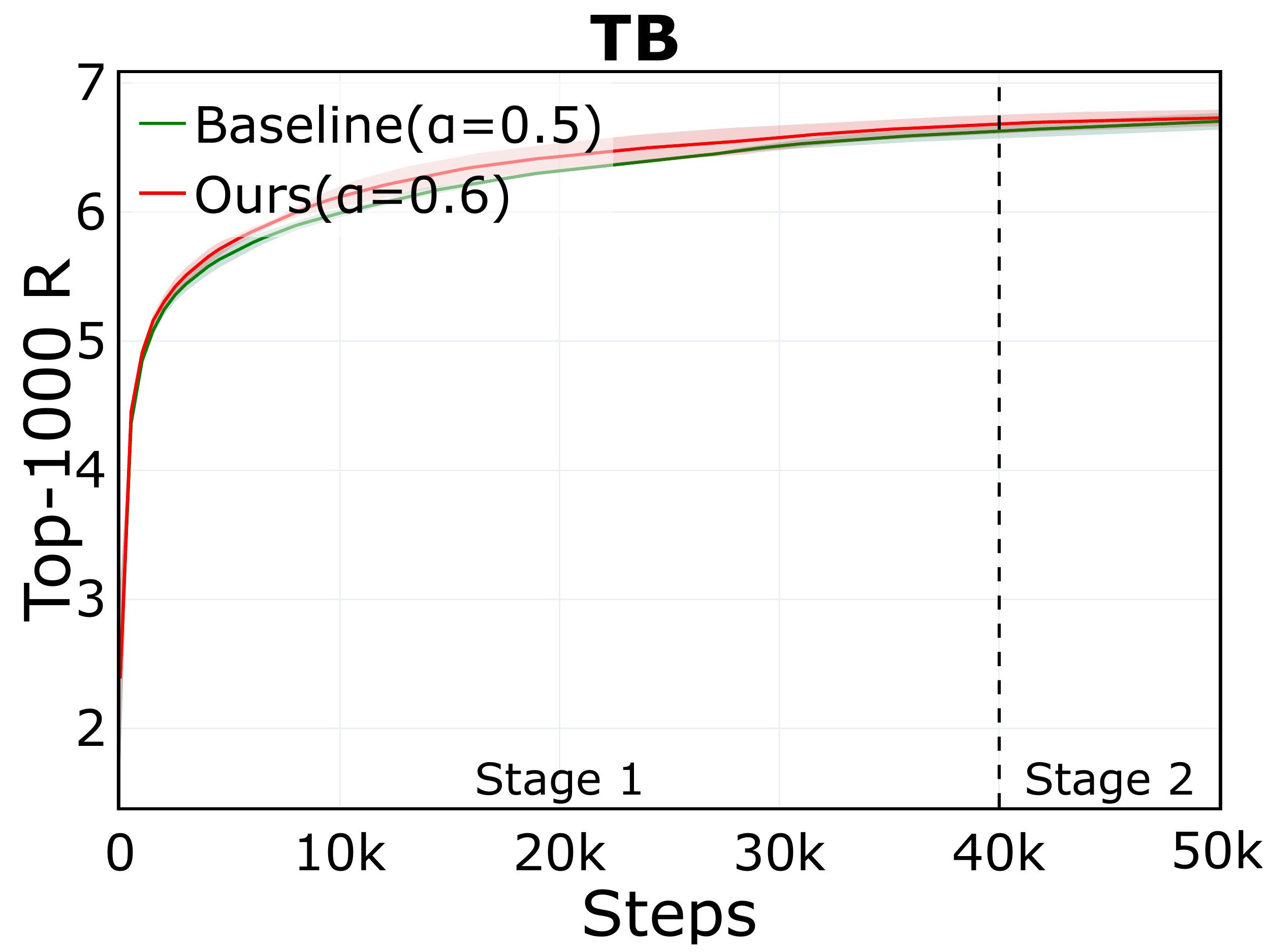} \\
  \end{tabular}
  \caption{\textbf{Top-1000 R} vs Training Steps in \textbf{Molecule Generation} across different objectives.}
  \label{fig:mols_metric_top_100_avg_reward_eval}
\end{figure}

\begin{figure}[htbp]
  \centering
  \setlength{\tabcolsep}{0pt}
  \begin{tabular}{@{}c@{\hspace{85pt}}c@{\hspace{85pt}}c@{}}
    \includegraphics[width=.2\textwidth]{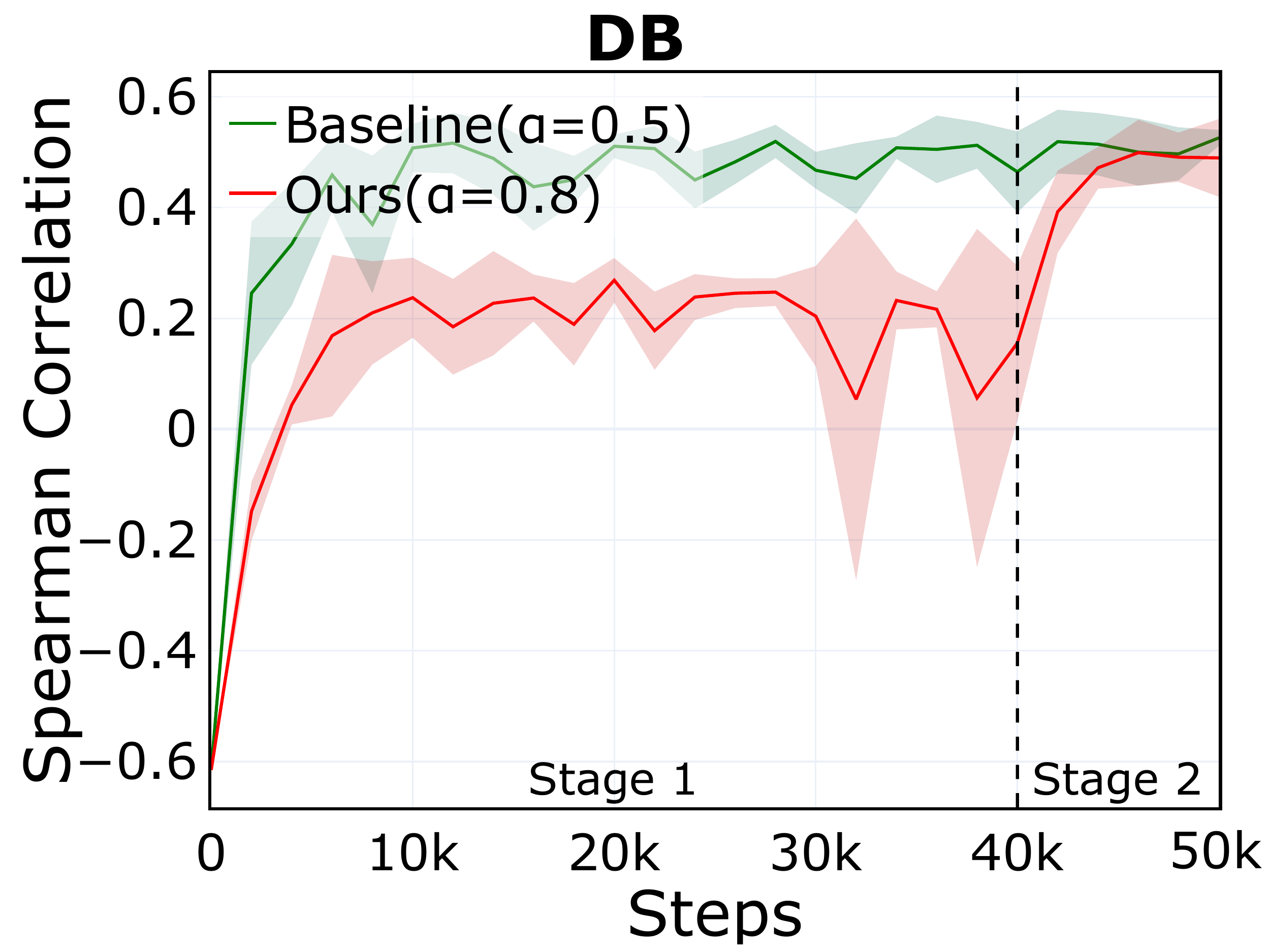} &
    \includegraphics[width=.2\textwidth]{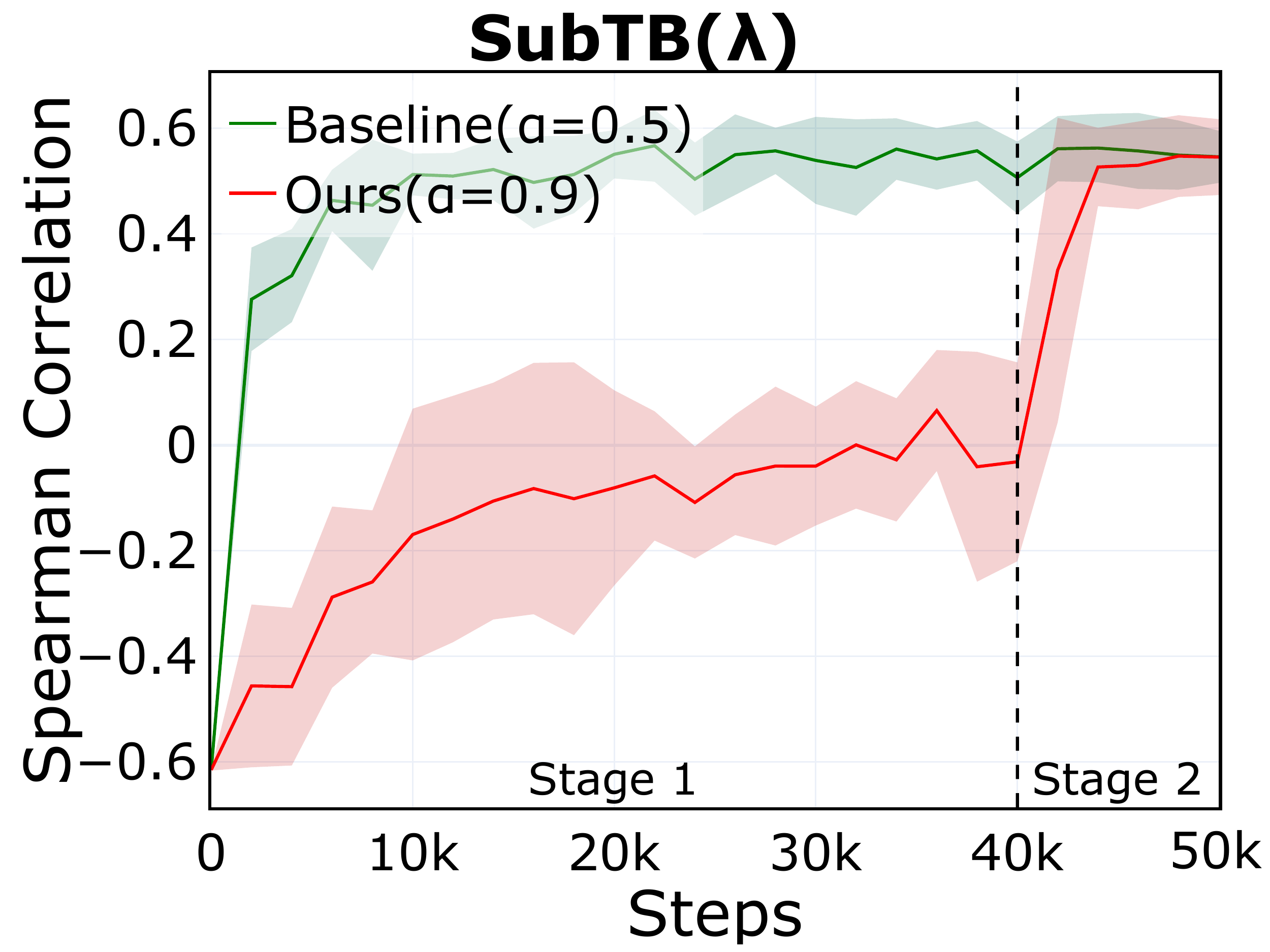} &
    \includegraphics[width=.2\textwidth]{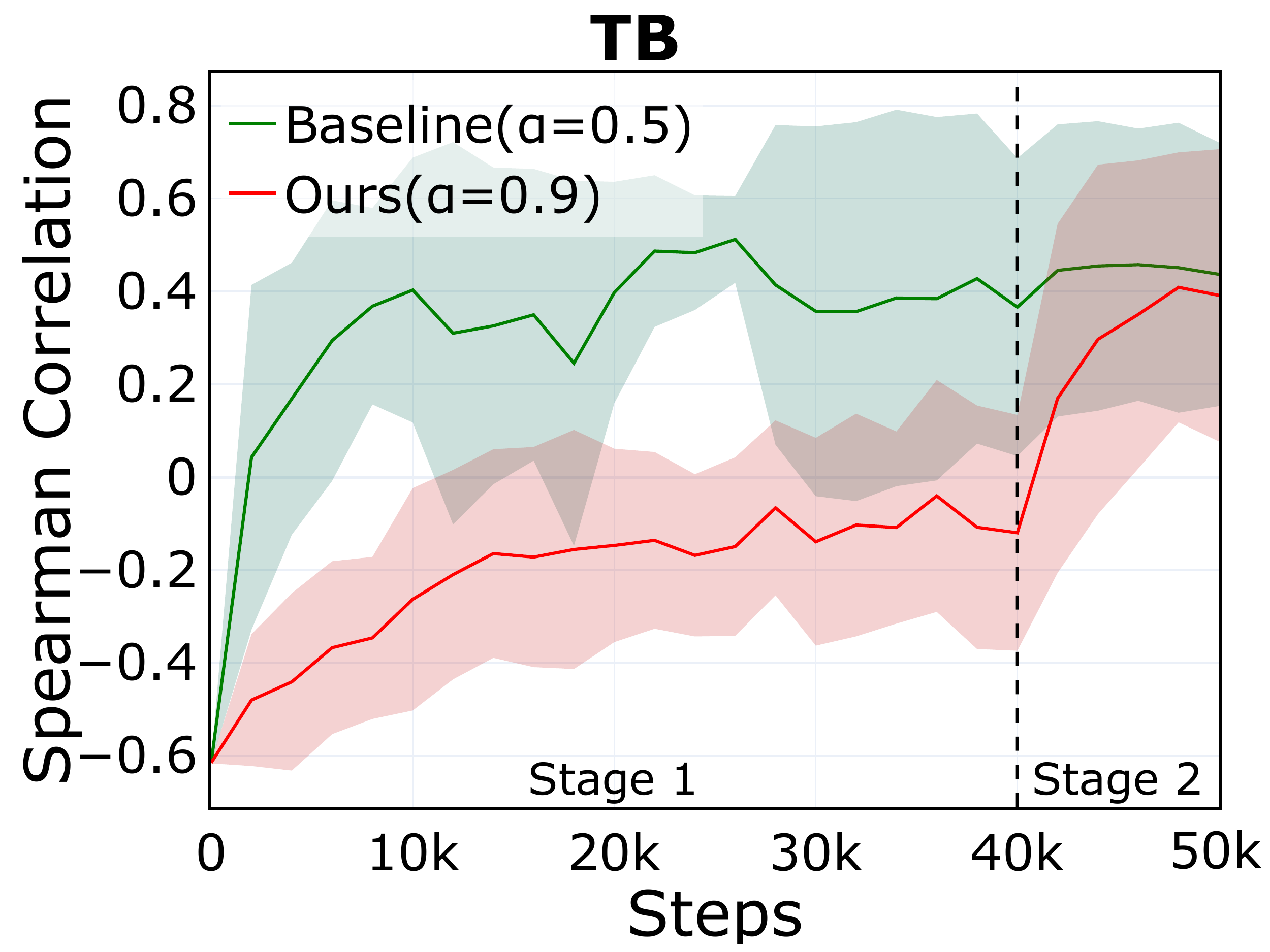} \\
  \end{tabular}
  \caption{\textbf{Spearman Correlation} vs Training Steps in \textbf{Molecule Generation} across different objectives. FL-DB and FL-SubTB($\lambda$) are omitted due to their biased target~\citep{silva2025when}.}
  \label{fig:mols_metric_spearman_corr_test}
\end{figure}

\begin{figure}[t]
  \centering
  \setlength{\tabcolsep}{0pt}
  \begin{tabular}{@{}c@{\hspace{0pt}}c@{\hspace{0pt}}c@{\hspace{0pt}}c@{\hspace{0pt}}c@{}}
    \includegraphics[width=.2\textwidth]{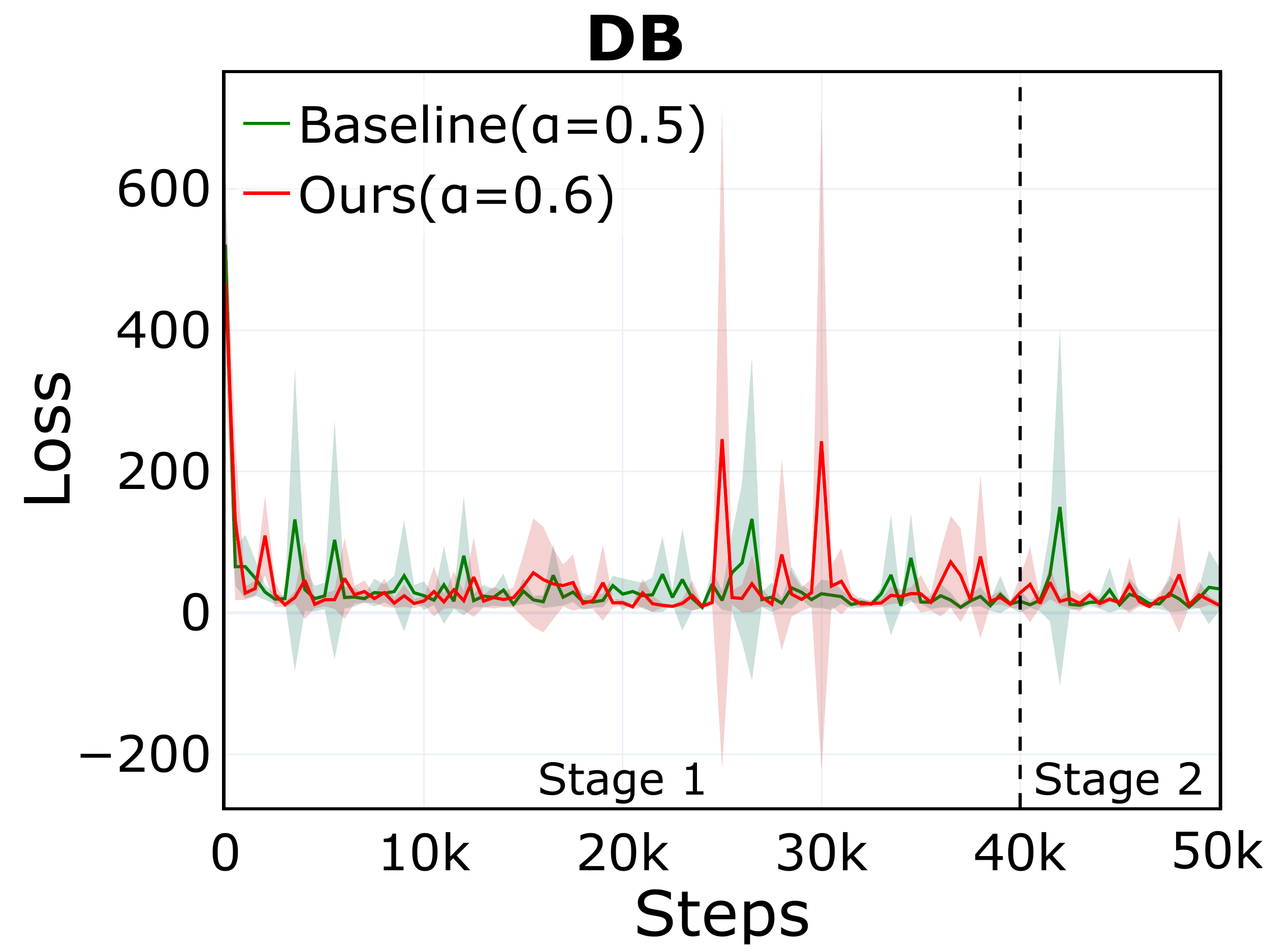} &
    \includegraphics[width=.2\textwidth]{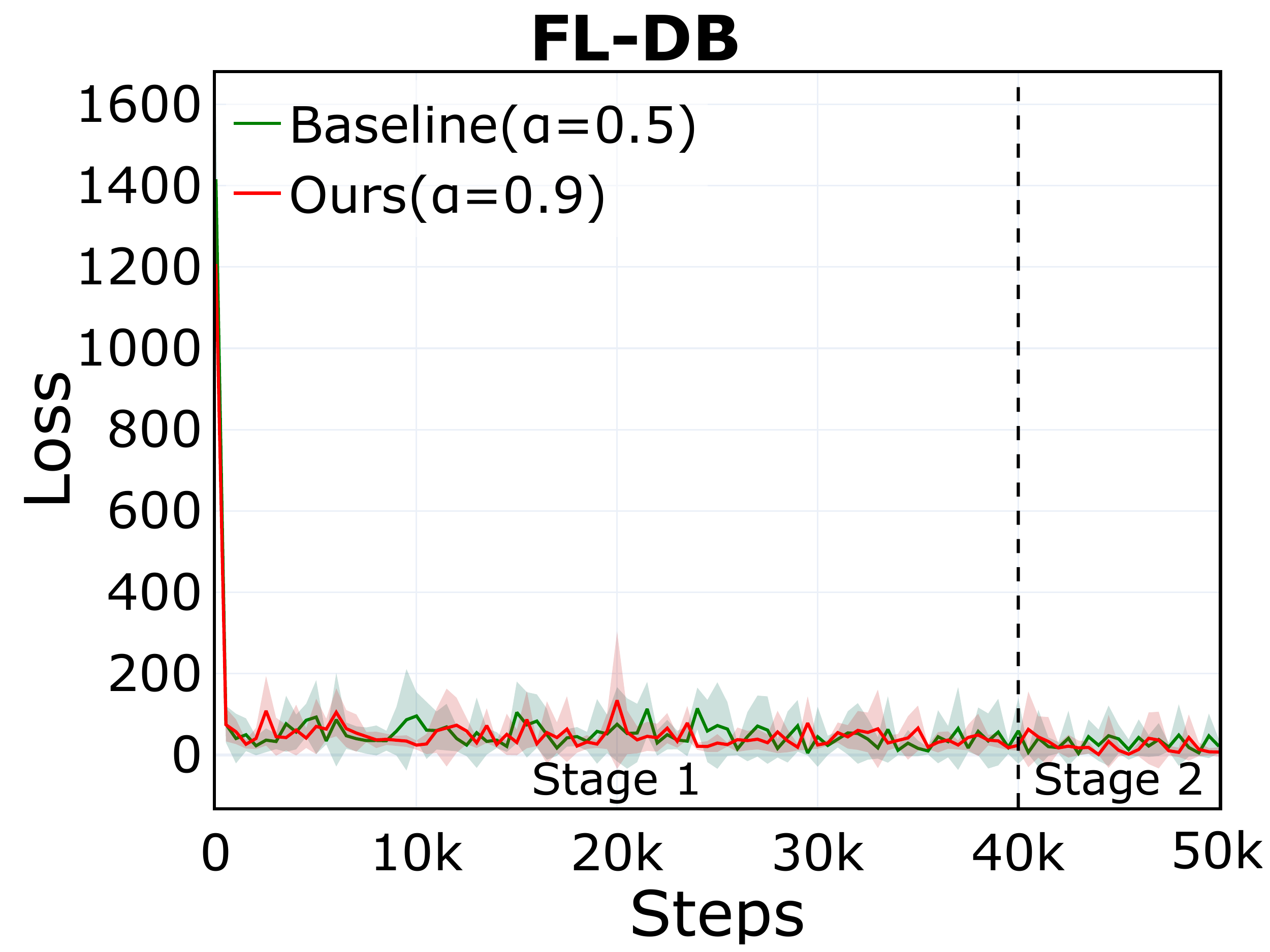} &
    \includegraphics[width=.2\textwidth]{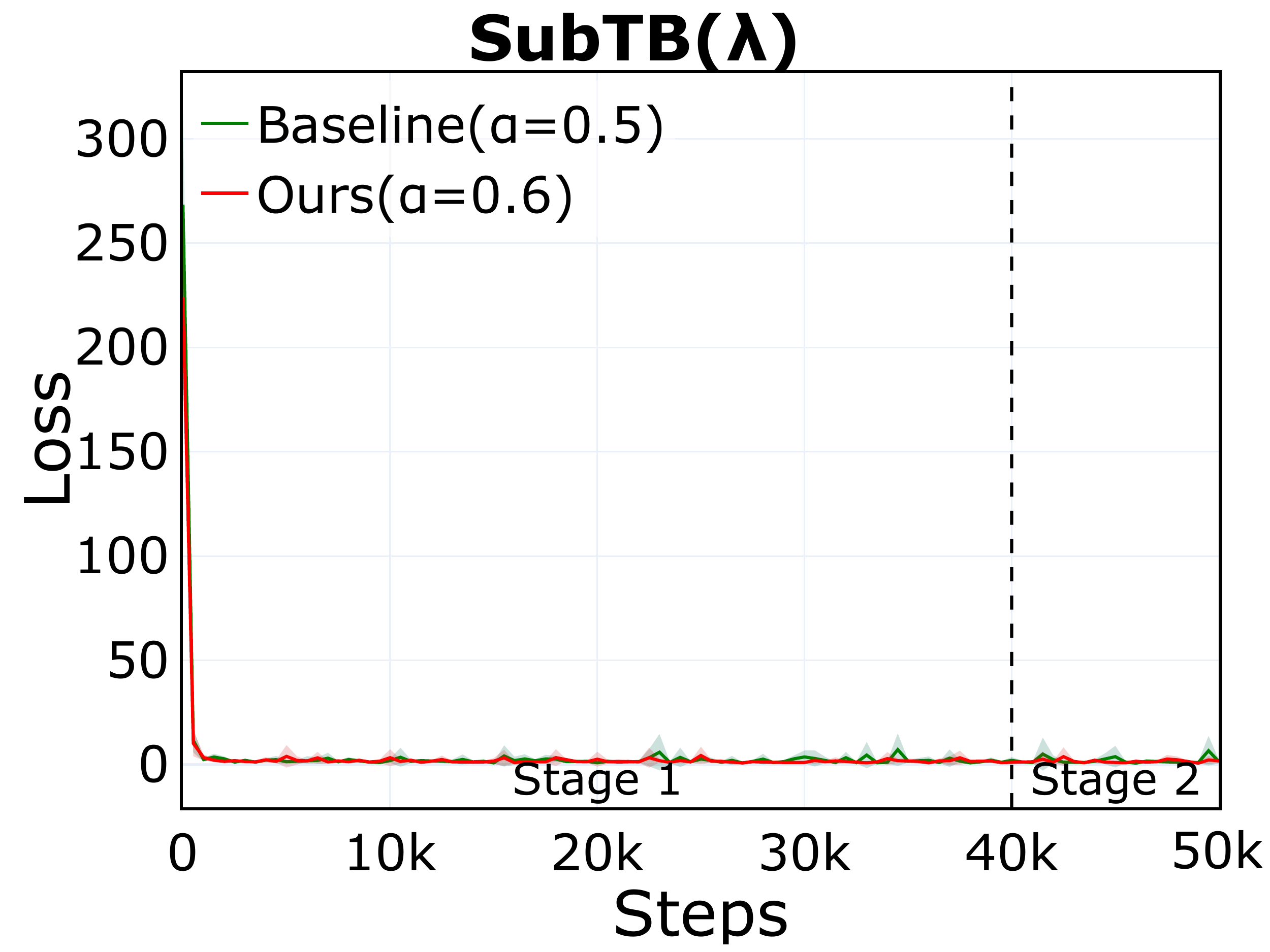} &
    \includegraphics[width=.2\textwidth]{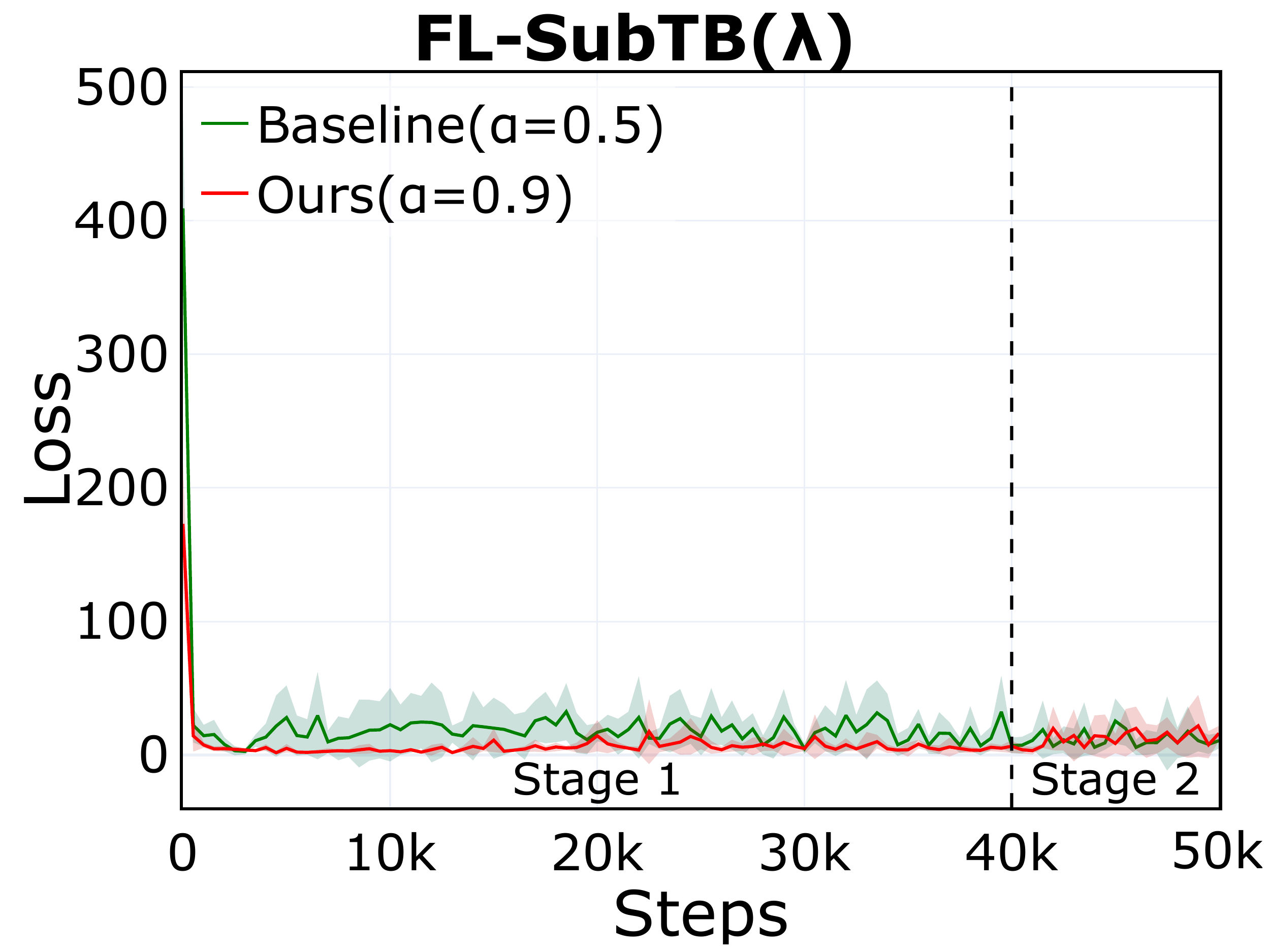} &
    \includegraphics[width=.2\textwidth]{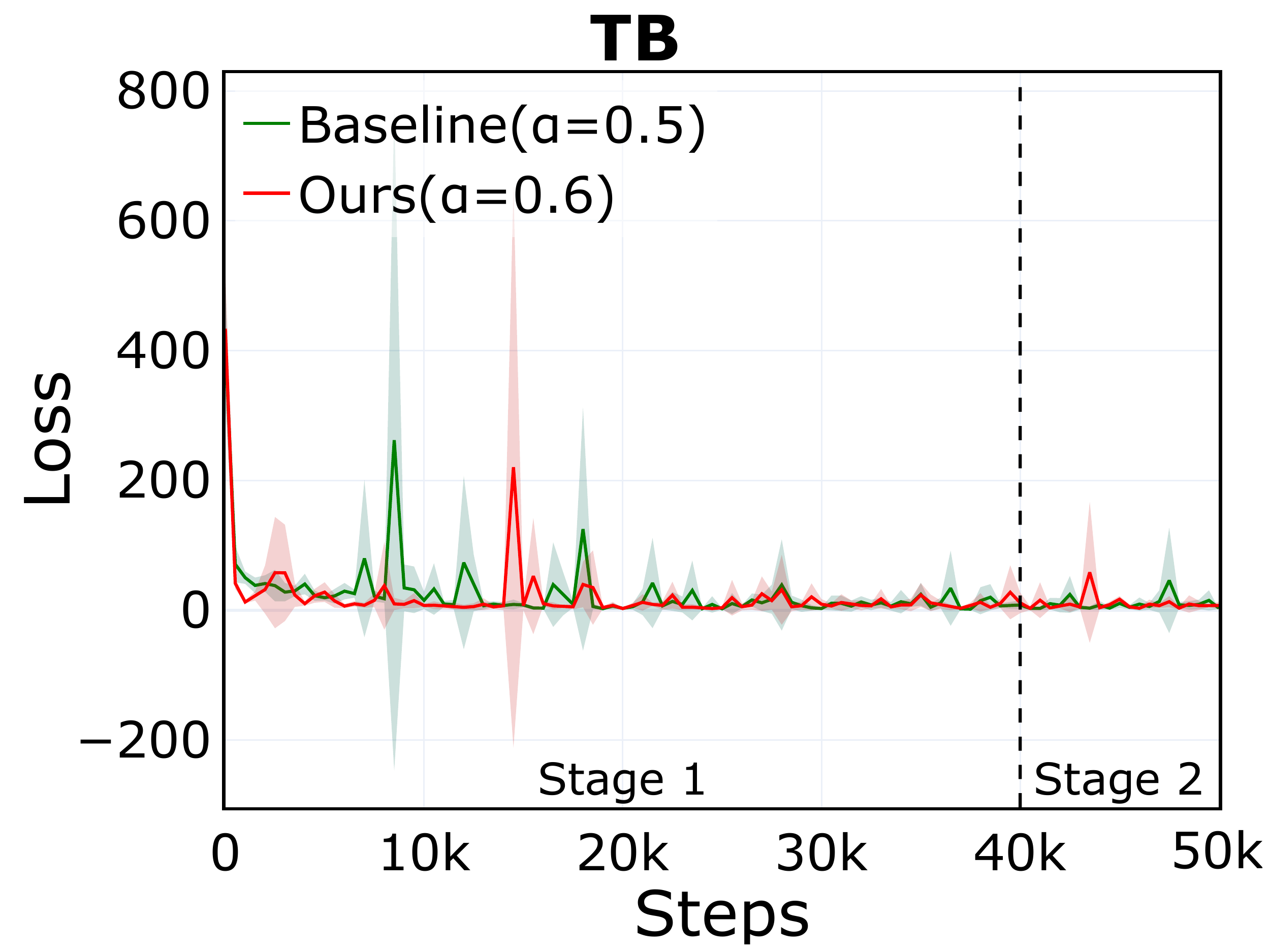} \\
  \end{tabular}
  \caption{\textbf{Loss} vs Training Steps in \textbf{Molecule Generation} across different objectives.}
  \label{fig:mols_metric_current_loss}
\end{figure}

\begin{figure}[htbp]
  \centering
  \setlength{\tabcolsep}{0pt}
  \begin{tabular}{@{}c@{\hspace{0pt}}c@{\hspace{0pt}}c@{\hspace{0pt}}c@{\hspace{0pt}}c@{}}
    \includegraphics[width=.2\textwidth]{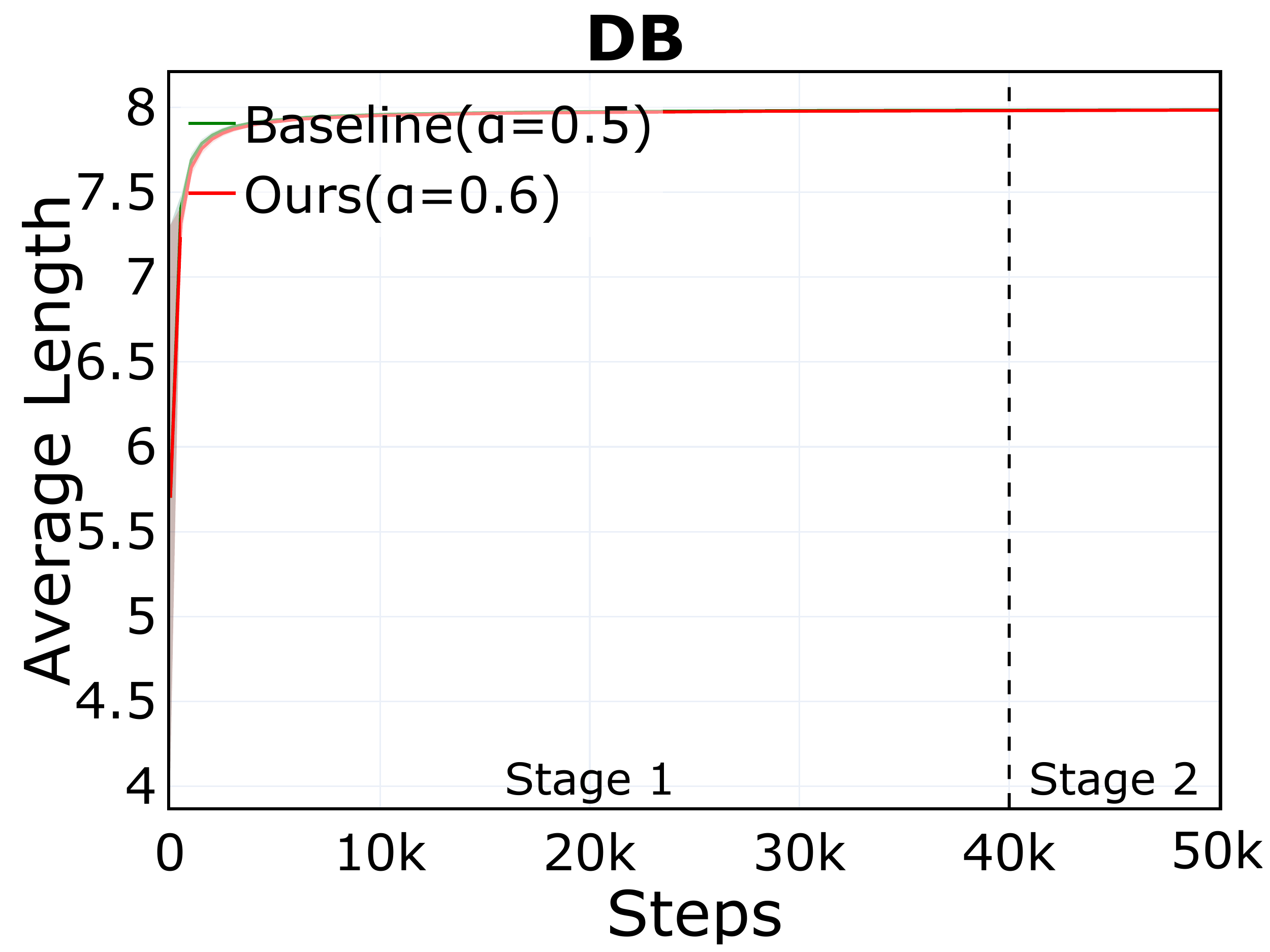} &
    \includegraphics[width=.2\textwidth]{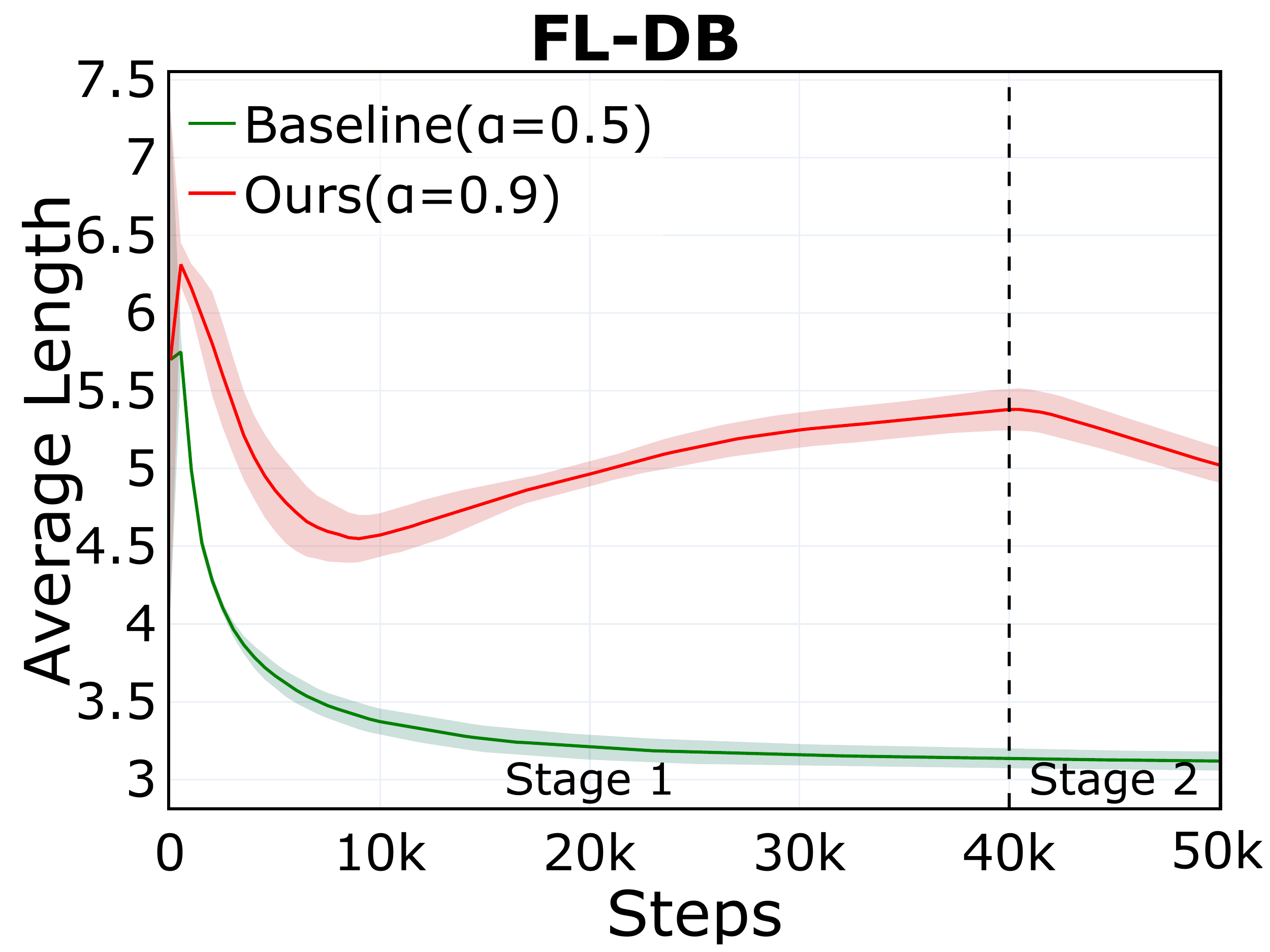} &
    \includegraphics[width=.2\textwidth]{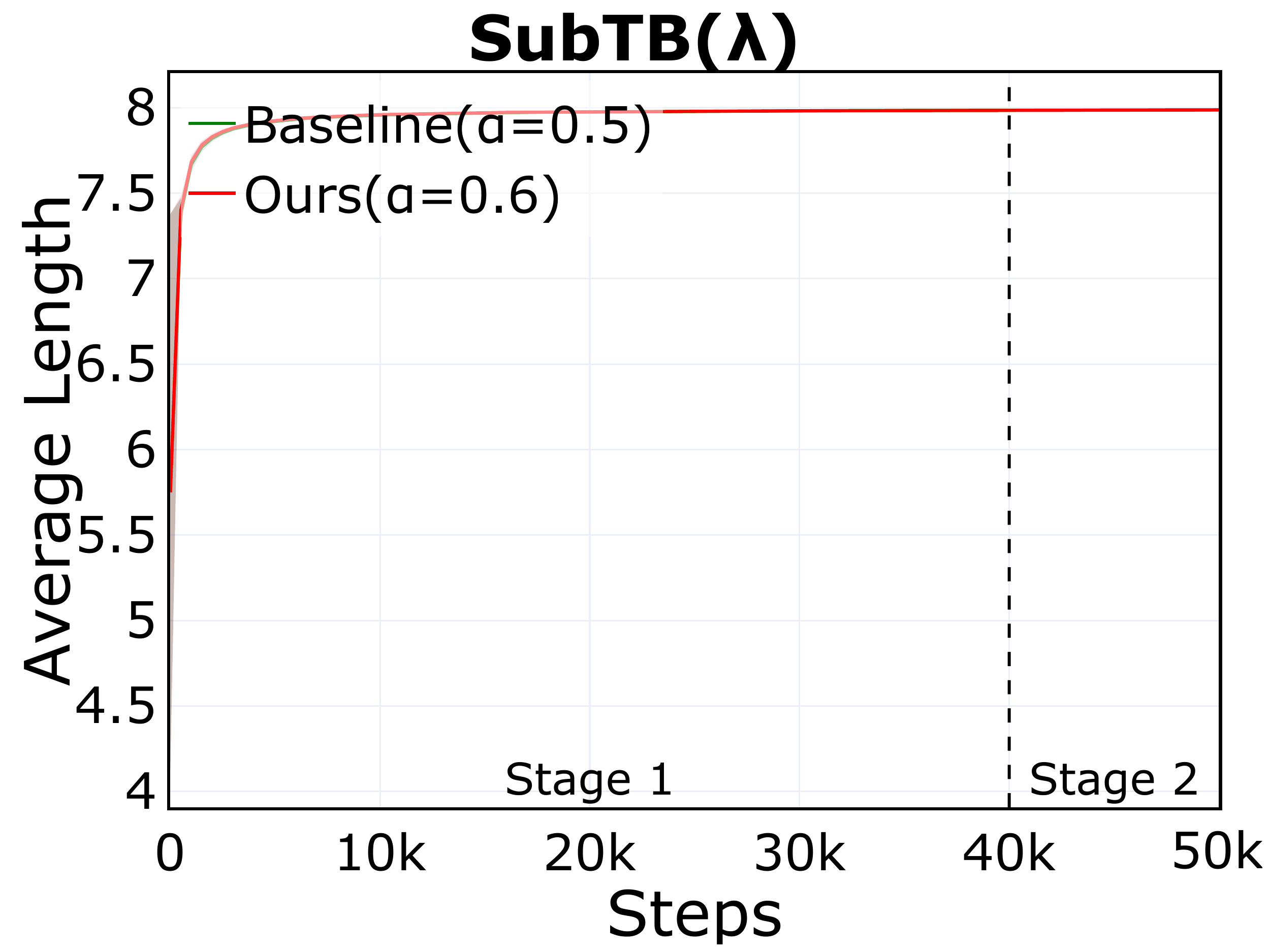} &
    \includegraphics[width=.2\textwidth]{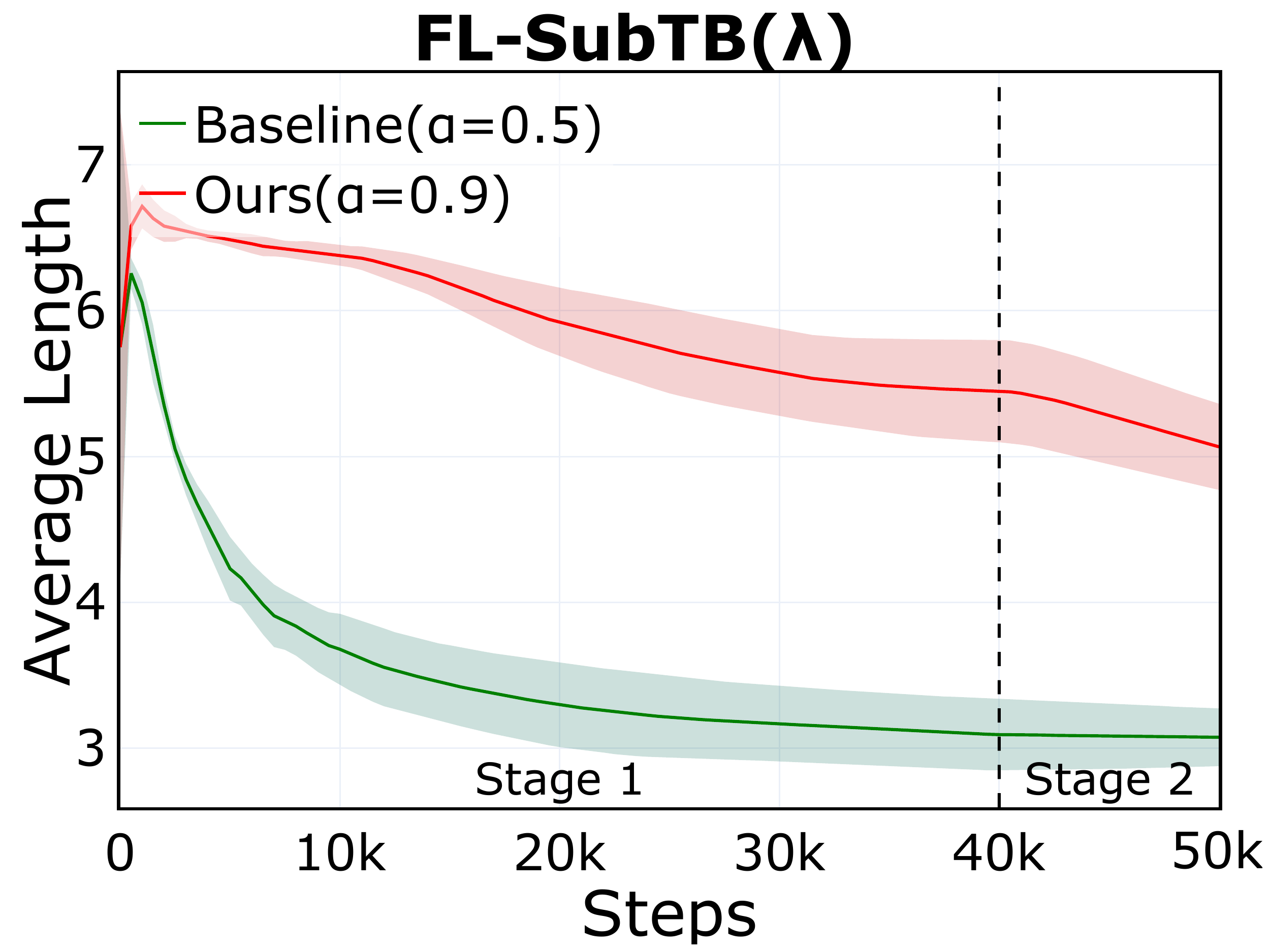} &
    \includegraphics[width=.2\textwidth]{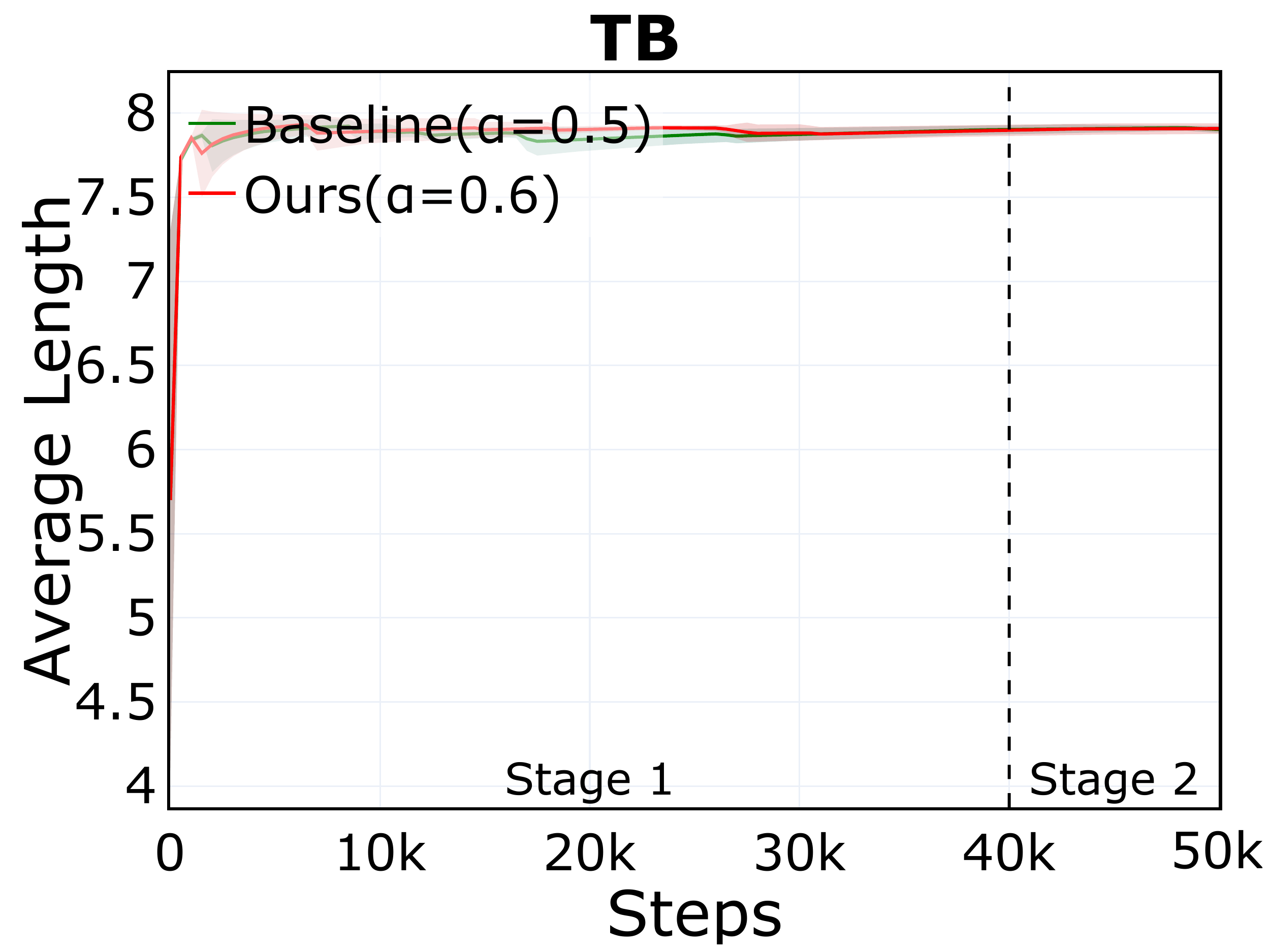} \\
  \end{tabular}
  \caption{\textbf{Average Sample Length} vs Training Steps in \textbf{Molecule Generation} across different objectives.}
  \label{fig:mols_metric_all_samples_avg_length_eval}
\end{figure}

\newpage

\subsection{Compatibility with Prior Methods and Versatility across Domains}
\label{app:compatibility}

To evaluate the broader utility of $\alpha$-GFNs, we demonstrate that our approach is not only compatible with various state-of-the-art training recipes but also provides a versatile mechanism for modulating the exploration-exploitation trade-off in diverse domains.

\paragraph{Compatibility with Existing GFlowNet Enhancements.}
We first evaluate whether $\alpha$-GFN provides performance gains when integrated with established training frameworks. Rather than competing with existing optimizations, $\alpha$-GFN is designed to be orthogonal to them. To demonstrate this, we incorporate our objective into several representative state-of-the-art recipes, such as Adaptive Teachers~\citep{kim2024adaptive} and QGFN~\citep{lau2024qgfn}.
Specifically, for Adaptive Teachers, we use $\alpha$-TB in the student , and vanilla TB in the teacher. For QGFN, we use $\alpha$-TB with the p-greedy sampling method. We set the first 80\% steps to be stage 1, and use an exponential annealing function in stage 2. Other experimental details follow the default settings in the open-source codebases of \citet{kim2024adaptive} and \citet{lau2024qgfn}. Note that although the sEH tasks share the same name in both codebases, their underlying implementations differ. For QGFN, 
As shown in Table~\ref{tab:adaptive-teachers-qgfn-modes}, the addition of the $\alpha$-GFN objective leads to a consistent increase in the number of discovered modes. This suggests that our method can serve as a versatile "plug-and-play" component that bolsters the performance of various GFlowNet training pipelines.

\begin{table}[htbp]
\caption{\textit{$\alpha$-GFN applied to Adaptive Teachers and QGFN.} }
\centering
\small
\setlength{\tabcolsep}{6pt}
\renewcommand{\arraystretch}{1.2}
\resizebox{0.7\linewidth}{!}{%
\begin{tabular}{@{\hspace{.6em}}ccccc@{\hspace{.6em}}}
\toprule
\multirow{1}{*}{\textbf{sEH tasks}} 
& \multicolumn{2}{c}{\textbf{Adaptive Teachers}~\citep{kim2024adaptive}} &\multicolumn{2}{c}{\textbf{QGFN}~\citep{lau2024qgfn}} \\
\cmidrule(lr){2-5}
\textbf{Metric}& Baseline & Ours & Baseline & Ours \\
\midrule
\textbf{Modes$\uparrow$} & \msl{103.50}{5.07} &\msl{\textbf{110.00}}{2.55} & \msl{435.80}{176.90} & \msl{\textbf{613.40}}{269.85} \\
\bottomrule
\end{tabular}
}
\label{tab:adaptive-teachers-qgfn-modes}
\end{table}

\paragraph{Orthogonality to Reward Temperature Scaling.}
Another common technique for balancing exploration and exploitation is reward temperature scaling ($R^{1/\tau}$), which reshapes the reward landscape by adjusting its "peakiness." However, we explicitly distinguish $\alpha$-GFN from such reward-side modifications. While temperature scaling alters the target distribution itself, the $\alpha$ parameter modulates the learning process, influencing how the agent exploits the distribution during training. To demonstrate this, we evaluated the Set Generation task across DB, FL-DB, and TB objectives using reward temperatures of $0.5$ and $2$ ($0.5\times$ and $2\times$ the default temperature 1). As illustrated in Table~\ref{tab:set-exp-results-temp}, $\alpha$-GFN yields consistent performance gains over the vanilla model, regardless of the reward temperature. Such results demonstrate that $\alpha$-GFN provides a complementary advantage that is orthogonal to reward reshaping.

\begin{table}[t]
\caption{\textit{Performance on Set Generation across varying reward temperatures.} $\alpha$-GFN consistently improves both reward and mode discovery compared to the vanilla baseline under different temperatures. We \textbf{bold} the better results, and mark standard deviations \textcolor{gray}{gray}.}
\centering
\small
\setlength{\tabcolsep}{6pt}
\renewcommand{\arraystretch}{1.2}
\resizebox{\linewidth}{!}{%
\begin{tabular}{@{\hspace{.6em}}cclcccccc@{\hspace{.6em}}}
\toprule
\multicolumn{3}{c}{} &
\multicolumn{2}{c}{\textbf{DB}} & \multicolumn{2}{c}{\textbf{FL-DB}} & \multicolumn{2}{c}{\textbf{TB}} \\
\cmidrule(lr){4-5}\cmidrule(lr){6-7}\cmidrule(lr){8-9}
\textbf{Temperature} & \textbf{Set Size} & \textbf{Metric} & Baseline & Ours & Baseline & Ours & Baseline & Ours \\
\midrule
\multirow{12}{*}{0.5$\times$} & \multirow{3}{*}{Small} & Modes$\uparrow$ & \msl{89.6}{0.5} & \msl{\textbf{90.0}}{0.0} & \msl{90.0}{0.0} & \msl{90.0}{0.0} & \msl{86.6}{2.3} & \msl{\textbf{90.0}}{0.0} \\
 &  & Top-1000 R$\uparrow$ & \msl{0.221}{0.000} & \msl{0.221}{0.000} & \msl{0.221}{0.000} & \msl{0.221}{0.000} & \msl{0.220}{0.000} & \msl{\textbf{0.221}}{0.000} \\
 &  & Spearman & \msl{0.998}{0.001} & \msl{0.997}{0.001} & \msl{0.936}{0.009} & \msl{0.832}{0.015} & \msl{0.999}{0.000} & \msl{0.999}{0.000} \\
\cmidrule(lr){2-9}
 & \multirow{3}{*}{Medium} & Modes$\uparrow$ & \msl{20.4}{6.2} & \msl{\textbf{415.2}}{124.0} & \msl{1254.2}{398.3} & \msl{\textbf{8125.0}}{1435.0} & \msl{1.2}{0.8} & \msl{\textbf{3861.4}}{796.5} \\
 &  & Top-1000 R$\uparrow$ & \msl{5.3}{0.113} $\times 10^{5}$ & \msl{\textbf{7.05}}{0.148} $\times 10^{5}$ & \msl{7.63}{0.182} $\times 10^{5}$ & \msl{\textbf{8.75}}{0.141} $\times 10^{5}$ & \msl{2.75}{0.0267} $\times 10^{5}$ & \msl{\textbf{8.22}}{0.127} $\times 10^{5}$ \\
 &  & Spearman & \msl{0.995}{0.001} & \msl{0.989}{0.001} & \msl{0.883}{0.017} & \msl{0.743}{0.025} & \msl{0.996}{0.001} & \msl{0.994}{0.001} \\
\cmidrule(lr){2-9}
 & \multirow{3}{*}{Large} & Modes$\uparrow$ & \msl{35.4}{9.2} & \msl{\textbf{8727.0}}{3678.3} & \msl{12932.2}{1297.2} & \msl{\textbf{19088.0}}{1377.1} & \msl{0.2}{0.4} & \msl{\textbf{58.0}}{8.1} \\
 &  & Top-1000 R$\uparrow$ & \msl{4.38}{0.115} $\times 10^{5}$ & \msl{\textbf{8.69}}{0.0998} $\times 10^{5}$ & \msl{8.75}{0.0268} $\times 10^{5}$ & \msl{\textbf{8.76}}{0.0299} $\times 10^{5}$ & \msl{1.02}{0.0137} $\times 10^{5}$ & \msl{\textbf{4.82}}{0.0852} $\times 10^{5}$ \\
 &  & Spearman & \msl{0.990}{0.002} & \msl{0.975}{0.007} & \msl{0.797}{0.014} & \msl{0.716}{0.022} & \msl{0.991}{0.002} & \msl{0.990}{0.001} \\
\midrule
\multirow{12}{*}{2.0$\times$} & \multirow{3}{*}{Small} & Modes$\uparrow$ & \msl{5.6}{2.1} & \msl{\textbf{13.2}}{6.5} & \msl{28.8}{4.8} & \msl{\textbf{74.2}}{12.8} & \msl{5.0}{2.8} & \msl{\textbf{5.2}}{2.8} \\
 &  & Top-1000 R$\uparrow$ & \msl{0.133}{0.002} & \msl{\textbf{0.156}}{0.009} & \msl{0.186}{0.005} & \msl{\textbf{0.213}}{0.006} & \msl{0.130}{0.002} & \msl{\textbf{0.131}}{0.002} \\
 &  & Spearman & \msl{0.993}{0.002} & \msl{0.986}{0.004} & \msl{0.987}{0.006} & \msl{0.991}{0.004} & \msl{0.999}{0.000} & \msl{0.999}{0.000} \\
\cmidrule(lr){2-9}
 & \multirow{3}{*}{Medium} & Modes$\uparrow$ & \msl{0.0}{0.0} & \msl{\textbf{6.0}}{9.0} & \msl{0.4}{0.9} & \msl{\textbf{4.4}}{3.6} & \msl{0.0}{0.0} & \msl{\textbf{16.0}}{20.5} \\
 &  & Top-1000 R$\uparrow$ & \msl{10866}{591} & \msl{\textbf{4.69}}{0.616} $\times 10^{5}$ & \msl{2.54}{0.747} $\times 10^{5}$ & \msl{\textbf{4.58}}{0.403} $\times 10^{5}$ & \msl{7847}{388} & \msl{\textbf{4.76}}{1.29} $\times 10^{5}$ \\
 &  & Spearman & \msl{0.981}{0.009} & \msl{0.931}{0.011} & \msl{0.947}{0.020} & \msl{0.989}{0.003} & \msl{0.998}{0.000} & \msl{0.988}{0.002} \\
\cmidrule(lr){2-9}
 & \multirow{3}{*}{Large} & Modes$\uparrow$ & \msl{0.0}{0.0} & \msl{\textbf{75.4}}{139.3} & \msl{0.8}{0.8} & \msl{\textbf{57.4}}{27.6} & \msl{0.0}{0.0} & \msl{\textbf{0.4}}{0.5} \\
 &  & Top-1000 R$\uparrow$ & \msl{4368}{262} & \msl{\textbf{4.48}}{1.13} $\times 10^{5}$ & \msl{1.77}{0.448} $\times 10^{5}$ & \msl{\textbf{4.72}}{0.502} $\times 10^{5}$ & \msl{1663}{29} & \msl{\textbf{1.91}}{0.36} $\times 10^{5}$ \\
 &  & Spearman & \msl{0.963}{0.004} & \msl{0.837}{0.037} & \msl{0.892}{0.023} & \msl{0.949}{0.017} & \msl{0.997}{0.000} & \msl{0.998}{0.001} \\
\bottomrule
\end{tabular}
}
\label{tab:set-exp-results-temp}
\end{table}

\paragraph{Versatility of Exploration-Exploitation Control in Scale-up Scenarios.} 
To evaluate the scalability and versatility of $\alpha$-GFN, we apply it to FlowRL \citep{zhu2025flowrl}, a recent framework for LLM reasoning. Specifically, we integrate $\alpha$-GFN into the FlowRL objective using Qwen2.5-3B-Instruct~\citep{qwen2.5} with a fixed $\alpha \in \{0.1, 0.5, 0.9\}$ throughout a 200-step training process on the VeRL recipe \citep{sheng2024hybridflow}. The training objective is
\begin{equation}
    \mathcal{L}_{\alpha-\text{FlowRL}} =
w \cdot \left( 
\log Z_\phi(\mathbf{x}) 
+ \frac{1}{|\mathbf{y}|} \log \pi_\theta(\mathbf{y} \mid \mathbf{x}) 
- \beta \hat{r}(\mathbf{x}, \mathbf{y}) 
- \frac{1}{|\mathbf{y}|}\log \pi_{\mathrm{ref}}(\mathbf{y} \mid \mathbf{x}) + \log \frac{\alpha}{1-\alpha}
\right)^2.
\end{equation}
where the definitions of the notations directly follows~\citep{zhu2025flowrl}.
Results are shown in Table~\ref{tab:flowrl-results}, where
$\alpha$ serves as an effective lever for balancing exploitation and exploration, even within the high-variance environment of LLM reasoning. We observe a performance trend: lower $\alpha$ values (e.g., $0.1$) enhance the broader exploration necessary for general benchmarks like MATH500~\citep{lightman2023lets}, Minerva~\citep{minerva}, and Olympiad~\citep{he2024olympiadbench}. These datasets consist of a large volume of problems (e.g., $500$ to $2000+$ instances) covering a wide range of difficulty levels, where maintaining exploration abilities are beneficial for performance improvements. On the other hand, higher $\alpha$ values (e.g., $0.9$) facilitate stronger exploitation of the reward model, yielding better performance on challenging, competition-level benchmarks such as AIME2024/2025~\citep{AIME}. These benchmarks are significantly more constrained, featuring only $30$ extremely challenging, competition-level problems where the reward landscape is sparse and the model must focus on high-reward reasoning paths to succeed. Intermediate tasks like AMC23~\citep{AMC} achieve peak performance at $\alpha=0.5$, representing a middle ground between the two regimes.

To further elucidate this mechanism, we analyze key training metrics at step 200 in Table~\ref{tab:flowrl-results}. A pattern emerges: both the average reward and the magnitude of the KL divergence from the reference policy ($|\texttt{ref\_kl}|$) tend to scale with $\alpha$. Specifically, higher $\alpha$ values tend to exhibit larger rewards and more substantial deviations from the reference policy, signaling aggressive exploitation of the reward model. In contrast, lower $\alpha$ values are predisposed to smaller rewards and maintain a closer proximity to the reference model (lower $|\texttt{ref\_kl}|$), reflecting a more exploratory training regime. These results restate and validate the exploration-exploitation analysis in Sec. 4.4, confirming that the $\alpha$-modulated exploration-exploitation trade-off remains effective across large-scale task environments.

\begin{table}[htbp]
\caption{\textit{Performance of $\alpha$-GFN on mathematical reasoning tasks.} Results demonstrate how $\alpha$ acts as a control lever to shift the focus between exploration and exploitation. We evaluate FlowRL with a Qwen2.5-3B-Instruct backbone across distinct $\alpha$ settings, and evaluation metrics for the benchmarks are the same as~\citep{zhu2025flowrl}. We \textbf{bold} the better results.}
\centering
\small
\setlength{\tabcolsep}{6pt}
\renewcommand{\arraystretch}{1.2}
\resizebox{0.5\linewidth}{!}{%
\begin{tabular}{@{\hspace{.6em}}ccccc@{\hspace{.6em}}}
\toprule
\multicolumn{2}{c}{} 
& \multicolumn{3}{c}{\textbf{$\alpha$}}  \\
\cmidrule(lr){3-5}
\textbf{Stage} &\textbf{Metric/Benchmark}& 0.1 & 0.5 & 0.9 \\
\midrule
\multirow{2}{*}{\textbf{Train}} & Avg Reward & -0.506
& -0.399 & -0.431 \\
& \texttt{ref\_kl} & -0.298 & -0.2111 & -0.680 \\
\midrule
\multirow{6}{*}{\textbf{Test}}
&AIME2024 & 0.054167 & \textbf{0.056250} & \textbf{0.056250} \\
&AIME2025 & 0.033333 & 0.031250 & \textbf{0.045833} \\
&AMC23 & 0.385937 & \textbf{0.542188} & 0.496875 \\
&MATH500 & \textbf{0.618875} & 0.587375 & 0.547500 \\
&Minerva & \textbf{0.240119} & 0.232537 & 0.186351 \\
&Olympiad & \textbf{0.273090} & 0.243694 & 0.231825
\\
\bottomrule
\end{tabular}
}
\label{tab:flowrl-results}
\end{table}

\end{document}